\DeclareMathAlphabet{\mathcalligra}{T1}{calligra}{m}{n}
\DeclareFontShape{T1}{calligra}{m}{n}{<->s*[2.2]callig15}{}
\newcommand{\scripty}[1]{\ensuremath{\mathcalligra{#1}}}
\newcommand{\rtnd}{\ensuremath{\scripty{r}}}
\newcommand{\aEL}{\ensuremath{\mathrm a}}
\newcommand{\bEL}{\ensuremath{\mathrm b}}
\newcommand{\cEL}{\ensuremath{\mathrm c}}
\newcommand{\dEL}{\ensuremath{\mathrm d}}
\newcommand{\eEL}{\ensuremath{\mathrm e}}
\newcommand{\fEL}{\ensuremath{\mathrm f}}
\newcommand{\gEL}{\ensuremath{\mathrm g}}
\newcommand{\hEL}{\ensuremath{\mathrm h}}
\newcommand{\iEL}{\ensuremath{\mathrm i}}
\newcommand{\modularminA}[1]{\min((#1)^{T}(A),1)}
\newcommand{\lex}[1]{{\ensuremath \breve #1}}
\DeclareSymbolFont{tipa}{T3}{cmr}{m}{n}
\DeclareMathAccent{\invbreve}{\mathalpha}{tipa}{16}
\newtheorem{theorem}{Theorem}[section]
\newtheorem{corollary}{Corollary}[theorem]
\newtheorem{definition}[theorem]{Definition}
\newtheorem{lemma}[theorem]{Lemma}
\newtheorem{proposition}[theorem]{Proposition}
\newtheorem{example}[theorem]{Example}
\newtheorem{exercise}{Exercise}[section]
\DeclareMathOperator*{\argmax}{argmax}
\DeclareMathOperator*{\argmin}{argmin}
\newcommand{\set}[1]{\ensuremath{{\left\{ #1 \right\}}}}
\newcommand{\lovasz}{Lov\'asz}
\newcommand{\surp}{\ensuremath{\mathscr S}}
\title{Deep Submodular Functions}
\author[1,2]{Jeffrey A. Bilmes}
\author[1]{Wenruo Bai}
\affil[1]{Department of Electrical Engineering, University of Washington, Seattle, 98195}
\affil[2]{Department of Computer Science and Engineering, University of Washington, Seattle, 98195}
\begin{document}

\maketitle

\begin{abstract}

  We start with an overview of a class of submodular functions called
  SCMMs (sums of concave composed with non-negative modular functions
  plus a final arbitrary modular).  We then define a new class of
  submodular functions we call {\em deep submodular functions} or
  DSFs.  We show that DSFs are a flexible parametric family of
  submodular functions that share many of the properties and
  advantages of deep neural networks (DNNs), including many-layered
  hierarchical topologies, representation learning, distributed
  representations, opportunities and strategies for training, and
  suitability to GPU-based matrix/vector computing.  DSFs can be
  motivated by considering a hierarchy of descriptive concepts over
  ground elements and where one wishes to allow submodular interaction
  throughout this hierarchy. In machine learning and data science
  applications, where there is often either a natural or an
  automatically learnt hierarchy of concepts over data, DSFs therefore
  naturally apply.  Results in this paper show that DSFs constitute a
  strictly larger class of submodular functions than SCMMs, thus
  justifying their mathematical and practical utility. Moreover, we
  show that, for any integer $k>0$, there are $k$-layer DSFs that
  cannot be represented by a $k'$-layer DSF for any $k'<k$. This
  implies that, like DNNs, there is a utility to depth, but unlike
  DNNs (which can be universally approximated by shallow networks),
  the family of DSFs strictly increase with depth.  Despite this
  property, however, we show that DSFs, even with arbitrarily large
  $k$, do not comprise all submodular functions.  We show this using a
  technique that ``backpropagates'' certain requirements if it was the
  case that DSFs comprised all submodular functions.  In offering the
  above results, we also define the notion of an antitone
  superdifferential of a concave function and show how this relates to
  submodular functions (in general), DSFs (in particular), negative
  second-order partial derivatives, continuous submodularity, and
  concave extensions. To further motivate our analysis, we provide
  various special case results from matroid theory, comparing DSFs
  with forms of matroid rank, in particular the laminar
  matroid. Lastly, we discuss strategies to learn DSFs, and define the
  classes of deep supermodular functions, deep difference of
  submodular functions, and deep multivariate submodular functions,
  and discuss where these can be useful in applications.











\end{abstract}

\tableofcontents

\section{Introduction}
\label{sec:introduction}

Submodular functions are attractive models of many physical processes
primarily because they possess an inherent naturalness to a wide
variety of problems (e.g., they are good models of diversity,
information, and cooperative costs) while at the same time they enjoy
properties sufficient for efficient optimization. For example,
submodular functions can be minimized without constraints in
polynomial time~\cite{fujishige2005submodular} even though they lie
within a $2^n$-dimensional cone in $\mathbb R^{2^n}$ 
and are parameterized,
in their most general form,
with a corresponding $2^n$ independent degrees of freedom.
Moreover, while
submodular function maximization is NP-hard, submodular maximization
is one of the easiest of the NP-hard problems since constant factor
approximation algorithms are often available --- e.g., in the
cardinality constrained case, the classic $1-1/e$ result of
Nemhauser~\cite{nem78} via the
greedy algorithm. Other problems also have guarantees, such as
submodular maximization subject to knapsack or multiple matroid
constraints
\cite{calinescu2011maximizing,buchbinder2014submodular,lee2009non,rishabh2013-submodular-constraints,iyer2013-fast-submodular-semigradient}.

Submodular functions are becoming increasingly important in the field
of machine learning. In recent years, submodular functions have been
used for representing diversity functions for the purpose of data
summarization~\cite{linacl}, for use as structured convex
norms~\cite{bach2010structured}, for energy functions in tree-width
unconstrained probabilistic
models~\cite{gillenwater2012near,Kulesza2012,iyer2015-spps,gotovos15sampling},
useful in computer vision~\cite{kolmogorov2004energy},
feature~\cite{liu2013submodular} and dictionary
selection~\cite{das2011submodular}, viral
marketing~\cite{hartline2008optimal} and influence modeling in social
networks~\cite{kkt03}, information cascades~\cite{leskovec2007cost}
and diffusion modeling~\cite{rodriguez2012submodular},
clustering~\cite{narasimhan2005-q}, and active and semi-supervised
learning~\cite{guillory2011-active-semisupervised-submodular}, to name
just a few.  There also have been significant contributions from the
machine learning community purely on the mathematical and algorithmic
aspects of submodularity. This includes algorithms for optimizing
non-submodular functions via the use of submodularity
\cite{narasimhan2005-subsup,krause08robust,jegelka2011-nonsubmod-vision,rkiyeruai2012},
strategies for optimizing submodular functions subject to both
combinatorial~\cite{rkiyersemiframework2013} and submodular level-set
constraints~\cite{rishabh2013-submodular-constraints}, and so on.


One of the critical problems associated with utilizing submodular
functions in machine learning and data science contexts is selecting
which submodular function to use, and given that submodular functions
lie in such a vast space with $2^n$ degrees of freedom, it is a
non-trivial task to find one that works well, if not optimally. One
approach is to attempt to learn the submodular function based on
either queries of some form or based on data. This has led to results,
mostly in the theory community, showing how learning submodularity can
be harder or easier depending on how we judge what is being
learnt. For example, it was shown that learning submodularity in the
PMAC setting is fairly hard~\cite{balcan10learning} although in some
cases things are a bit easier~\cite{feldman2013optimal}.  Learning can
be made easier if we restrict ourselves to learn within only a
subfamily of submodular functions. For example,
in~\cite{sipos2012large,hui2012-submodular-shells-summarization}, it
is shown empirically that one can effectively learn mixtures of submodular
functions using a max-margin learning framework --- here the
components of the mixture are fixed and it is only the mixture
parameters that are learnt, leading often to a convex optimization
problem. In some cases, computing gradients of the convex problem can
be done using submodular
maximization~\cite{hui2012-submodular-shells-summarization}, while in
other cases, even a gradient requires minimizing a difference of two
submodular functions~\cite{sebastian2014-submod-image-sum}.

Learning over restricted families rather than over the entire cone is
desirable for the same reasons that any form of regularization in
machine learning is useful. By restricting the family over which
learning occurs, it decreases the complexity of the learning problem,
thereby increasing the chance that one finds a good model within that
family. This can be seen as a classic bias-variance tradeoff, where
increasing bias can reduce variance. Up to now, learning over
restricted families has apparently (to the authors' knowledge) been
limited to learning mixtures over fixed components. This can be 
limited if the components are restricted, and if not might require a
very large number of components.  Therefore, there is a need for a
richer and more flexible parametric family of submodular functions
over which learning is not only still possible but ideally relatively
easy. See Section~\ref{sec:learn-deep-subm} for further
discussion on learning submodular functions.

In this paper, we introduce a new family of submodular functions that
we term ``deep submodular functions,'' or DSFs. DSFs strictly
generalize, as we show below, many of the kinds of submodular
functions that are useful in machine learning contexts. These include
the so-called ``decomposable'' submodular functions, namely those
that can be represented as a sum of concave composed with modular
functions~\cite{stobbe10efficient}.

We describe the family of DSFs and place them in the context of the
general submodular family. In particular, we show that DSFs strictly
generalize standard decomposable functions, thus theoretically
motivating the use of deeper networks as a family over which to learn.
Moreover, DSFs can represent a variety of complex submodular functions
such as laminar matroid rank functions. These matroid rank functions
include the truncated matroid rank
function~\cite{goemans2009approximating} that is often used to show
theoretical worst-case performance for many constrained submodular
minimization problems.  We also show, somewhat surprisingly, that like
decomposable functions, DSFs are unable to represent all possible
cycle matroid rank functions. This is interesting in and of itself
since there are laminar matroids that can not be represented by cycle
matroids.  
On the other hand, we show that the more general DSFs share
a variety of useful properties with decomposable functions.  Namely,
that they: (1) can leverage the vast amount of practical work on
feature engineering that occurs in the machine learning community and
its applications; (2) can operate on multi-modal data if the data can
be featurized in the same space; (3) allow for training and testing on
distinct sets since we can learn a function from the feature
representation level on up, similar to the work
in~\cite{hui2012-submodular-shells-summarization}; and (4) are useful
for streaming
\cite{badanidiyuru2014streaming,kumar2015fast,chekuri2015streaming}
and parallel
\cite{mirzasoleiman2015distributed,barbosa2015power,barbosa2016new}
optimization since functions can be evaluated without requiring
knowledge of or access to the entire ground set.  These advantages are
made apparent in Section~\ref{sec:background}.


Interestingly, DSFs also share certain properties with deep neural
networks (DNNs), which have become widely popular in the machine
learning community. For example, DNNs with weights that are strictly
non-negative correspond to a DSF. This suggests, as we show in
Section~\ref{sec:learn-deep-subm}, that it is possible to develop a
learning framework over DSFs leveraging DNN learning
frameworks. Unlike standard deep neural networks, which typically are
trained either in classification or regression frameworks, however,
learning submodularity often takes the form of trying to adjust the
parameters so that a set of ``summary'' data sets are offered a high
value. We therefore extend the max-margin learning framework
of~\cite{sipos2012large,hui2012-submodular-shells-summarization} to
apply to DSFs. Our approach can be seen as a max-margin learning
approach for DNNs but restricted to DSFs.

We offer a list of applications for DSFs in machine learning and data
science in Section~\ref{sec:applications}.

\section{Background and Motivation}
\label{sec:background}


Submodular functions are discrete set functions that have the {\em
  property of diminishing returns}.  Given a finite size-$n$ set of
objects $V$ (the {\em ground set}), where each $v \in V$ is a distinct
element.  A valuation set function $f:2^V \rightarrow \mathbb{R}$ that
returns a real value for any subset $X \subseteq V$ is said to be {\em
  submodular} if for all $X \subseteq Y$ and $v \notin Y$ the
following inequality holds: $f(X \cup \{ v \}) - f(X) \geq f(Y \cup \{
v \}) - f(Y)$.  This means that the incremental value (or gain) of
adding another sample $v$ to a subset decreases when the context in
which $v$ is considered grows from $X$ to $Y$. We can define the {\em
  gain of $v$ in the context of $X$} as $f(v | X) \triangleq f(X \cup
\{ v \}) - f(X)$. Thus, $f$ is submodular if $f(v|X) \geq f(v|Y)$.  If
the gain of $v$ is identical for all different contexts i.e., $f(v|X)
= f(v|Y), \forall X,Y \subseteq V$ and $\forall v \in V$, then the function is said to be
{\em modular}.  A function might also have the property of being
normalized ($f(\emptyset) = 0$) and monotone non-decreasing ($f(X)
\leq f(Y)$ whenever $X \subseteq Y$).  If $f$ is a normalized monotone
non-decreasing function, then it is often referred to as a {\em
  polymatroid function}
\cite{cunningham1985optimal,cunningham1983decomposition,lovasz1980matroid}
\footnote{\lovasz{} in 1980 uses the same definition, but also asked
  for integrality which Cunningham did not require.}  because it
carries identical information to that of a polymatroidal
polyhedron. If the negation of $f$, $-f$, is submodular, then $f$ is
called {\em supermodular}.  If $m$ is a normalized modular function,
it can be written as a sum of singleton values $m(X) = \sum_{x \in X}
m(x)$ and, moreover, is seen simply as a vector $m \in \mathbb R^V$.

A very simple example of a submodular function can be described using
an urn containing a set of balls and a valuation function that counts
the number of colors present in the urn. Such a function, therefore,
measures only the diversity of ball colors in the urn, rather than
ball quantity. We are motivated by applications where we wish to build
models of information and diversity over data sets, in which case $V$
is a ground set of data items. Each $v \in V$, in such case, might be
a distinct data sample --- for example, either a word, n-gram,
sentence, document, image, video, protein, genome, sensor reading, a
machine learning system's input-output training pair, or even a highly
structured irregularly sized object such as a tree or a graph. It is
also desirable for $V$ to be a set of heterogeneous data objects, such
where $v_1 \in V$ may be an image and $v_2 \in V$ may be a document.

There are many useful classes of submodular functions. One of the more
widely used such function are those that, for the present purposes, 
we refer to a ``graph based,'' since they are parameterized by a
weighted graph. Graph-based methods have a long history in many
applications of machine learning and natural language processing
(NLP), e.g.,
\cite{mihalcea05,navigli07,alexandrescu09,silberer10,subra10,lang11,liu13,rasmara13,yong2014}.
Work in this field is relevant to any graph-based submodular functions
parameterized by a weighted graph $G=(V,E,w)$, where $V$ is a set of
nodes (corresponding to the ground set), $E$ is a set of edges, and
$w: E \to \mathbb R_+$ is a set of non-negative edge weights
representing associations (e.g., affinity or similarity) between the
corresponding elements. Graph-based submodular functions include the
classic {\em graph cut} function $f(X) = \sum_{x \in X, y \in V
  \setminus X} w(x,y)$, but also the {\em monotone graph cut} function
$f(X) = \sum_{x \in X, y \in V } w(x,y)$, the {\em saturated graph
  cut} function \cite{Lin2011} $f(X) = \sum_{v \in V} \min(C_v(X),
\alpha C_v(V))$ where $\alpha \in (0,1)$ is a hyperparameter and where $C_v(X) = \sum_{x
  \in X} w(v,x)$. Another widely used graph-based function is the {\em facility
  location} function
\cite{mirchandani1990discrete,cornuejols1977uncapacitated,nem78,fisher1978analysis}
$f(X) = \sum_{v \in V} \max_{x \in X} w(x,v)$, the maximization of
which is related to the $k$-median problem
\cite{badanidiyuru2014streaming,kaufman2009finding}. It is also useful 
and learn conic mixtures of graph based functions as done in
\cite{hui2012-submodular-shells-summarization}.

An advantage of graph-based submodular functions is that they can be
instantiated very easily, using only a similarity score between two
objects $v_1,v_2 \in V$ that does not require metricity or any
property (such as non-negative definiteness of the associated matrix,
required for using a determinantal point process
(DPP)~\cite{Gillenwater2012,Kulesza2012,gillenwater2012near,Agarwal2014,Gillenwater2014}
other than non-negativity.  A drawback of graph-based functions is
that building a graph over $n$ samples has complexity $O(n^2)$ as has
querying the function itself, something that does not scale to very
large ground set sizes (although there are many approaches to more
efficient sparse graph construction
\cite{chen09,jebara09,chen09,ozaki11,wang12,zhang13} to improve upon
this complexity).  Moreover, it is difficult to add elements to $V$ as
it requires $O(n)$ computation for each addition. For machine learning
applications, moreover, it is difficult with these functions to train
on a training set that may generalize to a test set
\cite{hui2012-submodular-shells-summarization}.



\section{Sums of Concave Composed with Modular Functions (SCMMs)}
\label{sec:sums-conc-comp}

A class of submodular functions~\cite{stobbe10efficient} used in
machine learning are the so-called ``decomposable functions.''.  Given
a set of non-negative modular functions $m_i: V \to \mathbb R_+$, a
corresponding set of non-negative monotone non-decreasing normalized
(i.e., $\phi(0) = 0$) concave functions $\phi_i: [0, m_i(V)] \to
\mathbb R_+$, and a final normalized but otherwise arbitrary modular
function $m_\pm: V \to \mathbb R$, consider the class of functions $g
: 2^V \to \mathbb R_+$ that take the following form:
\begin{align}
g(A) = \sum_{i} \phi_i(m_i(A)) + m_\pm(A) = \sum_i
\phi_i\left(\sum_{a \in A} m_i(a)\right) + m_\pm(A).
\label{eq:scmm_general}
\end{align}
This class of functions is known to be
submodular~\cite{fujishige1999minimizing,fujishige2005submodular,stobbe10efficient}.
While such functions have been called ``decomposable'' in the past, in
this work we will refer to this class of functions as ``Sums of
Concave over non-negative Modular plus Modular'' (or SCMMs) in order
to avoid confusion with the term ``decomposable'' used to describe
certain graphical models
\cite{lauritzen1996graphical,golumbic2004algorithmic}.\footnote{In
  fact, the notion of decomposition used
  in~\cite{lauritzen1996graphical,golumbic2004algorithmic}, the
  graphical models community, and related to the notion of the same
  name used in \cite{cunningham1983decomposition}, can also be used to
  describe a form of decomposability of a submodular function in that
  the submodular function may be expressed as a sum of terms each one
  of which corresponds to a clique in a graph, and where the graph is
  triangulated, but where the terms need not be a concave composed
  with a modular function.  Hence, without this switch of terminology,
  one reasonably could speak of ``decomposable decomposable submodular
  functions.''}

SCMMs have been shown to be quite flexible~\cite{stobbe10efficient},
being able to represent a surprisingly diverse set of functions. For
example, consider the bipartite neighborhood function, which is
defined using a bipartite graph $G=(V,U,E,w)$ with
$E \subseteq V \times U$ being a set of edges between elements of $V$
and $U$, and where $w: U \to \mathbb R_+$ is a set of weights on
$U$. For any subset $Y \subseteq U$ we define
$w(Y) = \sum_{y \in Y} w(y)$ as the sum of the weights of the elements
$Y$.  The bipartite neighborhood function is then defined as
$g(X) = w(\Gamma(X))$, where the neighbors function is defined as
$\Gamma(X) = \{ u \in U : \exists (x,u) \in E \text{ having } x \in X
\} \subseteq U$ for $X \subseteq V$.  This can be easily written as
an SCMM as follows:
$g(X) = \sum_{u \in U} w(u) \min( | X \cap \delta u | , 1 )$ where
$\delta u \subseteq V$ are the neighbors of $u$ in $V$ --- hence
$m_u(X) = | X \cap \delta u |$ is a modular function and
$\phi_u(\alpha) = \min(1,\alpha)$ is concave.  When all the
weights are unity, this is also equivalent to the set cover function
$g(X) = | \bigcup_{x \in X} \Gamma(x) |$ where 
the operation $\min |X|$ s.t.\ $g(X) = |U|$ 
attempts to cover a set
$U$ by a small set of subsets $\set{ \Gamma(x) : x \in X }$. With such
functions, it is possible to represent graph cut as follows: $g(X) =
f(X) + f(V \setminus X) - f(V)$, a sum of an SCMM and a complemented
SCMM. 
It is shown in~\cite{jegelka2011-fast-approx-sfm} that any SCMM can be
represented with a graph cut function that might optionally utilize
additional auxiliary variables that are first minimized over.

SCMMs can represent other functions as well, such as multiclass
queuing system functions~\cite{itoko2007computational,stobbe13thesis},
functions of the form $f(A) = m_1(A) \phi(m_2(A))$ where
$m_1,m_2 : V \to \mathbb R_+$ are both non-negative modular functions,
and $\phi : \mathbb R \to \mathbb R$ is a non-increasing concave
function.  Another useful instance is the probabilistic coverage
function~\cite{el2009turning} where we have a set of topics, indexed
by $i$, and $V$ is a set of documents.  The function, for topic $u$,
takes the form $f_u(A) = 1 - \prod_{a \in A} (1 - p(u|a))$ where
$p(u|a)$ is the probability of topic $u$ for document $a$ according to
some model. This function can be written as
$f_u(A) = 1 - \exp( - \sum_{a \in A} \log(1/(1 - p(u|a))))$ where
$\phi_u(\alpha) = 1 - \exp(-\alpha)$ is a concave function and
$m_u(A) = \sum_{a \in A} \log(1/(1 - p(u|a)))$ is modular. Hence,
probabilistic coverage is an SCMM. Indeed, even the facility location
function can be related to SCMMs. If in the facility location function
we sum over a set of concepts $U$ rather than the entire ground set
$V$ (which can be achieved, say by first clustering $V$ into
representatives $U$), the function takes the form
$g(A) = \sum_{u \in U} \max_{a \in A} w(a,u)$. A soft approximation to
the max function (softmax) can be obtained as follows:
\begin{align}
\label{eq:softmax}
\phi_{\text{smax}(\gamma,w)}(A)
\triangleq \frac{1}{\gamma}\log(\sum_{a \in
  A} \exp(\gamma w_a )).
\end{align}
We have that $\max_{a \in A} w_a = \lim_{\gamma \to \infty} \phi_{\text{smax}(\gamma,w)}(A)$
and for any finite $\gamma$, $\phi_{\text{smax}(\gamma,w)}(A)$ is a concave
over modular function. Hence, a soft concept-based facility
location function would take the form
$g_\gamma(A) = \sum_{u \in U} \phi_{\text{smax}(\gamma,w_u)}(A)$ which is
also an SCMM.

\begin{figure}[tb]
\centerline{\includegraphics[page=1,width=1.0\textwidth]{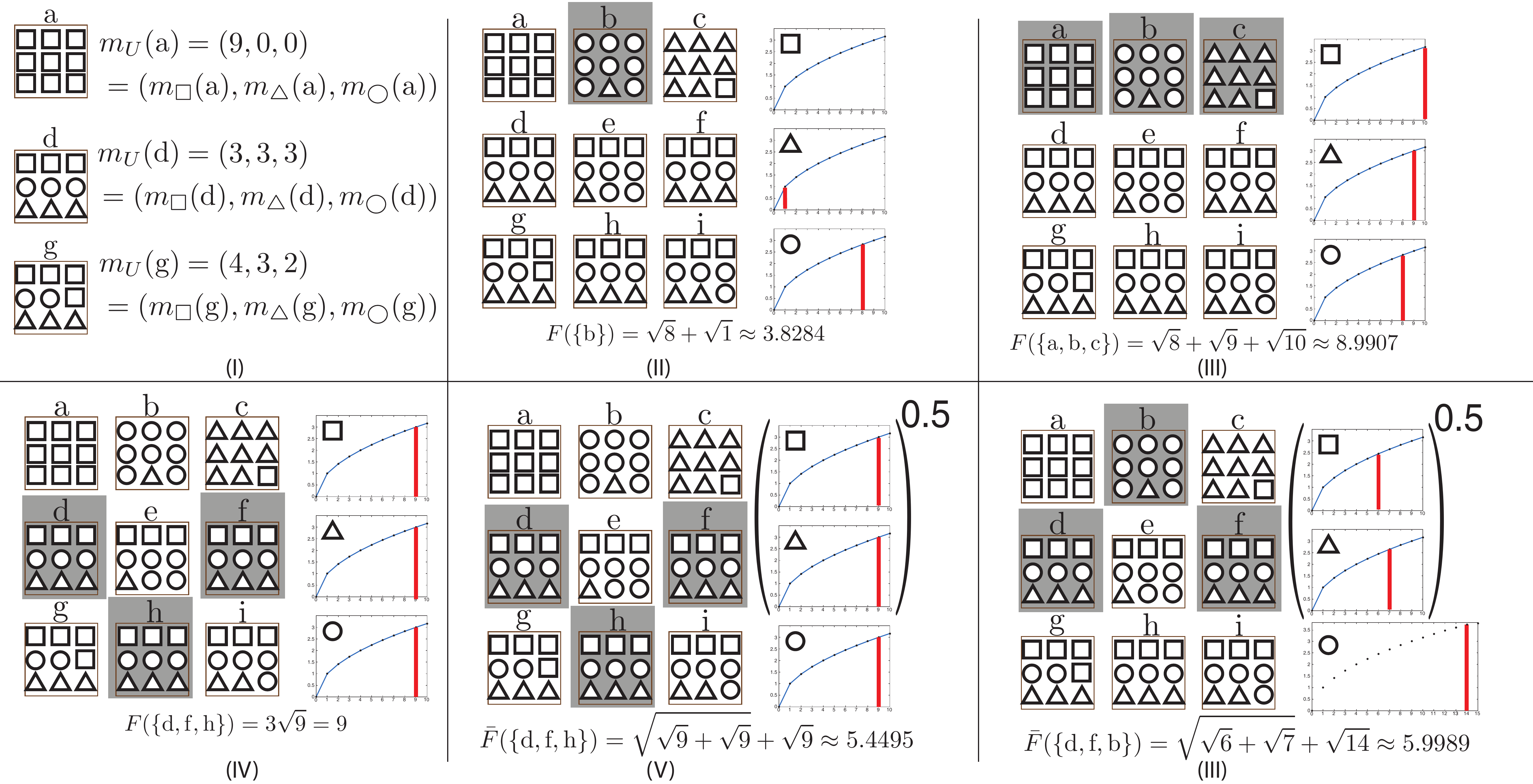}}

\caption{Illustration of SCMMs and their lack of higher-level
  interaction amongst concepts.  I: Three objects, each consisting of
   a set of shapes (one or more of $\Box$, $\triangle$, and
   $\bigcirc$), and indexed by $\set{ \aEL, \dEL, \gEL }$.  II: Nine
   objects $\set{ \aEL, \bEL, \cEL, \dEL, \eEL, \fEL, \gEL, \hEL, \iEL}$
   and selection of $\set{ \bEL }$ and valuation of $f(\bEL)$.
 III: Selection of $\set{ \aEL, \bEL, \cEL }$ and valuation of $f(\text{ \aEL,\bEL,\cEL})$.
 IV: Selection of $\set{ \dEL, \fEL, \hEL }$ and valuation of 
 $f(\set{ \dEL,\fEL,\hEL})$,
 which is the maximum value for $f$ amongst all sets of size three.
 V: Interaction amongst the non-smooth shapes causes a reduced
 valuation of $\set{ \dEL, \fEL, \hEL }$ .
 IV: With interaction amongst the non-smooth shapes, a
 new size-three maximum is achieved
with set $\set{ \bEL, \dEL, \fEL }$.
}
\label{fig:scmm_fig1}
\end{figure}

Equation~\eqref{eq:scmm_general} allows for a final arbitrary modular
function $m_\pm$ without which the function class would be strictly
monotone non-decreasing and trivial to unconstrainedly minimize.
Allowing an arbitrary modular function to apply at the end means the
function class need not be monotone and hence finding the minimizing
set is non-trivial. Because of their particular form, however, SCMMs
yield efficient algorithms for fast
minimization~\cite{stobbe10efficient,
  jegelka2013reflection,nishihara2014convergence}. Moreover, it
appears that there is little loss of generality in handling the
non-monotonicty separately from the polymatroidality, as any
non-monotone submodular function can easily be written as a sum of a
totally normalized polymatroid function plus a modular
function~\cite{cunningham1983decomposition,cu84}. To see this,
consider any arbitrary submodular function $f$ and write it as
$f(A) = \bigl( f(A) - \sum_{a \in A} f(a | V \setminus \set{a} )\bigr)
+ \sum_{a \in A} f(a | V \setminus \set{a} )$, the first term
$f(A) - \sum_{a \in A} f(a | V \setminus \set{a} )$ is a polymatroid
function and the second term is modular.



\subsection{Feature Based Functions}
\label{sec:feat-based-funct}

A particularly useful way to view SCMMs for machine learning and
data science applications is when data objects are embedded in a
``feature'' space indexed by a finite set $U$.  Suppose we have a set
of (possibly multi-modal) data objects $V$ each of which can be
described by an embedding into feature space $\mathbb R_+^U$ where
each $u \in U$ can be thought of as a possible feature, concept, or
attribute of an object. Each object $v \in V$ is represented by a
non-negative feature vector $m_U(v) \triangleq (m_{u_1}(v),
m_{u_2}(v), \dots, m_{u_{|U|}}(v)) \in \mathbb R_+^U$. Each feature $u
\in U$ also has an associated normalized monotone non-decreasing concave
function $\phi_u : [0, m_u(V)] \to \mathbb R_+$ and a
non-negative importance weight $w_u$. These then yield the class of ``feature
based functions''
\begin{align}
f(X) = \sum_{u \in U} w_u \phi_u(m_u(X)) + m_\pm(X)
\label{eq:feature_based}
\end{align}
where $m_u(X) = \sum_{x \in X} m_u(x)$.
A feature based function then is an SCMM.

In a feature-based function, $m_u(v) \geq 0$ is a non-negative score
that measures the degree of feature $u$ that exists in data object $v$
and the vector $m_U(v)$ is the entirety of the object's representation
in feature space. The quantity $m_u(X)$ measures the $u$-ness in a
collection of objects $X$ that, when the concave function
$\phi_u(\cdot)$ is applied, starts diminishing the contribution of
this feature for that set of objects. The importance of each feature
is given by the feature weight $w_u$.  From the perspective of
applications, $U$ can be any set of features.

As an example in NLP, let $V$ be a set of sentences. For $s \in
V$ and $u \in U$, define $m_u(v)$ to be the count of n-gram
feature $u$ in sentence $s$.  For the sentence $s = \text{{\em
    Whenever I visit New York City, I buy a New York City map.}}$,
$m_{\text{"the"}}(s) = 1$ while $m_{\text{"New York City"}}(s) = 2$.
There are many different ways to produce the scores $m_u(s)$ other
than raw n-gram counts. For example, they can be TFIDF-based
normalized counts, or scaled in various ways depending on the nature
of $u$. The weight $w_u$ can be the desired relative frequency of $u$,
the length of $u$, and so on.


Feature engineering is the study of techniques for transforming raw
data objects into feature vectors and is an important step for many
machine learning~\cite{zheng2006,wiki:feature_eng,web:feature_eng} and
structured prediction problems \cite{taskar2005learning}. Good feature
engineering allows for potentially different size and type of data
items (either within or across modalities) to be embedded within the
same space and hence considered on the same playing field. Proper
feature representation is often therefore a crucial for many machine
learning systems to perform well. In the case of NLP, for example,
features requiring annotation tools (e.g., parse-based features
\cite{zhang08,massung13,post13}) and unsupervised features such as
n-gram and word distribution features (e.g.,
\cite{xue09,bergsma10,bansai12,lahiri13,pietro14}) are available.  For
computer vision, this includes visual bag-of-words features (e.g.,
\cite{filliat2007visual,yang2007evaluating,li2011contextual,nicosevici2012automatic,csurka2004visual,tirilly2008language,deselaers2008bag}).
Any type of data can have automatically learned features using
representation learning via, say, deep models (e.g.,
\cite{turian10,mikolov13,mnih13,penning14,kalch14,liu15}) --- this is
essentially the main message in the name ICLR (International
Conference on Learning Representations), one of the main venues for
deep model research today.

One of the advantages of feature based submodular functions for
machine learning and data science applications is that they can
leverage this vast amount of available work on feature
engineering. Feature transformations can be developed separately from
the resulting submodularity and can still be easily incorporated into
a feature based function without loosing the submodularity property.

Figure~\ref{fig:scmm_fig1} gives another illustrative but contrived
example, that demonstrates how feature functions, when maximized,
attempt to achieve a form of uniformity, and hence diversity, over
feature space.  The figure also helps to motivate deep submodular
functions in the next section.  We have $|V|=9$ data objects each of
which is an image containing a set of shapes, some number of circles,
squares, and triangles. For example, Figure~\ref{fig:scmm_fig1}-(I)
shows that object $\aEL$ contains nine squares while object $\dEL$
contains three each of squares, circles, and triangles. To the right
of these shapes is the corresponding vector $m_U(v)$ for that object
(e.g., $m_U(\gEL)$ shows four squares, three triangles, and two
circles). On these shapes we can define a submodular function as
follows:
$g(A) = \sum_{u \in \{ \triangle, \Box, \bigcirc \} } \sqrt{m_u(A)}$
where $m_u(A) = \sum_{a \in A} \text{count}_u(a)$ counts the total
number of objects of type $u$ in the set of images $A$.
Figure~\ref{fig:scmm_fig1}-(II) shows
$g(\set{\bEL}) = \sqrt{8} + \sqrt{1}$.
Figure~\ref{fig:scmm_fig1}-(III) shows $g(\set{\aEL,\bEL,\cEL})$ which
has a greater diversity of objects and hence is given a greater value,
while Figure~\ref{fig:scmm_fig1}-(IV) shows
$g(\set{\dEL,\hEL,\fEL}) = 9$ which is the maximum valued size-three
set (and is also the solution to the greedy algorithm in this case),
and is the set having the greatest diversity. Diversity, therefore,
corresponds to uniformity and maximizing this submodular function,
under a cardinality constraint, strives to find a set of objects with
as even a histogram of feature counts as possible. When using
non-uniform weights $w_u$, then maximizing this submodular function
attempts to find a set that closely respect the feature weights.

In fact, maximizing feature based functions can be seen as a form of
constrained divergence minimization.  Let $p = \set{ p_u }_{u \in U}$
be a given probability distribution over features (i.e., $\sum_u p_u =
1$ and $p_u \geq 0$ for all $u \in U$). Next, create
an $X$-dependent distribution over features:
\begin{align}
      0 \leq \bar p_u(X) \triangleq \frac{ m_u(X) }{ \sum_{u' \in U} m_{u'}(X) }
      = \frac{ m_u(X) }{ m(X) } \leq 1
\end{align}
where $m(X) \triangleq \sum_{u' \in U} m_{u'}(X)$.  Then $\bar p_u(X)$
can also be seen as a distribution over features $U$ since $\bar
p_u(X) \geq 0$ and $\sum_{u \in U} \bar p_u(X) = 1$ for any $X
\subseteq V$.  Consider the KL-divergence between these two
distributions:
\begin{align}
      D( p || \bar p(X)  ) 
      &= - H(p) + \log m(X) - \sum_{u \in U} p_u \log (m_u(X))
\end{align}
Hence, the KL-divergence is merely a constant plus a difference of
feature-based functions. Maximizing $\sum_{u \in U} p_u \log (m_u(X))$
subject to $\log m(X) = \text{const}$ (which can be seen as a data
quantity constraint) therefore is identical to finding an $X$ that
minimizes the KL-divergence between $\bar p(X)$ and $p$.
Alternatively, defining $g(X) \triangleq \log m(X) - D( p || \set{
  \bar m_u(X) } ) = \sum_{u \in U} p_u \log (m_u(X))$ as done in
\cite{shinohara2014submodular}, we have a {submodular function $g$}
that represents a combination of its quantity of $X$ via its features
(i.e., $\log m(X)$) and its distribution closeness to $p$. The concave
function in the above is $\phi(\alpha) = \log(\alpha)$ which is
negative for $\alpha < 1$.  We can rectify this situation by defining
an extra object $v' \notin V$ having $m_u(v') = 1$ for all $u$. Then
$g(X | v') = \sum_{u \in U} p_u \log (1+ m_u(X))$ is also a feature
based function on $V$.

The KL-divergence can be generalized in various ways, one of which is
known as the $f$-divergence, or in particular the
$\alpha$-divergence~\cite{shun1985differential,amari1993methods}.
Using the reparameteriation $\alpha = 1 - 2\delta$
\cite{kass1984canonical}, the $\alpha$-divergence (or now
$\delta$-divergence \cite{zhu1995information}) can be expressed as
\begin{align}
D_{\delta}(p,q) = \frac{1}{\delta(1-\delta)} (1-\sum_{u\in U} p_u^\delta q_u^{1-\delta}).
\end{align}
For $\delta \to 1$ we recover the standard KL-divergence above. 
For $\delta \in (0,1)$ we see that the 
optimization problem $\min_{X \subseteq V: m(X) \leq b} D_{\delta}(p,\bar p(X))$ where
$b$ is a budget constraint is the same
as the constrained submodular maximization problem
$\max_{X \subseteq V: m(X) \leq b} g(X)$ 
where $g(X) = \sum_{u \in U} p_u^\delta (m_u(X))^{1-\delta}$
is a feature-based function since $\phi_u(\alpha) = \alpha^{1-\delta}$
is concave on $\alpha \in [0,1]$ for $\delta \in (0,1)$. Hence,
any such constrained submodular maximization problem can be seen
as a form of $\alpha$-divergence minimization.


Indeed, there are many useful concave functions one could employ in
applications and that can achieve different forms of submodular
function. Examples include the following: 
(1) the power functions, such as $\phi(\alpha) = \alpha^{1-\delta}$
that we just encountered ($\delta = 1/2$ in
Figures~\ref{fig:scmm_fig1} (I)-(IV));
(2) the other non-saturating non-linearities such as
$\phi(x)=\nu^{-1}(x)$ where $\nu(y)=y^3 / 3 + y$
\cite{galen2013-deep-cca} and the log functions
$\phi_\gamma(\alpha) = \gamma \log(1+\alpha/\gamma)$ with
$\gamma > 0$ is a parameter;
(3) the saturating functions such as
$\phi(\alpha) = 1 - \exp(-\alpha)$, the logistic function $\phi(\alpha) =
1/(1+\exp(-\alpha))$ and other ``s''-shaped sigmoids 
(which are concave over the non-negative reals)
such as the hyperbolic tangent,
or $\phi(\alpha) = \Bigl[ 1 - \frac{1}{\ln(b)}
\ln \Bigl( 1 + \exp \bigl( - \alpha \ln(b) \bigr)\Bigr) \Bigr]$ as
used in \cite{bicici11,kirch14};
(4) and the hard truncation functions such as
$\phi(\alpha) = \min(\alpha, \gamma)$ for some constant
$\gamma$.  There are also parameterized concave
functions that get as close to the 
hard truncation functions as we wish,
such as $\phi_{a,c}(x) = ((x^{-a}+c^{-a})/2)^{-1/a}$ where 
$a\geq -1$, and $c>0$ are parameters --- it is straightforward to show that 
$\phi_{-1,c}(x)$ is linear, that $\lim_{a \to \infty} \phi_{a,c}(x) = \min(x,c)$,
and that for $-1 < a < \infty$ we have a form of soft min. Also recall
the parameterized soft max mentioned above in relationship to the facility
location function.
In other cases, is useful for the concave function
to be linear for a while before a soft or nonsaturating concave part kicks in,
for example $\phi(\alpha) = \min( \sqrt{\alpha/\gamma}, \alpha/\gamma)$ for
some constant $\gamma > 0$.
These all can have their uses, depending on the application,
and determine the nature of how the returns of a given feature $u \in U$ should diminish.
Feature based submodular functions, in particular, have been useful
for tasks in speech
recognition~\cite{wei2014-unsupervised-icassp}, machine
translation~\cite{kirch14}, and computer vision
\cite{jegelka2011-nonsubmod-vision}.


We mention a final advantage of SCMMs is that they do not require the
construction of a pairwise graph and therefore do not have quadratic
cost as would, say a facility location function (e.g.,
$f(X) = \sum_{v \in V} \max_{x \in X} w_{xv}$), or any function based
on pair-wise distances, all of which have cost $O(n^2)$ to evaluate.
Feature functions have an evaluation cost of $O(n|U|)$, linear in the
ground set $V$ size and therefore are more scalable to large data set
sizes. Finally, unlike the facility location and other graph-based
functions, feature-based functions do not require the use of the
entire ground set for each evaluation and hence are appropriate for
streaming
algorithms~\cite{badanidiyuru2014streaming,chekuri2015streaming} where
future ground elements are unavailable at the time one needs a
function evaluation, as well as parallel submodular optimization
\cite{mirzasoleiman2015distributed,barbosa2015power,barbosa2016new}.
For example, the vectors $m_U(v)$ for a newly encountered object $v$
can be computed on the fly (or in parallel) whenever the object $v$
is available and wherever it is located on a parallel machine.

\section{Deep Submodular Functions}
\label{sec:deep-subm-funct}
\label{sec:dsf_definition}


\begin{figure}[tbh]
\centerline{\includegraphics[page=2,width=0.8\textwidth]{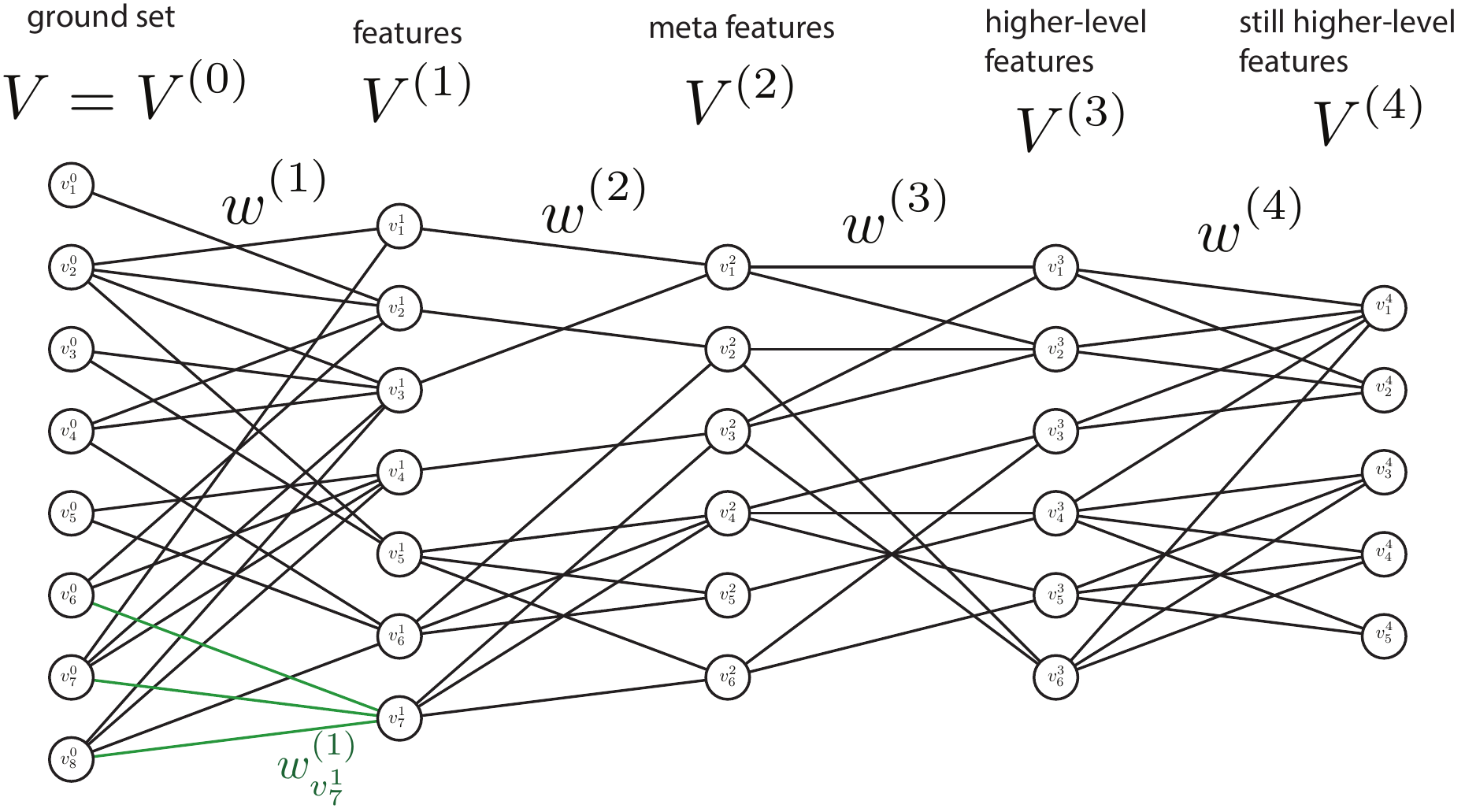}}
\caption{
Left: A layered 
DSF 
with $K=3$ layers.
Right: a 3-block DSF 
allowing layer skipping.
}
\label{fig:deep_submodular}
\end{figure}

While feature-based submodular functions are indisputably
useful, their weakness lies in that features themselves may not
interact, although one feature $u'$ might be partially redundant with
another feature $u''$. For example, when describing a sentence via its
component n-grams features, higher-order n-grams always include
lower-order n-grams, so some n-gram features can be partially
redundant. For example, in a large collection of documents about ``New
York City'', it is likely there will be some instances of ``Chicago,''
so the feature functions for these two features should likely negatively
covary. One way to reduce this redundancy is to subselect the features
themselves, reducing them down to a subset that tends not to interact
in any way. This can only work in limited cases, however, namely when
the features themselves can be reduced to an ``independent'' set that
looses no information about the data objects, and this only happens
when redundancy is an all-or-nothing property (as in a matroid). 

Most real-world features, however, involve partial redundancy.  The
presence of ``New York City'' shouldn't completely remove the
contributing of ``Chicago'', rather it should only discount its
contribution.  A better strategy, therefore, is to allow the feature
scores to interact, say, when measuring redundancy at some
higher-level concept of a ``big city.''  

Figure~\ref{fig:scmm_fig1} offers a further pictorial example.
Figure~\ref{fig:scmm_fig1}-(IV) shows that the most diverse set of
size three is $\set{\dEL, \hEL, \fEL}$ since it has an even
distribution over the set of features, square, triangle,
circle. Suppose, however, the non-smooth shapes are seen to be
partially redundant with each other, so that the presence of a square
should discount, to some degree, the value of a triangle, but should
not discount the value of a circle. The feature based function $g(A) =
\sum_{u \in \{ \triangle, \Box, \bigcirc \} } \sqrt{m_u(A)}$ does not
allow these three features to interact in any way to achieve this form
of discounting. The contribution of ``square'' is measured
combinatorially independently of ``triangle'' --- feature-based
functions therefore fail for features that themselves should be
considered partially redundant.  We can address this issue by using an
additional level of concave composition
\begin{align}
g(A) = 
 \sqrt{\sum_{u \in \{ \triangle, \Box \} } \sqrt{m_u(A)} }
+  \sqrt{m_\bigcirc(A)},
\end{align}
where the nested square-root over the two features, square and
triangle, allow them to interact and discount each other.
Figure~\ref{fig:scmm_fig1}-(V) shows the new value of the formally
maximum set $\set{\dEL, \hEL, \fEL}$ 
is no longer the maximum size-three
set. Figure~\ref{fig:scmm_fig1}-(VI) shows the new maximum sized-three
set, where the number of squares and circles together is roughly the
same as the number of circles.

In general, to allow feature scores to interact and discount each
other, we can utilize an additional ``layer'' of nested concave
functions as follows:
\begin{align}
f(X) = 
\sum_{s \in S}
\omega_s \phi_s (
\sum_{u \in U} w_{su} \phi_u(m_u(X))),
\label{eq:two_level_feature_based}
\end{align}%
where $S$ is a set of meta-features, $\omega_s$ is a meta-feature
weight, $\phi_s$ is a non-decreasing concave function associated
with meta-feature $s$, and $w_{s,u}$ is now a meta-feature
specific feature weight. With this construct, $\phi_s$
assigns a discounted value to the set of features in $U$, which can be
used to represent feature redundancy.
Interactions between the meta-features might be needed as well, and
this can be done via meta-meta-features, and so on, resulting in a
hierarchy of increasingly higher-level features.  Such a hierarchy
could correspond to semantic hierarchies for NLP applications (e.g.,
WordNet~\cite{miller1995wordnet}), or a visual hierarchy in computer
vision (e.g., ImageNet~\cite{deng2009imagenet}). Alternatively, in the
spirit of modern big-data efforts in deep learning, such a hierarchy
could be learnt automatically from data.


We propose a new class of submodular
functions that we call {\em deep submodular functions} (DSFs). 
They may make use
of a finite-length series of disjoint sets (see Figure~\ref{fig:deep_submodular}-(a)):
$V=V^{(0)}$, which is the function's ground set,
and additional sets
$V^{(1)}, V^{(2)}, \dots,V^{(K)}$.
$U=V^{(1)}$ can be seen as a set of ``features'', $V^{(2)}$ as a set of meta-features,
$V^{(3)}$ as a set of meta-meta features, etc.~up to $V^{(K)}$.
The size of $V^{(i)}$ is $d^i = |V^{(i)}|$. Two successive sets 
(or ``layers'') $i-1$ and $i$ are connected by a matrix 
$w^{(i)} \in \mathbb R^{d^{i} \times d^{i-1}}_+$, for $i \in \{1,\dots, K\}$. 
Hence, rows of $w^{(i)}$ are indexed by
elements of $V^{(i)}$ and columns of $w^{(i)}$ are indexed by elements
of $V^{(i-1)}$. 
Given $v^{i} \in V^{(i)}$, define
$w^{(i)}_{v^{i}}$ to be the row of $w^{(i)}$ corresponding to element
$v^{i}$, and $w^{(i)}_{v^{i}}(v^{i-1})$ is the element of matrix
$w^{(i)}$ at row $v^{i}$ and column $v^{i-1}$.  We may think of
$w^{(i)}_{v^{i}}: V^{(i-1)} \to \mathbb R_+$ as a modular function
defined on set $V^{(i-1)}$. Thus, this matrix contains $d^{i}$ such
modular functions.
Further, let $\phi_{v^k}: \mathbb R_+ \to \mathbb R_+$ be a non-negative non-decreasing concave function. 
Then, a $K$-layer DSF $f: 2^V \to \mathbb R_+$ 
can be expressed as follows, for any $A \subseteq V$,
\begin{align}
f(A) = \bar f(A) + m_\pm(A) 
\label{eq:deep_submodular}
\end{align}
where,%
\begingroup\makeatletter\def\f@size{10}\check@mathfonts
\def\maketag@@@#1{\hbox{\m@th\small\normalfont#1}}%
\begin{align}
\label{eq:deep_submodular_polymatroid}
\!\!\!\!\bar f(A)
= 
\phi_{v^K}
\Biggl(
\sum_{v^{K-1} \in V^{(K-1)}}
w^{(K)}_{v^K}(v^{K-1})
\phi_{v^{K-1}}
\biggl(
\dots
\!\!\!\!
\sum_{v^2 \in V^{(2)}}
w_{v^3}^{(3)}(v^2)
\phi_{v^2}
\Bigl(
\sum_{v^1 \in V^{(1)}}
w_{v^2}^{(2)}(v^1)
\phi_{v^1}
\bigl(
\sum_{a \in A} w_{v^1}^{(1)}(a)
\bigr)
\Bigr)
\biggr)
\Biggr),
\end{align}\endgroup
and where $m_\pm: V \to \mathbb R$ is an arbitrary modular function.
Equation~\eqref{eq:deep_submodular} defines a class of submodular
functions. Submodularity follows since a composition of a monotone
non-decreasing function $h$ and a monotone non-decreasing concave
function $\phi$ ($g(\cdot) = \phi(h(\cdot))$) is submodular (Theorem 1
in \cite{Lin2011} and repeated, with proof, in
Theorem~\ref{thm:concave_over_polymatroidal}) --- a DSF is submodular
via recursive application and since submodularity is closed under
conic combinations.


\subsection{Recursively Defined DSFs}
\label{sec:recursive-dsfs}

A slightly more general way to define a DSF and that is useful for the
theorems below uses recursion. This section also defines the notation
that will be often used later in the paper.

We are given a directed acyclic graph
(DAG) $\mathbf G=(\mathbf V, \mathbf E)$ where for any given node
$v \in \mathbf V$, we say $\text{pa}(v) \subset \mathbf V$ are the
parents of (or vertices pointing towards) $v$.  A given size $n$ subset
of nodes $V \subset \mathbf V$ corresponds to the ground set of a
submodular function and for any $v \in V$, $\text{pa}(v) = \emptyset$.
A unique ``root'' node $\rtnd \in \mathbf V \setminus V$ has the
distinction that $\rtnd \notin \text{pa}(q)$ for any
$q \in \mathbf V$.  Given a non-ground node
$v \in \mathbf V \setminus V$, we define the concave function
$\psi_v : \mathbb R^V \to \mathbb R_+$ where
\begin{subequations}
\begin{align}
\psi_v(x) 
& =  \phi_v(\varphi_v(x)),
\label{eq:gen_deep_submodular_recursive_a}
\intertext{and}
\varphi_v(x) &= \sum_{u \in \text{pa}(v) \setminus V}
w_{vu} \psi_u(x) + \langle  m_v, x \rangle.
\label{eq:gen_deep_submodular_recursive_b}
\end{align}
\label{eq:gen_deep_submodular_recursive}
\end{subequations}
In the above, $\phi_v : \mathbb R_+ \to \mathbb R_+$ is a normalized
non-decreasing univariate concave function, $w_{vu} \in \mathbb R_+$
is a non-negative weight indicating the relative importance of
$\psi_u$ to $\varphi_v$, and
$m_v: \mathbb R^{\text{pa}(v) \cap V} \to \mathbb R_+$ is a
non-negative linear function that evaluates as
$\langle m_v, x \rangle = \sum_{u \in \text{pa}(v) \cap V}
m_v(u)x(u)$.  In other words, $\langle m_v, x \rangle$ is a sparse
dot-product over ground elements $\text{pa}(v) \cap V$.  There is no
additional additive bias constant added to the end of
Equation~\eqref{eq:gen_deep_submodular_recursive_b} as this is assumed
to be part of $\phi_v$ (as a shift) if needed (alternatively, for one
of the $u \in \text{pa}(V) \setminus V$, we can set $\psi_u(x)=1$ as a
constant, and the bias may be specified by a weight, as is
occasionally done when specifying neural networks).  The base case,
where $\text{pa}(v) \subseteq V$ therefore has
$\psi_v(x) = \phi_v(\langle m_v, x \rangle$), so $\psi_v(\mathbf 1_A)$
is a concave composed with a modular function. The notation
$\mathbf 1_A$ indicates the characteristic vector of set $A$, meaning
$\mathbf 1_A(v) = 1$ if $v \in A$ and is otherwise zero.

A general DSF is defined as follows: for all $A \subseteq V$,
$f(A) = \psi_\rtnd(\mathbf 1_A) + m_{\pm}(A)$, where
$m_{\pm} : V \to \mathbb R$ is an arbitrary modular function (i.e., it
may include positive and negative elements).
For all $v \in \mathbf V$, we also for convenience, define
$g_v(A) = \psi_v(\mathbf 1_A)$.
To be able to treat all $v \in \mathbf V$ similarly, we say, for
$v \in V$, that $\text{pa}(v) = \emptyset$, and use the
identity $\phi_v(a) = a$ for $a \in \mathbf R$, and set
$m_v = \mathbf 1_v$, so that $\psi_v(x) = \varphi_v(x) = x(v)$ and
$g_v(A) = \mathbf 1_{v \in A}$ which is a modular function on $V$.

By convention, we say that a zero-layer DSF function is an arbitrary
modular function, a one-layer DSF is an SCMM, and a two-layer DSF is,
as we will soon see, something different. By $\text{DSF}_k$, we mean
the family of DSFs with $k$ layers.

As mentioned above, from the perspective of defining a submodular
function, there is no loss of generality by adding the final modular
function $m_{\pm}$ to a polymatroid
function~\cite{cunningham1983decomposition,cu84}. The degree to which DSFs
comprise a subclass of submodular functions corresponds to the degree
to which $g_\rtnd$ comprise a subclass of all polymatroid functions.

The recursive form of DSF is more convenient than the layered approach
mentioned above which, in the current form, would partition
$\mathbf V = \{ V^{(0)}, V^{(1)}, \dots, V^{(K)} \}$ into layers, and
where for any $v \in V^{(i)}$, $\text{pa}(v) \subseteq V^{(i-1)}$.
Figure~\ref{fig:deep_submodular}-(a) corresponds to a layered graph
$\mathbf G = (\mathbf V, \mathbf E)$ where $r = v_1^3$ and
$V = \{v^0_1, v^0_2, \dots, v^0_6\}$.
Figure~\ref{fig:deep_submodular}-(b) uses the same partitioning but
where units are allowed to skip by more than one layer at a time.
More generally, we can order the vertices in $\mathbf V$ with order
$\sigma$ so that $\{ \sigma_1, \sigma_2, \dots, \sigma_n \} = V$ where
$n=|V|$, $\sigma_m = \rtnd = v^K$ where $m = |\mathbf V|$ and where
$\sigma_i \in \text{pa}(\sigma_j)$ iff $i < j$. This allows an
arbitrary pattern of skipping while maintaining submodularity. The
additional linear function in
Equation~\eqref{eq:gen_deep_submodular_recursive_b} is strictly not
necessary (e.g., there could be paths of linearity along subsets of
the $\phi_v v \in \mathbf A$ for some $\mathbf A \subset \mathbf V$ thereby
achieving the same result) but we include it to stress that at each layer
there may be a modular function and a bias.

\subsection{DSFs: Practical Benefits and Relation to Deep Neural Networks}
\label{sec:dsfs-deep-neural}

The layered definition in Equation~\eqref{eq:deep_submodular} is
reminiscent of feed-forward deep neural networks (DNNs) owing to its
multi-layered architecture. Interestingly, if one restricts the
weights of a DNN at every layer to be non-negative, then for many
standard hidden-unit activation functions the DNN constitutes a
submodular function when given Boolean input vectors. The result
follows for any activation function that is monotone non-decreasing
concave for non-negative reals, such as the sigmoid, the hyperbolic
tangent, and the rectified linear functions. In the rectified linear
case, however, the entire network would be linear so the model becomes
interesting only with hidden activations that are strictly concave
(since the weights can be arbitrarily scaled, perhaps
$\phi(x) = \min(x,1)$ is a reasonable concave analogy in a DSF to the
rectified linear function in a DNN).  More importantly, this suggests
that DSFs can be trained in a fashion similar to DNNs ---
specifically, training DSFs and can take advantage of the many
successful training techniques and software libraries for training
DNNs (many of the toolkits make it easy to project weights into the
positive orthant). Further discussion on this point is given in
Section~\ref{sec:learn-deep-subm}. The recursive definition of DSFs,
in Equation~\eqref{eq:gen_deep_submodular_recursive} is useful for the
analysis in Section~\ref{sec:family-comp-subm}.


DSFs should be useful for many applications in machine
learning. First, they retain the advantages of SCMMs in that they
require neither $O(n^2)$ computation nor access to the entire ground
set for a set evaluation. The underlying DSF computation is
matrix-vector multiplication that, like DNNs, can be performed very
quickly using modern GPU computing. Hence, DSFs can be both fast, and
useful for parallel and/or streaming applications.  Second, DSFs allow
for a nested hierarchy of features, similar to advantages a deep model
has over a shallow model. For example, a one-layer DSF must construct
a valuation over a set of objects from a large number of low-level
features which can lead to fewer opportunities for feature sharing
while a deeper network fosters distributed representations, also
analogous to DNNs \cite{bengio09,bengio13}.  It can be argued that a
deep neural network is more efficient, in terms of the number of
possible functions represented per weight, than a shallow neural
network and perhaps DSFs share this advantage.  Hence, even if the DSF
and SCMM families were to be found to be same (but that
Theorem~\ref{sec:dsfs-extend-class} shows to be false), there could be
advantages to applications and learning paradigms thanks to this
natural hierarchical decomposition of concepts.

DSFs have been used occasionally in some applications.  In one
instance~\cite{lin2010-pp-corpus-creation}, a square root was applied
to a subset of the right hand nodes in a bipartite neighborhood
function in order to offer reduced cost for these nodes being
indirectly selected in the graph.
In~\cite{wei2014-unsupervised-icassp} a two-layer DSF was used to
introduce higher-level interaction between features, an act that
yielded benefits in speech data summarization. Lastly,
laminar matroid rank functions, which are instances of DSFs as shown
in Section~\ref{sec:matroids-case}, have been used to show worst case
performance of various constrained submodular minimization problems
\cite{goemans2009approximating,svitkina2011submodular,rishabh2013-submodular-constraints}.

\section{Relevant Properties and Special Cases}
\label{sec:family-comp-subm}

DSFs represent a family that, at the very least, contain the family of
SCMMs. Above, we argued intuitively that DSFs might extend SCMMs as
they allow components themselves to directly interact, and the
interactions may propagate up a many-layered hierarchy.  In this
section, we start off (in Section~\ref{sec:prop-conc-subm})
discussing preliminaries regarding concave functions.
Section~\ref{sec:antit-maps-superd} then covers specific properties of
the multivariate concave function associated with a DSF, in particular
the antitone gradient superdifferential property which is a a
sufficient condition for submodularity. This section also compares
this condition with the negativity of the off-diagonal Hessian matrix
condition for submodular functions.  Section~\ref{sec:matroids-case}
discusses matroid rank special cases, including the laminar matroid
rank function which can be seen, in the light of this paper, as a form
of deep matroid rank.  This section also discusses special cases of
the results shown later in the paper, in particular, that: (1) cycle
matroid rank functions cannot represent all partition matroid rank
functions; (2) laminar matroid rank functions strictly generalize
partition matroid rank functions; (3) laminar matroid rank functions
cannot express all cycle matroid rank functions; (4) DSFs generalize
laminar matroid rank functions; and (5) SCMMs generalize partition
matroid rank functions.  Lastly, section~\ref{sec:defic-absol-redund}
introduces various analysis tools (in particular the ``surplus'') that
are used later in the paper.

\subsection{Properties of Concave and Submodular Functions}
\label{sec:prop-conc-subm}

Many of the results in the sections below rely on a number of
properties of concave functions. Since we wish to consider
non-differentiable concave functions, the theorems below consider this
more general case where we may assume only that the concave functions
have superdifferentials. It is, in general, more work to show that the
properties of concave functions hold in this non-differential case,
but since there seem to be no consolidated published proofs of these
properties, we offer them here in full.

Let $\phi : \mathbf R \to \mathbf R$ be a normalized ($\phi(0) = 0$)
monotone non-decreasing concave function.  In any such function, there
may be an initial linear part where $\phi(x) = \gamma x$ for
$x \in [0,\alpha_\phi]$ where $\gamma > 0$ and where
$\alpha_\phi \geq 0$ is the largest point where $\phi$ is still
linear. Larger than $\alpha_\phi$, there may be a middle part
consisting of a series of concave curves and line segments all
situated to ensure concavity. Larger than this, there finally might be
a saturation point where $\phi(x) = c$ for all
$x \geq \alpha_\text{sat}$, where
$c,\alpha_\text{sat} \in \mathbb R_+ \cup \set{ \infty }$. The middle
region ($x \in [\alpha_\text{lin},\alpha_\text{sat}]$) might or might
not be smooth. It is useful sometimes in applications (e.g.,
\cite{jegelka2011-nonsubmod-vision}) to formulate submodular functions
from concave functions that have an initial linear part followed by
either a saturation or by a smooth concave part.

\begin{definition}[Superdifferential]
Let $\phi : \mathbb R^n \to \mathbb R$ be a concave function. The superdifferential
of $\phi$ at $x$ is the set of vectors defined as follows:
\begin{align}
\partial \phi(x) = 
\set{
s \in \mathbb R^n :
f(y) - f(x)
\leq
\langle s, y - x \rangle,
\forall y \in \mathbb R^n
}
\end{align}
\end{definition}
\noindent
The superdifferential of a concave function is guaranteed always to exist
\cite{rockafellar1966characterization,rockafellar1970maximal,hiriart1993convex,nesterov2004introductory}.
When $\phi$ is differentiable at $x$, 
the superdifferential corresponds
to the gradient, so that $\partial \phi(x) = \set{ \nabla \phi(x) }$
and otherwise members of $\partial \phi(x)$ are called subgradients.
In general, we have the following:
\begin{lemma}
The superdifferential of a concave function is a monotone operator, i.e.,
\begin{align}
\langle u - v, x - y \rangle  \leq 0,
\forall x, y \in \mathbb R^n, u \in \partial \phi(x), v \in \partial \phi(y)
\end{align}
\end{lemma}
\begin{proof}
We have that
\begin{align}
f(y) \leq f(x) + \langle u, y - x \rangle,
\text{ and } f(x) \leq  f(y) + \langle v , x - y \rangle
\end{align}
Adding the two inequalities yields monotonicity.
\end{proof}
This means in particular that, in the one-dimensional case when $n=1$,
if $x \leq y$ then for any $u \in \partial \phi(x)$ and any
$v \in \partial \phi(y)$, we must have $u \geq v$. In the below, we
offer a number of properties of concave superdifferentials in the 1D
case. While statements of these results are intuitively clear, the
authors were unable to find published proofs, so they are also
included herein.

\begin{theorem}
Let $\phi : \mathbb R \to \mathbb R$ be a 
continuous function. 
Then $\phi$ is 
concave
if and only if 
for all $a,b \in \mathbb R$ with $a \leq b$,
and $\Delta \in \mathbb R_+$, we have that
\begin{align}
\phi(a + \Delta) - \phi(a) \geq \phi(b + \Delta)  - \phi(b).
\label{eq:concave_cond}
\end{align}
Also, $\phi$ is monotone non-decreasing concave if and only if for all
$a,b \in \mathbb R$ with $a \leq b$, and
$\Delta, \epsilon \in \mathbb R_+$, we have that
\begin{align}
\phi(a + \Delta + \epsilon) - \phi(a) \geq \phi(b + \Delta)  - \phi(b)
\label{eq:mon_concave_cond}
\end{align}
\end{theorem}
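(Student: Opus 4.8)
The plan is to establish the two equivalences in sequence, proving the concavity characterization \eqref{eq:concave_cond} first and then bootstrapping to the monotone-concave characterization \eqref{eq:mon_concave_cond}. Throughout, the nontrivial structural fact being exploited is that a chord-slope decreasing property is the exact one-dimensional shadow of the diminishing-returns inequality that defines submodularity, so the argument should be purely elementary (convex combinations plus continuity) and avoid any appeal to differentiability, consistent with the stated goal of handling the non-differentiable case.

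For the forward direction of \eqref{eq:concave_cond}, I would assume $\phi$ concave and reduce to the case $a<b$, $\Delta>0$ (the cases $\Delta=0$ or $a=b$ are immediate). The key step is to write the two ``inner'' points $a+\Delta$ and $b$ as convex combinations of the two ``outer'' points $a$ and $b+\Delta$ (note $a<b\le b+\Delta$, so both inner points lie in $[a,b+\Delta]$). Setting $\lambda_1=\frac{b-a}{(b-a)+\Delta}$ and $\lambda_2=\frac{\Delta}{(b-a)+\Delta}$, a short computation gives $a+\Delta=\lambda_1 a+(1-\lambda_1)(b+\Delta)$ and $b=\lambda_2 a+(1-\lambda_2)(b+\Delta)$, and crucially $\lambda_1+\lambda_2=1$. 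Applying the definition of concavity to each and summing, the coefficient of $\phi(a)$ becomes $\lambda_1+\lambda_2=1$ and the coefficient of $\phi(b+\Delta)$ becomes $(1-\lambda_1)+(1-\lambda_2)=1$, yielding $\phi(a+\Delta)+\phi(b)\ge\phi(a)+\phi(b+\Delta)$, which rearranges to \eqref{eq:concave_cond}. The slickness of this step is that the two weights are forced to sum to one, so the cross terms cancel exactly.

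For the converse of \eqref{eq:concave_cond}, I would derive midpoint concavity and then invoke continuity. Given $x\le y$, I would apply \eqref{eq:concave_cond} with $a=x$, $b=\frac{x+y}{2}$, $\Delta=\frac{y-x}{2}$, so that $a+\Delta=\frac{x+y}{2}$ and $b+\Delta=y$; the inequality then reads $\phi(\frac{x+y}{2})-\phi(x)\ge\phi(y)-\phi(\frac{x+y}{2})$, i.e. $2\phi(\frac{x+y}{2})\ge\phi(x)+\phi(y)$. This is precisely where continuity of $\phi$ is needed: midpoint concavity upgrades to full concavity only through the classical Jensen argument (midpoint-concave plus continuous implies concave), and \textbf{this upgrade is the main obstacle}, since it is the one place the hypothesis ``continuous'' is essential and cannot be replaced by a naive pointwise manipulation.

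Finally, for \eqref{eq:mon_concave_cond}, the forward direction chains two facts: monotonicity gives $\phi(a+\Delta+\epsilon)-\phi(a)\ge\phi(a+\Delta)-\phi(a)$ (as $\epsilon\ge0$), and \eqref{eq:concave_cond} gives $\phi(a+\Delta)-\phi(a)\ge\phi(b+\Delta)-\phi(b)$; composing them yields \eqref{eq:mon_concave_cond}. For the converse I would extract both properties from \eqref{eq:mon_concave_cond} by specialization: setting $\epsilon=0$ recovers \eqref{eq:concave_cond}, hence concavity by the first equivalence, while setting $a=b$ and $\Delta=0$ gives $\phi(a+\epsilon)-\phi(a)\ge0$ for all $\epsilon\ge0$, which is exactly monotone non-decreasingness. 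With both parts in hand the theorem is complete.
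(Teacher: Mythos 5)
Your proof is correct, but it takes a genuinely different route from the paper's on the key forward direction. To show that concavity implies \eqref{eq:concave_cond}, the paper argues via the mean value theorem in the differentiable case and via monotonicity of the superdifferential in the non-differentiable case, including a case split on whether $a+\Delta \leq b$ or $a+\Delta > b$; your convex-combination argument --- writing $a+\Delta$ and $b$ as convex combinations of the outer points $a$ and $b+\Delta$ with weights $\lambda_1,\lambda_2$ that are forced to sum to one, then adding the two concavity inequalities --- replaces all of that with a single algebraic identity, requiring no differentiability, no existence theorem for superdifferentials, and no case analysis (and your identities do check out: both convex-combination representations and $\lambda_1+\lambda_2=1$ are verified by direct computation). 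The converse direction is essentially the same in both treatments: specialize to $a=x$, $b=(x+y)/2$, $\Delta=(y-x)/2$ to get midpoint concavity, then invoke the classical fact that continuous plus midpoint-concave implies concave, which you correctly flag as the one place continuity is indispensable. On the second equivalence you are actually more thorough than the paper, which only remarks that taking $a=b$ extracts monotonicity; you spell out the forward chaining (monotonicity composed with \eqref{eq:concave_cond}) and both converse specializations ($\epsilon=0$ recovers \eqref{eq:concave_cond}; $a=b$, $\Delta=0$ gives monotonicity). What the paper's heavier route buys is reusable machinery: the supergradient-monotonicity reasoning rehearsed in its proof is the same tool used shortly afterward in Lemma~\ref{thm:simultaneously_strict} and Theorem~\ref{thm:concave_additivity}, whereas your argument is shorter and more elementary but builds nothing for later proofs to lean on.
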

\begin{proof}
The result is vacuous if $a = b$, or $\Delta = 0$ so assume $a < b$
and $\Delta > 0$.

If part: Assume
Equation~\eqref{eq:concave_cond} is true and consider
\begin{align}
\frac{ \phi(a + \Delta) - \phi(a) } { \Delta }
\geq \frac{ \phi(b + \Delta) - \phi(b) } { \Delta }
\end{align}
If $\phi$ is differentiable at $a$ and $b$, then taking $\Delta \to 0$
gives us $\phi'(a) \geq \phi'(b)$ for all $a \leq b$, and this is a
sufficient condition for concavity (see Nesterov 2.13, page 54,
\cite{nesterov2004introductory}). If $\phi$ is not differentiable at
either $a$ or $b$, we resort to its continuity. A function is concave
if and only if it is continuous and midpoint concave
\cite{niculescu2006convex} (or midconcave \cite{roberts1974convex}),
defined as for any $x,y \in \mathbb R$ $f((x+y)/2) \geq
(f(x)+f(y))/2$).  This condition is immediate from
Equation~\eqref{eq:concave_cond} by setting $x = a$, $y = b + \Delta$,
and $b = a + \Delta = (x+y)/2$.

Only if part: Assume $\phi$ is concave and $a < b$ and $\Delta > 0$ are
given. If $\phi$ is differentiable, then by the mean value theorem,
there exists an $a^+$ with $a \leq a^+ \leq a + \Delta$ and a $b^+$
with $b \leq b^+ \leq b + \Delta$ where
\begin{align}
\phi'(a^+) = \frac{ \phi(a + \Delta) - \phi(a) } { \Delta } \\
\intertext{and}
\phi'(b^+) = \frac{ \phi(b + \Delta) - \phi(b) } { \Delta }
\end{align}

If $a+ \Delta \leq b$ then $a^+ \leq b^+$ and hence $\phi'(a^+) \geq
\phi'(b^+)$ by concavity (Nesterov) which immediately gives $\phi(a +
\Delta) - \phi(a) \geq \phi(b + \Delta) - \phi(b)$.  If $\phi$ is not
differentiable at either $a$ or $b$, then consider $d_a
\in \partial \phi(a)$ and $d_b \in \partial \phi(b)$, so that $\forall
y_a,y_b$, $\phi(y_a) \leq \phi(a) + \langle d_a, y_a - a \rangle$ and
$\phi(y_b) \leq \phi(b) + \langle d_b, y_b - b \rangle$.  Taking $y_a
= a + \Delta$ and $y_b = b + \Delta$ gives $(\phi(a+\Delta) -
\phi(a))/\Delta = d_a \geq d_b = (\phi(b + \delta) - \phi(b))/\Delta$
which follows from the monotonicity of the superdifferential operator.

If $a+ \Delta > b$ then 
$a < b < a + \Delta < b + \Delta$. Again when $\phi$ 
is differentiable, by the mean
value theorem, there
exists $a_b^+$ 
with $a \leq a_b^+ \leq b$
and $a_\Delta^+$ 
with $a + \Delta 
\leq
a_\Delta^+ \leq b + \Delta$ with
\begin{align}
\phi'(a^+_b) = \frac{ \phi(b) - \phi(a) } { b - a } \\
\intertext{and}
\phi'(a_\Delta^+) = \frac{ \phi(b + \Delta) - \phi(a + \Delta) } { b - a },
\end{align}
and since $a_b^+ < a_\Delta^+$,
$\phi'(a^+_b) \geq \phi'(a_\Delta^+)$. This immediately
gives 
$\phi(b) - \phi(a) \geq 
\phi(b + \Delta) - \phi(a + \Delta)$
or 
$\phi(a + \Delta) - \phi(a)
\geq \phi(b + \Delta) - \phi(b)$.
If $\phi$ is not
differentiable, then taking supergradients $d_a \in \partial \phi(a)$
and $d_{a +\Delta} \in \partial \phi(a + \Delta)$ again gives the result.

The second part of the theorem is immediate if we take $a=b$, and
define $\delta = a+\Delta$ leading to $\phi(\delta + \epsilon) \geq
\phi(\delta)$, i.e., monotonicity.
\end{proof}

The above proof considers the smooth and non-smooth varieties
separately where the non-smooth case utilizes only the existence of
the superdifferential of a concave function. Since the
superdifferential always exists for a concave function, smooth or
otherwise, in the below we consider only the most general case where
we assume only a superdifferential exists. As a result, the proofs are
a bit more involved, but when constructing DSFs and considering the
resultant submodular families in Section~\ref{sec:dsfs}, we wish to
allow for the most general class concave functions.

We next restate Theorem 1 from \cite{Lin2011} but also provide a proof
which was missing.
\begin{theorem}
\label{thm:concave_over_polymatroidal}
Suppose that $h : 2^V \to \mathbb R$ is
a monotone non-decreasing submodular function
and $\phi$ is a monotone non-decreasing concave
function. Then $g(A) = \phi(h(A))$ is monotone
non-decreasing submodular.
\end{theorem}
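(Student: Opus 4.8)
The plan is to establish monotonicity and submodularity separately, in each case isolating the role of the outer concave function $\phi$ from that of the inner submodular function $h$. Monotonicity is immediate: if $A \subseteq B$ then $h(A) \le h(B)$ because $h$ is monotone non-decreasing, and applying the monotone non-decreasing $\phi$ preserves the inequality, giving $g(A) = \phi(h(A)) \le \phi(h(B)) = g(B)$.

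For submodularity I would work with the gain characterization, showing $g(v|X) \ge g(v|Y)$ for all $X \subseteq Y$ and $v \notin Y$. Write $a = h(X)$ and $b = h(Y)$; since $X \subseteq Y$ and $h$ is monotone, $a \le b$. Set $\Delta_X = h(v|X) \ge 0$ and $\Delta_Y = h(v|Y) \ge 0$, where non-negativity again comes from monotonicity of $h$, so that $h(X \cup \set{v}) = a + \Delta_X$ and $h(Y \cup \set{v}) = b + \Delta_Y$. Submodularity of $h$ gives $\Delta_X \ge \Delta_Y$. The desired inequality $\phi(a + \Delta_X) - \phi(a) \ge \phi(b + \Delta_Y) - \phi(b)$ then follows directly by invoking Equation~\eqref{eq:mon_concave_cond} with increment $\Delta = \Delta_Y$ and slack $\epsilon = \Delta_X - \Delta_Y \ge 0$, since then $a + \Delta + \epsilon = a + \Delta_X$ and $b + \Delta = b + \Delta_Y$.

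The only real subtlety --- and the reason I would lean on the two-parameter monotone-concave inequality \eqref{eq:mon_concave_cond} rather than the single-increment concavity statement \eqref{eq:concave_cond} --- is that the two gains being compared differ in \emph{both} their base point and their magnitude: as the inner context grows the base point shifts up ($a \le b$) while the inner gain shrinks ($\Delta_X \ge \Delta_Y$). Plain concavity \eqref{eq:concave_cond} only compares equal increments at shifted base points, so it alone does not suffice; the slack variable $\epsilon$ is precisely what absorbs the surplus increment $\Delta_X - \Delta_Y$, while the $a \le b$ shift is handled by the base-point monotonicity of the superdifferential. I do not expect any genuine obstacle beyond aligning this bookkeeping, since the preceding theorem was deliberately stated in exactly the two-parameter form this argument requires.
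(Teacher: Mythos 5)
Your proof is correct and follows essentially the same route as the paper: both define $a = h(A) \leq b = h(B)$, set $\Delta = h(v|B)$ and $\epsilon = h(v|A) - h(v|B) \geq 0$ (non-negative by submodularity of $h$), and invoke the two-parameter monotone-concave inequality~\eqref{eq:mon_concave_cond} to conclude. Your write-up is in fact slightly more complete, since you also spell out the (trivial) monotonicity of $g$ and explain why~\eqref{eq:concave_cond} alone would not suffice.
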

\begin{proof}
  Consider any $A \subseteq B \subset V$ and $v \notin B$.  Define
  quantities $a,b, \Delta, \epsilon$ so that: $a = h(A) \leq b = h(B)$,
  $a+ \Delta + \epsilon = h(A + v)$, and $b + \Delta = h(B + b)$.
  I.e., $h(v | A) = \Delta + \epsilon \geq h(v|B) = \Delta$.
  Then 
we have
\begin{align}
\phi(a + \Delta + \epsilon) - \phi(a)
\geq \phi(b + \Delta) - \phi(b)
\end{align}
or 
\begin{align}
\phi(h(A + v)) - \phi(h(A))
\geq \phi(h(B + v)) - \phi(h(B)).
\end{align}
\end{proof}

The slope of the linear interpolation between two points on a concave
function puts a connecting relationship on the corresponding
superdifferentials at each of the two points, as the following result
shows.
\begin{lemma}
  Given a concave function $\phi: \mathbb R \to \mathbb R$ and two
  points $a,b$ with $a < b$ that define the value
  $d_{ab} = (\phi(b) - \phi(a))/(b-a)$. 
  Then $\min_{d \in \partial \phi(a)} d > d_{ab}$
  if and only if $\max_{d \in \partial \phi(b)} d < d_{ab}$.
\label{thm:simultaneously_strict}
\end{lemma}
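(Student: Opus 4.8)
The plan is to collapse the claimed equivalence of two strict inequalities into a single symmetric membership statement about the secant slope $d_{ab}$. First I would record two baseline inequalities that hold unconditionally. Applying the supergradient inequality from the definition of $\partial\phi$ at $a$ to the point $y=b$ gives $\phi(b)-\phi(a)\le s(b-a)$ for every $s\in\partial\phi(a)$, hence $s\ge d_{ab}$; symmetrically, applying the inequality at $b$ to the point $y=a$ gives (after dividing by $a-b<0$) that $s\le d_{ab}$ for every $s\in\partial\phi(b)$. Thus one always has $\min_{d\in\partial\phi(a)}d\ge d_{ab}$ and $\max_{d\in\partial\phi(b)}d\le d_{ab}$, so the two strict inequalities in the statement are precisely the assertions that these baseline bounds fail to be tight.

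Next I would translate tightness into membership. Since the superdifferential of a concave function in one dimension is a nonempty closed interval (so the extrema are attained), $\min_{d\in\partial\phi(a)}d>d_{ab}$ holds if and only if $d_{ab}\notin\partial\phi(a)$, and likewise $\max_{d\in\partial\phi(b)}d<d_{ab}$ holds if and only if $d_{ab}\notin\partial\phi(b)$. Consequently the lemma is equivalent to the symmetric biconditional $d_{ab}\in\partial\phi(a)\iff d_{ab}\in\partial\phi(b)$, which I would prove directly; negating both sides then recovers the stated equivalence of strict inequalities.

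The heart of the argument, and the step I expect to carry all the content, is the observation that the two candidate supporting lines of slope $d_{ab}$, one anchored at $a$ and one at $b$, coincide. Suppose $d_{ab}\in\partial\phi(a)$, i.e.\ $\phi(y)\le\ell(y)$ for all $y$, where $\ell(y)=\phi(a)+d_{ab}(y-a)$. Evaluating at $y=b$ and using the definition of $d_{ab}$ gives $\ell(b)=\phi(a)+(\phi(b)-\phi(a))=\phi(b)$, so $\ell$ passes through $(b,\phi(b))$ and can be rewritten as $\ell(y)=\phi(b)+d_{ab}(y-b)$. Hence $\phi(y)\le\phi(b)+d_{ab}(y-b)$ for all $y$, which is exactly $d_{ab}\in\partial\phi(b)$. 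The reverse implication is identical with the roles of $a$ and $b$ exchanged (the same line now passes through $(a,\phi(a))$), yielding the biconditional.

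I would close by noting that this argument never differentiates $\phi$ and uses only the supergradient inequality, so the non-smooth case is handled on the same footing as the smooth one; closedness of $\partial\phi$ is the only auxiliary fact invoked, and it is standard for concave functions. The single genuine obstacle is spotting that membership of the secant slope in the superdifferential at one endpoint forces the corresponding supporting line to touch the graph at the other endpoint as well; once this is seen, both directions and the non-smooth case follow with no further computation.
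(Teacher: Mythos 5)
Your proposal is correct and is essentially the paper's own argument in different packaging: both rest on the sandwich $\min_{d \in \partial \phi(a)} d \geq d_{ab} \geq \max_{d \in \partial \phi(b)} d$ together with the observation that the slope-$d_{ab}$ supporting line anchored at one endpoint passes through the other endpoint, hence is a supergradient there as well. The paper runs this as a proof by contradiction (assuming one inequality strict and the other tight contradicts minimality of $d_a^{\text{min}}$, resp.\ maximality of $d_b^{\text{max}}$), whereas you phrase it as the symmetric biconditional $d_{ab} \in \partial\phi(a) \iff d_{ab} \in \partial\phi(b)$ and negate, using attainment of the extrema --- a tidier formulation, but the same proof.
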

\begin{proof}
  From the monotonicity of the
  supergradient~\cite{hiriart1993convex,nesterov2004introductory}, we
  always have
\begin{align}
d_a^{\text{min}} \triangleq
 \min_{d \in \partial \phi(a)} d \geq d_{ab}
\geq \max_{d \in \partial \phi(b)} d
\triangleq
d_b^\text{max} 
\label{eq:dab_sandwich}
\end{align}
since otherwise, say if
$d_a^{\text{min}} < d_{ab}$, then
$\phi(a) + d_a^{\text{min}}(b-a) < \phi(a) + d_{ab}(b-a) = \phi(b)$
which contradicts $d_a^\text{min}$ being a supergradient. 
We must show that the inequalities in 
Equation~\eqref{eq:dab_sandwich} can be only simultaneously strict.
Let
$d_a^\text{min}$ be given such that $d_a^\text{min} > d_{ab}$,
and suppose that $d_b^\text{max} = d_{ab}$. Then
\begin{align}
\phi(y) 
  &\leq \phi(b) + d_b^\text{max}(y-b)  \\
  &= \phi(b) + d_b^\text{max}(y - a + a -b) \\
  &= \phi(b) + d_b^\text{max}(a-b) + d_b^\text{max}(y-a) \\
  &= \phi(a) + d_b^\text{max}(y-a)
\end{align}
and hence we have found a supergradient $d_b^\text{max} \in \partial \phi(a)$
with  $d_a^\text{min} > d_b^\text{max}$ contradicting
the minimality of $d_a^\text{min}$. Hence,
we must have $d_b^\text{max} < d_{ab}$. A similar
argument shows that $d_b^\text{max} < d_{ab}$
and 
$d_a^\text{min} = d_{ab}$ leads to a contradiction of
the maximality of $d_b^\text{max}$.
\end{proof}

The next result identifies a condition that, if true, tells us
about the extent of the initial linear region of a monotone non-decreasing
concave function.
\begin{theorem}
  Given a monotone non-decreasing concave function
  $\phi : \mathbb R \to \mathbb R$ that is normalized ($\phi(0) = 0$)
  and any $a, b \in \mathbb R_+$ with $0 < a \leq b$. Then
  $\phi(a+b) = \phi(a) + \phi(b)$, if and only if $\phi(x)$ is
  linear in the region from $0$ to $a+b$ (that is,
  there exists $\gamma \in \mathbb R$ with
  $\phi(x) = \gamma x$ for $x \in [0,a+b]$.
\label{thm:concave_additivity}
\end{theorem}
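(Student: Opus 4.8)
The forward (``if'') direction is immediate: if $\phi(x)=\gamma x$ on $[0,a+b]$ then $\phi(a+b)=\gamma(a+b)=\gamma a+\gamma b=\phi(a)+\phi(b)$. For the converse my plan is in three moves: (i) observe that a normalized concave $\phi$ is always \emph{subadditive}, so the hypothesis $\phi(a+b)=\phi(a)+\phi(b)$ is an \emph{equality} in subadditivity; (ii) extract from that equality the fact that the point $(a,\phi(a))$ lies exactly on the chord from $(0,0)$ to $(a+b,\phi(a+b))$; and (iii) invoke the rigidity principle that a concave function meeting one of its chords at an interior point must coincide with that chord across the whole interval.

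For (i), subadditivity follows at once from the already-proven concavity characterization \eqref{eq:concave_cond}: specializing it with lower point $0$, upper point $b$, and increment $a$ (valid since $0\le b$ and $a\ge 0$) and using $\phi(0)=0$ gives $\phi(a)\ge\phi(a+b)-\phi(b)$. Equivalently, and more usefully for (ii), I would write $a$ and $b$ as the two-point convex combinations $a=\frac{a}{a+b}(a+b)+\frac{b}{a+b}\cdot 0$ and $b=\frac{b}{a+b}(a+b)+\frac{a}{a+b}\cdot 0$; concavity together with $\phi(0)=0$ then yields $\phi(a)\ge\frac{a}{a+b}\phi(a+b)$ and $\phi(b)\ge\frac{b}{a+b}\phi(a+b)$, whose right-hand sides sum to exactly $\phi(a+b)$. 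Under the hypothesis the left-hand sides also sum to $\phi(a+b)$, so both inequalities must be tight; in particular $\phi(a)=\frac{a}{a+b}\phi(a+b)$. Writing $\gamma\triangleq\phi(a+b)/(a+b)$ and $L(x)\triangleq\gamma x$, this is exactly $\phi(a)=L(a)$, with $a$ interior to $[0,a+b]$ since $0<a<a+b$.

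For (iii), concavity and $\phi(0)=0$ give $\phi(x)\ge L(x)$ for all $x\in[0,a+b]$ (the function lies above its chord). Given an arbitrary $c\in(0,a+b)$ with $c\ne a$, I would express $a$ as a two-point convex combination: if $c>a$, combine $c$ with the endpoint $0$ (coefficient $a/c$ on $c$); if $c<a$, combine $c$ with the endpoint $a+b$ (coefficient $b/(a+b-c)$ on $c$). In each case the two-point concavity inequality, the bound $\phi(c)\ge L(c)$, and the identity $\phi(a)=L(a)$ sandwich the combination between $\phi(a)$ and itself, forcing the slack to vanish; since the coefficient on $c$ is strictly positive, this yields $\phi(c)=L(c)$. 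Hence $\phi=L$ on $(0,a+b)$, and the endpoints match by $\phi(0)=0=L(0)$ and $\phi(a+b)=L(a+b)$, establishing linearity on all of $[0,a+b]$ with slope $\gamma$.

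The one point requiring care is that the whole argument must avoid differentiability, since the paper allows $\phi$ to be merely concave with a superdifferential. This is not actually an obstacle: every step uses only the two-point definition of concavity, so the superdifferential machinery of Lemma~\ref{thm:simultaneously_strict} is never invoked. The genuine bookkeeping lies in step (iii) --- treating $c<a$ and $c>a$ as separate convex-combination cases and verifying that in each the coefficient multiplying the slack term $\phi(c)-L(c)$ is strictly positive, so that tightness truly forces equality rather than merely an inequality.
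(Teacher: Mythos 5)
Your proof is correct, and it takes a genuinely different route from the paper's. The paper argues entirely through superdifferentials: it sets up a chain of supergradient inequalities at the four points $0$, $a$, $b$, $a+b$ (each justified by the supergradient property), invokes Lemma~\ref{thm:simultaneously_strict} to show that strictness at one link would force strictness at another and hence a contradiction, and then uses monotonicity of the superdifferential operator to conclude that $\partial \phi(y)$ collapses to the single value $\phi(a)/a$ throughout $[0,a+b]$, which gives linearity. You instead never touch the superdifferential: your step (i) turns the hypothesis into tightness of the chord inequality at $a$ (via the two convex combinations of $0$ and $a+b$), your step (ii) is the standard above-the-chord bound, and your step (iii) is the chord-rigidity argument, with the two-case bookkeeping ($c<a$ versus $c>a$) done correctly and the coefficients $a/c$ and $b/(a+b-c)$ both verifiably positive. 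What your approach buys is elementarity and slightly greater generality: it uses only the two-point definition of concavity and $\phi(0)=0$, and in fact never needs the monotone non-decreasing hypothesis at all. What the paper's approach buys is coherence with its broader program: the superdifferential machinery (including Lemma~\ref{thm:simultaneously_strict} and the monotone-operator property) is developed anyway because it is reused throughout Section~\ref{sec:antit-maps-superd} for the antitone-superdifferential results, so the paper's proof doubles as an exercise of tools it needs elsewhere, whereas yours is self-contained.
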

\begin{proof}
If case: immediate.

Only if case: 
Any violations of the following inequalities
would violate the superdifferential property of
$\partial \phi(y)$ at $0$, $a$, $b$, or $a+b$:
\begin{align}
\min_{d \in \partial \phi(0)}d \geq \phi(a)/a,
&& \max_{d \in \partial \phi(a)} d \leq \phi(a)/a, \\
\min_{d \in \partial \phi(a)} d \geq \frac{\phi(b)-\phi(a)}{b-a},
&& \max_{d \in \partial \phi(b)} d \leq \frac{\phi(b)-\phi(a)}{b-a}, \\
\min_{d \in \partial \phi(b)} d \geq \frac{\phi(a+b)-\phi(b)}{(a+b)-a}  = \phi(a)/a,
&& \max_{d \in \partial \phi(a+b)} d \leq \phi(a)/a.
\end{align}
This leads to the series of inequalities:
\begin{align}
\min_{d \in \partial \phi(0)} d 
&\stackrel{\text{(a)}}{\geq}
\phi(a)/a
\stackrel{\text{(b)}}{\geq}
\max_{d \in \partial \phi(a)} d
\geq 
\min_{d \in \partial \phi(a)} d
\geq
\frac{\phi(b) - \phi(a)}{b-a}
\geq 
\max_{d \in \partial \phi(b)} d \\
&\geq 
\min_{d \in \partial \phi(b)} d
\stackrel{\text{(c)}}{\geq}
\phi(a)/a
\stackrel{\text{(d)}}{\geq}
\max_{d \in \partial \phi(a+b)} d
\end{align}
From Lemma~\ref{thm:simultaneously_strict}, if (a) is strict, then so
is (b), leading to the contradiction $\phi(a)/a > \phi(a)/a$.
Also from Lemma~\ref{thm:simultaneously_strict}, if (d)
is strict, then so is (c), leading to the same contradiction.
Hence, all inequalities are equalities.
By the monotonicity of the superdifferential of a concave function, we have
that for any $x < y < z$ and $d_y \in \partial \phi(y)$
that
\begin{align}
  \min_{d \in \partial \phi(x) } d \geq d_y \geq \max_{d \in \partial \phi(z) } d
\end{align}
Hence, for all $y \in [0,a+b]$, we have $\partial \phi(y) = \set{ \phi(a)/a }$,
meaning that $\phi$ is linear in this region with $\gamma = \phi(a)/a = \phi(b)/b = \phi(a+b)/(a+b)$.
\end{proof}

It is known that any normalized submodular function is subadditive, in
that for any $A \subseteq V$, $\sum_{a \in A} f(a) \geq f(A)$. A
similar property is true of normalized monotone non-decreasing concave
functions.
\begin{theorem}[Subadditivity]
  Given a normalized monotone non-decreasing concave function $\phi$,
  a set of non-negative points $\set{ x_i }_{i=1}^\ell$,
  $x_i \in \mathbb R_+$, then we have
\begin{align}
\sum_i \phi(x_i) \geq \phi(\sum_i x_i)
\end{align}
and where the inequality is strict if and only if $\sum_i x_i$ is past
any linear part of $\phi$.
\label{thm:subadd_concave}
\end{theorem}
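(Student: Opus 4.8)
The plan is to separate the two claims: first the subadditive inequality, then the strictness characterization.

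For the inequality, I would write $S = \sum_i x_i$ and, for each index $i$ with $x_i > 0$, express $x_i$ as the convex combination $x_i = (x_i/S)\,S + (1 - x_i/S)\cdot 0$. Since $\phi$ is concave and normalized ($\phi(0)=0$), concavity gives $\phi(x_i) \ge (x_i/S)\phi(S) + (1-x_i/S)\phi(0) = (x_i/S)\phi(S)$, which is legitimate because $x_i \le S$. Summing over $i$ and using $\sum_i x_i = S$ yields $\sum_i \phi(x_i) \ge \phi(S)$. (Equivalently, this is the statement that the secant ratio $x \mapsto \phi(x)/x$ is non-increasing on $(0,\infty)$ for a normalized concave $\phi$.) Indices with $x_i = 0$ contribute $\phi(0)=0$ to the left side and nothing to $S$, so they may be dropped without affecting either side.

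For the strictness characterization I would argue by induction on the number $\ell$ of strictly positive points, recognizing that ``past any linear part of $\phi$'' means $S > \alpha_\phi$, equivalently that $\phi$ is not linear on all of $[0,S]$. The base case $\ell = 2$ is exactly Theorem~\ref{thm:concave_additivity}: with $0 < a \le b$ the two positive values, $\phi(a)+\phi(b) = \phi(a+b)$ iff $\phi$ is linear on $[0,a+b]=[0,S]$, i.e.\ iff $S \le \alpha_\phi$. For the inductive step set $S' = S - x_\ell$ and chain
\begin{align}
\sum_{i=1}^\ell \phi(x_i) = \Bigl(\sum_{i=1}^{\ell-1}\phi(x_i)\Bigr) + \phi(x_\ell) \ge \phi(S') + \phi(x_\ell) \ge \phi(S'+x_\ell) = \phi(S),
\end{align}
where the first $\ge$ is the inductive hypothesis (applied to the $\ell-1 \ge 2$ positive points summing to $S'$) and the second is the two-point case. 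Equality in the total forces equality in both steps; the inductive hypothesis makes the first an equality iff $\phi$ is linear on $[0,S']$, and Theorem~\ref{thm:concave_additivity} makes the second an equality iff $\phi$ is linear on $[0,S]$. Since linearity on $[0,S]$ already implies linearity on the shorter $[0,S']$, both hold simultaneously iff $\phi$ is linear on $[0,S]$, i.e.\ iff $S \le \alpha_\phi$. Negating gives strictness iff $S$ is past the linear part.

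The main obstacle, and the only genuinely delicate point, is the bookkeeping in the strictness direction: I must ensure that the two equality conditions produced by the induction combine to the single clean condition ``linear on $[0,S]$,'' which works precisely because the two intervals are nested. A couple of degenerate cases also need explicit handling: when fewer than two of the $x_i$ are positive the left and right sides coincide identically (a single term gives $\phi(x_1)=\phi(S)$), so the iff should be read under the tacit assumption that at least two points are positive; and I should confirm that Theorem~\ref{thm:concave_additivity} applies with $a=\min(S',x_\ell)$ and $b=\max(S',x_\ell)$, both strictly positive. As an alternative to induction one can argue directly: equality in the summed inequality forces $\phi(x_i)/x_i = \phi(S)/S$ for every $i$, and a short chord comparison (using that $\phi$ lies above the chord from the origin to $(S,\phi(S))$) upgrades these pointwise equalities to linearity of $\phi$ throughout $[0,S]$; I expect the induction to be the cleaner writeup.
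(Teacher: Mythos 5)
Your proof is correct, and it takes a genuinely different route from the paper's on the main inequality. The paper proves $\sum_i \phi(x_i) \geq \phi(\sum_i x_i)$ by induction, peeling off one point at a time and reducing to the two-point bound $\phi(x_1)+\phi(x_2) \geq \phi(x_1+x_2)$, which it gets from the increment characterization of concavity (Equation~\eqref{eq:concave_cond} with $a=0$, $\Delta = x_1$, $b = x_2$); your chord argument $\phi(x_i) \geq (x_i/S)\,\phi(S)$ dispatches all $\ell$ points in one pass and uses only concavity plus $\phi(0)=0$, with no induction and no appeal to monotonicity. For strictness, both you and the paper rest on Theorem~\ref{thm:concave_additivity}, but your bookkeeping is tighter: equality in the telescoped chain requires equality in \emph{both} the inductive step and the final two-point step, and you note explicitly that the two resulting conditions --- linearity on $[0,S']$ and linearity on $[0,S]$ --- collapse to the single condition ``linear on $[0,S]$'' because the intervals are nested. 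The paper's proof invokes only the final two-point equality condition and leaves that collapse implicit, so your version actually closes a small gap in the paper's writeup. You also flag a genuine (if minor) imprecision in the statement itself: with only one positive point the two sides coincide identically, so the ``past the linear part implies strict'' direction fails unless one tacitly assumes at least two positive points --- a caveat the paper does not mention. Your alternative non-inductive strictness argument (equality forces each $\phi(x_i)$ onto the chord from $(0,0)$ to $(S,\phi(S))$, and a concave function that touches that chord at an interior point must agree with it on all of $[0,S]$) is also sound, and is arguably the cleanest of the three.
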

\begin{proof}
It is sufficient to show that it is true for
$x^{1:\ell-1} = \sum_{i=1}^{\ell - 1}x_i$ and $x_\ell$ that
\begin{align}
\phi(x^{1:\ell-1}) + \phi(x_\ell) \geq \phi(\sum_i x_i)
= \phi(x^{1:\ell-1} + x_\ell)
\end{align}
then apply it inductively with $x^{1:\ell-2} = \sum_{i=1}^{\ell-2}x_i$ and 
$x_{\ell - 1}$. Hence, we only need to show that $\phi(x_1) + \phi(x_2) \geq \phi(x_1+x_2)$, and we get this immediately setting $a = 0$,
$\Delta = x_1$, $b= x_2$ in Equation~\eqref{eq:concave_cond}.

The strictness part follows from Theorem~\ref{thm:concave_additivity},
where is states that equality
in $\phi(x^{1:\ell-1}) + \phi(x_\ell) = \phi(\sum_i x_i)$
holds if and only if $\phi$
is linear from $0$ through $x^{1:\ell-1} + x_\ell = \sum_i x_i$.
\end{proof}

The next result shows that when an SCMM has only one term, the
addition of the final modular function $m_\pm$ extends the family. We
in show that this is the case, even when $m_\pm$ is non-negative.

\begin{theorem}
\label{theorem:scm_in_scmm}
The family of an SCMM with one concave over modular term is enlarged
by an additional modular term $m_\pm$.
\end{theorem}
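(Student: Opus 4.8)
The plan is to exhibit a single function $h$ that belongs to the extended family (using a non-negative $m_\pm$) but that cannot be written as a lone concave composed with a non-negative modular function, i.e.\ as $\phi(m(\cdot))$. The containment of the one-term family in the extended one is trivial (take $m_\pm\equiv 0$), so only strictness needs work. If $m_\pm$ were allowed negative entries the argument would be immediate: $\phi(m(A))$ is monotone non-decreasing whenever $\phi$ and $m$ are, so any $m_\pm$ with a negative entry yields a non-monotone function that no single concave-over-modular can match. The real content of the statement, and the main obstacle, is that this escape route is closed once $m_\pm\geq 0$, since non-negativity preserves monotonicity; we therefore need a structural invariant of the one-term family that a \emph{non-negative} modular shift can still destroy.

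The invariant I would use is a rigidity of marginal gains. Suppose $h=\phi(m(\cdot))$ with $\phi$ monotone non-decreasing and $m\geq 0$ modular. For any element $i$ and any context $S\not\ni i$ the gain is $h(i\mid S)=\phi(m(S)+m(i))-\phi(m(S))$. Hence for two elements $i,j$ with, say, $m(i)\leq m(j)$, monotonicity of $\phi$ forces $h(i\mid S)\leq h(j\mid S)$ \emph{simultaneously for every context} $S$. In words: the elements of a one-term SCMM are totally ordered by their modular weight, and this single order governs the comparison of marginal gains in all contexts at once. In particular there can be no ``crossing'' in which $i$ has the strictly larger gain in one context while $j$ has the strictly larger gain in another. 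I would isolate this as a short lemma, proved directly from the definition using only that $\phi$ is non-decreasing.

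I would then construct a three-element counterexample exhibiting exactly such a crossing. Take $V=\set{1,2,3}$, a strictly concave $\phi$, weights $m(1)<m(2)$ together with a strictly positive ``context'' weight $m(3)$, and a non-negative $m_\pm$ supported only on element $1$. In the empty context the base term favours element $2$ by $\delta_0=\phi(m(2))-\phi(m(1))>0$, whereas in the context $\set{3}$ the same advantage shrinks to $\delta_3=\phi(m(3)+m(2))-\phi(m(3)+m(1))$, and strict concavity with $m(3)>0$ gives $\delta_3<\delta_0$. Choosing $m_\pm(1)\in(\delta_3,\delta_0)$ (with $m_\pm(2)=m_\pm(3)=0$) then makes element $2$ win the gain comparison in the empty context, since $h(2\mid\emptyset)-h(1\mid\emptyset)=\delta_0-m_\pm(1)>0$, while element $1$ wins it in the context $\set{3}$, since $h(1\mid\set 3)-h(2\mid\set 3)=m_\pm(1)-\delta_3>0$. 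Concretely $\phi=\sqrt{\cdot}$, $m=(1,2,4)$, $m_\pm=(0.3,0,0)$ works, as one checks numerically.

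Finally I would conclude. The constructed $h$ lies in the extended family by definition and uses a non-negative $m_\pm$, whereas the crossing it realizes contradicts the rigidity lemma; hence $h$ cannot be any single $\phi'(m'(\cdot))$, and the extended family is strictly larger. Submodularity of $h$ need not be checked separately, as every member of the extended family is submodular by the composition result already established. The one step that genuinely requires care is verifying $\delta_3<\delta_0$, i.e.\ that strict concavity plus a strictly positive context weight leaves the interval $(\delta_3,\delta_0)$ nonempty so that a valid non-negative offset exists; this is precisely where the argument uses that $\phi$ is not merely linear.
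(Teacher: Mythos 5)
Your proof is correct, and it takes a genuinely different route from the paper's. The paper also argues via a three-element counterexample, but a different one: $g(A)=\min(|A|,1)+\mathbf 1_{\cEL\in A}$, a truncation that saturates on $\set{\aEL,\bEL}$ and then increases again when $\cEL$ is added. The paper's key invariant is that saturation is absorbing for a single concave-over-modular function: via the supergradient sandwich of Equation~\eqref{eq:dab_sandwich} and Lemma~\ref{thm:simultaneously_strict}, the zero gain from $\set{\aEL}$ to $\set{\aEL,\bEL}$ forces a zero supergradient at $m(\aEL,\bEL)$, so $\phi$ can never exceed $\phi(m(\aEL,\bEL))=1$, contradicting $g(\set{\aEL,\bEL,\cEL})=2$. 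Your key invariant is instead context-independence of the gain ordering: if $h=\phi(m(\cdot))$ then $m(i)\leq m(j)$ implies $h(i\mid S)\leq h(j\mid S)$ for every admissible $S$, so no crossing can occur; your example $\phi=\sqrt{\cdot}$, $m=(1,2,4)$, $m_\pm=(0.3,0,0)$ manufactures exactly such a crossing, and the numbers check out ($\delta_0=\sqrt{2}-1\approx 0.414$, $\delta_3=\sqrt{6}-\sqrt{5}\approx 0.213$, and $0.3\in(\delta_3,\delta_0)$). As for what each approach buys: the paper's proof exercises and reuses the superdifferential machinery developed in that section, and its example is integer-valued and saturation-based; your rigidity lemma needs only that $\phi$ is non-decreasing --- no concavity and no supergradients --- so your counterexample in fact cannot be written as \emph{any} non-decreasing function composed with a non-negative modular function, a slightly stronger conclusion, and it shows the family is enlarged even by arbitrarily small non-negative modular perturbations of a strictly concave one-term function. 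One aside in your write-up is imprecise: a negative entry in $m_\pm$ does not by itself make $\phi(m(\cdot))+m_\pm(\cdot)$ non-monotone (the concave term's gains may dominate); what is true is that a sufficiently negative $m_\pm$ breaks monotonicity. This is immaterial, since your actual argument never relies on that remark.
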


\begin{proof}
  Consider a three-element ground set $V = \left\{ \aEL, \bEL,
    \cEL\right\}$ and a function $g$,
\begin{equation}
  g(A) = \min(|A|, 1) + \mathbf 1_{\cEL \in A},
\end{equation}
thus $g$ is monotone non-decreasing.  Suppose $g(A) =
\phi(m(A))$ for some non-negative modular function $m$ and normalized
non-decreasing concave function $\phi$. Then by
Equation~\eqref{eq:dab_sandwich},
we have:
\begin{align}
\min_{d \in \partial \phi(m(\aEL))} d
\stackrel{\text{(\romannumeral 1)}}{\geq}
\frac{ \phi(m(\aEL,\bEL)) - \phi(m(\aEL))} { m(\aEL,\bEL) - m(\aEL) } = 0
\stackrel{\text{(\romannumeral 2)}}{\geq}
\max_{d \in \partial \phi(m(\aEL,\bEL))} d 
\stackrel{\text{(\romannumeral 3)}}{\geq}
0
\end{align}
where the $(\romannumeral 3)$ follows 
since $\phi$ is monotone. Hence, $(\romannumeral 2)$
is an equality and by Lemma~\ref{thm:simultaneously_strict}
so is $(\romannumeral 1)$.
Hence $0 \in \partial \phi(m(\aEL))$. Then
we have that $\phi(y) \leq \phi(m(\aEL,\bEL)) + 0(y - m(\aEL,\bEL))$.
This means that $\phi(m(\aEL,\bEL,\cEL)) \leq \phi(m(\aEL,\bEL)) = 1 < 2 = g(\aEL,\bEL,\cEL)$,
a contradiction.
\end{proof}
An immediate corollary is that SCMMs are a larger class of submodular
functions than just one concave over modular function. All SCMMs,
however, can be represented as a sum of modular truncations as the
following lemma states:
\begin{lemma}[Sums of Modular Truncations~\cite{stobbe10efficient}]
\label{lemma:sums_mod_truncs}
  If $f$ is an SCMM, then $f$ may be written as
  $f(A) = \sum_i \min(m_i(A),\beta_i) + m_\pm(A)$ where for all $i$,
  $m_i$ is a non-negative modular function, $\beta_i \geq 0$ is a
  non-negative constant, and where the sum is over a finite number of
  terms.
\end{lemma}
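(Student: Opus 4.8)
The plan is to reduce to a single concave-over-modular term and then decompose a piecewise-linear concave function into a conic combination of truncations, absorbing the coefficients into the modular functions. Since an SCMM has the form $f(A) = \sum_i \phi_i(m_i(A)) + m_\pm(A)$, and since the final $m_\pm$ is already modular and may be carried along unchanged, it suffices to show that each individual term $\phi(m(A))$---with $m$ a non-negative modular function and $\phi$ normalized, monotone non-decreasing, and concave---can be written as a finite sum $\sum_j \min(m_j(A), \beta_j)$ with each $m_j$ a non-negative modular function and each $\beta_j \geq 0$. The full result then follows by concatenating the resulting lists of truncations over all $i$.

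First I would reduce to the piecewise-linear case. As $A$ ranges over $2^V$, the quantity $m(A) = \sum_{a \in A} m(a)$ takes only finitely many distinct values; list them in increasing order as $0 = t_0 < t_1 < \dots < t_N$, where $t_0 = 0$ since $m(\emptyset) = 0$. Let $\hat\phi$ be the function that interpolates the points $(t_k, \phi(t_k))$ linearly between consecutive breakpoints. Because $\phi$ is concave, its chord slopes $s_k \triangleq (\phi(t_k) - \phi(t_{k-1}))/(t_k - t_{k-1})$ are non-increasing in $k$, and because $\phi$ is monotone non-decreasing they are non-negative; hence $\hat\phi$ is itself concave, monotone non-decreasing, and normalized. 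Crucially, $\hat\phi$ agrees with $\phi$ at every value $m(A)$ can take, so $\phi(m(A)) = \hat\phi(m(A))$ for all $A \subseteq V$, and I may replace $\phi$ by $\hat\phi$ throughout.

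Next I would decompose $\hat\phi$ into truncations. Setting $s_{N+1} = 0$ and defining $c_j = s_j - s_{j+1} \geq 0$ and $\beta_j = t_j$ for $j = 1, \dots, N$, I claim that $\hat\phi(x) = \sum_{j=1}^N c_j \min(x, \beta_j)$ on $[0, t_N]$. This is verified by comparing slopes: on the interval $(t_{k-1}, t_k)$ the right-hand side has slope $\sum_{j \geq k} c_j = s_k$ by telescoping, matching the slope of $\hat\phi$, and both sides vanish at $x = 0$; since both are piecewise linear with breakpoints among the $t_k$, equality throughout follows. The non-negativity of the coefficients $c_j$ is precisely the statement that the chord slopes are non-increasing, so this is exactly where concavity is used.

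Finally, I would absorb the coefficients. Since each $c_j \geq 0$, we have $c_j \min(m(A), \beta_j) = \min(c_j m(A), c_j \beta_j) = \min(\tilde m_j(A), \tilde\beta_j)$, where $\tilde m_j \triangleq c_j m$ is again a non-negative modular function and $\tilde\beta_j \triangleq c_j \beta_j \geq 0$; terms with $c_j = 0$ may simply be dropped, keeping the sum finite. This exhibits $\phi(m(A))$ as a finite sum of modular truncations, and summing over the SCMM's terms together with $m_\pm$ completes the argument. I expect the main obstacle to be the reduction in the second step---justifying the replacement of a possibly non-smooth, non-piecewise-linear $\phi$ by a finite interpolant---but this is handled cleanly by the observations that $m(A)$ ranges over a finite set and that linear interpolation of a concave function at points on its graph is again concave and monotone.
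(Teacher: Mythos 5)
Your proof is correct. Note that the paper itself offers no proof of this lemma: it is stated as a known result with a citation to Stobbe and Krause, so there is no in-paper argument to compare against. Your argument is essentially the standard one underlying that reference: since $m(A)$ ranges over a finite set of values $t_0 < t_1 < \dots < t_N$, one may replace $\phi$ by its piecewise-linear interpolant at those points, write the interpolant as $\sum_j (s_j - s_{j+1}) \min(x, t_j)$ where the coefficients are non-negative precisely because concavity makes the chord slopes $s_j$ non-increasing, and then absorb the coefficients into the modular functions using $c\min(a,b) = \min(ca,cb)$ for $c \geq 0$. All three steps (the reduction to a single term, the telescoping slope computation, and the absorption of coefficients) check out, and your handling of possibly non-smooth $\phi$ by evaluating it only at achievable points is exactly the right way to sidestep any differentiability issues.
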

Truncating modular function is important, as it is not sufficient to
truncate only cardinality functions.  In other words, SCMMs also
generalize the family of weighted cardinality truncations, as the next
result shows.
\begin{lemma}[Sums of Weighted Cardinality Truncations]
\label{lemma:sums_weighted_cardinality_truncations}
We define the class of sums of weighted cardinality truncations as
\begin{align}
G = \set{ g : \forall A, g(A) = \sum_{B\subseteq V}\sum_{i=1}^{|B|-1}\alpha_{B,i}\min(|A\cap
  B|,i), \text{ where } \forall B, i, \alpha_{B,i}\geq 0 }.
\end{align}
Then there exists an $f \in \text{SCMM}$ that
is not in $G$.
\end{lemma}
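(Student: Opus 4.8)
The plan is to show that $G$ is a strict subcone of the SCMM cone by exhibiting a single concave-composed-with-modular function that no non-negative combination of cardinality truncations can reproduce. First I would record the easy inclusion $G \subseteq \text{SCMM}$: each generator $\min(|A \cap B|, i)$ equals $\phi(m(A))$ for the modular function $m = \mathbf 1_B$ and the concave truncation $\phi(x) = \min(x, i)$, so every element of $G$, being a conic sum of such terms, is an SCMM. The content of the lemma is therefore the reverse strictness.

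The strategy for non-membership is to find a linear constraint that every generator of $G$ satisfies but some SCMM violates; because the coefficients $\alpha_{B,i}$ are non-negative, any such per-generator constraint automatically propagates to all of $G$. The cleanest instance lives on a two-element ground set $V = \set{\aEL, \bEL}$. There the only admissible generator is $B = V$ with $i = 1$, namely $\min(|A|, 1)$, so $G = \set{\alpha \min(|A|, 1) : \alpha \ge 0}$, and every $g \in G$ is forced to be symmetric: $g(\set{\aEL}) = g(\set{\bEL})$. I would then take the SCMM $f(A) = \min(m(A), 2)$ with the non-uniform modular weights $m(\aEL) = 2$ and $m(\bEL) = 1$. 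This $f$ is monotone, normalized, and genuinely (strictly) submodular, yet $f(\set{\aEL}) = 2 \ne 1 = f(\set{\bEL})$, so $f \notin G$; note that the reason it escapes $G$ is precisely the non-uniform weights inside the truncation.

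If a counterexample on a larger ground set is wanted, I would instead work on $V = \set{\aEL, \bEL, \cEL}$ using the observation that every generator $\min(|A \cap B|, i)$ with $2 \le |B| \le 3$ and $i \le |B| - 1$ takes a constant value on all subsets $A$ with $|A| \ge 2$: for $|B| = 2$ the complement of $B$ is a singleton, so every size-$\ge 2$ set meets $B$ and $\min(|A \cap B|, 1) = 1$, while for $B = V$ we have $\min(|A|, i) = i$ whenever $|A| \ge 2 \ge i$. Hence every $g \in G$ is constant on size-$\ge 2$ sets, whereas the SCMM $f(A) = \min(m(A), 3)$ with $m(\aEL) = m(\bEL) = 1$ and $m(\cEL) = 2$ gives $f(\set{\aEL,\bEL}) = 2 \ne 3 = f(\set{\aEL,\cEL})$, placing it outside $G$.

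The only real obstacle is isolating the right invariant: on a large ground set $G$ has many generators, and checking non-membership directly would require reasoning about an entire non-negative linear system (or constructing a separating functional via Farkas). Restricting to a small $V$ where the generators are few and share a transparent symmetry or constancy property sidesteps this, and non-negativity of the $\alpha_{B,i}$ is exactly what lets the constraint extend from the generators to all of $G$. Verifying that the chosen $f$ is a bona fide SCMM and that it violates the invariant is then a short finite computation.
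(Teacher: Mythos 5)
Your proof is correct, but it takes a genuinely different and considerably more elementary route than the paper's. The paper proves this lemma (in its appendix) by showing that the specific function $f_5(A) = E\min\bigl((1,1,0,0.5,0.5,0.5)^{\mtranssymbol}(A),1\bigr)$ on the six-element ground set $\set{\aEL,\bEL,\cEL,\dEL,\eEL,\fEL}$ lies in SCMM but not in $G$; that argument leans on the whole symmetrization apparatus of its two-layer characterization (the averaging operator $E$, the reduction of admissible generators via the ``fully curved'' property, and a table of values), and concludes by a ratio inequality --- every surviving generator satisfies $Ef(2,0) \leq \tfrac{9}{8}Ef(1,1)$, whereas $Ef_5(2,0) = \tfrac{6}{5}Ef_5(1,1)$. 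You instead shrink the ground set until $G$ degenerates: on $|V|=2$ the only generator is $\min(|A|,1)$, so everything in $G$ is symmetric, and on $|V|=3$ every generator (hence, by non-negativity of the $\alpha_{B,i}$, everything in $G$) is constant on sets of size at least two; a single truncated \emph{non-uniform} modular function then violates the invariant. Both proofs are existential counterexamples on one fixed ground set, so yours proves the lemma to the same standard as the paper's (and your invariant even generalizes to any $n$: every generator is constant on sets of size $\geq |V|-1$, while $\min(m(A),\beta)$ with non-uniform $m$ need not be). What the two approaches buy is different: yours is self-contained and isolates the moral of the lemma --- that it is precisely the non-uniform weights inside the truncation that cardinality truncations cannot simulate --- whereas the paper's heavier proof does double duty, since $f_5$ and the $E$-operator calculus are exactly the ingredients reused in its full characterization (Theorem~\ref{theo:concaveOverSumOfMin}) of which two-layer functions are SCMMs, where the separation is needed among $E$-invariant functions on six elements.
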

Lemma~\ref{lemma:sums_weighted_cardinality_truncations} is proven
in Appendix~\ref{sec:sums-weight-card}.


\subsection{Antitone Maps and Superdifferentials}
\label{sec:antit-maps-superd}

%

Thanks to 
concave composition closure rules~\cite{boyd2004convex},
the root function $\psi_\rtnd(x) : \mathbb R^n \to \mathbb R$ in
Eqn.~\eqref{eq:gen_deep_submodular_recursive} is a monotone
non-decreasing multivariate concave function that, by the
concave-submodular composition rule
(Theorem~\ref{thm:concave_over_polymatroidal}) yields a submodular
function $\psi_\rtnd(\mathbf 1_A)$.  It is widely known that {\bf any}
univariate concave function composed with non-negative modular
functions yields a submodular function. However, given an arbitrary
multivariate concave function this is not the case. Consider, for
example, any concave function $\psi$ over $\mathbb R^2$ that offers
the following evaluations: $\psi(0,0) = \psi(1,1) = 1$,
$\psi(0,1) = \psi(1,0) = 0$.  Then $f(A) = \psi(\mathbf 1_A)$ is not
submodular, and hence the guarantee of submodularity when composing a
concave with a linear function does not extend to dimensions higher
than one. In this section, we discuss a limited form of such a
generalization, one that ensures submodularity and that, moreover,
does not even always rely on concavity in higher dimensions. 
Here and below, for $x,y \in \mathbb R^V$,
then $x \leq y \Leftrightarrow x(v) \leq y(v), \forall v \in V$.


\begin{definition}
A concave function is said to have an {\em antitone superdifferential}
if for all $x \leq y$ we have that $h_x \geq h_y$ for all
$h_x \in \partial \psi(x)$ and $h_y \in \partial \psi(y)$.
\end{definition}

The antitone superdifferential is an apparently straightforward
multidimensional generalization of a defining characteristic of univariate concave
functions.
Theorem~\ref{thm:submod_antitone_super_on_vector_poly} below generalizes
Theorem~\ref{thm:concave_over_polymatroidal} when $k=1$ --- this is
because $\phi : \mathbb R \to \mathbb R$ being concave is, in the 
univariate case, synonymous with it having an antitone
superdifferential (which is synonymous with monotone
supergradients~\cite{hiriart1993convex,nesterov2004introductory}).  
\begin{theorem}
Let $\psi : \mathbb R^k \to \mathbb R$ be a monotone non-decreasing concave function
and let $\vec g: 2^V \to \mathbb R^k$ be a vector of
polymatroid functions, where $\vec g(A) = (g_1(A), g_2(A), \dots, g_k(A))$.
Then
if $\psi$ has an antitone superdifferential, then 
the set function $f: 2^V \to \mathbb R$ defined as
$f(A) = \psi(\vec g(A))$ for all $A \subseteq V$ is submodular.
\label{thm:submod_antitone_super_on_vector_poly}
\end{theorem}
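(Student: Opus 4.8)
The plan is to verify submodularity directly through the diminishing-returns (gain) inequality: for every $A \subseteq B \subseteq V$ and $v \notin B$ I will show $f(v \mid A) \ge f(v \mid B)$. Writing $a = \vec g(A)$ and $b = \vec g(B)$, monotonicity of each coordinate polymatroid $g_i$ gives $a \le b$, and submodularity together with monotonicity of each $g_i$ gives that the two increment vectors $\delta_A = \vec g(A \cup \set{v}) - a$ and $\delta_B = \vec g(B \cup \set{v}) - b$ satisfy $\delta_A \ge \delta_B \ge 0$ (componentwise). The goal thus becomes the purely analytic inequality $\psi(a + \delta_A) - \psi(a) \ge \psi(b + \delta_B) - \psi(b)$.

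I would split this into two independent facts and chain them. First, since $\delta_A \ge \delta_B$ and $\psi$ is monotone non-decreasing, $\psi(b + \delta_A) \ge \psi(b + \delta_B)$, so $\psi(b + \delta_A) - \psi(b) \ge \psi(b + \delta_B) - \psi(b)$; this step uses only monotonicity. Second --- the heart of the argument --- I would prove the multivariate analogue of the univariate concavity condition \eqref{eq:concave_cond}: for any $a \le b$ and any $\Delta \ge 0$,
\begin{align}
\psi(a + \Delta) - \psi(a) \ge \psi(b + \Delta) - \psi(b).
\label{eq:multivariate_dimret}
\end{align}
Applying \eqref{eq:multivariate_dimret} with $\Delta = \delta_A$ and then the monotonicity step yields $\psi(a + \delta_A) - \psi(a) \ge \psi(b + \delta_A) - \psi(b) \ge \psi(b + \delta_B) - \psi(b)$, which is exactly the gain inequality needed.

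To establish \eqref{eq:multivariate_dimret}, set $c = b - a \ge 0$ and consider the two parallel segments $s \mapsto a + sc$ and $s \mapsto a + sc + \Delta$ for $s \in [0,1]$. In the smooth case the function $G(s) = \psi(a + sc + \Delta) - \psi(a + sc)$ has derivative $G'(s) = \langle \nabla\psi(a+sc+\Delta) - \nabla\psi(a+sc),\, c\rangle$, which is $\le 0$ because $a + sc + \Delta \ge a + sc$ forces $\nabla\psi(a+sc+\Delta) \le \nabla\psi(a+sc)$ by the antitone superdifferential property while $c \ge 0$; hence $G(0) \ge G(1)$, which is \eqref{eq:multivariate_dimret}. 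For a general non-smooth $\psi$ I would replace the derivative by integrating the directional derivative in direction $c$ along each segment: the restriction of a concave function to a line is concave, hence locally Lipschitz and recovered as the integral of its (almost-everywhere-existing) derivative, and at each point of differentiability of the restriction that derivative equals $\langle h, c \rangle$ for \emph{every} supergradient $h \in \partial\psi$ at that point. Comparing the two integrands at the comparable points $a + sc \le a + sc + \Delta$ via the antitone property (and $c \ge 0$) gives $\psi(b+\Delta) - \psi(a+\Delta) \le \psi(b) - \psi(a)$, which rearranges to \eqref{eq:multivariate_dimret}.

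The main obstacle is precisely this non-smooth step. The tempting one-shot bound --- lower-bounding $\psi(a+\Delta)-\psi(a)$ by $\langle p, \Delta\rangle$ for some $p \in \partial\psi(a+\Delta)$ and upper-bounding $\psi(b+\Delta)-\psi(b)$ by $\langle q, \Delta\rangle$ for some $q \in \partial\psi(b)$ --- requires $p \ge q$, which the antitone property delivers only when $a + \Delta \le b$; in general $a + \Delta$ and $b$ are incomparable (the same difficulty that forces the case split $a+\Delta \le b$ versus $a+\Delta > b$ in the univariate proof of \eqref{eq:concave_cond}). The path-integration argument sidesteps this by only ever comparing supergradients at the genuinely comparable pair $a + sc$ and $a + sc + \Delta$; the care needed is in justifying that the line-restriction is absolutely continuous and that its derivative agrees with $\langle h, c\rangle$ for an arbitrary supergradient at almost every $s$, after which the antitone hypothesis closes the argument.
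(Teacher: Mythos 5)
Your proof is correct, and while it rests on the same two pillars as the paper's own proof --- reduce to vectors $a \leq b$ with increments $\delta_A \geq \delta_B \geq 0$, then compare supergradients at points differing by a non-negative offset via integration along segments --- it decomposes the argument differently, and the difference is instructive. The paper runs a single integral: it chooses one monotone path from $\vec g(A)$ to $\vec g(A+v)$ passing through the relative displacement $\vec g(B+v) - \vec g(B)$ at an intermediate time $t'$, discards the tail past $t'$ using monotonicity of $\psi$, and shifts the basepoint from $\vec g(A)$ to $\vec g(B)$ under the integral using antitonicity; there the integration runs along the increment direction while the antitone comparison is made across the offset $b-a$. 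You transpose these roles: your translation lemma (the exact multivariate analogue of Equation~\eqref{eq:concave_cond}) integrates along $c = b-a$ and compares supergradients across the offset $\Delta$, and the slack between $\delta_A$ and $\delta_B$ is absorbed by a separate, integration-free monotonicity inequality rather than by truncating an integral. Your organization buys two things. First, the translation lemma is a clean, reusable statement that isolates exactly where antitonicity enters. Second, your non-smooth treatment is more rigorous than the paper's: the paper integrates ``any gradient map'' selected from the superdifferential without justifying that such a selection is integrable or that its integral recovers the increment of $\psi$, whereas you reduce to univariate restrictions, which are concave, hence locally Lipschitz, a.e.\ differentiable, and recovered by integrating their derivative, and you correctly note that at any point of differentiability of the restriction that derivative equals $\langle h, c\rangle$ for \emph{every} supergradient $h$, so no measurable-selection issue arises. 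Your closing observation --- that the one-shot supergradient bound fails because $a+\Delta$ and $b$ are in general incomparable --- is precisely the obstruction that forces the case split in the paper's univariate proof of Equation~\eqref{eq:concave_cond}, and identifying it explains why some form of path argument is unavoidable in the multivariate setting.
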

\begin{proof}
Given two points $x,y \in \mathbb R^n$ with $x \leq y$,
then the 
fundamental theorem of calculus for line integrals
states that for any smooth relative path $\bf p$ from $x$ to $y$, the
integral through the vector field $\nabla \psi(z)$ yields
$\psi(y) - \psi(x) = \int_{\bf p} \nabla \psi (x + z) dz$.  If $\psi$
is not differentiable, we may assume, with a slight abuse of notation,
that $\nabla \psi(x)$ is any gradient map for all $x \in \mathbb R^n$
(i.e., $\nabla \psi(x)$ maps from $x$ to some element within $\partial \phi(x)$).
Given an arbitrary $A \subseteq B$ and $v \notin B$, and let
$\mathbf p(t)$ be any relative and parametric curve
from a point $\vec g(A) \in \mathbf R^k$ when $t=0$
to a point $\vec g(A + v) \in \mathbf R^k$ when $t=1$.  Hence,
$\vec g(A) + \mathbf p(0) = \vec g(A)$
and  
$\vec g(A) + \mathbf p(1) = \vec g(A+v)$.
Since $\vec g$ is a vector
of polymatroid functions, we have $\vec g(A) \leq \vec g(B)$
and $\vec g(A) \leq \vec g(A+v)$, and hence, the path $\mathbf p(t)$
can be taken to be monotone, so that
$\vec 0 \leq \mathbf p(t_1) \leq \mathbf p(t_2)$ whenever $0 \leq t_1 \leq t_2 \leq 1$.
Other than monotonicity, the path may be arbitrary. 
By monotonicity and submodularity, 
$\vec 0 \leq \vec g(B+v) - \vec g(B) \leq \vec g(A+v) - \vec g(A)$,
and hence 
we may choose the relative 
path that starts at $\vec 0$, and at some point $t' \in (0,1)$,
goes through the point $\mathbf p(t') = \vec g(B+v) - \vec g(B)$,
and ends up at $\mathbf p(1) = \vec g(A+v) - \vec g(A)$.
Then,
\begin{align}
f(A + v) - f(A) &= \psi(\vec g(A + v)) - \psi(\vec g(A)) 
=     \int_{0}^1 \nabla \psi (\vec g(A) + \mathbf p(t)) \cdot d \mathbf p(t) \\
&\geq \int_{0}^{t'} \nabla \psi (\vec g(A) + \mathbf p(t)) \cdot d \mathbf p(t)
\geq \int_{0}^{t'} \nabla \psi ( \vec g(B) + \mathbf p(t)) \cdot d \mathbf p(t)  \\
&= \psi(\vec g(B + v)) - \psi(\vec g(B)) 
= f(B + v) - f(B), 
\end{align}
where the inequality follows from the monotonicity of $\psi$, the
pointwise antitonicity of the gradient map, the non-negativity of the
path, and by linearity of the integral. Hence, $f$ is submodular.
\end{proof}

We
also fairly quickly get a partial corollary where we need not 
assume that $\phi$ is monotone non-decreasing. In the below, let
$\mathbf b \in \mathbf R^V_+$ be a non-negative real vector and for
any set $A \subseteq V$, $\mathbf b_A$ is a vector such that
$\mathbf b_A(v) = \mathbf b(v)$ if $v \in A$ and otherwise
$\mathbf b_A(v) = 0$ (e.g., when $\mathbf b = \mathbf 1$ then
$\mathbf b_A = \mathbf 1_A$ is the characteristic vector of set $A$).

\begin{corollary}
  Let $\psi : \mathbb R^n \to \mathbb R$ be any concave function and
  $\mathbf b \in \mathbf R^V_+$ be a non-negative real vector.  Then
  if $\psi$ has an antitone superdifferential, then the set function
  $f: 2^V \to \mathbb R$ defined as $f(A) = \psi(\mathbf b_A)$ for all
  $A \subseteq V$ is submodular.
\label{thm:submod_antitone_super}
\end{corollary}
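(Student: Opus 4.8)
The plan is to verify submodularity directly through the diminishing-returns inequality and to exploit the fact that the single-coordinate structure of $\mathbf b_A$ lets us avoid the monotonicity assumption on $\psi$ that Theorem~\ref{thm:submod_antitone_super_on_vector_poly} required. Fix $A \subseteq B \subseteq V$ and $v \notin B$, and let $\mathbf e_v \in \mathbb R^V$ denote the standard basis vector supported on coordinate $v$. The first step is the structural observation that $\mathbf b_{A + v} = \mathbf b_A + \mathbf b(v)\,\mathbf e_v$ and $\mathbf b_{B + v} = \mathbf b_B + \mathbf b(v)\,\mathbf e_v$; that is, adding $v$ produces the \emph{same} increment $\mathbf b(v)\,\mathbf e_v$ in both contexts. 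Moreover $\mathbf b_A \leq \mathbf b_B$ componentwise, since $\mathbf b \geq \mathbf 0$ and $A \subseteq B$, and both $\mathbf b_A$ and $\mathbf b_B$ vanish on coordinate $v$ because $v \notin B$.

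With this in hand, the goal reduces to showing $\psi(\mathbf b_A + \mathbf b(v)\mathbf e_v) - \psi(\mathbf b_A) \geq \psi(\mathbf b_B + \mathbf b(v)\mathbf e_v) - \psi(\mathbf b_B)$. First I would write each side as a line integral along the straight segment in the direction $\mathbf e_v$, exactly as in the proof of Theorem~\ref{thm:submod_antitone_super_on_vector_poly}, letting $\nabla \psi$ denote an arbitrary selection of a supergradient from $\partial \psi$ at each point:
\begin{align}
\psi(\mathbf b_A + \mathbf b(v)\mathbf e_v) - \psi(\mathbf b_A)
&= \int_0^{\mathbf b(v)} \bigl\langle \nabla \psi(\mathbf b_A + t\,\mathbf e_v),\, \mathbf e_v \bigr\rangle \, dt, \\
\psi(\mathbf b_B + \mathbf b(v)\mathbf e_v) - \psi(\mathbf b_B)
&= \int_0^{\mathbf b(v)} \bigl\langle \nabla \psi(\mathbf b_B + t\,\mathbf e_v),\, \mathbf e_v \bigr\rangle \, dt.
\end{align}
For each fixed $t \in [0, \mathbf b(v)]$ we have $\mathbf b_A + t\,\mathbf e_v \leq \mathbf b_B + t\,\mathbf e_v$, so the antitone superdifferential property gives $\nabla \psi(\mathbf b_A + t\,\mathbf e_v) \geq \nabla \psi(\mathbf b_B + t\,\mathbf e_v)$ componentwise; pairing with $\mathbf e_v \geq \mathbf 0$ shows the first integrand dominates the second pointwise. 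Integrating then yields the desired marginal-gain inequality, and hence the submodularity of $f$.

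The main thing to get right, and the only real obstacle, is the non-differentiable case: as in the preceding theorem this is handled by the slight abuse of notation in which $\nabla \psi$ denotes a selection of a supergradient, which is legitimate because the antitone hypothesis is quantified over \emph{all} members of $\partial \psi(x)$ and $\partial \psi(y)$, so whichever selection the line integral picks, the pointwise comparison $\nabla \psi(\mathbf b_A + t\,\mathbf e_v) \geq \nabla \psi(\mathbf b_B + t\,\mathbf e_v)$ still holds. It is worth stressing where this argument departs from Theorem~\ref{thm:submod_antitone_super_on_vector_poly}: there the two increments differed and one had to discard a non-negative tail of the integral using the monotonicity of $\psi$, whereas here the increments coincide, the two integration ranges are identical, and no tail is discarded. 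That is precisely why the monotonicity of $\psi$ can be dropped in this corollary.
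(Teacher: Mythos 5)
Your proof is correct and follows essentially the same route as the paper's: the paper likewise expresses both marginal gains as line integrals of a supergradient selection along the \emph{same} relative path from $\mathbf b_A$ and $\mathbf b_B$ (yours is just the explicit straight segment $t \mapsto t\,\mathbf e_v$), and then invokes antitonicity together with non-negativity of the path direction to compare integrands pointwise. Your closing remark about why monotonicity of $\psi$ can be dropped—identical increments, identical integration ranges, no discarded tail—is exactly the observation the paper makes when contrasting the corollary with Theorem~\ref{thm:submod_antitone_super_on_vector_poly}.
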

\begin{proof}
  The proof is practically the same as that of
  Theorem~\ref{thm:submod_antitone_super_on_vector_poly} except we
  cannot use the monotonicity of $\psi$. Here the path $\mathbf p$ is
  any relative path from a point $x \in \mathbf R^V_+$ with $x(v) = 0$
  to a point $x + \mathbf b_{v}$.  Given an arbitrary $A \subseteq B$ and $v \notin B$,
  we then get $f(A + v) - f(A) =
 \psi(\mathbf b_{A + v}) - \psi(\mathbf b_A)
 = \int_{\mathbf p} \nabla \psi (\mathbf b_A + z) \cdot dz
 \geq \int_{\bf p} \nabla \psi (\mathbf b_B + z) \cdot dz
 = \psi(\mathbf b_{B + v}) - \psi(\mathbf b_B)
 = f(B + v) - f(B)$.
\end{proof}
Alternatively, we can set $k=n$ in
Theorem~\ref{thm:submod_antitone_super_on_vector_poly} and for all
$v \in V$, set $g_v(A) = \mathbf b(v) \mathbf 1_{v \in A}$ which is a
modular function. Then, the same relative path can be used to move
from $\mathbf b_A$ to $\mathbf b_{A+v}$ as from $\mathbf b_B$ to
$\mathbf b_{B+v}$, so only antotonicity of $\psi$ is needed in the
integral.

Given the above, the following result is not surprising.
\begin{lemma}
  Let $\psi : \mathbb R^n \to \mathbb R$ be a concave function formed
  by the sum of compositions of a scalar concave function and a
  linear function, i.e.,
  $\psi(x) = \sum_i w_i \phi_i(\langle m_i, x \rangle ) + \langle m_\pm, x \rangle $ where
  $m_i \in \mathbb R^n_+$, $w_i \geq 0$ for all $i$, and $m_\pm \in \mathbb R^n$ (i.e., an SCMM).  Then
  $\psi(x)$ has an antitone superdifferential.
\end{lemma}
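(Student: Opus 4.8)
The plan is to characterize the superdifferential $\partial\psi(x)$ explicitly and then read off antitonicity from the one-dimensional monotonicity already established for concave superdifferentials. First I would note that $\psi$ is a finite concave function on all of $\mathbb R^n$, being a conic combination of scalar concave functions composed with linear maps together with one linear term, so $\partial\psi(x)\neq\emptyset$ everywhere. Applying the sum rule and the affine-composition rule for concave superdifferentials gives the identity
\[
\partial\psi(x) = \sum_i w_i\, m_i\,\partial\phi_i(\langle m_i, x\rangle) + \{m_\pm\},
\]
so that every $h_x\in\partial\psi(x)$ has the form $h_x=\sum_i w_i d_i^x m_i + m_\pm$ for some scalar supergradients $d_i^x\in\partial\phi_i(\langle m_i,x\rangle)$, and conversely each such vector lies in $\partial\psi(x)$. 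The affine-composition step uses that the superdifferential of $\phi_i\circ\langle m_i,\cdot\rangle$ at $x$ equals $m_i\,\partial\phi_i(\langle m_i,x\rangle)$, since $m_i$ is a linear map into $\mathbb R$.

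Next I would fix $x\le y$ and derive the desired coordinatewise domination. Because each $m_i\in\mathbb R^n_+$, non-negativity of its entries gives $\langle m_i,x\rangle\le\langle m_i,y\rangle$. The one-dimensional consequence of the monotone-operator lemma above (stating that $a\le b$ forces $u\ge v$ for all $u\in\partial\phi_i(a)$ and $v\in\partial\phi_i(b)$) then yields $d_i^x\ge d_i^y$ for every choice of scalar supergradients. Since $w_i\ge 0$ and $m_i$ is entrywise non-negative, multiplying preserves the inequality in each coordinate, so $w_i d_i^x m_i\ge w_i d_i^y m_i$ componentwise. Summing over $i$ and cancelling the common term $m_\pm$ gives $h_x\ge h_y$ for arbitrary $h_x\in\partial\psi(x)$ and $h_y\in\partial\psi(y)$, which is exactly the antitone-superdifferential property.

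The genuinely delicate point, and what I expect to be the main obstacle, is establishing the calculus identity of the first step as an \emph{equality} of sets rather than a mere inclusion: the antitone property quantifies over \emph{every} supergradient at $x$ and at $y$, so I must know that an arbitrary $h\in\partial\psi(x)$ decomposes in the stated form, not just that the stated vectors are supergradients. This is precisely the content of the Moreau--Rockafellar sum rule together with the affine-composition rule for concave functions, and both hold as equalities here with no constraint qualification needed, because each summand is finite-valued with full domain $\mathbb R^n$ (so the relevant relative interiors intersect trivially). To stay in the self-contained, non-smooth spirit of this section, I would either cite these standard results and remark that the qualification is automatic, or—if a more elementary treatment is preferred—extend each $\phi_i$ to be finite on all of $\mathbb R$ and verify the decomposition directly from the defining supergradient inequality; in either case the remainder of the argument is routine.
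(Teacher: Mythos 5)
Your proposal is correct, and its computational core is the same as the paper's: write a supergradient of $\psi$ at $x$ as $\sum_i w_i d_i^x m_i + m_\pm$ with $d_i^x \in \partial\phi_i(\langle m_i, x\rangle)$, use the one-dimensional monotonicity of concave superdifferentials together with $m_i \in \mathbb R^n_+$ (so $x \leq y$ gives $\langle m_i,x\rangle \leq \langle m_i,y\rangle$ and hence $d_i^x \geq d_i^y$), and note that multiplying by the non-negative $w_i$ and the entrywise non-negative $m_i$ and then summing preserves the coordinatewise inequality, with the constant $m_\pm$ cancelling. The difference lies in how the non-smooth case is handled. The paper's proof differentiates, writing $\nabla\psi(x) = \sum_i w_i \phi_i'(\langle m_i,x\rangle)\, m_i^T + m_\pm^T$, observes each term is non-increasing in $x$, and dismisses the non-differentiable case with the remark that the same holds ``for any supergradient map,'' closure over sums being ``immediate.'' That argument verifies antitonicity only for supergradients of the composite form; since the definition of an antitone superdifferential quantifies over \emph{every} element of $\partial\psi(x)$ and $\partial\psi(y)$, it implicitly assumes that all supergradients of $\psi$ arise this way. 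You identify exactly this as the delicate step and close it with the Moreau--Rockafellar sum rule and the affine-composition rule as set equalities, $\partial\psi(x) = \sum_i w_i\, m_i\, \partial\phi_i(\langle m_i,x\rangle) + \set{m_\pm}$, which hold with no constraint qualification because every summand is finite on all of $\mathbb R^n$. What your route buys is a watertight treatment of the general non-smooth case, which is arguably what this section's own standard of generality calls for; what the paper's terser route buys is brevity, and it is adequate for the downstream results (Theorem~\ref{thm:submod_antitone_super_on_vector_poly} and the corollary for $\psi_\rtnd$), whose proofs only ever invoke an antitone \emph{selection} of the superdifferential rather than the full set-valued property.
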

\begin{proof}
  From the chain rule, we get that
  $\nabla \psi (x) = \sum_i w_i \phi_i'(\langle m_i, x \rangle ) m_i^T
  + m_\pm^T$, and since $\phi_i$ is concave and $m_i$ is non-negative,
  $w_i \phi_i'(\langle m_i, x \rangle) m_i^T$ is monotone
  non-increasing in $x$ ($m_\pm^T$ is constant). In the
  non-differentiable case, $\phi_i$ being monotone-concave implies
  that the same is true for any supergradient map.  Closure over sums
  is immediate.
\end{proof}
\begin{corollary}
Any linear function has an antitone superdifferential.
\end{corollary}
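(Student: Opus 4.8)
The plan is to observe that this corollary is an immediate specialization of the preceding lemma and requires essentially no new work. A linear function $\psi(x) = \langle c, x \rangle$ for some fixed $c \in \mathbb{R}^n$ is exactly a function of the SCMM form $\psi(x) = \sum_i w_i \phi_i(\langle m_i, x \rangle) + \langle m_\pm, x \rangle$ in which the concave-composition sum is taken to be empty and $m_\pm = c$. Since the lemma already establishes that every function of this form has an antitone superdifferential, I would simply invoke it with the empty sum to obtain the corollary.

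For completeness I would also record the one-line direct argument, which makes the equality case transparent. A linear function is differentiable everywhere with constant gradient $\nabla \psi(x) = c$, so its superdifferential is the singleton $\partial \psi(x) = \set{ c }$ at every point. Consequently, for any pair $x \leq y$ the only admissible supergradients are $h_x = c$ and $h_y = c$, and the required inequality $h_x \geq h_y$ from the definition of antitone superdifferential holds as the equality $c \geq c$. Hence the antitone condition is satisfied.

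There is no real obstacle here; the only point worth stressing is that a linear function is concave (so that the notion of superdifferential applies at all) and that it realizes the antitone inequality with equality, the gradient being constant. A linear function is thus the common boundary case of both the monotone and the antitone superdifferential conditions, which is why it qualifies trivially and also explains why it was natural to allow the arbitrary linear term $m_\pm$ inside the SCMM lemma in the first place.
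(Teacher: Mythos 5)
Your proposal is correct and matches the paper's (implicit) justification exactly: the corollary is stated immediately after the SCMM lemma precisely because a linear function is the degenerate case of that lemma with an empty concave sum, which is your primary argument. Your supplementary direct observation that the superdifferential is the constant singleton $\set{c}$, making the antitone inequality hold with equality, is also valid and adds nothing controversial.
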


\begin{lemma}
  Composition of monotone non-decreasing scalar concave 
  and antitone superdifferential concave functions
  preserves superdifferential antitonicity.
\end{lemma}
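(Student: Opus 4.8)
The plan is to prove the statement with the inner (multivariate) function taken to be monotone non-decreasing, which is the only case that arises in the recursive construction of Equation~\eqref{eq:gen_deep_submodular_recursive}: there the argument $\varphi_v$ fed into each $\phi_v$ is a conic combination of monotone non-decreasing functions plus a non-negative modular term, hence is itself monotone non-decreasing. This hypothesis is genuinely needed. If $\psi$ is concave with an antitone superdifferential but \emph{not} monotone, its gradient map can change sign, and post-composing with a strictly concave non-decreasing $\phi$ can reverse the order of a partial derivative between two comparable points, breaking antitonicity. So I fix a monotone non-decreasing scalar concave $\phi:\mathbb R\to\mathbb R$ and a monotone non-decreasing concave $\psi:\mathbb R^n\to\mathbb R$ with antitone superdifferential, and I set $g=\phi\circ\psi$.

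First I would collect three ordering facts, each an immediate consequence of material already in the excerpt. (i) Since $\psi$ is concave and $\phi$ is monotone non-decreasing concave, $g=\phi\circ\psi$ is concave by the standard concave composition rule~\cite{boyd2004convex}, which supplies the ``concave'' half of the conclusion. (ii) Because $\phi$ is non-decreasing, every $s\in\partial\phi(t)$ satisfies $s\ge 0$, and because $\phi$ is concave its one-dimensional superdifferential is antitone (the monotone-operator property of the superdifferential), so $t\le t'$ forces $s\ge s'$ for all $s\in\partial\phi(t)$ and $s'\in\partial\phi(t')$. (iii) Because $\psi$ is monotone non-decreasing, every $d\in\partial\psi(x)$ has $d\ge\mathbf 0$ componentwise (test the superdifferential inequality at $x+te_i$ and use $\psi(x+te_i)\ge\psi(x)$), and $\partial\psi$ is antitone by hypothesis.

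With these in hand, the core of the argument is a factorization of the supergradients of $g$. In the differentiable case the chain rule gives $\nabla g(x)=\phi'(\psi(x))\,\nabla\psi(x)$; for $x\le y$ monotonicity of $\psi$ gives $\psi(x)\le\psi(y)$, so $\phi'(\psi(x))\ge\phi'(\psi(y))\ge 0$ by (ii), while $\nabla\psi(x)\ge\nabla\psi(y)\ge\mathbf 0$ by (iii), and multiplying these two componentwise-ordered non-negative quantities yields $\nabla g(x)\ge\nabla g(y)$, i.e.\ antitonicity. For the general non-smooth case I would invoke the scalar-outer subdifferential chain rule~\cite{hiriart1993convex}: since $\phi$ is finite non-decreasing concave and $\psi$ is finite concave, every $h\in\partial g(x)$ factors as $h=s\,d$ with $s\in\partial\phi(\psi(x))$ and $d\in\partial\psi(x)$. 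Given $x\le y$, writing $h_x=s_x d_x\in\partial g(x)$ and $h_y=s_y d_y\in\partial g(y)$, facts (ii) and (iii) give $s_x\ge s_y\ge 0$ and $d_x\ge d_y\ge\mathbf 0$, whence $h_x=s_x d_x\ge s_y d_x\ge s_y d_y=h_y$ componentwise, so $\partial g$ is antitone.

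The main obstacle is this non-smooth step: justifying that every supergradient of $g$ genuinely factors through a supergradient of $\phi$ and a supergradient of $\psi$, that is, the inclusion $\partial g(x)\subseteq\{\,s\,d:s\in\partial\phi(\psi(x)),\,d\in\partial\psi(x)\,\}$. This is the direction of the scalar chain rule on which the whole argument hinges, and it relies on finiteness of $\psi$ and on $\phi$ being non-decreasing (so the outer supergradient is non-negative and the composite is well-behaved); I would cite the standard calculus of subdifferentials~\cite{hiriart1993convex,nesterov2004introductory} rather than reprove it. Everything else is the elementary product-of-ordered-non-negatives computation above.
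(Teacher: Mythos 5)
Your proof is correct and takes essentially the same route as the paper's: factor the supergradients of the composition via the chain rule into an outer scalar factor $\phi'(\psi(x))$ (or $s\in\partial\phi(\psi(x))$) and an inner vector factor $\nabla\psi(x)$ (or $d\in\partial\psi(x)$), note that both factors are non-negative and antitone in $x$, and conclude that the product is antitone. The paper's version is terser --- it works in differentiable notation (relying on its earlier ``gradient map'' convention for the non-smooth case) and leaves implicit both the non-negativity of the two factors and the monotone non-decreasing hypothesis on the inner function --- whereas you state these explicitly and cite the non-smooth chain rule; the substance is identical.
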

\begin{proof}
  Let $\phi: \mathbb R \to \mathbb R$ be a monotone non-decreasing
  concave functions and $\chi : \mathbb R^n \to \mathbb R$ be a
  monotone non-decreasing concave function with an antitone
  superdifferential, and define $\psi(x) = \phi(\chi(x))$.  Then by
  the chain rule, $\nabla \psi(x) = \phi'(\chi(x)) \nabla \chi (x)$.
  Since $\chi(x)$ is monotone non-decreasing in
  $x$, the first factor $\phi'(\chi(x))$ is
  monotone non-increasing. The second factor is also
  monotone non-increasing, hence so is the product.
 \end{proof}


\begin{corollary}
  The root concave function $\psi_\rtnd$ associated with a DSF has an
  antitone superdifferential.
\end{corollary}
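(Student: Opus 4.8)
The plan is to prove, by structural induction along a topological order $\sigma$ of the vertices of $\mathbf G$, the stronger statement that for every $v \in \mathbf V$ the function $\psi_v$ is monotone non-decreasing, concave, and has an antitone superdifferential; the corollary is then the special case $v = \rtnd$. For the base case, every ground element $v \in V$ has $\psi_v(x) = x(v)$, which is linear and monotone non-decreasing, so it has an antitone superdifferential by the corollary that any linear function does. Treating ground elements this way lets me handle all of $\mathbf V$ uniformly, exactly as the recursive definition does.

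For the inductive step, fix a non-ground node $v$ and assume the claim for every non-ground parent $u \in \text{pa}(v) \setminus V$. I would first handle the inner map $\varphi_v(x) = \sum_{u \in \text{pa}(v) \setminus V} w_{vu}\,\psi_u(x) + \langle m_v, x\rangle$. Each summand $w_{vu}\psi_u$ is a non-negative multiple of an antitone-superdifferential function, and since $\partial(w_{vu}\psi_u)(x) = w_{vu}\,\partial\psi_u(x)$ scales the antitone inequality by a non-negative constant, the property is preserved; the linear term $\langle m_v, x\rangle$ (absorbing the ground parents) has a constant, hence trivially antitone, superdifferential. To conclude that the \emph{sum} has an antitone superdifferential I would invoke additivity of the superdifferential for finite-valued concave functions, $\partial(\psi_1+\psi_2)(x) = \partial\psi_1(x)+\partial\psi_2(x)$, so that every member of $\partial\varphi_v(x)$ splits as a sum of members of the individual superdifferentials; adding the antitone inequalities for the parts then yields antitonicity for $\varphi_v$. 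Because each $\psi_u$ and the linear term are monotone non-decreasing and concave, so is $\varphi_v$.

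Finally, I would apply the composition step: $\psi_v(x) = \phi_v(\varphi_v(x))$, where $\phi_v$ is a monotone non-decreasing scalar concave function and $\varphi_v$ is monotone non-decreasing concave with an antitone superdifferential. The lemma on composition of a monotone non-decreasing scalar concave function with an antitone-superdifferential concave function then gives that $\psi_v$ has an antitone superdifferential, and $\psi_v$ is again monotone non-decreasing and concave, closing the induction. The point requiring the most care is the superdifferential calculus in the conic-combination step: antitonicity is a statement about \emph{all} members of the superdifferential, so one genuinely needs additivity (Moreau--Rockafellar), valid here since all $\psi_u$ are finite everywhere, rather than the weaker containment $\partial\psi_1(x)+\partial\psi_2(x)\subseteq\partial(\psi_1+\psi_2)(x)$. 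Threading the monotone non-decreasing invariant through the induction is equally essential, since the composition lemma requires it of $\varphi_v$.
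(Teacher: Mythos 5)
Your proof is correct and takes essentially the same route as the paper: the paper's own (very terse) proof of this corollary simply observes that a DSF is a recursive application of composition with monotone non-decreasing scalar concave functions, non-negative sums of such functions, and added linear terms, resting on the same three ingredients you use (the linear-function corollary, closure under conic combinations, and the composition lemma). Your explicit structural induction, the threading of the monotone/concave invariants, and the appeal to Moreau--Rockafellar additivity of the superdifferential in the sum step merely supply rigor where the paper declares closure over sums ``immediate,'' so this is a more careful writeup of the identical argument rather than a different one.
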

\begin{proof}
The proof follows immediately from the fact
that a DSF function (Equation~\eqref{eq:gen_deep_submodular_recursive})
is a recursive 
application of composition of monotone concave functions,
non-negative sums of monotone concave functions,
and the addition of a final linear function associated with $m_\pm$.
\end{proof}

While having an antitone superdifferential is sufficient to yield a
submodular function, it is not necessary. Consider the following
concave extension of a monotone non-decreasing submodular
function~\cite{vondrak2007submodularity,nem78,fisher1978analysis},
$\psi(x) = \min_{S \subseteq V} [f(S) + \sum_{v \in V} x(v) f(v | S)
]$. This function is concave and is tight
$f(A) = \psi(\mathbf 1_A), \forall A$ at the vertices of the unit
hypercube, but is not the concave closure of $f$
\cite{vondrak2007submodularity}. The superdifferential
is given by
\begin{align}
\partial \psi(x) =
\left\{
(f(v_1|S_x), f(v_2|S_x), \dots, f(v_n|S_x))
: S_x \in \argmin_{S \subseteq V} 
[ f(S) + \sum_{v \in V} x(v) f(v|S)]
\right\}
\end{align}
and when evaluating at $x = \mathbf 1_A$ we have
$\mathcal M_A$
$\triangleq 
\argmin_{S \subseteq V} [ f(S) + \sum_{v \in V} \mathbf 1_A f(v|S)]$
$= \set{ A } \cup \set{ A' : A' = A - v, \forall v \in A }$.
To have an antitone supergradient, we need $\forall x \leq y$
and $g_x \in \partial \psi(x)$, $g_y \in \partial \psi(y)$,
that $g_x \geq g_y$. Taking $x = \mathbf 1_A$ and 
$y = \mathbf 1_{A + v}$ for some $v \notin A$, we
can choose $A \in \mathcal M_A$
and $A' = (A+v-v') \in \mathcal M_{A+v}$ with
$v' \in A$. In this case, we can find a monotone 
submodular function with 
$f(v_i|A) < f(v_i | A + v - v')$ which violates 
antitonicity. 

In order to explore this further, we consider the case where the
function $\psi$ is twice differentiable. In this case, if
$\psi$ is concave, then an antitone superdifferential means
for all $x \leq y$, we have for all $i$, 
$\frac{\partial \psi}{\partial x_i}(x) \geq 
\frac{\partial \psi}{\partial x_i}(y)$. 
Setting $y = x + \epsilon \mathbf 1_{v_j}$, we get for all $i,j$
\begin{align}
\frac{ \partial ^2 \psi }{\partial x_i \partial x_j} (x)
= \lim_{\epsilon \to 0}
\frac{ \frac{\partial \psi}{\partial x_i}(x + \epsilon \mathbf 1_{v_j}) - 
\frac{\partial \psi}{\partial x_i}(x) } { \epsilon} \leq 0,
\end{align}
which is thus also a sufficient condition
for $f(A) = \psi(\mathbf 1_A)$ being submodular.
The condition is stricter than necessary, however.  Consider the quadratic
$\psi : \mathbb R^2 \to \mathbb R$ with
$\psi(x) = \ x^T \left(\begin{smallmatrix}
    1 & -2 \\
    -2 & 1
\end{smallmatrix}
\right) x + 4 {\mathbf 1}^T x$. Since $\phi(0,0) = 0$, $\phi(0,1) = 5$,
$\phi(1,0) = 5$, and $\phi(1,1) = 6$, $f(A) = \phi(\mathbf 1_A)$ is
monotone submodular. Here, we have 
$\frac{ \partial ^2 \psi }{\partial x_1 \partial x_2} = -4$
but 
$\frac{ \partial ^2 \psi }{\partial x_i^2} = 2$ for $i \in \set{1,2}$.
Being submodular does not require the non-positivity of the
diagonal elements of the Hessian matrix.
In fact, the following weaker sufficient
condition for submodularity 
(an old result, going back
more than a hundred years~\cite{auspitz1889untersuchungen,
  edgeworth1897, samuelson1947foundations, lorentz1953inequality,
  samuelson1974complementarity, topkis1978minimizing,
  topkis1998supermodularity})
is well established:
\begin{theorem}
\label{thm:old_cross_2nd_deriv_neg}
Let $\phi : \mathbb R^n \to \mathbb R$ be a twice differentiable
function. If for all $i\neq j$ we have
$\partial^2 \phi/\partial x_i \partial x_j \leq 0$
then the function $f: 2^V \to \mathbb R$ where $f(A) = \phi(\mathbf 1_A)$
is submodular.
\end{theorem}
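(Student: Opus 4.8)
The plan is to reduce submodularity to its local pairwise characterization and then recognize the resulting second difference as a line integral of the mixed partial derivative, which the hypothesis forces to be nonpositive. First I would invoke the standard fact that a set function $f$ is submodular if and only if for every $S \subseteq V$ and every pair of distinct $v_i, v_j \in V \setminus S$ one has $f(S \cup \{v_i\}) + f(S \cup \{v_j\}) \geq f(S) + f(S \cup \{v_i, v_j\})$; this equivalence lets me avoid verifying the diminishing-returns inequality over all nested pairs $A \subseteq B$ and reduces the problem to a single two-element increment. Fixing such $S, v_i, v_j$ and writing $x = \mathbf 1_S$, the four sets involved have characteristic vectors $x$, $x + \mathbf 1_{v_i}$, $x + \mathbf 1_{v_j}$, and $x + \mathbf 1_{v_i} + \mathbf 1_{v_j}$ (note $x(v_i) = x(v_j) = 0$ since $v_i, v_j \notin S$), so the goal becomes
\[
\phi(x + \mathbf 1_{v_i}) + \phi(x + \mathbf 1_{v_j}) \geq \phi(x) + \phi(x + \mathbf 1_{v_i} + \mathbf 1_{v_j}).
\]

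The second step expresses the mixed second difference via two nested applications of the fundamental theorem of calculus, which I prefer over a Fubini/double-integral argument since it sidesteps any appeal to symmetry of mixed partials. I would define $h(s) = \phi(x + s\,\mathbf 1_{v_i} + \mathbf 1_{v_j}) - \phi(x + s\,\mathbf 1_{v_i})$ for $s \in [0,1]$, so that $h(0) = \phi(x + \mathbf 1_{v_j}) - \phi(x)$ and $h(1) = \phi(x + \mathbf 1_{v_i} + \mathbf 1_{v_j}) - \phi(x + \mathbf 1_{v_i})$; a short rearrangement shows the displayed inequality is equivalent to $h(1) \leq h(0)$. Differentiating gives $h'(s) = \frac{\partial \phi}{\partial x_i}(x + s\,\mathbf 1_{v_i} + \mathbf 1_{v_j}) - \frac{\partial \phi}{\partial x_i}(x + s\,\mathbf 1_{v_i})$, and a second application of the fundamental theorem in the $j$-direction yields
\[
h'(s) = \int_0^1 \frac{\partial^2 \phi}{\partial x_j \partial x_i}\bigl(x + s\,\mathbf 1_{v_i} + t\,\mathbf 1_{v_j}\bigr)\, dt.
\]

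The third step finishes the argument: because the hypothesis quantifies over all ordered pairs of distinct indices, $\partial^2 \phi / \partial x_j \partial x_i \leq 0$ holds as well, so the integrand is nonpositive and $h'(s) \leq 0$ for every $s$. Hence $h$ is nonincreasing, giving $h(1) \leq h(0)$, which is exactly the target inequality; since $S, v_i, v_j$ were arbitrary, $f$ is submodular. I do not expect a genuine obstacle here—the classical statement is elementary—so the only thing to handle carefully is the regularity bookkeeping that licenses the two integral representations. Plain pointwise twice-differentiability is slightly delicate for the fundamental theorem of calculus, so I would either assume (as is standard for this result) that the second-order partials are continuous, or note that it suffices for $s \mapsto \frac{\partial \phi}{\partial x_i}$ and $t \mapsto \frac{\partial \phi}{\partial x_j}$ to be absolutely continuous along the relevant segments; in either case the quantification over both orders of indices makes any invocation of Clairaut's theorem unnecessary.
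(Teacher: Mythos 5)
Your argument is correct, but there is nothing in the paper to compare it against: Theorem~\ref{thm:old_cross_2nd_deriv_neg} is stated without proof, presented as a classical result more than a century old with citations to Edgeworth, Topkis, and others, and the surrounding text only discusses how it relates to the antitone superdifferential condition. So your proposal supplies a self-contained derivation of something the paper takes as known. The derivation itself is sound: the local pairwise characterization of submodularity ($f(S\cup\{v_i\})+f(S\cup\{v_j\})\geq f(S)+f(S\cup\{v_i,v_j\})$ for all $S$ and distinct $v_i,v_j\notin S$) is a legitimate reduction, the two nested applications of the fundamental theorem of calculus correctly express the mixed second difference as an integral of $\partial^2\phi/\partial x_j\partial x_i$ over the unit square attached at $\mathbf 1_S$, and you are right that the hypothesis already covers both orderings of the index pair, so Clairaut's theorem is never needed. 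One refinement on the regularity issue you flag: rather than strengthening the hypothesis to continuous second partials or invoking absolute continuity, you can avoid integrals altogether by applying the mean value theorem twice. Since $\phi$ is twice differentiable pointwise, $h$ is differentiable, so $h(1)-h(0)=h'(s^*)$ for some $s^*\in(0,1)$; and $h'(s^*)$ is a first difference of the differentiable function $t\mapsto \frac{\partial\phi}{\partial x_i}\bigl(x+s^*\,\mathbf 1_{v_i}+t\,\mathbf 1_{v_j}\bigr)$, hence equals $\frac{\partial^2\phi}{\partial x_j\partial x_i}$ evaluated at some interior point, which is nonpositive by hypothesis. This proves the theorem exactly as stated, with no auxiliary smoothness assumptions beyond pointwise twice differentiability.
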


%

The above result is equivalent to $\partial \phi(x)/\partial x_j$
being decreasing in $x_i$ for all $i \neq j$.
This suggests that the antitone superdifferential condition can also
be weakened while still ensuring submodularity. Define
$d_i^\epsilon \psi (x) = \psi(x + \epsilon \mathbf 1_{v_i}) - \psi(x)$.
Then an antitone superdifferential is the same
as, for all $x \leq y$ having $d_i^\epsilon \phi(x) \geq d_i^\epsilon \psi(y)$
for all $i$ and $\epsilon > 0$. This implies
that $d_j^\epsilon d_i^\epsilon \psi(x) \leq 0$ for all $i,j$.
The weaker condition asks
that $d_j^\epsilon d_i^\epsilon \psi(x) \leq 0$ for all $i\neq j$, and $\epsilon > 0$,
and this is the same as
\begin{align}
\psi(x + \epsilon \mathbf 1_{v_i})
+ \psi(x + \epsilon \mathbf 1_{v_j})
\geq 
\psi(x + \epsilon \mathbf 1_{v_i} + \epsilon \mathbf 1_{v_j})
+ \psi(x)
\end{align}
which essentially is a restatement of the property of submodularity
but on the reals. Note that when $i=j$, this (and
$\partial^2 \phi/\partial x_i^2 \leq 0$ in the twice differentiable
case) asks for the function to be concave in the direction of each
axis, but submodularity, as Theorem~\ref{thm:old_cross_2nd_deriv_neg}
states, does not require this.  Indeed, submodularity is a
relationship between distinct variables, not a criterion on any one
particular variable.


The weaker condition (Theorem~\ref{thm:old_cross_2nd_deriv_neg}) is also not necessary for
concavity, as the aforementioned quadratic is neither concave nor convex.
Concavity requires non-positive definiteness of the Hessian matrix,
something that antitone maps do not ensure.  A map is any function
$h : \mathbb R^V \to \mathbb R^V$ and is antitone if for all
$x,y \in \mathbb R^V$, $(x-y)^T(h(x)-h(y)) \leq 0$ for all $x,y$.  Not
only does an antitone map alone not ensure concavity (a result
established originally in
\cite{rockafellar1966characterization,rockafellar1970maximal}), an
antitone map need not be a gradient field (a property that, if true,
would make it a conservative field). For an example related to
submodular functions, the multilinear extension~\cite{owen1972multilinear},
defined as:
\begin{align}
\tilde f(x) = \sum_{ S \subseteq V } f(V) \prod_{i \in S} x_i \prod_{j \in V \setminus S} (1 - x_j)
\end{align}
has the property that $\tilde f(\mathbf 1_A) = f(A)$ for all
$A \subseteq V$. It has been used as a extension of a submodular function, surrogate to the
true concave envelope, for use in submodular maximization
problems~\cite{feige2011maximizing,chekuri2014submodular,badanidiyuru2014fast}.
When $f$ is submodular, it has $\partial^2 \tilde f(x)/\partial x_i \partial x_j \leq 0$ for all
$i,j$, not only abiding Theorem~\ref{thm:old_cross_2nd_deriv_neg} but
also for $i=j$ it has $\partial \phi^2/\partial x_i^2 = 0$ since it is
multilinear. Hence, multilinear extension also has an antitone map,
but is also neither convex nor concave and hence has neither a
subdifferential nor a superdifferential.  Indeed, concavity is not at
all required for an extension of a submodular function, another well
known example being the \lovasz{} extension of
$\lex f : \mathbf R^V \to \mathbf R$ of $f$ which is a convex, has
$f(A) = \lex f(\mathbf 1_A)$, is defined as
$\lex f(x) = \sum_{i=1}^n x_{\sigma_i}f(\sigma_i | \sigma_1, \sigma_2,
\dots, \sigma_{i-1})$ where
$\sigma = (\sigma_1,\sigma_2, \dots, \sigma_n)$ is an $x$-dependent
order ensuring
$x_{\sigma_1} \geq x_{\sigma_2} \geq \dots \geq x_{\sigma_n}$.
$\lex f$ is not twice differentiable but it has a subgradient
$g \in \partial \lex f(x)$ where
$g(i) = f(\sigma_i | \sigma_1, \sigma_2, \dots, \sigma_{i-1})$.  Given
$x \leq y$, a decreasing order of $y$ can be arbitrarily different
than for $x$ implying $\partial \lex f(x)$ is neither antitone nor
monotone, so $d_i^\epsilon d_j^\epsilon\lex f(x) \leq 0$ is not a
property of the \lovasz{} extension.
Also, any function defined only on the vertices of the unit hypercube has
an infinite number of both concave and convex
extensions~\cite{crama2011boolean}.  The approach above shows that
antitone superdifferentials involves both concavity and submodular
functions. Since Theorem~\ref{thm:old_cross_2nd_deriv_neg} does not
require concavity, however, this suggests that there may be a way to
define submodular functions using generalized line integrals of
antitone maps without needing concavity~\cite{romano1993potential}.

We also note that Theorem~\ref{thm:old_cross_2nd_deriv_neg} is given
as a sufficient condition, but not a necessary condition, for
submodularity when we consider $\phi$ as a function used to produce
$f(A) = \phi(\mathbf 1_A)$.  Let $\phi$ be any function satisfying
Theorem~\ref{thm:old_cross_2nd_deriv_neg} and $\chi$ be any other
function having $\chi(\mathbf 1_A) = 0$ for all $A \subseteq V$.  Then
$f(A) = \phi(\mathbf 1_A) + \chi(\mathbf 1_A)$ is submodular while
$\phi(x) + \chi(x)$ need not satisfy the theorem.
Theorem~\ref{thm:old_cross_2nd_deriv_neg} is typically stated as both
necessary and sufficient conditions for submodularity
\cite{edgeworth1897, samuelson1947foundations,
  samuelson1974complementarity,topkis1978minimizing,topkis1998supermodularity},
as it is used to define submodularity on those lattices, including the
reals (and hence this is sometimes called continuous submodularity),
where twice differentiability everywhere is well defined. For example,
defining
$\partial_i f(A) = f(A \cup \set{i}) - f(A \setminus \set{i})$ for
$i \in V$, we have that a function $f: 2^V \to \mathbb R$ is
submodular if and only if for $i\neq j$,
$\partial_i \partial_j f(A) \leq 0$.  This is in contrast to how we
use it above, which to define a submodular function only on the unit
hypercube vertices starting from a function defined on $\mathbb R^n$.


Getting back to DSFs, since the concave function associated with a DSF
has an antitone superdifferential, and since this is sufficient but
not necessary for submodularity, this suggests (but does not
guarantee, since DSFs evaluate $\psi$ only at hypercube vertices
$\mathbf 1_A$) that the family of DSFs might not comprise all
submodular functions.  While in Section~\ref{sec:dsfs} we show that
DSFs generalize SCMMs, and in Section~\ref{sec:textdsf_k-1-subset} we
show that increasing the layers in a DSF increases the size of the
family, Section~\ref{sec:dsfs-cannot-do} shows, by giving an example,
that not all submodular function can be represented by DSFs.








%
%


In closing this section, we state an additional potential advantage of
DSFs.  Ordinarily the concave closure of a submodular function is
computationally hard to evaluate \cite{vondrak2007submodularity} and
this is disappointing since such a construct would be useful for
relaxation schemes for maximizing submodular functions (and as result
surrogates, such as the multilinear extension are used).  In the DSF
case, however, a particular concave extension is very easy to get,
namely $\psi_\rtnd(x) + \langle m_\pm , x \rangle$. This extension perhaps
could be useful for maximizing DSFs, possibly constrainedly, using
concave maximization followed by appropriate rounding methods.


\subsection{The Special Matroid Case and Deep Matroid Rank}
\label{sec:matroids-case}

We discuss in this section the special case of matroids and matroid
ranks as they motivate and offer insight to the results later in the
paper.

A matroid $M$~\cite{fujishige2005submodular} is a set system
$M=(V,\mathcal I)$ where $\mathcal I = \set{I_1, I_2, \dots }$ is a
set of subsets $I_i \subseteq V$ that are called independent. A
matroid has the property that $\emptyset \in \mathcal I$, that
$\mathcal I$ is subclusive (i.e., given $I \in \mathcal I$ and $I'
\subset I$ then $I' \in \mathcal I$) and that all maximally
independent sets have the same size (i.e., given $A, B \in \mathcal I$
with $|A| < |B|$, there exists a $b \in B \setminus A$ such that $A +
b \in \mathcal I$). The rank of a matroid, a set function $r: 2^V \to
\mathbb Z_+$ defined as $r(A) = \max_{I \in \mathcal I} |I \cap A|$,
is a powerful class of submodular functions. All matroids are defined
uniquely by their rank function as $\mathcal I = \set{ A : r(A) = |A|
}$ and therefore, we can reason about if two matroids are equivalent
or not based on if their ranks are equal, and vice verse.  All
monotone non-decreasing non-negative integral submodular functions can
be
exactly represented by grouping and then evaluating grouped ground
elements in a matroid~\cite{fujishige2005submodular}.

A useful matroid in machine learning applications
\cite{lin2011-submodular-word-alignment,bai-sgm-acm-bcb-2016} is the
partition matroid, where a partition $(V_1, V_2, \dots, V_\ell)$ of
$V$ is formed, along with a set of capacities $k_1, k_2, \dots, k_\ell
\in \mathbb Z_+$. It's rank function is defined as: $r(X) =
\sum_{i=1}^\ell \min( |X \cap V_i|, k_i)$ and, therefore, is an SCMM.

A cycle matroid is a different type of matroid based on a graph
$G=(V,E)$ where the rank function $r(A)$ for $A \subseteq E$ is
defined as the size of the maximum ``spanning forest'' (i.e., a
spanning tree for each connected component) in the edge-induced
subgraph $G_A = (V,A)$. From the perspective of matroids, we can
consider classes of submodular functions via their rank. If a given
type of matroid cannot represent another kind, their ranks lie in
distinct families. To study where DSFs are situated in the space of
all submodular functions, it is useful first to study results
regarding matroid rank functions.
\begin{lemma}
There are partition matroids that are not cycle matroids.
\end{lemma}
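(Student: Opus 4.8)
The plan is to exhibit a single concrete partition matroid and prove that it is not the cycle matroid of any graph. The natural candidate is the uniform matroid $U_{2,4}$: take $V = \set{1,2,3,4}$ as a \emph{single} block with capacity $k = 2$, so that the partition-matroid rank is $r(X) = \min(|X|,2)$. This is manifestly a partition matroid (one part, $\ell = 1$). Its independent sets are exactly the subsets of size at most $2$, every pair is a basis, and its circuits (minimal dependent sets) are precisely the four $3$-element subsets of $V$.

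Assume for contradiction that $U_{2,4}$ is isomorphic to the cycle matroid of some graph $G$, whose edge set I may identify with $\set{1,2,3,4}$. First I would rule out degenerate edges: since no singleton is a circuit, $G$ has no loops, and since no pair is a circuit, $G$ has no two parallel edges; hence $G$ is simple. Next, because every $3$-element subset of the edges is itself a circuit, and a circuit in a cycle matroid is exactly the edge set of a cycle of $G$, each such triple must form a cycle on three distinct edges of a simple graph, i.e.\ a triangle.

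The contradiction then comes from a short incidence argument. Fix the triangle formed by $\set{1,2,3}$, say on vertices $a,b,c$ with $1 = ab$, $2 = bc$, and $3 = ca$. The triple $\set{1,2,4}$ must also be a triangle; since $1$ and $2$ already meet at $b$ and together span $\set{a,b,c}$, the only edge closing them into a triangle is $ac$ --- but that edge is $3$, and $G$ is simple, so $4 \neq 3$ yields a contradiction. Hence no graph realizes $U_{2,4}$, and this partition matroid is not a cycle matroid.

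I expect the only delicate points to be the bookkeeping that first isolates ``$G$ is simple'' and ``each triple is a triangle'' before the final incidence clash; these use only that the circuits of a cycle matroid are graph cycles, together with the explicit circuit structure of $U_{2,4}$. As an alternative route, one could instead invoke that cycle matroids are binary (GF(2)-representable) while $U_{2,4}$ is not --- for instance, the symmetric difference of the circuits $\set{1,2,3}$ and $\set{1,2,4}$ is $\set{3,4}$, which is independent and so contains no circuit, contradicting the closure of circuits under symmetric difference in a binary matroid --- but the elementary triangle argument above is self-contained and avoids importing representability theory.
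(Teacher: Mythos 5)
Your proof is correct and takes essentially the same route as the paper: both use the uniform matroid $U_{2,4}$ (a one-block partition matroid with rank $\min(|X|,2)$ on four elements) and derive a contradiction from the fact that every three-edge subset must be a cycle of the graph while every pair of edges is independent. Your write-up is somewhat cleaner in first isolating simplicity of the graph before the incidence clash, but the counterexample and the core argument are the paper's.
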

\begin{proof}
Consider the partition matroid over $|V|=4$ elements and
consider a partition with one block and a capacity
of two, so $r(X) = \min(|X|,2)$, so any two elements
has rank 2. For this matroid
to be a cyclic matroid, we must have a graph with 4 edges where
every set of three (out of those 4) must contain a cycle. Lets
name the edges $\aEL,\bEL,\cEL,\dEL$, then $\aEL,\bEL,\cEL$ contains a cycle
and so does $\aEL,\bEL,\dEL$, while $\aEL,\bEL$ does not contain a cycle ($\set{\aEL,\bEL}$ has rank 2).
The only way this can happen is if either $c,d$ are parallel edges,
or of $\cEL$ is parallel to one of $\aEL$ or $\bEL$,
and $\dEL$ is also parallel to one of $\aEL$ or $\bEL$,
or if $\cEL$ and $\dEL$ are loops. 
In any of the
above cases, we now have two edges that are parallel, or that contain loops, but they
must have rank 2, which is a contradiction.
\end{proof}

\begin{figure}[tb]
\centerline{\includegraphics[page=1,width=0.9\textwidth]{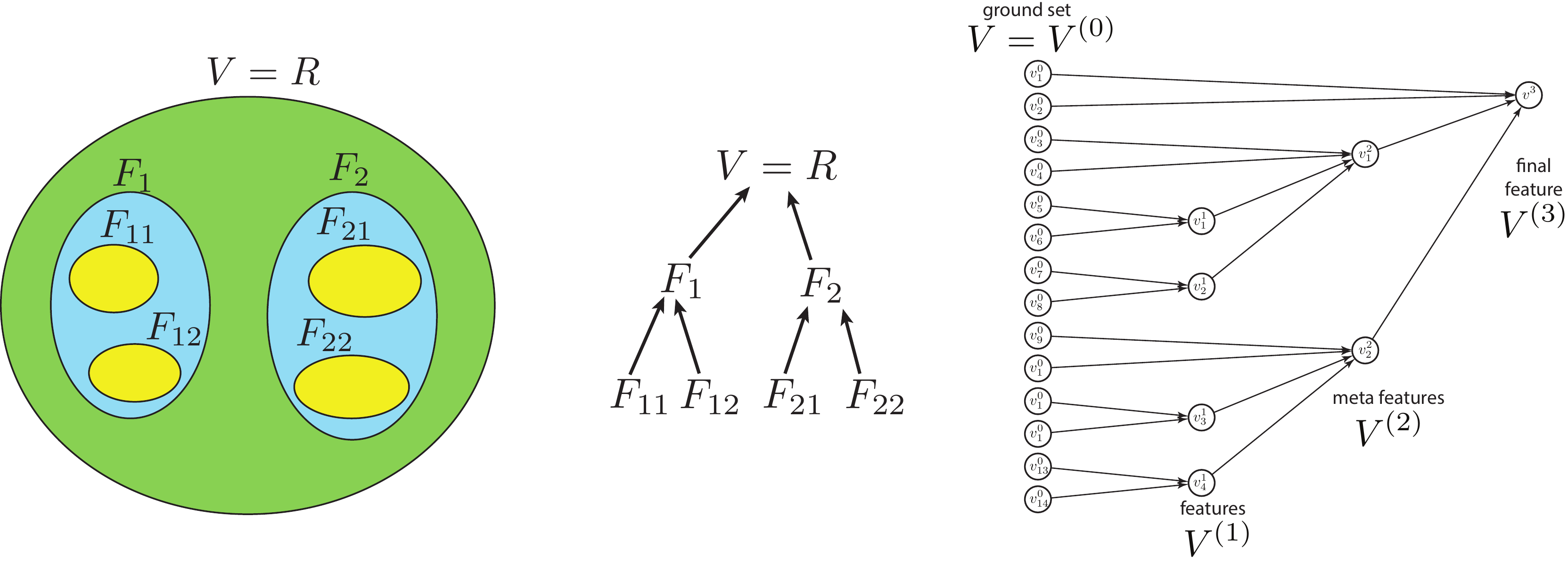}}
\caption{Visualization of a laminar matroid rank function: Left, the
  laminar family of subsets $\mathcal F = \set{ V, F_1, F_{11},
    F_{12}, F_2, F_{21}, F_{22} }$; Middle: the tree structure of the
  laminar family; Right: a possible corresponding DSF DAG associated
  with the laminar matroid rank function when $|V| = 14$.}
\label{fig:laminar_visualizations}
\end{figure}

In a {\em laminar matroid}, a generalization of a partition matroid,
we start with a set $V$ and a family $\mathcal F = \{
F_1,F_2, \dots, \}$ of subsets $F_i \subseteq V$ that is {\em laminar},
namely that for all $i\neq j$ either $F_i \cap F_j = \emptyset$ or
$F_i \subseteq F_j$ or $F_j \subseteq F_i$ (i.e., sets in $\mathcal F$
are either non-intersecting or comparable). In a laminar matroid, we
also have for every $F \in \mathcal F$ an associated capacity $k_F \in
\mathbb Z_+$. A set $I$ is independent 
if $|I \cap F| \leq k_F$ for all $F \in \mathcal F$. 
A laminar family of sets can be organized in a tree, where
there is one root $R \in \mathcal F$ in the tree that, w.l.o.g., can be 
$V$ itself. Then the immediate parents $\text{pa}(F) \subset \mathcal F$ of a 
set $F \in \mathcal F$ in the tree 
are the set of maximal subsets of $F$ in $\mathcal F$, i.e., 
$\text{pa}(F) = \set{ F' \in \mathcal F : F' \subset F \text{ and } \not\exists F'' \in \mathcal F \text{ s.t. } F' \subset F'' \subset F }$.
We then define the following for all $F \in \mathcal F$:
\begin{align}
\label{eq:laminar_matroid_rank}
r_F(A) = \min( \sum_{F' \in \text{pa}(F) } r_{F'}(A \cap F') +
| A \setminus \bigcup_{ F' \in \text{pa}(F) } F |, k_F ).
\end{align}
A laminar matroid rank has a recursive definition
$r(A) = r_R(A) = r_V(A)$. Hence, if the family $\mathcal F$ forms a
partition of $V$, we have a partition matroid. More interestingly,
when compared to Eqn.~\eqref{eq:gen_deep_submodular_recursive}, we see
that a laminar matroid rank function is an instance of a DSF with a
tree-structured DAG as shown in Figure~\ref{fig:deep_submodular}.
Thus, within the family of DSFs lie the truncated matroid rank
functions 
used to show information theoretic hardness for many constrained
submodular optimization problems~\cite{goemans2009approximating},
i.e., start with the partition matroid rank
$r(A)= \min( |A \cap R |, a ) + \min( |A \cap \bar R|, |\bar R| )
= \min( |A \cap R |, a ) + |A \cap \bar R|$ and
then truncate it as follows:
\begin{align}
f_R(A) = \min \set{ r(A), b } = \min \set{ |A|, a + |A \cap \bar R|,
  b } 
\label{eq:truncated_matroid_rank}
\end{align}
with $a < b$. This is a function where $f_R(R) = a$ and $f_R(A) > a$ for $A \neq R$ and $|A|=|R|$
and can be set up to have most size $\geq\! |R|$ sets $A$ valued at $f_R(A) = b$.  Since this
function is used to show hardness for many constrained submodular
minimization problems, and since DSFs generalize laminar matroid
ranks, this portends poorly for 
algorithms of the kind found in~\cite{jegelka2013reflection,nishihara2014convergence}
to achieve fast DSF minimization.

Laminar matroids are more general than partition matroids.
From the perspective of matroid rank, we have:
\begin{lemma}
Laminar matroids strictly generalize partition matroids
\label{lemma:laminar_matroids_bigger_than_partition_matroids}
\end{lemma}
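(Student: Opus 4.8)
The plan is to prove strict generalization in the two standard steps: first that every partition matroid is realizable as a laminar matroid, and then that some laminar matroid admits no partition representation. For the containment direction, given a partition matroid with blocks $V_1,\dots,V_\ell$ and capacities $k_1,\dots,k_\ell$, I would take the laminar family $\mathcal F = \set{V_1,\dots,V_\ell,V}$ with $V$ as the root and $k_V = \sum_i k_i$. The blocks are pairwise disjoint, hence trivially laminar, and each $V_i$ is a leaf with $\text{pa}(V_i)=\emptyset$, so Equation~\eqref{eq:laminar_matroid_rank} gives $r_{V_i}(A)=\min(|A\cap V_i|,k_i)$. The root then evaluates to $r_V(A)=\min\bigl(\sum_i\min(|A\cap V_i|,k_i),\,k_V\bigr)$, and since $k_V=\sum_i k_i$ the outer truncation is never active, leaving exactly the partition matroid rank $\sum_i\min(|A\cap V_i|,k_i)$. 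This matches the remark already made in the text following Equation~\eqref{eq:laminar_matroid_rank}.

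For strictness, I would exhibit one laminar matroid with no partition representation. Let $V=\set{1,2,3,4}$ with $\mathcal F=\set{\set{1,2},V}$, capacities $k_{\set{1,2}}=1$ and $k_V=2$; thus $I$ is independent iff $|I\cap\set{1,2}|\leq 1$ and $|I|\leq 2$. The ranks I would record are $r(\set i)=1$ for each singleton, $r(\set{1,2})=1$, $r(\set{1,3})=2$, $r(\set{3,4})=2$, and $r(V)=2$. I then argue by contradiction: suppose this matroid equals the partition matroid of some partition $(V_1,\dots,V_m)$ with capacities $(\kappa_1,\dots,\kappa_m)$. From $r(\set i)=1$ every block has capacity at least one. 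From $r(\set{1,2})=1<2$, elements $1$ and $2$ must share a block $B$, since elements of distinct blocks contribute additively and would force rank $2$; then $\min(2,\kappa_B)=1$ gives $\kappa_B=1$. Since $r(\set{1,3})=2$ while $\min(|\set{1,3}\cap B|,1)\leq 1$ whenever $3\in B$, we get $3\notin B$, and symmetrically $4\notin B$, so $B=\set{1,2}$ exactly and $3,4$ lie in the remaining blocks.

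The crux is then a one-line counting argument. Each remaining block is contained in $\set{3,4}$, so its contribution to $r(V)$ equals its contribution to $r(\set{3,4})$, whose total is $2$; adding the contribution $\min(2,1)=1$ from $B$ yields $r(V)=3$, contradicting $r(V)=2$. Hence no partition representation exists, and together with the containment direction laminar matroids strictly generalize partition matroids.

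The step requiring the most care is the choice of example rather than the computation. The analogous three-element matroid $\mathcal F=\set{\set{1,2},\set{1,2,3}}$ with the same capacities is secretly a partition matroid (blocks $\set{1,2}$ and $\set 3$, each capacity one), because there the global truncation never has to squeeze a high-rank outside part past the inner constraint. The four-element version works precisely because $\set{3,4}$ has rank two on its own yet is forced through the global bound $k_V=2$, a tension that a direct sum of uniform matroids cannot reproduce. I would therefore present the rank values of this example explicitly, since the entire contradiction rests on the single numerical clash between $r(V)=2$ and the forced value $3$; the one substantive fact used, namely additivity of the rank across disjoint blocks, I would state as an explicit lemma-step rather than leave implicit, as it is the hinge of the argument.
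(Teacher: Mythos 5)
Your proof is correct and takes essentially the same route as the paper: the paper's counterexample is exactly your matroid (your $\set{1,2},V$ example is the minimal instance of its family $r(X) = \min\bigl(\min(|X \cap B|,1) + |X \setminus B|, 2\bigr)$ with $|B|=2$, $|V|=4$), and its contradiction likewise forces the two elements of $B$ into a single capacity-one block, shows that block must equal $B$ exactly, and then counts the excess rank contributed by the elements outside $B$. The only substantive difference is that the paper starts from an arbitrary, possibly overlapping collection of truncation sets $\set{C_i}$ and derives disjointness as a first step (so it rules out slightly more than partition matroids proper), and it reaches the final contradiction on a three-element set rather than on $V$; neither affects correctness for the lemma as stated.
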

\begin{proof}
Consider a simple laminar family $\mathcal F = \set{ V, B }$
where $k_V = 2$, $B \subset V$ with $k_B = 1$, and
$|B| \geq 2$ and $|V| \geq |B| + 2$ giving rank function
\begin{align}
  r(X) = \min ( \min(|X \cap B|,1) + |X \setminus B|, 2).
\end{align}
Suppose we are given any set of subsets $\set{C_i}_i$ 
of $V$ and corresponding integer capacities $\set{ k_i }_i$ giving
the submodular function:
\begin{align}
r_s(X) = \sum_i \min(|X \cap C_i|,k_i).
\end{align}
and suppose that $r_s(X) = r(X)$ which means
$r_s(X)$ must be a matroid rank function. Note
that $k_i \geq 1$ otherwise term $i$ is vacuous. 
The $C_i$ must be disjoint, for if not let
$C_i \cap C_j \neq \emptyset$, $i\neq j$ and pick $v \in C_i \cap C_j$, which
gives $r_s(v) \geq 2$ implying $r_s$ is not a matroid rank function.
Hence the sets $C_i$ must be disjoint and $r_s$ is a partition
rank function over $\cup_i C_i$. Choose two 
elements $b_1, b_2 \in B$. If
$b_1 \in C_i$ and $b_2 \in C_j$ for $i\neq j$ this
gives $r_s(\set{b_1,b_2}) = 2 \neq r(\set{b_1,b_2}) = 1$. Hence,
there is a unique $i$ such that $B \subseteq C_i$. Thus,
$k_i = 1$ since if not we would get $r_s(\set{b_1,b_2}) = 2$. If
there exists a $v \in C_i \setminus B$ then for any $b \in B$,
$r_s(v,b) = 1 \neq 2 = r(v,b)$. Hence, we must have $C_i = B$. Now
take $v_1, v_2 \notin B$ so that $r(\set{v_1,v_2}) = 
r_s(\set{v_1,v_2}) = 2$, but the term of $r_s$ involving
$B$ does not involve $v_1,v_2$ so that for $b \in B$, 
$r_s(\set{v_1,v_2,b}) = 3$ which is a contradiction. Hence,
a laminar matroid is a strict generalization of a partition
matroid.
\end{proof}

Since a laminar matroid generalizes a partition matroid, this augurs
well for DSFs generalizing SCMMs (a result we provide in
Theorem~\ref{sec:dsfs-extend-class}).  Before considering that, we
already are up against some limits of laminar matroids, i.e.:
\begin{lemma}
Laminar matroid cannot represent all cycle matroids.
\label{lemma:laminar_matroid_cannot_do_cycle_graphic}
\end{lemma}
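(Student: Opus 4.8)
The plan is to exhibit a single cycle matroid whose rank function cannot equal any laminar matroid rank function, and the natural candidate is $M(K_4)$, the cycle matroid of the complete graph on four vertices. I would label the six edges so that the vertex set is $\{1,2,3,4\}$ and the edges are $\aEL = \{1,2\}$, $\bEL = \{1,3\}$, $\cEL = \{1,4\}$, $\dEL = \{2,3\}$, $\eEL = \{2,4\}$, $\fEL = \{3,4\}$. This matroid has rank $3$, and its three-element circuits are exactly the four triangles; the two I will use are $T_1 = \{\aEL,\bEL,\dEL\}$ (on vertices $1,2,3$) and $T_2 = \{\aEL,\cEL,\eEL\}$ (on vertices $1,2,4$), which share precisely the edge $\aEL$. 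I will assume for contradiction that there is a laminar family $\mathcal F$ with capacities $\{k_F\}_{F \in \mathcal F}$ whose laminar matroid rank equals the rank of $M(K_4)$.

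The first step is a small structural fact about circuits of laminar matroids: every circuit $C$ is contained in some member $F \in \mathcal F$ with $k_F = |C| - 1$. To see this, take any $F$ witnessing the dependence of $C$, i.e.\ $|C \cap F| > k_F$; if $C \not\subseteq F$ then $C \cap F$ would be a proper, nonempty, dependent subset of $C$, contradicting minimality, so $C \subseteq F$ and $|C| > k_F$. Minimality of $C$ (every proper subset independent) then forces $k_F = |C| - 1$ exactly, since a smaller capacity would already make a proper subset of $C$ dependent. Applied to the two triangles, this yields witnesses $F_1 \supseteq T_1$ and $F_2 \supseteq T_2$ with $k_{F_1} = k_{F_2} = 2$.

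The heart of the argument uses laminarity together with the shared edge. Since $\aEL \in F_1 \cap F_2$, the two witnesses are comparable, so without loss of generality $F_1 \subseteq F_2$; then $F_2 \supseteq T_1 \cup T_2 = \{\aEL,\bEL,\cEL,\dEL,\eEL\}$ while $k_{F_2} = 2$. But the three edges $\{\aEL,\bEL,\cEL\}$ form a star at vertex $1$, hence constitute an independent set in $M(K_4)$, and they lie inside $F_2$; the capacity constraint of the laminar matroid would require $|\{\aEL,\bEL,\cEL\} \cap F_2| \le k_{F_2}$, i.e.\ $3 \le 2$, a contradiction. Thus the rank of $M(K_4)$ is not a laminar matroid rank, and laminar matroids cannot represent all cycle matroids.

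The main obstacle I anticipate is establishing the circuit-witness fact carefully and in the right generality, since the contradiction hinges on controlling the capacity of the witness set. It is reassuring that the full equality $k_F = |C|-1$ is not actually needed: the containment $C \subseteq F$ together with $|C| > k_F$ already gives the upper bound $k_{F_2} \le 2$, which is all the final step requires, making the argument robust. The only remaining things to verify are the purely graph-theoretic claims that a triangle is a circuit while a three-edge star is independent, both of which are immediate.
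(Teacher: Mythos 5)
Your proof is correct, and it takes a genuinely different route from the paper's. You work directly with the independence-system definition of a laminar matroid: you first establish a clean structural fact (every circuit $C$ must lie entirely inside a single witness $F \in \mathcal F$ with $k_F = |C|-1$, since a witness not containing $C$ would expose a proper dependent subset), then exploit laminarity to force the witnesses of the two edge-sharing triangles $T_1, T_2$ of $K_4$ to be nested, so that one set of capacity $2$ contains the star $\{\aEL,\bEL,\cEL\}$, which is independent in $M(K_4)$ --- an immediate contradiction. The paper instead argues through the recursive rank formula of Equation~\eqref{eq:laminar_matroid_rank}: it shows each $3$-cycle must be confined to a single member of $\text{pa}(V)$, uses transitive closure over the four intersecting triangles to conclude $|\text{pa}(V)| = 1$, and recurses layer by layer until the whole laminar rank collapses to a single truncation $\min(|X|,\min(k_F,3))$, which visibly fails. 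Your argument is shorter, more elementary, and robust (as you note, only the upper bound $k_{F_2} \leq 2$ is needed, not the exact capacity); what the paper's longer route buys is that its ``backprop'' style of recursively propagating constraints through layers is precisely the proof technique it later reuses and scales up for Lemma~\ref{lemma:laminar_rank_no_scmm} and Theorem~\ref{thm:sccms_cannot_do_cycle_matroid_ranko}, so the recursive structure there is deliberate pedagogy rather than necessity for this lemma alone.
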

\begin{proof}
Consider the cycle matroid over edges on $K_4$, hence
$M=(V,\mathcal I)$ with $|V| =6$, $V$ being the
set of edges, where
$r(X)=|X|$ for $|X| \leq 2$,
$r(X) = 2$ when $X$ is any 3-cycle,
$r(X) = 3$ for any acyclic $X$ with $|X| = 3$, 
and $r(X) = 3$ for $|X| > 3$. 
Consider the form
of the laminar matroid in Eqn.~\eqref{eq:laminar_matroid_rank}
and suppose $r_V(X) = r(X)$ for all $X$. 
W.l.o.g.,
we may assume $k_V = 3$. 
Suppose $\exists e \in V \setminus \cup_{F \in \text{pa}(V)} F$. Then
consider any 3-cycle $C$ involving $e$, and $r_V(C - e) = 2$ but since
no element of $\text{pa}(V)$ contains $e$, there is no truncation,
giving $r_V(C) = 3$, a contradiction. Hence, $V = \cup_{F \in
  \text{pa}(V)} F$. Given a 3-cycle $C = \set{\aEL,\bEL,\cEL}$, suppose there
exists an $F \in \text{pa}(V)$ with $\aEL \in F$ and $\bEL \notin F$ and $\cEL
\notin F$. Since we must have $r_V(\set{\bEL,\cEL}) = 2$ and $r_V(\set{\aEL}) =
1$, this implies $r_V(\set{\aEL,\bEL,\cEL}) = 3$, also a contraction. Hence,
any three cycle must be in one element of $\text{pa}(V)$, and by
transitive closure over the four intersecting three-cycles, all elements of $V$ must be in only one member of
$\text{pa}(V)$. This implies that $|\text{pa}(V)| = 1$ and the only
way to represent the 3-cycles is within that one term, $r_F(X)$. This
process then is applied recursively until we are left with the base
case, where the entire recursion boils down to the form $r_V(X) =
\min(r_F(X),3) = \min(\min(|X|,k_F),3) = \min(|X|,\min(k_F,3))$.  This
clearly cannot represent the cycle matroid rank function for any value
of $k_F \in \mathbb Z_+$.
\end{proof}
The proof technique is reminiscent of the back propagation method
used to train DNNs and hence we call it ``backprop proof'' --- it
recursively backpropagates required properties from the root though
each layer (in a DSF sense) of a laminar matroid rank until it boils
down to a partition matroid rank function, where the base case is
clear. The proof is elucidating since it motivates the proof of
Theorem~\ref{sec:dsfs-extend-class} showing that DSFs extend SCMMs. We
also have the immediate corollary.

\begin{corollary}
Partition matroids cannot represent all cycle matroids.
\end{corollary}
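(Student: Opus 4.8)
The plan is to obtain this at once from the two preceding lemmas via a containment argument, with no new computation required. First I would make explicit that every partition matroid is a laminar matroid: a partition $(V_1, V_2, \dots, V_\ell)$ of $V$ is itself a laminar family, since distinct blocks are disjoint and hence satisfy the laminarity condition vacuously. As already noted beneath Equation~\eqref{eq:laminar_matroid_rank}, when $\mathcal F$ forms a partition the laminar recursion collapses to the partition-matroid rank $r(X) = \sum_i \min(|X \cap V_i|, k_i)$ (taking the root capacity large enough to be inactive and noting that each block is a leaf of the tree). Thus the family of partition matroid rank functions is contained in the family of laminar matroid rank functions.

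Next I would apply Lemma~\ref{lemma:laminar_matroid_cannot_do_cycle_graphic}, which produces a specific cycle matroid --- the cycle matroid on the six edges of $K_4$ --- whose rank function cannot be expressed in the laminar form of Equation~\eqref{eq:laminar_matroid_rank} for any laminar family and any capacities. Because partition matroid ranks form a subfamily of laminar matroid ranks, representability of this cycle matroid as a partition matroid rank would imply representability as a laminar matroid rank, contradicting Lemma~\ref{lemma:laminar_matroid_cannot_do_cycle_graphic}. Hence the $K_4$ cycle matroid is representable by no partition matroid, which establishes the corollary.

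There is essentially no obstacle beyond stating the containment cleanly; all the difficulty has already been absorbed into the backprop proof of Lemma~\ref{lemma:laminar_matroid_cannot_do_cycle_graphic}. Were one to want a direct argument avoiding laminar matroids, the only mild subtlety would be to rerun that same reasoning in the degenerate one-layer setting: a partition matroid rank is a single sum of truncated modular terms over disjoint blocks, and the same collapse that reduces the laminar recursion to $\min(|X|, \min(k_F, 3))$ shows that such a function cannot assign rank $2$ to every three-cycle of $K_4$ while assigning rank $3$ to every acyclic triple. I would nonetheless present the containment version, as it is shorter and reuses the work already done.
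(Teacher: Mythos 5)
Your proposal is correct and is exactly the argument the paper intends: the corollary is stated as immediate from Lemma~\ref{lemma:laminar_matroids_bigger_than_partition_matroids} and Lemma~\ref{lemma:laminar_matroid_cannot_do_cycle_graphic}, i.e., partition matroid ranks form a subfamily of laminar matroid ranks (a partition is a laminar family with an inactive root capacity), so the $K_4$ cycle matroid rank, being inexpressible in laminar form, is a fortiori inexpressible as a partition matroid rank. Your explicit handling of the root capacity and the optional direct one-layer argument are fine but add nothing beyond what the paper's containment already absorbs.
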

%

\subsection{Surplus and Absolute Redundancy}
\label{sec:defic-absol-redund}

In this section, we introduce and study the notion of the surplus of a
set as measured by a submodular function. The surplus is a useful
concept and will be used extensively to show, in
Section~\ref{sec:dsfs}, various properties of the DSF family.

\begin{definition}[Surplus and Absolute Redundancy]
For a function $f: 2^V \to \mathbf R$, we define
$\surp_f(A)$ as the surplus (or absolute redundancy) of a set $A\subseteq V$ by $f$ as follows:
\begin{align}
\surp_f(A) = \sum_{a \in A} f(a) - f(A)
\end{align}
\end{definition}

We call $\surp_f(A)$ the surplus of $A$ by $f$.  We use the term
``surplus'' under an interpretation where $A$ is a set of agents that
can perform their action either independently of each other, or may
perform their actions jointly and
cooperatively~\cite{topkis1998supermodularity}. If an agent $a \in A$
performs the action independently, the cost is $f(a)$ with
an overall cost of $\sum_{a \in A} f(a)$, 
while if the
agents $A$ perform the action cooperatively, the overall cost is
$f(A)$.  The difference $\surp_g(A)= \sum_{a \in A} f(a) - f(A)$ is
the surplus 
obtained by performing the actions $A$ cooperatively
rather than individually.
When $g$ is submodular, surplus is
never negative. Hence, performing the actions jointly leads overall to
profit.\footnote{In~\cite{topkis1998supermodularity}, surplus
is defined as $f(A) - \sum_{a \in A} f(a)$ where $f$ is a supermodular
function, but the same idea still applies.}

The idea of surplus has occurred before in the field of information
theory but under a different name --- in this case, $f(A) = H(X_A)$ is
the entropy function of a set of random variables indexed by the set
$A$. The quantity $\surp_f(A) = \sum_{a \in A} H(X_a) - H(X_A)$
is the average bit-length penalty between optimally coding the random variables in $A$
separately (as if they were independent) vs.\ optimally coding them
jointly. This can, thus, be called the absolute redundancy of the set
$A$. For the entropy function, this idea was first defined
in~\cite{mcgill1954multivariate}.\footnote{Incidentally, in 1954,
  \cite{mcgill1954multivariate} was also the first, to the authors
  knowledge, to provide inequalities on the entropy function that are
  identical to the submodularity condition.}  Absolute redundancy is
also called ``total correlation'' \cite{watanabe1960information} and
also the ``multi-information'' function
\cite{studeny1998multiinformation}.  Our notion of surplus
is not the same as \cite{ore1956studies} where they
define a quantity called ``deficiency'' the negative of which
may be considered a kind of surplus.
Since there may neither be a
statistical, information theoretic, nor economic interpretation, we
actually prefer the terms ``total interaction'' or ``combinatorial
interaction.''  In the below, if only for the sake of brevity,
we utilize the term ``surplus,'' but stress that it
applies to any submodular function whatever its interpretation.  We say that the function $g$
``gives surplus'' to a set $A$ whenever $\surp_g(A) > 0$ and otherwise
$A$ has ``no surplus.''

In the below, we explore a number of properties and introduce a number
of variants of surplus, all of which are useful later in the paper.

\begin{lemma}[Linearity of Surplus]
Let $f_1,f_2$ be two 
functions and $\alpha_1, \alpha_2 \in \mathbb R_+$. Then for any $A \subseteq V$
\begin{align}
\surp_{\alpha_1 f_1 + \alpha_2 f_2}(A) = \alpha_1\surp_{f_1}(A) + \alpha_2 \surp_{f_2}(A)
\end{align}
\label{thm:lindef}
\end{lemma}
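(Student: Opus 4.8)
The plan is to prove this directly from the definition of surplus, exploiting the fact that $\surp_f(A) = \sum_{a \in A} f(a) - f(A)$ depends linearly on the underlying set function $f$: both the singleton sum and the subtracted whole-set value are linear in $f$. First I would set $g = \alpha_1 f_1 + \alpha_2 f_2$ and write out $\surp_g(A) = \sum_{a \in A} g(a) - g(A)$ straight from the definition.

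Next I would substitute $g(a) = \alpha_1 f_1(a) + \alpha_2 f_2(a)$ into the singleton sum and $g(A) = \alpha_1 f_1(A) + \alpha_2 f_2(A)$ into the subtracted term. Because finite summation over $a \in A$ is linear, the sum $\sum_{a \in A}(\alpha_1 f_1(a) + \alpha_2 f_2(a))$ splits as $\alpha_1 \sum_{a \in A} f_1(a) + \alpha_2 \sum_{a \in A} f_2(a)$. Regrouping all terms by the common coefficient $\alpha_i$ then gives $\alpha_1(\sum_{a \in A} f_1(a) - f_1(A)) + \alpha_2(\sum_{a \in A} f_2(a) - f_2(A))$, which is exactly $\alpha_1 \surp_{f_1}(A) + \alpha_2 \surp_{f_2}(A)$, as claimed.

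There is essentially no obstacle: the identity is an immediate consequence of the purely linear dependence of both terms of the surplus on its set-function argument, and no properties of $f_1, f_2$ beyond being real-valued set functions $2^V \to \mathbb R$ are used. I would remark that the non-negativity of $\alpha_1, \alpha_2$ is not needed for the algebraic identity, which holds for arbitrary real coefficients; the restriction to $\mathbb R_+$ is imposed only so that the lemma composes cleanly with later results where surplus is interpreted as a genuine (non-negative) redundancy under conic combinations of submodular functions.
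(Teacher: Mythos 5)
Your proof is correct: the identity follows immediately from the definition of surplus by the linearity of both the singleton sum and the whole-set term, exactly as you argue. The paper itself states this lemma without any proof (treating it as immediate), so your direct computation is precisely the argument the paper implicitly relies on, and your closing remark that non-negativity of $\alpha_1,\alpha_2$ is not actually needed for the algebraic identity is also accurate.
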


\begin{lemma}[Surplus is Immune to Modularity]
Modular functions do not change surplus, i.e., when $m: V \to \mathbb R$
is a normalized modular function and $f$ is any set function:
\begin{align}
\surp_{f + m}(A) = \surp_{f}(A)
\end{align}
\end{lemma}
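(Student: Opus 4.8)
The plan is to expand the definition of surplus directly and exploit the additivity of a normalized modular function over singletons. First I would write
\begin{align}
\surp_{f+m}(A) = \sum_{a \in A} (f+m)(a) - (f+m)(A)
= \Bigl( \sum_{a \in A} f(a) + \sum_{a \in A} m(a) \Bigr) - \bigl( f(A) + m(A) \bigr),
\end{align}
separating each term into its $f$-part and its $m$-part. The crux is the observation that since $m$ is a \emph{normalized} modular function, it satisfies $m(A) = \sum_{a \in A} m(a)$; hence the two modular contributions $\sum_{a \in A} m(a)$ and $m(A)$ cancel exactly, leaving $\sum_{a \in A} f(a) - f(A) = \surp_f(A)$.

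A cleaner alternative would be to invoke the just-established Linearity of Surplus (Lemma~\ref{thm:lindef}) with $\alpha_1 = \alpha_2 = 1$, giving $\surp_{f+m}(A) = \surp_f(A) + \surp_m(A)$, and then separately verify that a normalized modular function contributes nothing, i.e. $\surp_m(A) = \sum_{a \in A} m(a) - m(A) = 0$. This route isolates the single substantive fact --- that modular functions have vanishing surplus --- as an immediate consequence of their additivity, and I would likely present this version since it reuses the preceding lemma.

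There is no genuine obstacle here: the entire content reduces to the defining additivity property of normalized modular functions, and the argument holds for an arbitrary set function $f$ with no submodularity or monotonicity assumption. The only point warranting care is that the normalization hypothesis $m(\emptyset) = 0$ is precisely what licenses the identity $m(A) = \sum_{a \in A} m(a)$; without it the cancellation would leave a residual constant and the surplus would shift. I therefore expect the write-up to be a two-line computation rather than anything requiring a real idea.
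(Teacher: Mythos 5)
Your proposal is correct: the paper states this lemma without proof, treating it as immediate, and your direct expansion (or equivalently, linearity of surplus plus $\surp_m(A)=0$ for normalized modular $m$) is exactly the intended two-line argument. Your remark that normalization $m(\emptyset)=0$ is what licenses $m(A)=\sum_{a\in A}m(a)$ — and that without it a residual $(|A|-1)\,m(\emptyset)$ term would remain — is the only point of substance, and you have it right.
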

That modular functions do not influence surplus is useful to be able
to ignore the final modular function $m_{\pm}$ in a DSF when studying
its properties.

\begin{lemma}[Non-negativity of Surplus]
When $f$ is normalized ($f(\emptyset) = 0$) and submodular, then for all $A\subseteq V$,
$\surp_f(A) \geq 0$.
\end{lemma}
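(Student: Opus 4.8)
The plan is to recognize that $\surp_f(A) \geq 0$ is precisely the statement that a normalized submodular function is subadditive, i.e., that $\sum_{a \in A} f(a) \geq f(A)$, since $\surp_f(A)$ is by definition the difference of these two quantities. I would prove this subadditivity by telescoping $f(A)$ along an arbitrary enumeration of its elements and then discarding the redundancy introduced at each step via the diminishing-returns inequality.

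Concretely, write $A = \set{a_1, a_2, \dots, a_k}$ in any fixed order and apply the chain rule for set functions together with the gain notation $f(v \mid X) = f(X \cup \set{v}) - f(X)$ introduced earlier:
\begin{align}
f(A) = \sum_{i=1}^{k} f\bigl(a_i \mid \set{a_1, \dots, a_{i-1}}\bigr),
\end{align}
where the $i=1$ summand is $f(a_1 \mid \emptyset) = f(\set{a_1}) - f(\emptyset) = f(a_1)$ by normalization. For each $i$, since $\emptyset \subseteq \set{a_1, \dots, a_{i-1}}$, submodularity in the form $f(v \mid X) \geq f(v \mid Y)$ for $X \subseteq Y$ gives
\begin{align}
f\bigl(a_i \mid \set{a_1, \dots, a_{i-1}}\bigr) \leq f(a_i \mid \emptyset) = f(a_i),
\end{align}
again using $f(\emptyset) = 0$. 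Summing these $k$ inequalities yields $f(A) \leq \sum_{i=1}^{k} f(a_i)$, which is exactly $\surp_f(A) \geq 0$.

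I expect essentially no obstacle here; the result is the standard subadditivity fact already invoked in the text just before Theorem~\ref{thm:subadd_concave}. The only point requiring care is the bookkeeping at the first term of the telescope, where normalization is needed to identify $f(a_1 \mid \emptyset)$ with $f(a_1)$. An equally clean alternative is a short induction on $|A|$: peeling off any single $a \in A$ gives $f(A) = f(A \setminus \set{a}) + f(a \mid A \setminus \set{a}) \leq f(A \setminus \set{a}) + f(a)$ by submodularity and normalization, after which the inductive hypothesis bounds $f(A \setminus \set{a})$ by $\sum_{a' \in A \setminus \set{a}} f(a')$, closing the argument.
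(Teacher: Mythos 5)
Your proof is correct and is essentially identical to the paper's: both telescope $f(A)$ along an arbitrary ordering via the chain rule and bound each gain $f(a_i \mid \set{a_1,\dots,a_{i-1}}) \leq f(a_i)$ using submodularity together with normalization. The extra care you take at the first term and the alternative induction are fine but add nothing beyond the paper's one-line argument.
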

\begin{proof}
For any $A \subseteq V$, with $A = \set{a_1,a_2, \dots, a_k}$, 
\begin{align}
f(A) = \sum_{i=1}^k f(a_i | a_1, a_2, \dots, a_{i-1})
\leq \sum_{i=1}^k f(a_i)
\end{align}
\end{proof}
Thus, with a submodular function in such a context, therefore, there
can never be any deficit (negative surplus) and it is always
beneficial to act cooperatively. How fairly to redistribute surplus
back to the individual agents is called the ``surplus sharing problem''
and is studied in~\cite{topkis1998supermodularity}.

\begin{lemma}[Mixtures Preserve Surplus]
Let $f_1, f_2, \dots$ be a set of submodular functions
and $\alpha_1, \alpha_2, \dots $ be a set of positive real-valued weights,
and define $f = \sum_i \alpha_i f_i$ as their conic combination.
Then we have $\surp_f (A) > 0$ if and only if
$\exists i$ with $\surp_{f_i}(A) > 0$.
\label{thm:mixpresdef}
\end{lemma}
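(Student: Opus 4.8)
The plan is to reduce the statement to the two surplus lemmas already in hand, namely Linearity of Surplus (Lemma~\ref{thm:lindef}) and Non-negativity of Surplus. First I would promote the two-term linearity to the entire conic combination: treating the family as finite and inducting on the number of terms, Lemma~\ref{thm:lindef} yields
\begin{align}
\surp_f(A) = \surp_{\sum_i \alpha_i f_i}(A) = \sum_i \alpha_i \surp_{f_i}(A).
\end{align}
This single identity does essentially all the work, and the remaining task is purely a question about when a linear combination of reals with positive coefficients is strictly positive.

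For the forward direction, $\surp_f(A) > 0 \Rightarrow \exists\, i\colon \surp_{f_i}(A) > 0$, I would argue by contraposition and observe that it requires no submodularity whatsoever: if $\surp_{f_i}(A) \le 0$ for every $i$, then since each $\alpha_i > 0$ every summand $\alpha_i \surp_{f_i}(A)$ is $\le 0$, so $\surp_f(A) \le 0$. Hence a strictly positive $\surp_f(A)$ forces $\surp_{f_j}(A) > 0$ for at least one index $j$.

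The reverse direction is where the only genuine content lies, and it is the step I expect to be the crux. Here I would invoke the Non-negativity of Surplus lemma: because each $f_i$ is (normalized) submodular, $\surp_{f_i}(A) \ge 0$ for all $i$, so the sum $\sum_i \alpha_i \surp_{f_i}(A)$ consists entirely of non-negative terms and no cancellation can occur. Thus if some $j$ has $\surp_{f_j}(A) > 0$, then $\surp_f(A) \ge \alpha_j \surp_{f_j}(A) > 0$. The point worth stressing is that this implication fails for arbitrary set functions --- a lone positive surplus could be outweighed by negative surpluses from other components --- and it is precisely submodularity, routed through the non-negativity lemma, that rules this out and makes the biconditional hold.

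Two bookkeeping items would finish the argument. The non-negativity step presumes each $f_i$ is normalized; since surplus is unchanged by adding a normalized modular function (the Surplus is Immune to Modularity lemma), this is free for the polymatroidal part of each $f_i$, and I would state the hypothesis to match. I would also restrict attention to a finite family, so that the reordering of summation in the linearity step raises no convergence question.
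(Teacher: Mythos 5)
Your proof is correct and is essentially the paper's own argument: the paper likewise combines the linearity identity $\surp_f(A)=\sum_i \alpha_i \surp_{f_i}(A)$ with non-negativity of each $\surp_{f_i}(A)$ and positivity of the $\alpha_i$ to get both directions at once. Your additional remarks (the forward direction needing no submodularity, and the normalization hypothesis required for the non-negativity step) are sound refinements of the same approach, not a different route.
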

\begin{proof}
This follows when one considers
that $\forall i, \surp_{f_i}(A) \geq 0$ for all $A$, that $\forall i, \alpha_i > 0$, and
that
$\surp_f(A) = \sum_i \alpha_i \surp_{f_i}(A)$.
\end{proof}

The next theorem is particularly important for showing certain
properties of DSFs, in particular,
Corollary~\ref{thm:presdef}.
\begin{theorem}[Concave Composition Preserves Surplus]
  Let $h : 2^V \to \mathbb R$ be a polymatroid function and $\phi:
  \mathbb R \to \mathbb R$ be a normalized monotone non-decreasing
  concave function that is not identically zero.
  Define $g: 2^V \to \mathbb R$ as $g(A) =
  \phi(h(A))$. Then $\surp_h(A) > 0$ implies $\surp_g(A) > 0$.
\label{thm:comp_pres_surp}
\end{theorem}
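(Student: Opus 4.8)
The plan is to lower-bound $\surp_g(A) = \sum_{a \in A}\phi(h(a)) - \phi(h(A))$ by chaining subadditivity of $\phi$ with its monotonicity, and then to argue that at least one link in that chain is strict. Write $s = \sum_{a \in A} h(a)$. Because $h$ is a polymatroid function, every argument fed to $\phi$ lies in $[0,s]$: monotonicity gives $h(a) \leq h(A)$ for each $a$, and subadditivity of $h$ gives $h(A) \leq s$; meanwhile the hypothesis $\surp_h(A) = s - h(A) > 0$ says exactly that $s > h(A)$. I would then assemble
\begin{align}
\sum_{a \in A} \phi(h(a)) \geq \phi(s) \geq \phi(h(A)),
\end{align}
where the first inequality is subadditivity of the normalized monotone non-decreasing concave $\phi$ (Theorem~\ref{thm:subadd_concave} applied to the points $\set{h(a)}_{a \in A}$) and the second is monotonicity together with $s \geq h(A)$. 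This already yields $\surp_g(A) \geq 0$; the entire content of the theorem is to promote one ``$\geq$'' to a strict ``$>$''.

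Next I would suppose for contradiction that $\surp_g(A) = 0$, which forces both displayed inequalities to be equalities, and read off what each tight case demands. Equality in the first inequality, via the strictness clause of Theorem~\ref{thm:subadd_concave}, means $s$ is not past any linear part of $\phi$, i.e.\ $\phi(x) = \gamma x$ is linear on all of $[0,s]$. Equality in the second, since $s > h(A)$, means $\phi$ is constant on the nondegenerate interval $[h(A),s] \subseteq [0,s]$. A linear function that is constant on a subinterval of positive length must have zero slope, so $\gamma = 0$ and hence $\phi \equiv 0$ on $[0,s]$.

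The step I expect to be the main obstacle is converting ``$\phi \equiv 0$ on $[0,s]$'' into an honest contradiction, and this is precisely where the ``not identically zero'' hypothesis does its work. The clean way is to record a preliminary fact: for a normalized, monotone non-decreasing, concave $\phi$ that is not identically zero on $[0,\infty)$, the initial slope $\gamma_0 := \phi'_+(0)$ is strictly positive. Indeed, if $\gamma_0 = 0$ then concavity makes every later supergradient non-increasing (hence $\leq 0$) while monotonicity makes them $\geq 0$, pinning all supergradients to $0$ and forcing $\phi \equiv 0$ on $[0,\infty)$ --- contradicting the hypothesis. With this fact in hand, the conclusion of the previous paragraph ($\phi$ linear on $[0,s]$ with slope $\gamma = \gamma_0 = 0$) is immediately impossible, so at least one inequality in the chain is strict and $\surp_g(A) > 0$. (One can alternatively obtain the same contradiction by invoking Theorem~\ref{thm:concave_additivity} on the linear region.) I would also note at the outset that $\surp_h(A) > 0$ already forces $|A| \geq 2$ and $h(A) > 0$, so the degenerate singleton case, where surplus is trivially zero, never enters.
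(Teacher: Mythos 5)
Your proof is correct and takes essentially the same route as the paper's: the identical two-step chain $\sum_{a \in A} \phi(h(a)) \geq \phi\bigl(\sum_{a \in A} h(a)\bigr) \geq \phi(h(A))$, with strictness extracted from the strictness clause of Theorem~\ref{thm:subadd_concave}. The only difference is packaging: the paper argues directly by cases (if $\sum_{a \in A} h(a)$ is still in the initial linear part of $\phi$ then the second inequality is strict, otherwise the first is), whereas you argue by contradiction and explicitly establish the positive-initial-slope fact that the paper builds into its standing conventions on normalized monotone concave functions; the two arguments coincide up to contraposition.
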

\begin{proof}
Since $g(\cdot)$ is polymatroidal
(by Theorem~\ref{thm:concave_over_polymatroidal}), $\surp_g(A) \geq 0$ for all $A$.
Order $A$ arbitrarily
as $A = \set{ a_1, a_2, \dots, a_k }$ with $k=|A|$.
Then since $\sum_{i=1}^k h(a_i) > h(A)$,
\begin{align}
\sum_{i=1}^k \phi( h(a_i)) 
\stackrel{\text{(a)}}{\geq} \phi( \sum_{i=1}^k h(a_i)) 
\stackrel{\text{(b)}}{\geq}
\phi( h(A)),
\end{align}
where (a) follows from Theorem~\ref{thm:subadd_concave}
and (b) follows from the monotonicity of $\phi$. If
$\sum_{i=1}^k h(a_i)$ is still in the linear part of
$\phi(\cdot)$ then (b) is strict, while
if $\sum_{i=1}^k h(a_i)$ is greater than
the linear part of $\phi(\cdot)$ then,
from the second part of Theorem~\ref{thm:subadd_concave},
(a) is
strict. In either case, $\surp_g(A) > 0$.
\end{proof}


\begin{proposition}[Concave Composition Increases Surplus]
  Let $h : 2^V \to \mathbb R$ be a polymatroid function 
  with $h(v) = 1$ for all $v \in V$, and $\phi:
  \mathbb R \to \mathbb R$ be a normalized monotone non-decreasing
  concave function that is not identically zero and
  where $\phi(1) = 1$. Define $g: 2^V \to \mathbb R$ as $g(A) =
  \phi(h(A))$. Then for any $A$, 
  $\surp_g(A) \geq \surp_h(A)$.
\end{proposition}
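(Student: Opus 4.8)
The plan is to exploit the two normalizations $h(v) = 1$ for all $v \in V$ and $\phi(1) = 1$ to reduce both surpluses to explicit expressions, and then to collapse the desired inequality to a one-dimensional comparison between $\phi$ and the identity. Since every singleton satisfies $g(v) = \phi(h(v)) = \phi(1) = 1$, the definitions give $\surp_g(A) = \sum_{a \in A} \phi(h(a)) - \phi(h(A)) = |A| - \phi(h(A))$ and likewise $\surp_h(A) = \sum_{a \in A} h(a) - h(A) = |A| - h(A)$. Subtracting, the claim $\surp_g(A) \geq \surp_h(A)$ is equivalent to $\phi(h(A)) \leq h(A)$, so the entire statement reduces to showing that $\phi$ lies below the identity at the single point $x = h(A)$.

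Next I would locate where $h(A)$ can sit. If $A = \emptyset$ then $h(A) = 0 = \phi(0)$ and both surpluses vanish, so equality holds. If $A \neq \emptyset$, pick any $a \in A$; monotonicity of the polymatroid function $h$ together with $h(a) = 1$ forces $h(A) \geq h(\set{a}) = 1$. Thus for every nonempty $A$ the argument $x = h(A)$ satisfies $x \geq 1$, and it only remains to verify $\phi(x) \leq x$ on $[1,\infty)$.

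For this last step I would invoke the standard consequence of concavity and normalization that $x \mapsto \phi(x)/x$ is non-increasing on $(0,\infty)$: for $0 < s < t$, writing $s = (s/t)\, t + (1 - s/t)\, 0$ and applying concavity with $\phi(0) = 0$ gives $\phi(s) \geq (s/t)\phi(t)$, i.e.\ $\phi(s)/s \geq \phi(t)/t$. Applying this with $s = 1$ and $t = x \geq 1$ yields $\phi(x)/x \leq \phi(1) = 1$, hence $\phi(x) \leq x$, which completes the argument.

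The proof is short, and the only genuine subtlety — the place where the hypotheses $h(v) = 1$ and $\phi(1) = 1$ are truly doing work — is the restriction to $x \geq 1$. On $(0,1)$ concavity pushes $\phi$ above the identity, so the inequality $\phi(x) \leq x$ reverses there; the assumption that every singleton has $h$-value exactly one is precisely what guarantees $h(A) \geq 1$ for nonempty $A$ and keeps us in the regime $x \geq 1$ where the comparison goes the right way.
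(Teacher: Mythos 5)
Your proof is correct. Note that the paper itself states this proposition \emph{without} proof (it appears immediately after Theorem~\ref{thm:comp_pres_surp} as an unproven remark), so there is no paper argument to compare against; your write-up fills that gap. Each step checks out: the normalizations $h(v)=1$ and $\phi(1)=1$ give $\surp_g(A) = |A| - \phi(h(A))$ and $\surp_h(A) = |A| - h(A)$, reducing the claim to $\phi(h(A)) \leq h(A)$; monotonicity of the polymatroid $h$ puts $h(A) \geq 1$ for nonempty $A$; and the non-increasing ratio $\phi(x)/x$ (a standard consequence of concavity with $\phi(0)=0$) then gives $\phi(x) \leq x\,\phi(1) = x$ on $[1,\infty)$. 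Your closing observation is also the right one to make: on $(0,1)$ the inequality $\phi(x) \leq x$ reverses, so the hypothesis $h(v)=1$ is doing real work by keeping $h(A)$ out of that regime. One stylistic remark: the final step could equivalently be phrased with the paper's superdifferential machinery (any $d \in \partial\phi(1)$ satisfies $d \leq 1$ because $\phi(0) \leq \phi(1) - d$, whence $\phi(x) \leq \phi(1) + d(x-1) \leq x$ for $x \geq 1$), which would match the toolbox used in Section~\ref{sec:prop-conc-subm}, but your ratio argument is equally rigorous and more elementary.
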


\begin{definition}[Grouped Surplus]
We define a form of grouped surplus as follows. Given a set of
$m$ disjoint sets $A_1, A_2, \dots, A_m \subseteq V$, we define:
\begin{align}
I_f^{(m)}(A_1; A_2; \dots; A_m) \triangleq \sum_{i=1}^m f(A_i) - f(\bigcup_{i=1}^m A_i)
\end{align}
\label{def:pairdef}
\end{definition}
When $f(A) = H(X_A)$ is the entropy function, then the pairwise
surplus $I_f^{(2)}(A;B)$ is the well-known mutual information
\cite{cover2012elements} between random variable sets $X_A$ and $X_B$.
The grouped surplus can be defined in terms of standard surplus via
$I_f^{(m)}(A_1; A_2; \dots; A_m) = \surp_f( \set{ A_1 }, \set {A_2},
\dots, \set { A_m } )$ where we treat each of the sets $\set{ A_i }_i$
as a singleton element groups in the standard surplus.  Thus, for any
$m$, we have $I_f^m(A_1; A_2; \dots; A_m) \geq 0$ for any normalized
submodular function $f$.  We also have the following:

\begin{proposition}
\label{thm:zero_shatter_surplus_means_zero_grouped}
Given a submodular function $f$ and a set $A \subseteq V$,
if $\surp_f(A) = 0$ then $I_f^{(m)}(A_1; A_2; \dots ; A_m) = 0$ for any
$m$ and proper $m$-partition $A_1,A_2, \dots, A_m \subseteq A$ of $A$. Moreover,
we have:
\begin{align}
\surp_f(\bigcup_{i=1}^m A_i) > I_f^{(m)}(A_1;A_2; \dots; A_m)
\label{eq:foobarbaz}
\end{align}
\end{proposition}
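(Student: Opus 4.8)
The plan is to reduce both assertions to a single additive decomposition of surplus across a partition, and then read each claim off that identity. First I would establish that, for disjoint sets $A_1,\dots,A_m$ with $A = \bigcup_{i=1}^m A_i$,
\begin{align}
\surp_f(A) = \sum_{i=1}^m \surp_f(A_i) + I_f^{(m)}(A_1; A_2; \dots; A_m).
\label{eq:surplus_partition_decomp}
\end{align}
This is a one-line computation directly from the definitions: summing $\surp_f(A_i) = \sum_{a \in A_i} f(a) - f(A_i)$ over $i$ gives $\sum_{a \in A} f(a) - \sum_{i} f(A_i)$, since the blocks partition $A$ and hence $\sum_i \sum_{a \in A_i} f(a) = \sum_{a \in A} f(a)$; adding $I_f^{(m)}(A_1; \dots; A_m) = \sum_i f(A_i) - f(A)$ then cancels the $\sum_i f(A_i)$ terms and leaves exactly $\sum_{a \in A} f(a) - f(A) = \surp_f(A)$.

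With Equation~\eqref{eq:surplus_partition_decomp} in hand, the first part is immediate. Every term on its right-hand side is non-negative: each $\surp_f(A_i) \geq 0$ by the Non-negativity of Surplus lemma (using that $f$ is normalized and submodular), and $I_f^{(m)}(A_1; \dots; A_m) \geq 0$ since the grouped surplus is itself a surplus in the lifted ground set, as observed just before the proposition. Hence if $\surp_f(A) = 0$, every summand must vanish, and in particular $I_f^{(m)}(A_1; A_2; \dots; A_m) = 0$ for the given partition.

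For the ``Moreover'' inequality I would simply rearrange Equation~\eqref{eq:surplus_partition_decomp} into
\begin{align}
\surp_f\Bigl(\bigcup_{i=1}^m A_i\Bigr) - I_f^{(m)}(A_1; A_2; \dots; A_m) = \sum_{i=1}^m \surp_f(A_i),
\end{align}
whose right-hand side is a sum of non-negative terms, yielding the inequality with the gap equal to the total within-block surplus $\sum_i \surp_f(A_i)$.

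The main obstacle is the strictness asserted in Equation~\eqref{eq:foobarbaz}. The identity shows the gap is \emph{exactly} $\sum_i \surp_f(A_i)$, which is positive precisely when at least one block $A_i$ carries internal surplus, and which vanishes for the degenerate all-singleton partition (where $I_f^{(m)} = \surp_f(A) = \surp_f(\bigcup_i A_i)$, so a strict $>$ plainly fails). Thus the inequality holds as $\geq$ in full generality, and as a strict $>$ exactly when some block is genuinely redundant under $f$; the clean way to guarantee strictness is to read ``proper $m$-partition'' as requiring at least one block $A_i$ with $\surp_f(A_i) > 0$, which is the intended coarser-than-singleton setting.
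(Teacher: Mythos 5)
Your proof is correct, and there is in fact nothing in the paper to compare it against: the proposition is stated there without proof, as if immediate from the definitions, so your write-up supplies what the authors left implicit. Your decomposition identity is surely the intended argument; summing $\surp_f(A_i)$ over the blocks and adding the grouped surplus telescopes exactly as you say, giving
\begin{align}
\surp_f\Bigl(\bigcup_{i=1}^m A_i\Bigr) = \sum_{i=1}^m \surp_f(A_i) + I_f^{(m)}(A_1; A_2; \dots; A_m),
\end{align}
and the first claim follows because every term on the right is non-negative for a \emph{normalized} submodular $f$. (That hypothesis is omitted from the statement but is genuinely needed, as your citation of the non-negativity lemma reveals: for the submodular function $f(X)=\min(|X|,2)-2/3$ on a four-element ground set, one has $\surp_f(V)=0$ yet $I_f^{(2)}(A_1;A_2)=4/3$ for the partition into two pairs.) More importantly, your objection to the strict inequality in Equation~\eqref{eq:foobarbaz} is well founded: the identity shows the gap equals exactly $\sum_i \surp_f(A_i)$, which vanishes not only for the all-singleton partition (where grouped surplus and surplus coincide by definition) but also for coarser partitions, e.g.\ the partition matroid rank $f(X)=\min(|X\cap\set{a,b}|,1)+\min(|X\cap\set{c,d}|,1)$ with blocks $\set{a,c}$ and $\set{b,d}$, where both sides of Equation~\eqref{eq:foobarbaz} equal $2$. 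So the ``Moreover'' part can only be asserted with ``$\geq$''; that weaker form is all the paper's subsequent uses require (the remark that $I_f^{(2)}(A;B)>0$ implies $\surp_f(A\cup B)>0$, and Proposition~\ref{thm:comppres_gsurp}), and strictness holds precisely when some block carries positive internal surplus, as you observe.
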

For example, if $I_f^{(2)}(A;B) > 0$ then $\surp_f(A \cup B) > 0$.
The converse
is not true in general, i.e., we can have $I_f^{(2)}(A;B) = 0$ while
still having $\surp_f(B) > 0$.  
Of particular interest in this paper will be pairwise surplus of the
form $I_f^{(2)}(e'; C)$ where $C$ is a three-cycle of a graphic
matroid, and $e' \notin C$. When it is clear from the context, we will
drop the superscript $m$ and state
$I_f(A_1; A_2; \dots; A_m) \triangleq I_f^{(m)}(A_1; A_2; \dots;
A_m)$ for any $m$.
Considering Proposition~\ref{thm:zero_shatter_surplus_means_zero_grouped}
and Definition~\ref{def:pairdef}, we immediately obtain the following:
\begin{proposition}[Concave Composition Preserves Grouped Surplus]
  Let $h : 2^V \to \mathbb R$ be a polymatroid function and $\phi:
  \mathbb R \to \mathbb R$ be a normalized monotone non-decreasing
  concave function that is not identically zero.
  Define $g: 2^V \to \mathbb R$ as $g(A) =
  \phi(h(A))$. Then for any $m$ and any set of $m$ disjoint sets
  $A_1,A_2, \dots, A_m$, we have
  $I_h^{(m)}(A_1; A_2; \dots; A_m) > 0$
  implies
  $I_g^{(m)}(A_1; A_2; \dots; A_m) > 0$.
\label{thm:comppres_gsurp}
\end{proposition}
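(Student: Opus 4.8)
The plan is to reduce the grouped claim to the ungrouped Theorem~\ref{thm:comp_pres_surp} (Concave Composition Preserves Surplus) by \emph{contracting} the disjoint blocks $A_1, \dots, A_m$ into single atoms. Concretely, I would pass to a coarsened ground set $V' = \set{ 1, 2, \dots, m }$, with atom $i$ representing block $A_i$, and define $h' : 2^{V'} \to \mathbb R$ and $g' : 2^{V'} \to \mathbb R$ by $h'(S) = h( \bigcup_{i \in S} A_i )$ and $g'(S) = g( \bigcup_{i \in S} A_i )$. The bookkeeping observation that drives everything is that, under this identification, grouped surplus is exactly ordinary surplus on the coarsened ground set: $I_h^{(m)}(A_1; \dots; A_m) = \sum_{i \in V'} h'(i) - h'(V') = \surp_{h'}(V')$, and identically $I_g^{(m)}(A_1; \dots; A_m) = \surp_{g'}(V')$. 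This is just the reading of grouped surplus stated after Definition~\ref{def:pairdef}, in which each $A_i$ is treated as a singleton element.

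Next I would check that $h'$ is itself a polymatroid function, so that Theorem~\ref{thm:comp_pres_surp} applies to the pair $(h', \phi)$. Normalization $h'(\emptyset) = h(\emptyset) = 0$ and monotonicity are immediate from the same properties of $h$ together with monotonicity of unions. For submodularity I would use $\bigcup_{i \in S} A_i \cup \bigcup_{i \in T} A_i = \bigcup_{i \in S \cup T} A_i$ and, crucially invoking the disjointness of the blocks, $\bigcup_{i \in S} A_i \cap \bigcup_{i \in T} A_i = \bigcup_{i \in S \cap T} A_i$; substituting these into the submodularity inequality for $h$ gives $h'(S) + h'(T) \geq h'(S \cup T) + h'(S \cap T)$. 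By construction $g'(S) = \phi( h( \bigcup_{i \in S} A_i ) ) = \phi(h'(S))$, so $g' = \phi \circ h'$ is precisely a concave composition of the polymatroid $h'$.

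It then remains only to invoke Theorem~\ref{thm:comp_pres_surp} with the triple $(h', \phi, g')$: the hypothesis $I_h^{(m)}(A_1; \dots; A_m) > 0$ reads $\surp_{h'}(V') > 0$, and the theorem delivers $\surp_{g'}(V') > 0$, which is exactly $I_g^{(m)}(A_1; \dots; A_m) > 0$. I expect the only genuine content beyond unwinding definitions to be the submodularity verification for $h'$, and even that collapses to a one-line consequence of block disjointness; everything else is the contraction identity that lets the ungrouped surplus theorem transfer verbatim, which is why the result can be claimed to follow ``immediately.''
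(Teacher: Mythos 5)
Your proposal is correct and is essentially the paper's own (unwritten) argument: the paper declares the proposition to follow immediately from the remark after Definition~\ref{def:pairdef} --- that grouped surplus is ordinary surplus once each block $A_i$ is treated as a singleton element --- combined with Theorem~\ref{thm:comp_pres_surp}, and your contraction to the coarsened ground set $V'$ is exactly the formalization of that remark. The details you supply (normalization, monotonicity, and submodularity of $h'$ via disjointness of the blocks, plus the identity $g' = \phi \circ h'$) are precisely the routine verifications the paper leaves implicit.
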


\begin{definition}[Modular at $B$]
\label{def:modular_at}
We say a function $h: 2^V \to \mathbb R$ is modular at $B \subseteq V$
if $h(B) = \sum_{b \in B} h(b)$.%
\end{definition}
When $h$ is modular at $B$, it does not necessarily 
mean that it is modular at some $A \subset B$. However, we
do have the following:
\begin{lemma}
If $h: 2^V \to \mathbb R$ is a submodular function. Then
$h$ is modular at all $A \subseteq B$ if and only if $\surp_h(B) = 0$.
\label{def:surplus_zero_modular_at_subsets}
\end{lemma}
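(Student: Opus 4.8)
The plan is to prove the two directions separately, with essentially all the content living in the reverse implication. The forward direction is immediate: if $h$ is modular at every $A \subseteq B$, then in particular it is modular at $B$ itself, which by Definition~\ref{def:modular_at} says exactly $h(B) = \sum_{b \in B} h(b)$, i.e.\ $\surp_h(B) = 0$.

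For the reverse direction I would assume $\surp_h(B) = 0$ and show $\surp_h(A) = 0$ for every $A \subseteq B$, which by Definition~\ref{def:modular_at} is the same as modularity at $A$. The key fact I plan to establish is that surplus is monotone non-decreasing under set inclusion: for a normalized submodular $h$ and $A \subseteq B$, one has $\surp_h(A) \leq \surp_h(B)$. Granting this, the conclusion is immediate, since the Non-negativity of Surplus lemma gives $\surp_h(A) \geq 0$, while monotonicity together with $\surp_h(B) = 0$ gives $\surp_h(A) \leq 0$; hence $\surp_h(A) = 0$.

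To prove monotonicity I would enumerate $B \setminus A = \set{b_1, b_2, \dots, b_k}$ and use the telescoping identity $h(B) - h(A) = \sum_{i=1}^k h(b_i \mid A \cup \set{b_1, \dots, b_{i-1}})$. Submodularity (diminishing returns) bounds each gain by its gain in the empty context, $h(b_i \mid A \cup \set{b_1, \dots, b_{i-1}}) \leq h(b_i \mid \emptyset) = h(b_i)$, where the final equality uses normalization $h(\emptyset) = 0$. Summing over $i$ yields $h(B) - h(A) \leq \sum_{b \in B \setminus A} h(b)$, and rearranging gives exactly $\surp_h(B) - \surp_h(A) = \sum_{b \in B \setminus A} h(b) - \bigl(h(B) - h(A)\bigr) \geq 0$, as desired.

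I do not anticipate a serious obstacle; the only subtlety is that both the non-negativity of surplus and the identification $h(b_i \mid \emptyset) = h(b_i)$ rely on $h$ being normalized, so I would make that hypothesis explicit, consistent with its use throughout this subsection. The conceptual content is simply that surplus can only grow as the underlying set grows, so a set with zero surplus cannot contain a subset of strictly positive surplus.
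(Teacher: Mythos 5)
Your proof is correct and is essentially the paper's own argument in different packaging: the paper sandwiches $h(B) = \sum_{b \in B} h(b) \geq h(A) + \sum_{b \in B \setminus A} h(b) \geq h(B)$ and concludes that all inequalities must be equalities, and those two inequalities are exactly your two ingredients --- non-negativity of surplus at $A$ (subadditivity) and your surplus-monotonicity claim $\surp_h(A) \leq \surp_h(B)$, which rests on the same telescoping diminishing-returns bound $h(B) - h(A) \leq \sum_{b \in B \setminus A} h(b)$. Your explicit flagging of the normalization hypothesis is consistent with the standing assumption of this subsection, so nothing is missing.
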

\begin{proof}
If $h$ is modular for all $A \subseteq B$, then $h(A) = \sum_{a \in A} h(a)$,
and $\surp_h(B) = 0$. Conversely, suppose $h$ is submodular
and $\surp_h(B) = 0$ and let $A \subseteq B$ be given. Then
\begin{align}
h(B) = \sum_{b \in B} h(b) \geq h(A) + \sum_{b \in B \setminus A} h(b) \geq h(B)
\end{align}
Hence, all inequalities are equalities. Subtracting $\sum_{b \in B \setminus A} h(b)$
from both sides of the first inequality gives the result.
\end{proof}

\begin{lemma}[Forced Separation]
  Let $h: 2^V \to \mathbb R$ be a polymatroid function and $A,B,C$ be
  disjoint subsets where
  $I_h(A;B) = I_h(B;C) = I_h(C;A) = 0$.
  Then if $h(A) = 0$ then
  $I_h(A;B;C) = 0$.
\label{thm:forced_modular_on_cycle_early}
\end{lemma}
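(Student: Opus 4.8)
The plan is to prove $I_h(A;B;C)=0$ by establishing matching upper and lower bounds on $h(A\cup B\cup C)$, sandwiching it at $h(A)+h(B)+h(C)$. Unpacking the definitions, $I_h(A;B;C)=h(A)+h(B)+h(C)-h(A\cup B\cup C)$, so the goal is exactly $h(A\cup B\cup C)=h(A)+h(B)+h(C)$.

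For the upper bound, I would invoke the non-negativity of grouped surplus for normalized submodular functions (noted just after Definition~\ref{def:pairdef}): since $h$ is a polymatroid function, $I_h(A;B;C)\ge 0$, which is precisely $h(A\cup B\cup C)\le h(A)+h(B)+h(C)$. This half requires none of the vanishing-surplus hypotheses.

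The lower bound is where the hypotheses enter, and I would obtain it from monotonicity alone. Since $B\cup C\subseteq A\cup B\cup C$ and $h$ is monotone non-decreasing, $h(A\cup B\cup C)\ge h(B\cup C)$. The hypothesis $I_h(B;C)=0$ gives $h(B\cup C)=h(B)+h(C)$, and $h(A)=0$ lets me rewrite the right-hand side as $h(A)+h(B)+h(C)$. Chaining these yields $h(A\cup B\cup C)\ge h(A)+h(B)+h(C)$, which together with the upper bound forces equality and hence $I_h(A;B;C)=0$.

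There is no deep technical obstacle here; the only subtlety worth flagging is bookkeeping about which hypotheses actually do the work. The argument uses only $I_h(B;C)=0$, $h(A)=0$, monotonicity, and non-negativity of surplus. In particular the remaining two hypotheses, $I_h(A;B)=0$ and $I_h(C;A)=0$, are automatically implied by $h(A)=0$: for any set $D$ disjoint from $A$ one has $0\le I_h(A;D)=h(A)+h(D)-h(A\cup D)=h(D)-h(A\cup D)\le 0$, the left inequality by non-negativity of pairwise surplus and the right by monotonicity, so $I_h(A;D)=0$. Thus the stated triple of vanishing pairwise surpluses is redundant given $h(A)=0$, and I would present the proof in the streamlined two-bound form above.
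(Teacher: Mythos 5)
Your proof is correct and takes essentially the same route as the paper's: both arguments sandwich $h(A\cup B\cup C)$ between the lower bound $h(B\cup C) = h(B)+h(C) = h(A)+h(B)+h(C)$ (monotonicity, then $I_h(B;C)=0$, then $h(A)=0$) and the subadditivity/submodularity upper bound $h(A)+h(B)+h(C)$, forcing equality. Your side remark that $h(A)=0$ already implies $I_h(A;B)=I_h(C;A)=0$ is also correct, and indeed the paper's own proof never uses those two hypotheses either.
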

\begin{proof}
Consider the following:
\begin{align}
h(A) + h(B) + h(C)
&= h(B) + h(C) = h(B \cup C)
\leq h(A \cup B \cup C)  \\
&\leq h(A) + h(B \cup C) = h(B \cup C),
\end{align}
where the first equality is because $h(A) = 0$, the next is since
$I_h(B;C) = 0$,
the next (an inequality) is due to
monotonicity, the subsequent inequality is due to submodularity, and the final one is
since $h(A) = 0$. Hence, all inequalities are equalities, and
$I_h(A;B;C) = 0$. 
\end{proof}
\noindent As an example, if $A = \set{ \aEL }$,
$B = \set{ \bEL }$,
$C = \set{ \cEL }$,
then the consequence of the lemma is that
$h$ would be modular at the set $\set{ \aEL, \bEL, \cEL }$.

The next lemma shows how we can hold the surplus of a set accountable
either to the concave function of a concave composition function or
to somewhere else internal in the polymatroid function.
\begin{lemma}[When Concave Composition Is Linear]
  Let $h : 2^V \to \mathbb R$ be a polymatroid function and $\phi:
  \mathbb R \to \mathbb R$ be a normalized monotone non-decreasing
  concave function that is not identically zero.
  Define $g: 2^V \to \mathbb R$ as $g(X) =
  \phi(h(X))$ for any $X \subseteq V$. 
  Given two disjoint sets $A,B \subseteq V$
  where $g(A)>0$, $g(B) > 0$, and $I_g(A;B) = 0$,
  then any surplus $\surp_g(A) >0$ given to $A$
  is not due to any non-linearity in $\phi(\cdot)$
  but rather is due entirely to $h(\cdot)$. Moreover,
  $g(X) = \gamma h(X)$ for all $X \subseteq A \cup B$
  for some $\gamma > 0$.
\label{thm:when_concave_is_lin_early}
\end{lemma}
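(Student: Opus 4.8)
The plan is to exploit the hypothesis $I_g(A;B)=0$, which by Definition~\ref{def:pairdef} (with $m=2$) reads $\phi(h(A\cup B)) = \phi(h(A)) + \phi(h(B))$, and to sandwich this equality between the subadditivity of $\phi$ and the subadditivity and monotonicity of $h$, so as to force a linear region of $\phi$. Write $a = h(A)$, $b = h(B)$, and $c = h(A\cup B)$. First I would note that $g(A)=\phi(a)>0$ together with $\phi(0)=0$ forces $a>0$, and likewise $b>0$; without loss of generality take $a\le b$.

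Next I would build the key chain of (in)equalities. Since $h$ is a normalized submodular function and $A,B$ are disjoint, it is subadditive, so $c \le a+b$; by monotonicity $c\ge \max(a,b)$. Using Theorem~\ref{thm:subadd_concave} (subadditivity of a normalized monotone non-decreasing concave function) and then the monotonicity of $\phi$ (applicable because $a+b \ge c$), I obtain
\begin{align}
\phi(a)+\phi(b) \;\ge\; \phi(a+b) \;\ge\; \phi(c) \;=\; \phi(a)+\phi(b),
\end{align}
where the final equality is exactly $I_g(A;B)=0$. Hence every inequality collapses to an equality; in particular $\phi(a+b)=\phi(a)+\phi(b)$.

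The crux is then a single appeal to Theorem~\ref{thm:concave_additivity}: the additivity $\phi(a+b)=\phi(a)+\phi(b)$ with $0<a\le b$ forces $\phi$ to be linear on $[0,a+b]$, i.e.\ $\phi(x)=\gamma x$ there for some $\gamma$, and $\gamma>0$ follows from $\phi(a)=\gamma a>0$ with $a>0$. For the ``moreover'' claim I would observe that every $X\subseteq A\cup B$ satisfies $0\le h(X)\le h(A\cup B)=c\le a+b$ by monotonicity and subadditivity of $h$, so $h(X)$ lands in the linear region and $g(X)=\phi(h(X))=\gamma h(X)$.

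Finally, to conclude that the surplus of $A$ is due entirely to $h$ and not to any non-linearity of $\phi$: since $g(X)=\gamma h(X)$ for all $X\subseteq A\cup B$, and in particular for $A$ together with all its singletons, Lemma~\ref{thm:lindef} (linearity of surplus) yields $\surp_g(A)=\gamma\,\surp_h(A)$, so on this region $\phi$ acts merely as the scalar $\gamma$ and contributes no surplus of its own. The only delicate point I anticipate is verifying that the chain collapses cleanly --- specifically that $a+b\ge c$ (subadditivity of $h$) is precisely what lets monotonicity of $\phi$ close the loop --- and then correctly invoking the ``if'' direction of Theorem~\ref{thm:concave_additivity} to upgrade additivity at the single pair $(a,b)$ into linearity on the entire interval $[0,a+b]$.
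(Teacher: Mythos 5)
Your proof is correct, and it reaches the paper's pivotal step --- the additivity $\phi(h(A)+h(B)) = \phi(h(A)) + \phi(h(B))$ followed by an appeal to Theorem~\ref{thm:concave_additivity} --- by a different middle route. The paper first converts $I_g(A;B)=0$ into $I_h(A;B)=0$ by citing the surplus-preservation result (Theorem~\ref{thm:comp_pres_surp}, really its grouped form, Proposition~\ref{thm:comppres_gsurp}), which gives $h(A\cup B) = h(A)+h(B)$ exactly, after which additivity of $\phi$ follows by substitution. You instead make no claim about $h(A\cup B)$ itself and run the sandwich
\begin{align}
\phi(h(A))+\phi(h(B)) \;\geq\; \phi\bigl(h(A)+h(B)\bigr) \;\geq\; \phi\bigl(h(A\cup B)\bigr) \;=\; \phi(h(A))+\phi(h(B)),
\end{align}
using Theorem~\ref{thm:subadd_concave}, monotonicity of $\phi$, and subadditivity of $h$ on disjoint sets; this inlines essentially the argument by which the preservation result is itself proved, so the two proofs carry the same mathematical content organized differently. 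What the paper's route buys is the intermediate fact $h(A\cup B)=h(A)+h(B)$ for free (your route only recovers it a posteriori, once $\phi$ is known to be linear with slope $\gamma>0$ on $[0,h(A)+h(B)]$ and $h(A \cup B) \leq h(A)+h(B)$); what your route buys is self-containedness, since you need only the elementary concavity facts, and you also spell out the conclusions the paper leaves implicit: that every $X\subseteq A\cup B$ has $h(X)\leq h(A\cup B)\leq h(A)+h(B)$ and so lands in the linear region, giving $g(X)=\gamma h(X)$, and that consequently $\surp_g(A)=\gamma\,\surp_h(A)$ by Lemma~\ref{thm:lindef}, which is precisely the sense in which the surplus is attributable to $h$ rather than to $\phi$.
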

\begin{proof}
By Theorem~\ref{thm:comp_pres_surp},
$I_{g}(A;B) = 0$ implies that $I_{h}(A;B) = 0$.
Then we have
\begin{align}
\phi(h(A)) + \phi(h(B)) = \phi(h(A \cup B)) = \phi(h(A) + h(B)).
\end{align}
Also, $g(A) > 0 \Rightarrow h(A) > 0$
and $g(B) > 0 \Rightarrow h(B) > 0$. Hence, 
by Theorem~\ref{thm:concave_additivity}, 
$\phi(\cdot)$ is
linear in the range $[0,h(A) + h(B)]$.
\end{proof}

\section{The Family of Deep Submodular Functions}
\label{sec:dsfs}

\begin{figure}
\vspace{-1.9ex}
\centerline{\includegraphics[page=1,width=0.6\textwidth]{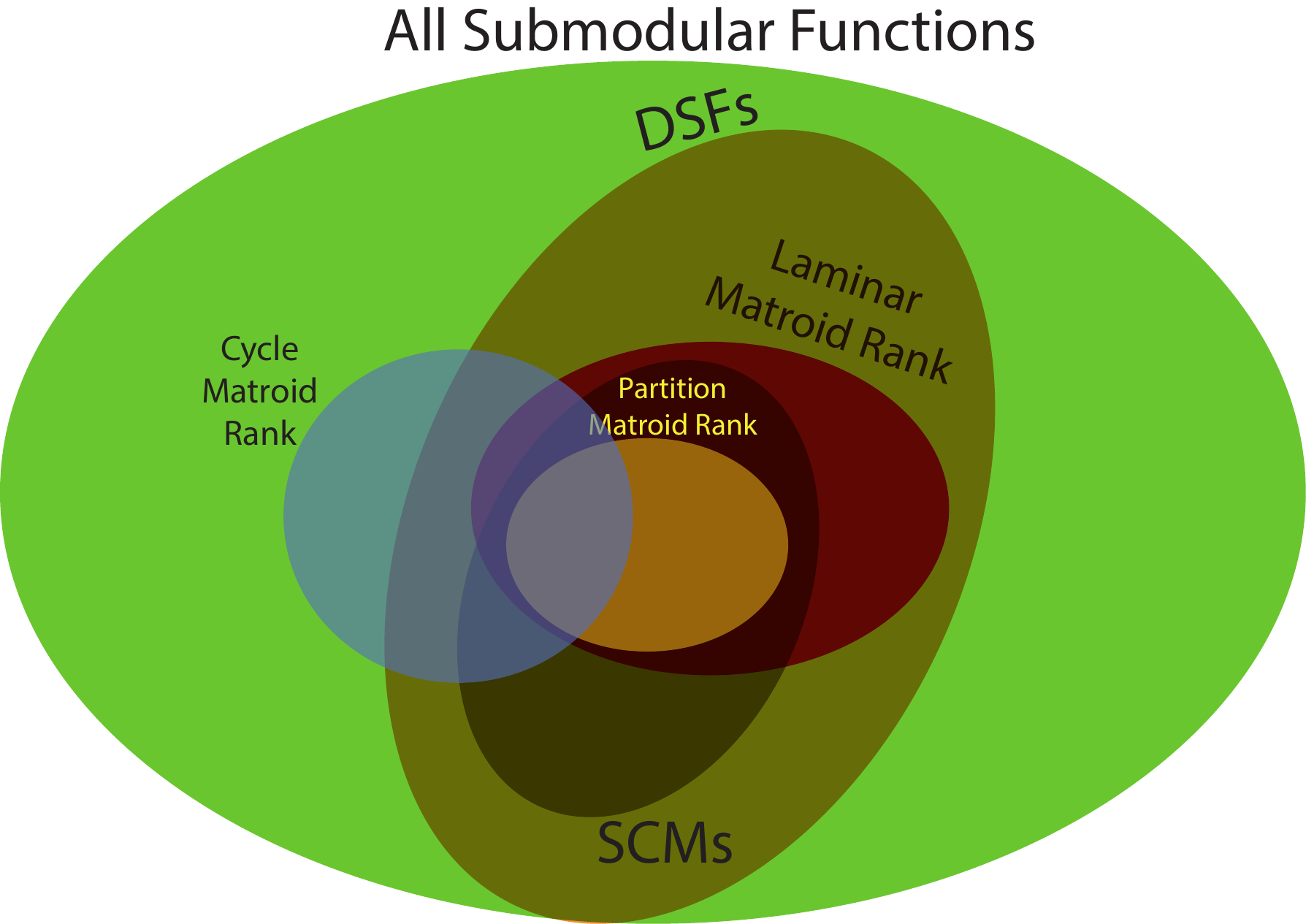}}
\caption{Containment properties of the set of functions
studied in this paper.}
\label{fig:families}
\end{figure}

We have seen that SCMMs generalize partition matroid rank functions
and DSFs generalize laminar matroid rank functions. We might expect,
from the above results, that DSFs might strictly generalize SCMMs ---
this is not immediately obvious since SCMMs are significantly more
capable than partition matroid rank functions because: (1) the concave
functions need not be simple truncations at integers, (2) each term
can have its own non-negative modular function, (3) there is no
requirement to partition the ground elements over terms in an SCMM, and
(4) we are allowed with SCMMs to add an additional arbitrary modular
function. We also have already seen Theorem~\ref{theorem:scm_in_scmm}
showing that SCMMs are a larger class of submodular functions than
just one concave over modular function and, in
Lemma~\ref{lemma:sums_weighted_cardinality_truncations}, that they
generalize weighted cardinality truncations.  SCMMs seem therefore to
be quite dexterous. The next several sections show, however, that DSFs
strictly generalize SCMMs.

More specifically, we formally place DSFs within the context of
general submodular functions.  We show in
Section~\ref{sec:dsfs-gener-scmms} that DSFs strictly generalize SCMMs
while preserving many of their attractive attributes (i.e.,
featurization, multi-modal, and amenability to learning, streaming,
and parallel optimization).  Then in
Section~\ref{sec:textdsf_k-1-subset}, we show that the family of DSFs
strictly grow with the number of layers uses. In
Section~\ref{sec:dsfs-cannot-do}, however, we show that the family of
DSFs still do not comprise all submodular functions.  We summarize the
results of this section in Figure~\ref{fig:families}, and that
includes familial relationships amongst other classes of submodular
functions (e.g., various matroid rank functions mentioned in
Section~\ref{sec:matroids-case}).

\subsection{DSFs generalize SCMMs}
\label{sec:dsfs-gener-scmms}

It is clear that DSFs contain at least the class of SCMMs since any
one-layer DSF is an SCMM. 
We next show that $\text{SCMM} \subset \text{DSF}$ holds, or that DSFs
strictly generalize SCMMs, thus providing justification for using DSFs
over SCMMs and, moreover, generalizing
Lemma~\ref{lemma:laminar_matroids_bigger_than_partition_matroids} to
the non matroid case.
The first DSF we choose is a laminar matroid, so SCMMs are unable to
represent laminar matroid rank functions even given their additional
flexibility over partition matroid rank functions. Since DSFs
generalize laminar matroid rank functions, the result follows.

It is not immediately apparent that DSFs generalize SCMMs as the following
example demonstrates. Consider the DSF $f: 2^V \to \mathbb R$ where
$V = \set{\aEL,\bEL,\cEL,\dEL,\eEL,\fEL}$:
\begin{align}
f(A) = 
\min\Bigl(
\min(|A \cap \set{\aEL,\bEL,\cEL}|,1)
+
\min(|A \cap \set{\dEL,\eEL,\fEL}|,1),
1.5\Bigr)
\end{align}
The reader is encouraged to ponder, for a moment, how one might
represent this DSF as an SCMM. Indeed, this is one case where
it is possible, as seen by the following SCMM $g: 2^V \to \mathbb R$
\begin{align}
g(A)
= 
\phi(|A \cap \set{\aEL,\bEL,\cEL}|)
+ \phi(|A \cap \set{\dEL,\eEL,\fEL}|) 
+ \min(|A|,0.5)  - 0.5|A|
\end{align}
where $\phi: \mathbf R \to \mathbf R$ is concave, with
$\phi(\alpha) = \min(\alpha, 0.5 + 0.5 \alpha)$. It can be verified
that $g(A) = f(A)$ for all $A \subseteq V$. In fact, an even simpler
SCMM does not use a modular function at all and puts
$g(A) = \frac{1}{2}(\min(|A \cap \set{\aEL,\bEL,\cEL}|,1) + \min(|A
\cap \{\dEL,\eEL,\fEL\}|,1)+\min(|A|,1))$.
From this example, one might naturally surmise that the DSFs unable to
be represented by SCMMs are obscure, contrived, and complicated. In
the next two sections, however, we show two fairly simple DSFs and
show that no SCMM can represent them.  Then in
Section~\ref{sec:more-gener-cond}, we provide more general conditions
describing when 2-layer DSF do or do not generalize SCMMs.

\subsubsection{The Laminar Matroid Rank Case}
\label{sec:laminar-matroid-rank}

Our first example DSF we choose is a simple laminar matroid on six
elements.  We show that SCMMs cannot express this laminar rank
function and since DSFs generalize laminar matroid ranks, the result
follows. Consider the following function $f: 2^V \to \mathbb R$
where $V = \set{\aEL,\bEL,\cEL,\dEL,\eEL,\fEL}$:
\begin{align}
f(A) = 
\min\Bigl(
\min(|A \cap \set{\aEL,\bEL,\cEL}|,2)
+
\min(|A \cap \set{\dEL,\eEL,\fEL}|,2),
3\Bigr)
\label{eq:laminar_rank_no_scmm}
\end{align}
The function is a laminar matroid rank function with 
$\mathcal F = \{ V, \set{\aEL,\bEL,\cEL}, \set{\dEL,\eEL,\fEL} \}$
and limits $k_V = 3$, $k_\set{\aEL,\bEL,\cEL} = 2$,
$k_\set{\dEL,\eEL,\fEL} = 2$.

In the following results, we assume that $g: 2^V \to \mathbb R$ is an
SCMM of the form $g(A) = \sum_{i\in \mathcal M} g_i(A) + m_\pm(A)$
where $g_i(A) = \phi_i(m_i(A))$ is a normalized monotone
non-decreasing concave function composed with a non-negative modular
function, $m_\pm(A)$ is an arbitrary normalized modular function, and
$\mathcal M$ is an index set.  Since $f$ itself is normalized, then
we must also have $g(\emptyset) = 0$ as well.  Also define
$B_1 = \set{\aEL,\bEL,\cEL}$ and $B_2 = \set{\dEL,\eEL,\fEL}$.

\begin{lemma}
  Suppose $f(A) = g(A)$ for all $A$. Then there does not exist
  an $i \in \mathcal M$
  where $g_i$ offers surplus both to $B_1$ and $B_2$
  (i.e., there exists no $i$ such that $\surp_{g_i}(B_1) > 0$
  and $\surp_{g_i}(B_2) > 0$.
\label{thm:partition_of_surp_laminar}
\end{lemma}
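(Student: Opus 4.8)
The plan is to argue by contradiction: suppose some term $g_j = \phi_j(m_j(\cdot))$ with $j \in \mathcal M$ satisfies both $\surp_{g_j}(B_1) > 0$ and $\surp_{g_j}(B_2) > 0$, and then exhibit a pairwise (grouped) surplus that $f$ forbids. The central observation is that the cross-block interaction of $f$ collapses to zero as soon as one side is a single element: for any $y \in B_2$ a direct computation gives $I_f(B_1; \set{y}) = f(B_1) + f(\set{y}) - f(B_1 \cup \set{y}) = 2 + 1 - 3 = 0$. I will show that a term offering surplus to \emph{both} blocks is nonetheless forced to contribute strictly positive interaction $I_{g_j}(B_1; \set{y}) > 0$, which cannot be reconciled with $I_f(B_1; \set{y}) = 0$ once we account for the additivity of surplus across the terms of an SCMM.

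First I would translate the two surplus hypotheses into statements about $\phi_j$ and $m_j$. Let $\alpha_j \geq 0$ denote the largest point through which $\phi_j$ is linear. From $\surp_{g_j}(B_1) > 0$ I would deduce that $m_j(B_1)$ lies strictly past $\alpha_j$: otherwise every singleton value $m_j(v)$ for $v\in B_1$ would sit in the linear region, forcing $\sum_{v \in B_1}\phi_j(m_j(v)) = \phi_j(m_j(B_1))$ and hence zero surplus. From $\surp_{g_j}(B_2) > 0$ I would extract at least one element $y \in B_2$ with $m_j(y) > 0$, since if all of $B_2$ carried zero modular mass then $\surp_{g_j}(B_2)$ would vanish.

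Next I would establish $I_{g_j}(B_1; \set{y}) > 0$. Writing it out, $I_{g_j}(B_1; \set{y}) = \phi_j(m_j(B_1)) + \phi_j(m_j(y)) - \phi_j(m_j(B_1) + m_j(y))$, with both arguments positive and their sum exceeding $\alpha_j$. Subadditivity (Theorem~\ref{thm:subadd_concave}) gives the weak inequality, and because the combined argument lies beyond the linear part of $\phi_j$, Theorem~\ref{thm:concave_additivity} upgrades this to strict. Finally, since each $g_i$ is a concave composed with a non-negative modular function and hence polymatroidal (Theorem~\ref{thm:concave_over_polymatroidal}), every $I_{g_i}(B_1;\set{y}) \geq 0$; and because grouped surplus is additive over conic combinations and unaffected by the final modular term $m_\pm$ (the disjoint sets $B_1$ and $\set{y}$ cancel its contribution), I obtain $I_f(B_1; \set{y}) = \sum_{i \in \mathcal M} I_{g_i}(B_1; \set{y}) \geq I_{g_j}(B_1; \set{y}) > 0$, contradicting $I_f(B_1; \set{y}) = 0$.

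I expect the main obstacle to be the careful handling of the linear-region and degeneracy edge cases rather than the high-level structure. One must verify that ``gives surplus to $B_1$'' genuinely forces the modular mass of $B_1$ strictly past the linear part of $\phi_j$ (this is exactly where Theorem~\ref{thm:concave_additivity} is indispensable for obtaining a strict inequality), and that the singleton $y$ carrying positive modular weight actually exists. A secondary point requiring care is the bookkeeping that grouped surplus decomposes across the SCMM terms and discards $m_\pm$; this follows from linearity of surplus (Lemma~\ref{thm:lindef}) together with surplus being immune to modular functions, applied to the two-group surplus $I^{(2)}$.
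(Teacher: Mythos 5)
Your proof is correct, and it reaches the contradiction by a genuinely different witness than the paper's own proof. Both arguments open the same way: from $\surp_{g_j}(B_1)>0$ one deduces that $m_j(B_1)$ lies strictly past the last linear point $\alpha_j$ of $\phi_j$ (and symmetrically for $B_2$). From there the paper uses the re-partition identity $m_i(B_1)+m_i(B_2)=m_i(\set{\aEL,\bEL,\dEL})+m_i(\set{\cEL,\eEL,\fEL})$ and pigeonholes the modular mass: since both block masses exceed $\alpha_i$, at least one of the two transversal sets $\set{\aEL,\bEL,\dEL}$, $\set{\cEL,\eEL,\fEL}$ also has mass exceeding $\alpha_i$, hence positive $g_i$-surplus, contradicting $\surp_f=0$ on size-three independent sets. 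You instead keep $B_1$ intact, pick a single element $y\in B_2$ with $m_j(y)>0$, and show the grouped surplus $I_{g_j}(B_1;\set{y})$ is strictly positive while $I_f(B_1;\set{y})=2+1-3=0$; the contradiction then follows from non-negativity of each $I_{g_i}$ (polymatroidality), linearity over the sum, and invariance under $m_\pm$. Each route buys something. Yours avoids the pigeonhole and the ``w.l.o.g.'' entirely, and---because both arguments $m_j(B_1)$ and $m_j(y)$ are strictly positive---Theorem~\ref{thm:concave_additivity} applies cleanly to give strictness; this sidesteps a corner case the paper glosses over, namely that its chosen transversal can have mass past $\alpha_i$ while carrying all of that mass on a single element, in which case its $g_i$-surplus vanishes (as stated, the strictness claim of Theorem~\ref{thm:subadd_concave} needs at least two positive points, and the paper would have to fall back on the other transversal). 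The paper's version, in exchange, uses only the plain surplus notion and matroid-independent sets, whereas yours leans on the pairwise grouped surplus of Definition~\ref{def:pairdef}; notably, that is exactly the machinery the paper itself deploys one step later, in item 2 of the proof of Lemma~\ref{lemma:laminar_rank_no_scmm}, so your argument fits the paper's toolkit and arguably streamlines the exposition.
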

\begin{proof}
Suppose to the contrary that there exists such an $i$,
Then both $m_i(B_1)$ and $m_i(B_2)$ must both be past the
last linear point of $\phi_i$, say $\alpha_i$. We have that
\begin{align}
m_i(B_1) + m_i(B_2) 
&= m_i(\set{\aEL,\bEL,\cEL} \cup \set{\dEL,\eEL,\fEL})
= m_i(\set{\aEL,\bEL,\dEL} \cup \set{\cEL,\eEL,\fEL}) \\
&= m_i(\set{\aEL,\bEL,\dEL})+ m_i(\set{\cEL,\eEL,\fEL})
\end{align}
Since $m_i(B_1) > \alpha_i$ and
$m_i(B_2) > \alpha_i$ we must have
at least one of 
$m_i(\set{\aEL,\bEL,\dEL}) > \alpha_i$
or $m_i(\set{\cEL,\eEL,\fEL}) > \alpha_i$,
w.l.o.g., say $\set{\aEL,\bEL,\dEL}$.
This implies that $\surp_{g_i}(\set{\aEL,\bEL,\dEL}) > 0$
giving $g$ an unrecoverable surplus which is
a contradiction since $\surp_{f}(\set{\aEL,\bEL,\dEL}) = 0$.
\end{proof}

The next result is our first instance of a DSF that cannot be
represented by an SCMM.

\begin{lemma}
\label{lemma:laminar_rank_no_scmm}
No SCMM can represent the DSF in Equation~\eqref{eq:laminar_rank_no_scmm}.
\end{lemma}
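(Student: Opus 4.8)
The plan is to argue by contradiction. Assume $f = g = \sum_{i\in\mathcal M} g_i + m_\pm$ with each $g_i(A)=\phi_i(m_i(A))$ a normalized concave-over-nonnegative-modular term, and extract an impossible constraint on a single $g_i$ by comparing, for every $2{+}2$ set, its surplus against the block--block grouped surplus $I_f(B_1;B_2)$. First I record the relevant values. Every singleton has $f(v)=1$, so $\surp_f(A)=|A|-f(A)$; a short computation gives $\surp_f=0$ on every set with at most two elements in each block (in particular on every $2$-subset of a block and on every $1{+}1$, $2{+}1$, or $1{+}2$ mixed set), while $\surp_f(S)=1$ on every $2{+}2$ set $S$, and $I_f(B_1;B_2)=f(B_1)+f(B_2)-f(V)=2+2-3=1$.

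The engine is a local identity. Fix a $2{+}2$ set $S=\{x_1,x_2,y_1,y_2\}$ with $x_1,x_2\in B_1$, $y_1,y_2\in B_2$. Since $\surp_f=0$ on the $2$-subsets $\{x_1,x_2\}$ and $\{y_1,y_2\}$, and surplus is additive over the $g_i$ (Lemma~\ref{thm:lindef}) and immune to $m_\pm$, each $g_i$ is modular on these pairs, so expanding definitions yields $\surp_{g_i}(S)=I_{g_i}(\{x_1,x_2\};\{y_1,y_2\})$. Because each $g_i$ is polymatroidal (Theorem~\ref{thm:concave_over_polymatroidal}), grouped surplus is monotone in each argument, as $I_{g_i}(A';B)-I_{g_i}(A;B)=g_i(A'\setminus A\mid A)-g_i(A'\setminus A\mid A\cup B)\ge 0$ by submodularity; hence $I_{g_i}(\{x_1,x_2\};\{y_1,y_2\})\le I_{g_i}(B_1;B_2)$. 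Summing over $i$ and again using additivity and modular-immunity of both surplus and grouped surplus,
\[
1=\surp_f(S)=\sum_i\surp_{g_i}(S)\le\sum_i I_{g_i}(B_1;B_2)=I_f(B_1;B_2)=1 .
\]
Equality therefore holds throughout, forcing $\surp_{g_i}(S)=I_{g_i}(B_1;B_2)$ for every term $i$ and every $2{+}2$ set $S$.

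Now pick a term $g_{i_0}$ with $I_{g_{i_0}}(B_1;B_2)>0$ (one exists, since the nonnegative $I_{g_i}(B_1;B_2)$ sum to $1$). For it, $\surp_{g_{i_0}}(S)=I_{g_{i_0}}(B_1;B_2)>0$ for every $2{+}2$ set; but if any element of $S$ carried zero $m_{i_0}$-mass, then $\surp_{g_{i_0}}(S)$ would collapse to the surplus of a $1{+}2$ set, which is $0$. Ranging over all nine $2{+}2$ sets shows $m_{i_0}$ is strictly positive on all six elements. I then convert the equalities into affinity of $\phi_{i_0}$: the squeezed monotonicity chain for $S=\{\aEL,\bEL,\dEL,\eEL\}$ gives $I_{g_{i_0}}(\{\aEL,\bEL\};\{\dEL,\eEL\})=I_{g_{i_0}}(B_1;\{\dEL,\eEL\})=I_{g_{i_0}}(B_1;B_2)$, which unwind to the marginal identities $g_{i_0}(\cEL\mid\{\aEL,\bEL\})=g_{i_0}(\cEL\mid\{\aEL,\bEL,\dEL,\eEL\})$ and $g_{i_0}(\fEL\mid\{\dEL,\eEL\})=g_{i_0}(\fEL\mid B_1\cup\{\dEL,\eEL\})$. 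Writing $\phi=\phi_{i_0}$, $m=m_{i_0}$, each says a fixed-length chord of $\phi$ has the same rise at two distinct positions; since $s\mapsto\phi(s+\delta)-\phi(s)$ is non-increasing for concave $\phi$ (Eq.~\eqref{eq:concave_cond}), the chord slope is constant in between, so $\phi$ is affine across the covered interval. Together with affinity on $[0,m(\{\aEL,\bEL\})]$ and $[0,m(\{\dEL,\eEL\})]$ (from the vanishing surplus of the $2$-subsets, via Theorem~\ref{thm:concave_additivity}), these two identities force $\phi_{i_0}$ affine on all of $[0,m(B_1)+m(B_2)]$. Then $I_{g_{i_0}}(B_1;B_2)=\phi(m(B_1))+\phi(m(B_2))-\phi(m(B_1)+m(B_2))=0$, contradicting $I_{g_{i_0}}(B_1;B_2)>0$.

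I expect the last step to be the main obstacle: turning the ``equal marginal gains in two nested contexts'' identities into global affinity of $\phi_{i_0}$ on $[0,m(B_1)+m(B_2)]$ must be handled through superdifferentials rather than derivatives, since the $\phi_i$ are assumed only concave, and one must check the interval gluing covers the whole range. Everything preceding it is bookkeeping within the surplus calculus already developed. (Lemma~\ref{thm:partition_of_surp_laminar} offers an alternative way to localize the interaction to one term, but the identity/monotonicity route above does not actually require it.)
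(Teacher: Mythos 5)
Your route is genuinely different from the paper's proof, which partitions the index set $\mathcal M$ according to which of $B_1,B_2$ each term gives surplus to, shows that terms giving surplus to one block must vanish on the other, and then runs a counting argument over the nine $2{+}2$ sets ending in the contradiction $3 \geq 5$. Most of your argument is sound and arguably cleaner: the identity $\surp_{g_i}(S)=I_{g_i}(\set{x_1,x_2};\set{y_1,y_2})$, the monotonicity of grouped surplus in each argument, the termwise squeeze forcing $\surp_{g_i}(S)=I_{g_i}(B_1;B_2)$ for every $i$ and every $2{+}2$ set, the positivity of $m_{i_0}$ on all six elements, the two marginal identities, and even the principle that a constant-rise chord over an interval forces affinity on the covered range are all correct.

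The gap is in the final gluing, exactly where you anticipated trouble. The intervals you produce are $[0,m(\set{\aEL,\bEL})]$ (linear through the origin, from the $2$-subset), $[m(\set{\aEL,\bEL}),\,m(\set{\aEL,\bEL,\cEL,\dEL,\eEL})]$ (affine, from the first identity), and $[m(\set{\dEL,\eEL}),\,m(V)]$ (affine, from the second). The first two intervals share only the single point $m(\set{\aEL,\bEL})$, so concavity permits their slopes to differ, i.e., a kink there, and your stated constraints do not exclude it. Concretely, take $m_{i_0}\equiv 1$ and $\phi_{i_0}$ piecewise linear with slope $\gamma_1$ on $[0,2]$ and slope $\gamma_3$ on $[2,6]$, $0<\gamma_3<\gamma_1$: then $\surp_{g_{i_0}}(\set{\aEL,\bEL})=0$, both marginal identities hold (every relevant chord has rise $\gamma_3$), and $\surp_{g_{i_0}}(S)=2(\gamma_1-\gamma_3)=I_{g_{i_0}}(B_1;B_2)$ for every $2{+}2$ set $S$, yet $\phi_{i_0}$ is not affine on $[0,6]$ and $I_{g_{i_0}}(B_1;B_2)>0$, so no contradiction follows from what you derived. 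The repair uses a fact you recorded at the outset but never invoked: $\surp_f(\set{\aEL,\bEL,\dEL})=0$ for the mixed $2{+}1$ set, hence $\surp_{g_{i_0}}(\set{\aEL,\bEL,\dEL})=0$, and by the strictness part of Theorem~\ref{thm:subadd_concave} this forces $\phi_{i_0}$ to be linear on $[0,m(\set{\aEL,\bEL,\dEL})]$, an interval overlapping $[m(\set{\aEL,\bEL}),\,m(\set{\aEL,\bEL,\cEL,\dEL,\eEL})]$ with positive length $m_{i_0}(\dEL)>0$ (note this constraint is exactly what the kinked counterexample violates, since there $\surp(\set{\aEL,\bEL,\dEL})=\gamma_1-\gamma_3>0$). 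With that non-degenerate overlap the slopes must agree, linearity through the origin propagates to $[0,m(\set{\aEL,\bEL,\cEL,\dEL,\eEL})]$ and then, via the overlap with $[m(\set{\dEL,\eEL}),m(V)]$, to all of $[0,m(V)]$, giving $I_{g_{i_0}}(B_1;B_2)=0$ and the desired contradiction.
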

\begin{proof}
For clarity, we offer the proof as a series of numbered statement groups.
\begin{enumerate}
\item 
Lemma~\ref{thm:partition_of_surp_laminar}
means that we can write $g$ as follows:
\begin{align}
g(A) = \sum_{i \in \mathcal M_{1}} g_i(A)
+ \sum_{i \in \mathcal M_{2}} g_i(A)
+ \sum_{i \in \mathcal M_{0}} g_i(A)
\end{align}
where $\mathcal M_0, \mathcal M_1, \mathcal M_2$ is a partition
of $\mathcal M$, and
where for all $i \in \mathcal M_1$, $g_i$ gives 
surplus to $B_1$ but not to $B_2$,
for all $i \in \mathcal M_2$, $g_i$ gives 
surplus to $B_2$ but not to $B_1$, and for
all $i \in \mathcal M_0$, $g_i$ gives surplus neither to $B_1$ nor
$B_2$. 
Hence, 
for all $i \in \mathcal M_1$
and $v \in B_1$ we have $g_i(v) > 0$,
and 
for all $i \in \mathcal M_2$
and $v \in B_2$ we have $g_i(v) > 0$ 
by Lemma~\ref{thm:forced_modular_on_cycle_early}.
Furthermore, since $B_1$ and $B_2$ are the only sets of size
three that are given a surplus, then for all $i \in \mathcal M_0$,
$\surp_{g_i}(A) = 0$ for all $A$ with $|A| \leq 3$.


\item We also need to have zero pairwise surplus such as:
\begin{align}
I_g(\eEL; \set{\aEL,\bEL,\cEL}) =
I_f(\eEL; \set{\aEL,\bEL,\cEL}) = 0
\end{align}
This implies that for $i \in \mathcal M$,
$I_{g_i}(\eEL; \set{\aEL,\bEL,\cEL}) = 0$.  Since we know that
$m_i(B_1)$ is past the non-linear part of $\phi_i$ for
$i \in \mathcal M_1$ and $m_i(B_2)$ is past the non-linear part of
$\phi_i$ for $i \in \mathcal M_2$, the only way to achieve this 
(and corresponding values such as 
$I_g(\bEL; \set{\dEL,\eEL,\fEL}) = 0$)
is if
both: (1) for $i \in \mathcal M_1$, $g_i(v) = 0$ when $v \in B_2$;
 and (2) for $i \in \mathcal M_2$, $g_i(v) = 0$ when $v \in B_1$.  

In other words,
$g_i$ with $i \in \mathcal M_1$ not only offers no surplus for
$B_2$ but also give zero valuation for any $v \in B_2$ (and vice verse).

\item 
  Consider the following set of size-four sets 
  $\mathcal A = \set{ A \subseteq V : |A \cap B_1| = |A \cap B_2| = 2 }$.
  Note that $|\mathcal A| = 9$.
  For any $A \in \mathcal A$, we have 
\begin{align}
\surp_f(A) = \surp_g(A) = \sum_{i \in \mathcal M} \surp_{g_i}(A) = 1.
\label{eq:needed_deficiit}
\end{align}
For $i \in \mathcal M_1 \cup \mathcal M_2$,
we have $\surp_{g_i}(A) = 0$ since two
elements of $A$ are given zero value to every such $g_i$.

Hence, the only terms that can achieve
Equation~\eqref{eq:needed_deficiit} are those $i$ within $\mathcal
M_0$ having $\surp_{g_i}(A) > 0$, where $m_i(A) > \alpha_i$, and where
$\alpha_i$ is the last linear part of $\phi_i$.  Also, to ensure no
unrecoverable surplus occurs, we must have that $m_i(C) \leq \alpha_i$ for any
$C$ having the following properties: (1) any size-three set; 
(2) any
size-four set $C$ with $|C \cap B_1| = 3$ and $|C \cap B_2| = 1$
(because $\surp_{g_i}(C \cap B_1) = 0$ and
$I_{g_i}(C \cap B_2; C \cap B_1) = 0$); and
(3) any size-four set $C$ with $|C \cap B_1| = 1$ and $|C \cap B_2| =
3$. For example, with $A = \set{ \aEL, \bEL, \dEL, \eEL }$
and $C = \set{ \aEL, \bEL, \cEL, \dEL }$, 
we  have that
\begin{align}
m_i(\aEL, \bEL, \cEL, \dEL) \leq
\alpha_i < m_i(\aEL, \bEL, \dEL, \eEL) = m_i(A)
\end{align}
implying that $m_i(\cEL) <
m_i(\eEL)$.

For any $A \in \mathcal A$, define $\mathcal A_2(A) = \set{ A' \in \mathcal A : |A' \triangle A|  = 2 }$
and $\mathcal A_4(A) = \set{ A' \in \mathcal A : |A' \triangle A|  = 4 }$.
Then $|\mathcal A_2(A)| = 4$,
$|\mathcal A_4(A)| = 4$, 
and $\mathcal A = \set{ A } \cup A_2(A) \cup A_4(A)$.
Suppose $m_i(A') > \alpha_i$ where 
$A' \in \mathcal A_4(A)$.
For example, with
$A = \set{ \aEL, \bEL, \dEL, \eEL }$ as above, and
$A' = \set{ \bEL,\cEL, \dEL,\fEL } \in \mathcal A_4(A)$ , this
implies that $m_i(\dEL, \eEL, \fEL, \bEL) \leq \alpha_i < m_i(\bEL, \cEL, \dEL,
\fEL)$ implying that $m_i(\eEL) < m_i(\cEL)$, a
contradiction with the above. Hence, we must have $m_i(A') \leq
\alpha_i$. 

More generally, $g_i$ offering surplus to more than one member of
$A_4(A)$ leads to a contradiction. Also, if $A' \in \mathcal A_4(A)$,
then $\exists A'' \in \mathcal A_4(A')$ with $A'' \neq A$ and $A'' \in
\mathcal A_4(A)$. For example,
with $A$ and $A'$ given as above,
$A'' = \set{a,c,e,f}$.
No more than one of this trio $\set{ A, A', A'' }$
can be offered surplus by the same $g_i$ for $i \in \mathcal
M_0$. This means that we may partition the indices $\mathcal M_0 =
\set{ \mathcal M_0^{(0)}, \mathcal M_0^{(1)}, \mathcal M_0^{(2)},
  \mathcal M_0^{(3)} }$ so that $i \in \mathcal M_0^{(1)}$ may give
surplus to $A$, but neither $A'$ nor $A''$, $i \in \mathcal M_0^{(2)}$
may give surplus to $A'$ but neither $A$ nor $A''$, $i \in \mathcal
M_0^{(3)}$ may give surplus to $A''$ but neither $A$ nor $A'$, and $i
\in \mathcal M_0^{(0)}$ gives no surplus any of the trio.


We must then have
\begin{align}
3 &= \surp_f(V) 
= \sum_{j \in \set{0,1,2}} \sum_{i \in \mathcal M_j } \surp_{g_i}(V)  \\
&\geq
\sum_{i \in \mathcal M_1 } \surp_{g_i}(B_1) 
+ 
\sum_{i \in \mathcal M_2 } \surp_{g_i}(B_2) 
+ 
\sum_{i \in \mathcal M_0 } \surp_{g_i}(V)  \\
&= 1 + 1 + 
\sum_{i \in \mathcal M_0^{(1)}} \surp_{g_i}(V) 
+ 
\sum_{i \in \mathcal M_0^{(2)}} \surp_{g_i}(V) 
+
\sum_{i \in \mathcal M_0^{(3)}} \surp_{g_i}(V) \\
&\geq 2+
\sum_{i \in \mathcal M_0^{(1)}} \surp_{g_i}(A) 
+ 
\sum_{i \in \mathcal M_0^{(2)}} \surp_{g_i}(A') 
+
\sum_{i \in \mathcal M_0^{(3)}} \surp_{g_i}(A'') \\
&= 2 + 1 + 1 + 1 = 5
\end{align}
which is a contradiction.

\end{enumerate}
\end{proof}

\subsubsection{A Non-matroid Case}
\label{sec:non-matroid-case}

Lest one thinks it is only the matroids that give difficulty to SCMMs,
consider the function $f:2^V \to \mathbf R$ where again
$V = \set{\aEL,\bEL,\cEL,\dEL,\eEL,\fEL}$.
\begin{align}
f(A) = \min\Bigl(
\min(|A \cap \set{\aEL,\bEL,\cEL,\dEL}|,3)
+
\min(|A \cap \set{\cEL,\dEL,\eEL,\fEL}|,3),
5
\Bigr)
\label{eq:non_matroid_dsf}
\end{align}
Here, there is an overlap between the two sets $B_1 = \set{\aEL,\bEL,\cEL,\dEL}$
and $B_2 = \set{\cEL,\dEL,\eEL,\fEL}$.  This is not a matroid rank since, for
example, $f(\cEL) = 2$. Also, minimal sets of maximum value are not all
the same size, e.g., $f(\set{\aEL,\cEL,\dEL}) = 5$ while
$f(\set{\aEL,\bEL,\cEL,\eEL}) = 5$.

\begin{lemma}
No SCMM can represent the DSF in
Equation~\ref{eq:non_matroid_dsf}.
\end{lemma}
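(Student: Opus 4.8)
The plan is to run the same ``backprop'' surplus argument used for the laminar case (Lemma~\ref{lemma:laminar_rank_no_scmm}), now adapted to \emph{overlapping} blocks. Assume for contradiction that $f=g$ where $g(A)=\sum_{i\in\mathcal M}g_i(A)+m_\pm(A)$ is an SCMM with $g_i=\phi_i\circ m_i$. Since surplus is additive over conic combinations and immune to the final modular term, $\surp_f(A)=\sum_{i\in\mathcal M}\surp_{g_i}(A)$ with every $\surp_{g_i}(A)\ge 0$, and by Theorem~\ref{thm:subadd_concave} we have $\surp_{g_i}(A)>0$ exactly when $m_i(A)$ exceeds the last linear point $\alpha_i$ of $\phi_i$. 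First I would tabulate the surplus structure of $f$: the singleton values are $f(\aEL)=f(\bEL)=f(\eEL)=f(\fEL)=1$ and $f(\cEL)=f(\dEL)=2$, so $\surp_f(V)=3$; the minimal sets carrying surplus are $B_1$, $B_2$, and the four ``cross'' sets $\set{x,\cEL,\dEL,y}$ with $x\in\set{\aEL,\bEL}$ and $y\in\set{\eEL,\fEL}$, each of surplus $1$; while every set of size at most three, and every $4$-set of shape $(2,1,1)$, $(2,0,2)$, $(1,1,2)$ in the $(B_1\setminus B_2,\; B_1\cap B_2,\; B_2\setminus B_1)$ coordinates, carries no surplus.

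The second step is localization. Because any zero-surplus set $A$ forces $\surp_{g_i}(A)=0$, hence $m_i(A)\le\alpha_i$, for every $i$, I would extract hard ceilings on the weights: each triple and each $(2,0,2)$, $(2,1,1)$, $(1,1,2)$ $4$-set lies in the linear region of every $\phi_i$. In particular $m_i(\set{\cEL,\dEL})\le\alpha_i$ (the shared pair has $\surp_f=0$) and $m_i(\set{\aEL,\bEL,\eEL,\fEL})\le\alpha_i$ (read off from $I_f(\set{\aEL,\eEL};\set{\bEL,\fEL})=0$). Next I would convert the vanishing pairwise surpluses of $f$ between the outer elements and the blocks into forced separations via Lemma~\ref{thm:forced_modular_on_cycle_early} and Lemma~\ref{thm:when_concave_is_lin_early}, which should pin the weights $m_i(\cEL),m_i(\dEL)$ relative to the outer weights and make each $g_i$ essentially vanish on the block it does not serve. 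The objective is to partition $\mathcal M$ into the terms that can feed $B_1$, those that can feed $B_2$, and those that can feed each cross set, mirroring $\mathcal M_1,\mathcal M_2,\mathcal M_0$ in Lemma~\ref{thm:partition_of_surp_laminar}.

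With this partition in hand, I would close the argument by counting at $V$. Using the monotonicity of surplus for polymatroid functions, $\surp_{g_i}(V)\ge\surp_{g_i}(A)$ for every $A$, I would lower-bound $\sum_i\surp_{g_i}(V)$ by the surpluses that the separated classes are obliged to deliver to $B_1$, $B_2$, and the cross sets simultaneously. As in the laminar proof, summing these forced contributions should yield a total strictly exceeding $\surp_f(V)=3$, the desired contradiction.

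The genuinely new difficulty, relative to the disjoint laminar blocks, is the shared pair $\set{\cEL,\dEL}=B_1\cap B_2$: a single term can place all its weight on $\cEL,\dEL$ and thereby feed $B_1$, $B_2$, and several cross sets at once, so the clean ``no term serves two blocks'' dichotomy of Lemma~\ref{thm:partition_of_surp_laminar} no longer applies. The crux will therefore be to bound precisely how much surplus one term can route through $\set{\cEL,\dEL}$ while respecting the ceiling $m_i(\set{\cEL,\dEL})\le\alpha_i$ and all triple ceilings, and to account separately for the four cross-set surpluses (which have no laminar analogue) so that the $V$-count still overshoots. I expect this cross-set bookkeeping, together with isolating the contribution of the overlap elements from that of the outer elements, to be where essentially all of the work lies.
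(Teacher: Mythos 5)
There is a genuine gap, and it sits exactly at the point you flag as the crux: the final counting step can never produce a contradiction, because the statement you are trying to prove is false. The function of Equation~\eqref{eq:non_matroid_dsf} \emph{is} an SCMM --- indeed a single concave composed with a modular function. Take the modular $m$ with weights $m(\aEL)=m(\bEL)=m(\eEL)=m(\fEL)=1$ and $m(\cEL)=m(\dEL)=2$, so that $m(A)=|A\cap B_1|+|A\cap B_2|$, and set $g(A)=\min(m(A),5)$. Then $g=f$: if neither inner truncation binds, $f(A)=\min(|A\cap B_1|+|A\cap B_2|,5)=g(A)$; if one binds, say $|A\cap B_1|=4$, then $\set{\cEL,\dEL}\subseteq A$ forces $|A\cap B_2|\geq 2$, so both sides saturate at $5$ (symmetrically for $B_2$). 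This single term also passes every test in your outline: it delivers surplus $\max(m(A)-5,0)$, which equals $\surp_f(A)$ on every set ($1$ on $B_1$, $B_2$, and each of your four cross sets, $2$ on the $(2,2,1)$- and $(1,2,2)$-shaped five-element sets in your coordinates, $1$ on the $(2,1,2)$-shaped ones, and $3$ on $V$), while respecting every ceiling you derived ($m(A)\leq 5$ on all triples and on all zero-surplus four-element sets). So the bookkeeping you propose closes with exact equality everywhere rather than the overshoot your step 4 requires.

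Your diagnosis of the difficulty was exactly right, and it is fatal rather than technical: routing weight through the shared pair $\set{\cEL,\dEL}$ lets one term serve $B_1$, $B_2$, and all four cross sets simultaneously, and the $m$ above does precisely that. For comparison, the paper's own proof of this lemma is also unsound: its step 2 rests on the identity $m_i(B_1)+m_i(B_2)=m_i(B_1\cap B_2)+m_i(B_1\triangle B_2)$, which is false --- the correct identity is $m_i(B_1)+m_i(B_2)=2m_i(B_1\cap B_2)+m_i(B_1\triangle B_2)$, under which no contradiction follows (the $m$ above exceeds the truncation point on both $B_1$ and $B_2$ while staying at or below it on both $B_1\cap B_2$ and $B_1\triangle B_2$) --- and its step 4 asserts $\surp_f(V)=1$ whereas $\surp_f(V)=3$, as you correctly computed, so even the paper's final inequality ($3\geq 2$) is not a contradiction. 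The dichotomy of Lemma~\ref{thm:partition_of_surp_laminar} genuinely requires disjoint blocks, so that $m_i(B_1)+m_i(B_2)$ re-splits into two zero-surplus triples; the overlap destroys exactly this. Note that Theorem~\ref{sec:dsfs-extend-class} ($\text{SCMM}\subset\text{DSF}$) is unaffected, since it already follows from the disjoint laminar example of Lemma~\ref{lemma:laminar_rank_no_scmm}; but for this overlapping example the correct outcome of your analysis is a counterexample to the lemma, not a proof of it.
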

\begin{proof}
For clarity, we offer the proof as a series of numbered statement groups.
\begin{enumerate}
\item   Assume that for all $A \subseteq V$,
  $f(A) = g(A) = \sum_{i \in \mathcal M} g_i(A)$
  for some index set $\mathcal M$.

\item Assume $\exists i \in \mathcal M$ that offers surplus both to
  $B_1$ and $B_2$. Let $\alpha_i$ be the last linear point in
  $\phi_i$. Then we must have
  $m_i(B_1) > \alpha_i$ and 
  $m_i(B_2) > \alpha_i$, leading to
\begin{align}
m_i(B_1) + m_i(B_2) = m_i(B_1 \cap B_2) + m_i(B_1 \triangle B_2)
\end{align}
where
$B_1 \triangle B_2= (B_1 \setminus B_2) \cup (B_2 \setminus B_1)$ is
the symmetric difference between $B_1$ and $B_2$.  Hence we must have
at least one of $m_i(B_1 \cap B_2) > \alpha_i$ or
$m_i(B_1 \triangle B_2) > \alpha_i$. Either case, however, would cause
an unrecoverable surplus for sets (either $B_1 \cap B_2$ or
$B_1 \triangle B_2$) neither of which should be in surplus.

\item We may partition the index set $\mathcal M$ in to
  $\mathcal M_0, \mathcal M_1, \mathcal M_2$ where $\mathcal M_1$ does
  not give a surplus to $B_2$, $\mathcal M_2$ does not give a surplus
  to $B_1$, and $\mathcal M_0$ gives surplus neither to $B_1$
  nor to $B_2$.

\item This leads to too much surplus, i.e.,

\begin{align}
1 &= \surp_f(V)
= \sum_{i \in \mathcal M} \surp_{g_i}(V)
= 
\sum_{i \in \mathcal M_0} \surp_{g_i}(V)
n+ 
\sum_{i \in \mathcal M_1} \surp_{g_i}(V)
+
\sum_{i \in \mathcal M_2} \surp_{g_i}(V) \\
&\geq
\sum_{i \in \mathcal M_1} \surp_{g_i}(B_1)
+
\sum_{i \in \mathcal M_2} \surp_{g_i}(B_2)
= 2
\end{align}
a contradiction.
\end{enumerate}
\end{proof}

\begin{exercise}
It is left to the reader to show that the following function can not
be represented as an SCMM:
\begin{align}
f(A) =
\min\Bigl(
\sum_{i=1}^4 \min(|A \cap B_i|,3)
,7
\Bigr)
\end{align}
where $V = \set{\aEL,\bEL,\cEL,\dEL,\eEL,\fEL,\gEL,\hEL}$
and where $B_1 = \set{\aEL,\bEL,\cEL,\dEL}$,
$B_2 = \set{\cEL,\dEL,\eEL,\fEL}$,
$B_3 = \set{\eEL,\fEL,\gEL,\hEL}$,
and 
$B_4 = \set{\gEL,\hEL,\aEL,\bEL}$.

It is also possible to construct a truncated matroid rank function of
the kind described in Equation~\eqref{eq:truncated_matroid_rank} that
cannot be represented by an SCMM.
\end{exercise}

Summarizing the results from the above sections, we have the
following.
\begin{theorem}
  The DSF family is strictly larger than that of SCMMs.
\label{sec:dsfs-extend-class}
\end{theorem}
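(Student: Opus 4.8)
The plan is to combine a trivial containment with a single witnessing function. First I would note that $\text{SCMM} \subseteq \text{DSF}$ holds immediately: by the conventions of Section~\ref{sec:dsf_definition}, a one-layer DSF is exactly an SCMM, so every SCMM already lies in the DSF family. It therefore remains only to establish that this containment is strict.

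To obtain strictness, I would exhibit a function that is a DSF but is not an SCMM. The natural witness is the laminar matroid rank function $f$ of Equation~\eqref{eq:laminar_rank_no_scmm}. On one hand, laminar matroid rank functions were shown in Section~\ref{sec:matroids-case} to be instances of DSFs: the recursive definition in Equation~\eqref{eq:laminar_matroid_rank} matches the recursive DSF form of Equation~\eqref{eq:gen_deep_submodular_recursive} via a tree-structured DAG, so $f \in \text{DSF}$. On the other hand, Lemma~\ref{lemma:laminar_rank_no_scmm} establishes that no SCMM can represent $f$, so $f \notin \text{SCMM}$. Together with the containment above, this yields $\text{SCMM} \subsetneq \text{DSF}$, as claimed.

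The real work is hidden in the supporting lemma rather than in the theorem statement itself. The main obstacle, already surmounted by Lemma~\ref{lemma:laminar_rank_no_scmm}, was arguing that no SCMM, despite its considerable flexibility (arbitrary per-term concave functions, per-term modular weights, no forced partition of the ground set, and a final arbitrary modular term), can reproduce $f$. That argument relied on the surplus machinery of Section~\ref{sec:defic-absol-redund}: one partitions the SCMM's terms according to which of the two blocks $B_1, B_2$ they grant surplus to, uses the vanishing of the relevant pairwise surpluses to force each such term to be zero on the opposite block, and then sums the forced surpluses to overshoot $\surp_f(V) = 3$, a contradiction. Given this lemma, together with the independent non-matroid witness of Section~\ref{sec:non-matroid-case} which confirms the phenomenon is not peculiar to matroid rank functions, the theorem follows in a single line.
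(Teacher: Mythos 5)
Your proposal is correct and takes essentially the same route as the paper: the paper also obtains the inclusion from the fact that a one-layer DSF is by definition an SCMM, and obtains strictness by invoking Lemma~\ref{lemma:laminar_rank_no_scmm}, which shows the laminar matroid rank function of Equation~\eqref{eq:laminar_rank_no_scmm} (a DSF, since DSFs subsume laminar matroid ranks via Equation~\eqref{eq:laminar_matroid_rank}) has no SCMM representation. Your compressed account of the surplus argument inside that lemma --- partitioning terms by which block receives surplus, using vanishing pairwise surplus to zero out terms on the opposite block, and deriving a contradiction with $\surp_f(V)=3$ --- accurately reflects the paper's proof.
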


A consequence of this theorem is that in order most generally allow
interaction amongst a hierarchy of concepts, as intuitively argued in
Section~\ref{sec:deep-subm-funct}, it not sufficient to use solely SCMMs.

\subsubsection{More General Conditions on Two-Layer Functions}
\label{sec:more-gener-cond}

In this section, we revisit again the form of DSF in
Equation~\eqref{eq:laminar_rank_no_scmm} where we saw there is no
corresponding SCMM. Let us slightly generalize 
Equation~\eqref{eq:laminar_rank_no_scmm} in the following.
\begin{align}
\label{eq:general_g_scmm_sometimes}
g(A) = \phi(\min(|A\cap\{a,b,c\}|,2)+\min(|A\cap\{d,e,f\}|,2))
\end{align}
where $\phi$ is normalized monotonically non-decreasing concave
function.  Lemma~\ref{lemma:laminar_rank_no_scmm} does not require that
for all $\phi$, the corresponding DSF has no SCMM representation.
Indeed, for certain functions $\phi$ it is possible. While we do not,
in this paper, give a complete characterization of those DSFs that can
or cannot be represented by SCMMs, we do offer the following theorem.
\begin{theorem}
\label{theo:concaveOverSumOfMin}
The function $g(A)$ in Equation~\ref{eq:general_g_scmm_sometimes} is
an SCMM if and only if $-\phi(1)+3.5\phi(2)-4\phi(3)+1.5\phi(4)\geq 0$
and $2\phi(1)+\phi(2)-4\phi(3)+2\phi(4)\geq 0$
\end{theorem}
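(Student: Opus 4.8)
The plan is to exploit the large symmetry group of $g$ and then split the biconditional into a construction (sufficiency) and an obstruction argument (necessity). Write $B_1=\set{\aEL,\bEL,\cEL}$, $B_2=\set{\dEL,\eEL,\fEL}$, and $h(A)=\min(|A\cap B_1|,2)+\min(|A\cap B_2|,2)$, so that $g(A)=\phi(h(A))$ with $h(A)\in\set{0,1,2,3,4}$. Since $g$ is invariant under the wreath product $S_3\wr S_2$ (permuting within each block and swapping the two blocks), and the SCMM cone is closed under this action and under averaging, $g$ is an SCMM if and only if it is a \emph{symmetric} SCMM; this lets me reason entirely in terms of the orbit data $\set{|A\cap B_1|,|A\cap B_2|}$. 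Setting $d_k=\phi(k)-\phi(k-1)$, the hypotheses on $\phi$ are $d_1\ge d_2\ge d_3\ge d_4\ge 0$, and a direct substitution shows the two target inequalities are exactly $d_2-d_3\ge \tfrac32(d_3-d_4)$ and $d_1-d_2\ge 2(d_3-d_4)$.

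For sufficiency I would first record the concave decomposition $g = d_4\, h + (d_3-d_4)\min(h,3) + (d_2-d_3)\min(h,2) + (d_1-d_2)\min(h,1)$, together with the two identities $\min(h(A),1)=\min(|A|,1)$ and $\min(h(A),2)=\min(|A|,2)$, which are immediate from $h(A)\ge 1\Leftrightarrow A\ne\emptyset$ and $h(A)\ge 2\Leftrightarrow |A|\ge 2$. Thus three of the four terms are manifestly SCMMs with non-negative coefficients, and the only obstruction is the $\min(h,3)$ term. Because $\phi\mapsto g$ is linear and SCMMs form a cone, it suffices to check $g$ at the extreme rays of the polyhedral cone defined by $d_1\ge d_2\ge d_3\ge d_4\ge 0$ and the two inequalities. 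That cone has exactly four extreme rays: $\phi(t)=t$, $\phi(t)=\min(t,1)$, $\phi(t)=\min(t,2)$, giving $h$, $\min(|A|,1)$, $\min(|A|,2)$ (all SCMMs), and the single boundary ray $\phi=(4.5,2.5,1,0)$, i.e.\ $W:=2\min(|A|,1)+\tfrac32\min(|A|,2)+\min(h,3)$. Hence sufficiency reduces to the one statement that $W$ is an SCMM.

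The explicit decomposition of $W$ is the main obstacle. A short computation shows that truncations supported on a single block together with cardinality truncations $\min(|A|,k)$ span only functions of the form $a(p)+a(q)+c(p+q)$ in the block sizes $p,q$, and $W$ is provably not of this form; so the decomposition must use truncations on sets straddling both blocks with \emph{non-uniform} block weights, e.g.\ terms $\min(|A\cap B_1|+2|A\cap B_2|,\beta)$ symmetrized over the block swap. Identifying the finitely many such atoms, solving the resulting linear system, and checking that all coefficients are non-negative is the computational crux, and it is precisely here that the boundary values $\tfrac32$ and $2$ must appear.

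For necessity I would generalize the surplus-routing of Lemma~\ref{lemma:laminar_rank_no_scmm} quantitatively. Given an SCMM representation $g=\sum_i g_i+m_\pm$, surplus is immune to $m_\pm$, so $\surp_g=\sum_i\surp_{g_i}$; moreover each $g_i=\phi_i(m_i)$ gives positive surplus to a set exactly when the modular value passes the last linear point of $\phi_i$, and zero pairwise surplus forces local linearity (Theorems~\ref{thm:subadd_concave}, \ref{thm:concave_additivity}, \ref{thm:comp_pres_surp}). I would classify the terms by whether they give surplus to $B_1$, to $B_2$, or to neither, and track how much surplus each class must deposit on the critical orbits while matching the surplus $g$ actually assigns to the blocks ($\surp_g(B_1)=\surp_g(B_2)=3\phi(1)-\phi(2)$), to the mixed size-three sets ($3\phi(1)-\phi(3)$), to the balanced size-four sets ($4\phi(1)-\phi(4)$), and to the whole set ($\surp_g(V)=6\phi(1)-\phi(4)$). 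Consistency of this allocation — that the surplus forced onto $V$ by the outer concavity at level $h=4$ can be charged to lower-level interactions — is feasibility of a small linear program whose Farkas certificates are exactly the two stated inequalities; extracting these certificates with the correct constants is the delicate part of this direction.
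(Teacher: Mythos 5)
Your preparatory reductions are all sound, and I verified them: the symmetrization step is exactly the paper's operator $E$ (the SCMM cone is indeed closed under the group action and averaging), the translation to increments $d_k=\phi(k)-\phi(k-1)$ reproduces the two stated inequalities, the identity $g = d_4\,h + (d_3-d_4)\min(h,3) + (d_2-d_3)\min(|A|,2) + (d_1-d_2)\min(|A|,1)$ checks out, and your constraint cone is simplicial with precisely the four extreme rays you list. The problem is that both directions then stop at exactly the step that constitutes the theorem's content. On sufficiency: you reduce everything to the single claim that $W=2\min(|A|,1)+\tfrac{3}{2}\min(|A|,2)+\min(h(A),3)$ is an SCMM, and defer that as ``the computational crux.'' This cannot be waved at. The summand $\min(h(A),3)$ alone is precisely the laminar rank function of Equation~\eqref{eq:laminar_rank_no_scmm}, which Lemma~\ref{lemma:laminar_rank_no_scmm} proves is \emph{not} an SCMM, so the entire ``if'' direction lives or dies on exhibiting an explicit certificate for $W$, which you never do. The paper does: besides single-block and cardinality truncations, its decomposition uses the six block-straddling atoms $\modularminA{1,1,0,0.5,0.5,0.5}$ and its images under the symmetry group; your guessed atoms $\min(|A\cap B_1|+2|A\cap B_2|,\beta)$ are of the right flavor but you neither fix them nor solve the resulting system, so sufficiency remains unproven.

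On necessity the gap is deeper than a deferred computation. Your surplus values are correct ($\surp_g(B_i)=3\phi(1)-\phi(2)$, $3\phi(1)-\phi(3)$ on mixed triples, $4\phi(1)-\phi(4)$ on balanced quadruples, $6\phi(1)-\phi(4)$ on $V$), but a Farkas/LP argument needs as \emph{input} the set of feasible profiles that a single concave-over-modular term can realize across these orbits simultaneously, and determining that feasible set is the whole difficulty. Surplus bookkeeping of the kind used in Lemma~\ref{lemma:laminar_rank_no_scmm} (preservation, non-negativity, forced linearity under zero pairwise surplus) yields qualitative impossibility, but it cannot by itself produce the constants $3.5$, $4$, and $1.5$; those come from the extremal geometry of the achievable atom profiles. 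This is what the paper's appendix actually establishes: after symmetrization, every term of a putative SCMM representation is a conical combination of five explicit functions $f_1,\dots,f_5$, where $f_5=E\,\modularminA{1,1,0,0.5,0.5,0.5}$ is a non-obvious extreme atom with profile ratio $Ef_5(2,0)=\tfrac{6}{5}Ef_5(1,1)$, and then uniqueness of the expansion of $g$ in this basis forces non-negativity of exactly the two stated coefficient combinations. Your sentence ``whose Farkas certificates are exactly the two stated inequalities'' assumes this classification rather than proving it. As written, then, both halves are plans rather than proofs, and in each case the deferred step is not routine --- it is the theorem.
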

The proof of the ``if'' part of this theorem follows by considering
the following expression which is clearly an SCMM as long as all of
the coefficient are non-negative.  The ``if'' part of the proof is
fairly easy --- we may simple write $g(A)$ as the form of SCMMs
as follows:
\begin{align}
g(A) &= \left[2\phi(1)+\phi(2)-4\phi(3)+2\phi(4)\right]\min(|A\cap\{a,b,c,d,e,f\}|,1)\\
&+\left[-\phi(1)+3.5\phi(2)-4\phi(3)+1.5\phi(4)\right]\min(|A\cap\{a,b,c,d,e,f\}|,2)\\
&+\left[-\phi(2)+2\phi(3)-\phi(4)\right]\left[\min(|A\cap\{a,b,c\}|,1)+\min(|A\cap\{d,e,f\}|,1)   \right]\\
&+\left[-\phi(3)+\phi(4)\right]\left[\min(|A\cap\{a,b,c\}|,2)+\min(|A\cap\{d,e,f\}|,2)   \right] \\
&+\left[-\phi(2)+2\phi(3)-\phi(4)\right][\modularminA{1,1,0,0.5,0.5,0.5}\\
&\quad+\modularminA{0,1,1,0.5,0.5,0.5} +\modularminA{1,0,1,0.5,0.5,0.5}]\\\
&\quad+\modularminA{0.5,0.5,0.5,1,1,0} +\modularminA{0.5,0.5,0.5,1,0,1}\\
&\quad+\modularminA{0.5,0.5,0.5,1,0,1}]
\end{align}
where $(x_a,x_b,x_c,x_d,x_e,x_f)^T$ is a modular function with
elements $x_a$, $x_b$, $x_c$, $x_d$, $x_e$, and $x_f$. Hence, if all coefficients are
non-negative, then $g$ is an SCMM (in fact, $g$ is a sum of weighted
cardinality truncations, defined in
Lemma~\ref{lemma:sums_weighted_cardinality_truncations}).  The
non-negativity of the coefficients holds whenever the inequalities
stated in the theorem are met.  The ``only if'' part of the theorem is
more involved and thus is given in Appendix~\ref{sec:more-gener-cond-1}.


\subsection{The DSF Family Grows Strictly with the Number of
Layers}
\label{sec:textdsf_k-1-subset}

It is clear that a $k$-layer DSF can easily express a $k-1$ layer
DSF simply by using a linear function at the final unit. Hence, if we
say that $\text{DSF}_k$ is the family of all deep submodular functions
with $k$ layers, we have that
$\text{DSF}_{k-1} \subseteq \text{DSF}_k$.  It is also clear that
$\text{DSF}_{0} \subset \text{DSF}_1$ since $\text{DSF}_{0}$ are
modular functions while $\text{DSF}_1$ are SCMMs. In the previous
section, we demonstrated by example that
$\text{DSF}_{1} \subset \text{DSF}_2$.

In this section, we show that DSFs become strictly more capable as the
allowable number of layers increases, meaning there are $k$-layer
functions that cannot be represented with $k-1$ layers, and hence
$\text{DSF}_{k-1} \subset \text{DSF}_k$ for any $k$.  This result is
similar to some of the recent results from the DNN literature where it
is shown that in some cases, it would require exponentially many
hidden units to implement a network with more layers
\cite{DBLP:journals/corr/EldanS15}. In the DSF case, however, we show
that in some cases, there is no way to represent certain $k$-layer
DSFs with a $k-1$ layer function, which means that the class of DSFs
is strictly increasing with the number of layers. This is different
than standard neural networks where it is shown that even a shallow
neural network is a universal
approximator~\cite{hornik1991approximation}.  In order to do this in
the DSF case, however, we allow the ground set correspondingly to
grow in size with the number of layers.

We begin with a number of definitions and prerequisite lemmas.

\begin{definition}[$(A,B,C)$-function]
We say that polymatroid function $f$ is an $(A,B,C)$-function if $A,B,C \subseteq V$ are
three non-empty disjoint subsets of $V$ and where $f$ satisfies
the following:
\begin{align}
f(A\cup B\cup C) &= f(A\cup B) =f(B\cup C)=f(C\cup A)\\
& =f(A)+f(B)=f(B)+f(C)=f(C)+f(A)
\end{align}
\end{definition}

\begin{definition}[strong $(A,B,C)$-function]
We say that $f$ is a strong $(A,B,C)$-function if 
$f$ is an $(A,B,C)$-function and if $f(A \cup B \cup C) > 0$.
\end{definition}

\begin{lemma}
\label{lemma:abc}
If $f$ is an $(A,B,C)$-function, then $f(A)=f(B)=f(C)$. If $f$ is a strong $(A,B,C)$-function, then $f(A)=f(B)=f(C)>0$.
\end{lemma}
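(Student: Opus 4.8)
The plan is to read the three equalities off the definition directly; no machinery beyond the $(A,B,C)$-function equations and the strictness hypothesis is needed. First I would isolate the identities buried in the second row of the definition, namely $f(A)+f(B)=f(B)+f(C)=f(C)+f(A)$. Cancelling the common summand in each adjacent pair yields $f(A)=f(C)$ from the first equality and $f(B)=f(A)$ from comparing the first and third expressions; combining these gives $f(A)=f(B)=f(C)$, which establishes the first claim.

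For the strong case I would combine this with the top row of the definition. Since $f(A\cup B\cup C)=f(A)+f(B)$ and we have just shown $f(A)=f(B)$, it follows that $f(A\cup B\cup C)=2f(A)$. The strength hypothesis $f(A\cup B\cup C)>0$ then forces $f(A)>0$, and by the equality of the three values we conclude $f(A)=f(B)=f(C)>0$.

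I expect no substantive obstacle here: the lemma is pure bookkeeping of the definitional equalities, and the only point requiring minor care is citing the correct adjacent pairs when cancelling summands. The role of this lemma is simply to repackage the immediate consequences of the $(A,B,C)$-function definition into the symmetric form $f(A)=f(B)=f(C)$ (with positivity in the strong case), which is the shape that will be convenient for the later arguments about $(A,B,C)$-functions and the growth of the DSF family with depth.
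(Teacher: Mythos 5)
Your proposal is correct and follows essentially the same route as the paper: cancel the common summands in $f(A)+f(B)=f(B)+f(C)=f(C)+f(A)$ to get $f(A)=f(B)=f(C)$, then use $f(A\cup B\cup C)=f(A)+f(B)=2f(A)$ together with strength to conclude $f(A)=\tfrac{1}{2}f(A\cup B\cup C)>0$. No gaps.
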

\begin{proof}
$f(A)+f(B)=f(B)+f(C)=f(C)+f(A)$ implies $f(A)=f(B)=f(C)$. If $f$ is strong, we have $f(A\cup B\cup C)>0$. Therefore, $f(A)=\frac{1}{2}f(A\cup B\cup C)>0$.
\end{proof}

A simple example of such a function is a cycle matroid rank function
with $A = \set{ \aEL }$, $B = \set{ \bEL }$, and $C = \set{ \cEL }$,
where $\set{ \aEL,\bEL,\cEL }$ are the edges of a 3-cycle in the cycle
matroids associated graph. Note that in any $(A,B,C)$-function, we
have $I_f(A;B) = I_f(B;C) = I_f(C;A) = 0$. In a strongly
$(A,B,C)$-function, we have $I_f(A;B;C) > 0$. Hence, these functions
have no interaction between any two groups but there is a three-way
interaction amongst the three groups. Like surplus being
zero, $(A,B,C)$-function that are mixtures force properties amongst
the components.
\begin{lemma}
\label{lemma:add}
If $f=\sum_{i=1}^m f_i$ is an $(A,B,C)$-function,  then $f_i$ is an $(A,B,C)$-function for all $i$.
\end{lemma}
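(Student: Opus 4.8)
The plan is to exploit the fact that every equality in the chain defining an $(A,B,C)$-function can be recast as a \emph{non-negative} quantity being equal to zero, where that quantity is linear in $f$ and hence decomposes as a sum over the components $f_i$. A vanishing sum of non-negative summands forces every summand to vanish, which will pin each $f_i$ down to satisfy the very same equalities. Throughout I rely on the standing assumption that each $f_i$ is a polymatroid (normalized, monotone, submodular) function, so that grouped surpluses are non-negative and monotone gains are non-negative.

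First I would reorganize the defining chain into six conditions of two kinds. The first kind consists of the three pairwise grouped-surplus conditions $I_f(A;B)=I_f(B;C)=I_f(C;A)=0$, which are precisely $f(A\cup B)=f(A)+f(B)$ together with its two cyclic variants. The second kind consists of the three vanishing-gain conditions $f(A\cup B\cup C)=f(A\cup B)$, $f(A\cup B\cup C)=f(B\cup C)$, and $f(A\cup B\cup C)=f(C\cup A)$, each saying that adjoining the third block to the union of the other two produces no increase. By transitivity these six equalities together recover the entire chain in the definition.

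Next I would push linearity and non-negativity through each component. Grouped surplus is a linear combination of function evaluations, so Lemma~\ref{thm:lindef} extends to give $I_f(A;B)=\sum_i I_{f_i}(A;B)$, and likewise for the other pairs. Because each $f_i$ is a normalized submodular function, each $I_{f_i}(A;B)\geq 0$, so the vanishing sum forces $I_{f_i}(A;B)=I_{f_i}(B;C)=I_{f_i}(C;A)=0$ for every $i$. For the second kind, monotonicity of each polymatroid $f_i$ gives $f_i(A\cup B\cup C)-f_i(A\cup B)\geq 0$ and its two variants; since $\sum_i\bigl[f_i(A\cup B\cup C)-f_i(A\cup B)\bigr]=f(A\cup B\cup C)-f(A\cup B)=0$, each of these gains vanishes as well. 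Hence for every fixed $i$ we obtain $f_i(A\cup B)=f_i(A)+f_i(B)$, $f_i(B\cup C)=f_i(B)+f_i(C)$, $f_i(C\cup A)=f_i(C)+f_i(A)$, and $f_i(A\cup B\cup C)=f_i(A\cup B)=f_i(B\cup C)=f_i(C\cup A)$. Equating the three pairwise sums then yields $f_i(A)=f_i(B)=f_i(C)$ by exactly the transitivity step used in Lemma~\ref{lemma:abc}, so $f_i$ satisfies every requirement of the definition and is an $(A,B,C)$-function.

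The one place the argument could go astray — and thus the main obstacle to watch — is the temptation to force the balance conditions $f_i(A)=f_i(B)=f_i(C)$ \emph{directly}. A single equality such as $f(A)=f(B)$ is \emph{not} of the ``non-negative-quantity-equals-zero'' form, so it cannot be split across the components on its own, and trying to do so would break the proof. The resolution is that these balance conditions are never needed as inputs: they are recovered for free once the three surplus conditions and three gain conditions have been established componentwise. The only genuine hypotheses consumed are the non-negativity of grouped surplus and the monotonicity of each $f_i$, both of which hold since the components are polymatroid functions.
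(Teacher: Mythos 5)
Your proof is correct and follows essentially the same route as the paper's: both split the defining chain into non-negative quantities (the gains $f_i(A\mid B\cup C)$, etc., non-negative by monotonicity, and the pairwise surpluses $f_i(A)+f_i(B)-f_i(A\cup B)$, non-negative by submodularity and normalization) that are linear in $f$ and sum to zero, forcing each component's quantity to vanish. Your added remark that the balance conditions $f_i(A)=f_i(B)=f_i(C)$ need not (and cannot) be split directly but follow afterward by transitivity is a point the paper leaves implicit, but the argument is the same.
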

\begin{proof}
  First, conditioning on the pair $A,B$, since
  $\sum_{i=1}^m f_i(A|B\cup C)=f(A|B\cup C)=0$ and
  $f_i(A|B\cup C)\geq 0$ for each $i$, we have $f_i(A|B\cup C)= 0$ for
  each $i$. Doing the same for pair $B,C$ and $C,A$, we have
  $f_i(A\cup B)=f_i(B\cup C)=f_i(C\cup A)=f_i(A\cup B\cup C)$.

  Next, since
  $\sum_{i=1}^m f_i(A)+f_i(B)-f_i(A\cup B)=f(A)+f(B)-f(A\cup B)=0$ and
  $f_i(A)+f_i(B)-f_i(A\cup B)\geq 0$ for all $i$, we have
  $f_i(A)+f_i(B)=f_i(A\cup B)$ for all $i$. Doing the
  same for pairs $B,C$ and $C,A$ yields the result.
\end{proof}

\begin{figure}[tb]
\centerline{\includegraphics[page=1,width=0.8\textwidth]{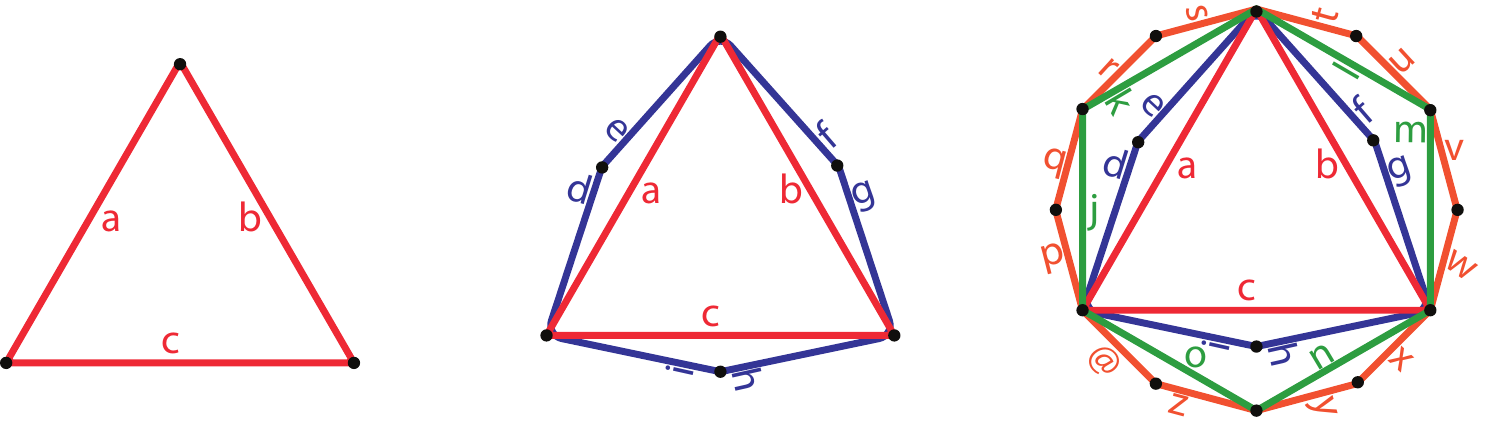}}
\caption{Example cycle matroids
whose rank functions are in $F_k$.
On the left we have
$k=1$ so $|V|=3$ where the example shows a cycle matroid on a graph
which is just a three-cycle. In the middle we have $k=2$ so $|V|=9$, where
$V_1 = \set{ \aEL, \dEL, \eEL }$,
$V_2 = \set{ \bEL, \fEL, \gEL }$,
and 
$V_3 = \set{ \cEL, \hEL, \iEL }$. The figure shows a cycle matroid
rank where each group $V_i$ is itself a three cycle.
On the right shows an example with $k=3$, $|V|=27$ where
$V_i$ for $i \in \set{1,2,3}$ is a set of nine elements,
and 
$V_{i,j}$ for $i,j \in \set{1,2,3}$ is a set of three elements comprising
a three-cycle. These examples demonstrate that $F_k$ is non-empty.
}
\label{fig:fk_cycle_matroid_functions}
\end{figure}

\begin{definition}
\label{definition:sub}
Given a function $f: 2^V \to \mathbb R$, and a subset $V'\subseteq V$,
define the restricted function $f_{V'}:2^{V'}\rightarrow \mathbb{R}$ as
$f_{V'}(X)=f(X)$ for all $X\subseteq V'$.
\end{definition}
\noindent A restricted function $f_{V'}(X)$ has a restricted
ground set, and by stating $f_{V'}(X)$  we assume $X \subseteq V'$.

\begin{lemma}
Let $h$ be polymatroidal, $\phi$ be normalized monotone non-decreasing
concave, and define $h(X) = g(X) + m_\pm(X)$, 
where $g(X) = \phi(h(X))$.
If $g$ is
a strongly $(A,B,C)$-function, then $h_D(X) = \gamma h(X) + m_\pm(X)$
for $D = A \cup B$, $D = B \cup C$, and $D = C \cup A$.
\label{lemma:strongly_w_concave_comp}
\end{lemma}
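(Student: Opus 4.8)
The plan is to reduce the lemma to the proportionality statement already proved in Lemma~\ref{thm:when_concave_is_lin_early} (``When Concave Composition Is Linear''), applying it separately to each of the three pairs drawn from $A$, $B$, $C$, and then gluing the three resulting constants into a single $\gamma$. First I would unpack the hypothesis that $g$ is a strongly $(A,B,C)$-function. By Lemma~\ref{lemma:abc} this gives $g(A) = g(B) = g(C) > 0$, and the defining equalities of an $(A,B,C)$-function yield $g(A\cup B) = g(A)+g(B)$, i.e.\ $I_g(A;B)=0$, and likewise $I_g(B;C) = I_g(C;A) = 0$; moreover $g(A\cup B\cup C) > 0$ certifies that $\phi$ is not identically zero. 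Thus for the pair $(A,B)$ every hypothesis of Lemma~\ref{thm:when_concave_is_lin_early} is met ($g(A)>0$, $g(B)>0$, $I_g(A;B)=0$), and that lemma supplies a constant $\gamma_{AB} > 0$ with $g(X) = \gamma_{AB}\, h(X)$ for all $X \subseteq A\cup B$. Applying the same lemma to $(B,C)$ and to $(C,A)$ produces constants $\gamma_{BC}, \gamma_{CA} > 0$ with $g(X) = \gamma_{BC}\, h(X)$ on $B\cup C$ and $g(X) = \gamma_{CA}\, h(X)$ on $C\cup A$.

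The crux, and the step I expect to require the most care, is showing that these three a priori distinct constants actually coincide; Lemma~\ref{thm:when_concave_is_lin_early} is applied three times and knows nothing about compatibility across the three overlapping unions. Here I would evaluate on the shared, nonempty pieces $A$, $B$, $C$. Since $B$ is nonempty with $g(B) > 0$, while $\phi$ is normalized ($\phi(0)=0$) and monotone non-decreasing, we must have $h(B) > 0$ (otherwise $g(B) = \phi(h(B)) = \phi(0) = 0$). Because $B \subseteq A\cup B$ and $B \subseteq B\cup C$, this gives $\gamma_{AB}\, h(B) = g(B) = \gamma_{BC}\, h(B)$, and dividing by $h(B) > 0$ forces $\gamma_{AB} = \gamma_{BC}$. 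The identical argument evaluated on $C$ (resp.\ $A$) gives $\gamma_{BC} = \gamma_{CA}$, so all three equal a common value $\gamma := g(A)/h(A) > 0$.

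Finally I would assemble the conclusion. With $g(X) = \gamma\, h(X)$ holding on each of $A\cup B$, $B\cup C$, and $C\cup A$, and recalling that the outer function of the lemma is $g + m_\pm$, its restriction to any such $D$ satisfies $g(X) + m_\pm(X) = \gamma\, h(X) + m_\pm(X)$ for all $X \subseteq D$, which is exactly the stated claim. The only genuine content beyond bookkeeping lives in the constant-matching paragraph; the remainder is a direct invocation of Lemma~\ref{thm:when_concave_is_lin_early} together with the structural facts about $(A,B,C)$-functions recorded in Lemma~\ref{lemma:abc} and the observation (already noted in the text) that every $(A,B,C)$-function has vanishing pairwise surplus.
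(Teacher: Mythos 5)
Your proposal is correct and follows essentially the same route as the paper: both reduce the claim to three applications of Lemma~\ref{thm:when_concave_is_lin_early}, one for each of the pairs $(A,B)$, $(B,C)$, $(C,A)$, using the fact that a strongly $(A,B,C)$-function has vanishing pairwise surplus and strictly positive values on $A$, $B$, $C$. Your explicit constant-matching step only makes rigorous what the paper leaves implicit, namely that all three applications produce the same $\gamma$ (in the paper's view, because $\gamma$ is in each case the slope of $\phi$ on its initial linear segment; in yours, by evaluating on the overlaps $A$, $B$, $C$ and using $h(B)>0$).
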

\begin{proof}
  Since $g$ is strongly $(A,B,C)$, we have $I_g(A;B) = 0$, while
  $g(A) = g(B) > 0$, which by Lemma~\ref{thm:when_concave_is_lin_early} means that $\alpha$, the last
  linear point of $\phi$, must be no less than $h(A,B)$. Hence, for any
  $X \subseteq A \cup B$, $h(X) = \gamma h(X)+ m_\pm(X)$ for some $\gamma > 0$. The
  same holds true for $B \cup C$ and $C \cup A$.
\end{proof}


Given $k\geq 1$ and a ground set $V$ where $|V|=3^k$, we name each element $v\in V$ as
$v_{a_1,a_2,\ldots,a_k}$ where $a_i \in \set{ 1,2,3}$ for $i=1,2,\ldots,k$. Define
$V_{a_1,a_2,\ldots,a_k}=\set{v_{a_1,a_2,\ldots,a_k}}$
and for $1\leq j\leq k-1$,
define $V_{a_1,a_2,\ldots,a_j} = V_{a_1,a_2,\ldots,a_j,1}\cup
V_{a_1,a_2,\ldots,a_j,2}\cup V_{a_1,a_2,\ldots,a_j,3}$.
For example, $V = V_1 \cup V_2 \cup V_3$,
$V_1 = V_{11} \cup V_{12} \cup V_{13}$, 
$V_2 = V_{21} \cup V_{22} \cup V_{23}$, 
$V_{11} = V_{111} \cup V_{112} \cup V_{113}$, 
and so on.

\begin{definition}
\label{definition:F}

We define $F'_k$ as the set of set functions
$f:2^V\rightarrow \mathbb{R}$ where $|V|=3^k$, $f(V)>0$ and $f$ is a
$(V_{a_1,a_2,\ldots,a_j,1}, V_{a_1,a_2,\ldots,a_j,2},
V_{a_1,a_2,\ldots,a_j,3})$-function for all $a_i\in \set{1,2,3}$,
$1\leq i\leq j$, and $0\leq j\leq k-1$.

We also define $F_k$ as the set of set functions $f:2^V\rightarrow \mathbb{R}$
where $|V|=3^k$,  and $f$ is a {\bf strongly}
$(V_{a_1,a_2,\ldots,a_j,1}, V_{a_1,a_2,\ldots,a_j,2},
V_{a_1,a_2,\ldots,a_j,3})$-function for all $a_i\in\{1,2,3\}$,
$1\leq i\leq j$, for all $0\leq j\leq k-1$.
\end{definition}

Figure~\ref{fig:fk_cycle_matroid_functions} shows three examples of
cycle matroids whose ranks are in $F_k$ for $k = 1,2,3$ thus
demonstrating that $F_k$ is non-empty. To show that there are DSFs
who are members of $F_k$, consider the following example.
\begin{example}
  Define $\hat{f}_k:2^{V_k}\rightarrow \mathbb{R}$, where
  $|V_k|=3^k$. Define $\hat{f}_1(X)=\frac{1}{2}\min(|X|,2)$. For
  $k\geq 2$, $V_k$ is partitioned into three sets 
  $V_{k1}$, $V_{k2}$, and $V_{k3}$
  where $|V_{k1}|=|V_{k2}|=|V_{k3}|=3^{k-1}$.
  The level-$k$ function is defined as $\hat{f}_k(X)=\frac{1}{2}\min(\sum_{i=1,2,3}\hat{f}_{k-1}(X\cap
  V_{ki}),2)$. 
\end{example}
Hence, $\hat f_k$ is like a $[0,1]$-normalized laminar matroid rank
function with the laminar family of sets 
$\mathcal F_k = \set{ V_k, V_{k1}, V_{k2}, V_{k3}, V_{k11}, V_{k12}, V_{k13}, V_{k21}, \dots }$. An immediate consequence is the following.
\begin{lemma}
$\hat{f}_k\in F_k$ and $\hat{f}_k$ can be expressed as a $k$-layer DSF. 
\end{lemma}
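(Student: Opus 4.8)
The plan is to prove both assertions at once by induction on $k$, exploiting the fact that the recursion defining $\hat f_k$ is exactly the recursion that adds one layer to a DSF. For the base case $k=1$, the function $\hat f_1(X)=\tfrac12\min(|X|,2)$ is a single normalized monotone concave function composed with the modular function $|X|$, hence a one-layer DSF (an SCMM); and a one-line computation ($\hat f_1$ gives $\tfrac12$ to each singleton, $1$ to each pair, and $1$ to the whole ground set) verifies the strong $(V_1,V_2,V_3)$-function equalities, so $\hat f_1\in F_1$. Throughout I would also record the uniform bound that, for every $j\ge 1$, the outermost layer forces $\hat f_j(X)=\tfrac12\min(\cdot,2)\le 1$, and feeding this back into the recursion gives $\hat f_j(V_j)=\tfrac12\min(3\cdot 1,2)=1$ for all $j$ by a trivial sub-induction.

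For the inductive step on the DSF claim, assume $\hat f_{k-1}$ is a $(k-1)$-layer DSF on a ground set of size $3^{k-1}$ and lies in $F_{k-1}$. Since $V_{k1},V_{k2},V_{k3}$ are disjoint blocks each of size $3^{k-1}$, I would instantiate three ground-node-disjoint copies of this $(k-1)$-layer DSF, the $i$-th computing $\hat f_{k-1}(\cdot\cap V_{ki})$, feed their outputs with unit weights into a single new root node $\rtnd$, and place there the function $\phi(\alpha)=\tfrac12\min(\alpha,2)$. In the notation of Eqn.~\eqref{eq:gen_deep_submodular_recursive} this sets $\varphi_\rtnd=\sum_i\psi_{u_i}$ over the three block-roots and $\phi_\rtnd=\phi$; the longest root-to-ground path has length $1+(k-1)=k$, so $\hat f_k$ is a $k$-layer DSF, with submodularity automatic from Theorem~\ref{thm:concave_over_polymatroidal} and closure under conic combination.

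For membership $\hat f_k\in F_k$, I would split the triples required by Definition~\ref{definition:F} into two kinds. The top-level triple ($j=0$) is $(V_{k1},V_{k2},V_{k3})$, and using $\hat f_{k-1}(V_{ki})=1$ I evaluate directly: each block gives $\tfrac12\min(1,2)=\tfrac12$, each pairwise union gives $\tfrac12\min(2,2)=1$, and the full union gives $\tfrac12\min(3,2)=1$, which is precisely the strong $(A,B,C)$ condition with positive value. For an inner triple ($1\le j\le k-1$), all three sets live inside a single block $V_{ki}$, and the crucial observation is that for $X\subseteq V_{ki}$ the other two blocks contribute $0$ and $\hat f_{k-1}(X)\le 1<2$, so the outer truncation is inactive and $\hat f_k(X)=\tfrac12\hat f_{k-1}(X)$ on $V_{ki}$. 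Thus $\hat f_k$ restricted to $V_{ki}$ is a positive rescaling of $\hat f_{k-1}$; since the strong $(A,B,C)$-function equalities are homogeneous and positive scaling preserves $\hat f(A\cup B\cup C)>0$, the hypothesis $\hat f_{k-1}\in F_{k-1}$ transfers every inner triple to $\hat f_k$.

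The one nonroutine step is this last transfer, i.e. being sure that the outer concave map $\tfrac12\min(\cdot,2)$ neither creates nor destroys the delicate $(A,B,C)$ equalities inside a block. The hard part is handled cleanly by the bound $\hat f_{k-1}\le 1<2$, which puts every single-block evaluation in the linear regime of $\phi$; this is exactly the phenomenon formalized in Lemma~\ref{thm:when_concave_is_lin_early} and Lemma~\ref{lemma:strongly_w_concave_comp}, where vanishing pairwise surplus forces the composing concave function to be linear on the relevant range, so either route yields the conclusion. Everything else reduces to the direct arithmetic recorded above.
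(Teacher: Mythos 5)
Your proof is correct and follows essentially the route the paper intends: the paper states this lemma without a detailed proof, calling it an ``immediate consequence'' of the observation that $\hat f_k$ is a $[0,1]$-normalized laminar matroid rank function, whose tree-structured recursion is exactly a $k$-layer DSF. Your induction on $k$ --- with the key observation that $\hat f_{k-1}\le 1<2$ keeps every within-block evaluation in the linear regime of the outer truncation, so that $\hat f_k$ restricted to a block is a positive rescaling of $\hat f_{k-1}$ --- simply fills in the arithmetic the paper leaves implicit.
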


We also note that the families $F_k$ and $F'_k$ are the same.
\begin{lemma}
$F'_k=F_k$
\label{lemma:F_F_prime}
\end{lemma}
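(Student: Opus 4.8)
The plan is to prove the two inclusions separately, with $F_k \subseteq F'_k$ being essentially definitional and the reverse inclusion $F'_k \subseteq F_k$ carrying all of the content. For the easy direction, suppose $f \in F_k$. Then every relevant triple $(V_{a_1,\dots,a_j,1}, V_{a_1,\dots,a_j,2}, V_{a_1,\dots,a_j,3})$ is a \emph{strong} $(A,B,C)$-function, hence in particular an $(A,B,C)$-function, so the triple conditions defining $F'_k$ hold immediately. The one extra requirement of $F'_k$ is $f(V) > 0$, and this I would obtain by taking $j = 0$, where the triple is $(V_1, V_2, V_3)$ with union $V$; strongness at $j=0$ is precisely $f(V) > 0$. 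Thus $f \in F'_k$.

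For $F'_k \subseteq F_k$, note that the only gap between the two definitions is the positivity clause that upgrades each $(A,B,C)$-function to a strong one, namely $f\bigl(V_{a_1,\dots,a_j,1} \cup V_{a_1,\dots,a_j,2} \cup V_{a_1,\dots,a_j,3}\bigr) = f(V_{a_1,\dots,a_j}) > 0$. So it suffices to show, for $f \in F'_k$, that $f(V_{a_1,\dots,a_j}) > 0$ for every index sequence of length $0 \le j \le k-1$. I would establish this by induction on $j$, in fact proving the sharper quantitative claim $f(V_{a_1,\dots,a_j}) = (1/2)^j\, f(V)$. The base case $j=0$ is just $f(V) > 0$, which is part of the hypothesis $f \in F'_k$.

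The key step feeding the induction comes from Lemma~\ref{lemma:abc} together with the defining equation $f(A\cup B\cup C) = f(A) + f(B)$ of an $(A,B,C)$-function: these give $f(A) = f(B) = f(C) = \tfrac12 f(A \cup B \cup C)$, and I would emphasize that this halving relation holds even without the strong hypothesis. Applying it to the triple at level $j-1$, namely $(V_{a_1,\dots,a_{j-1},1}, V_{a_1,\dots,a_{j-1},2}, V_{a_1,\dots,a_{j-1},3})$, which is an $(A,B,C)$-function because $f \in F'_k$, yields $f(V_{a_1,\dots,a_{j-1},b}) = \tfrac12 f(V_{a_1,\dots,a_{j-1}})$ for each child index $b \in \{1,2,3\}$. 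Combined with the inductive hypothesis this gives $f(V_{a_1,\dots,a_j}) = (1/2)^j f(V) > 0$ throughout the range $0 \le j \le k-1$, which is exactly the positivity that makes every triple strong, so $f \in F_k$.

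I expect no serious obstacle here; the only thing to watch is the index bookkeeping, specifically that the triple invoked at each inductive step, which sits at level $j-1$, really is among the triples $F'_k$ constrains. That requires $0 \le j-1 \le k-1$, which holds as $j$ ranges over $1, \dots, k-1$, so the induction stays within the legal range and the argument closes.
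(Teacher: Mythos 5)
Your proof is correct, and it runs on the same engine as the paper's: the relation $f(\text{parent}) = 2\,f(\text{child})$ that falls out of Lemma~\ref{lemma:abc} together with the defining identity $f(A \cup B \cup C) = f(A) + f(B)$ of an $(A,B,C)$-function. The organization differs, however. The paper argues by contradiction, bottom-up: it supposes some singleton $v = v_{a_1,\dots,a_k}$ has $f(v) = 0$, doubles its way up the tree to force $f(V) = 0$, contradicting $f(V) > 0$; having thus shown $f(v) > 0$ for every singleton, it invokes monotonicity to conclude $f(A) > 0$ for all nonempty $A$, in particular for every triple union. You instead run a direct top-down induction that pins down the exact value $f(V_{a_1,\dots,a_j}) = 2^{-j} f(V)$ for $0 \le j \le k-1$, which is precisely the positivity needed to upgrade each triple to a strong one. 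Your version has two small advantages: it never leaves the levels $0,\dots,k-1$ that the definitions of $F_k$ and $F'_k$ actually constrain (the paper's argument descends to singletons at level $k$), and it makes no use of monotonicity, so it would survive even without the polymatroid assumption built into the $(A,B,C)$ definition. The paper's version, in exchange, yields the slightly stronger byproduct that $f$ is strictly positive on \emph{all} nonempty subsets of $V$, not only on the sets $V_{a_1,\dots,a_j}$, which is the form in which the fact gets reused elsewhere (e.g., in the proof of Theorem~\ref{theo:Fk_k-1}, where terms with $f_i(V) > 0$ are reduced to members of $F_k$).
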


\begin{proof}
Immediately, we have $F_k\subseteq F'_k$

To show the other direction, assume there exists $f\in F'_k$ and
$v\in V$ such that $f(v)=0$ where $v$ is labeled as
$v_{a_1,a_2,\ldots,a_k}$. Then we have
$f(V_{a_1,a_2,\ldots,a_{k-1}})=2\times
f(V_{a_1,a_2,\ldots,a_{k-1,}a_k})=0$,
$f(V_{a_1,a_2,\ldots,a_{k-2}})=2\times
f(V_{a_1,a_2,\ldots,a_{k-2},a{k-1}})=0$, and so on until finally we
have $f(V)=0$ which contradicts with the definition of $F'_k$.  Hence,
for all $f\in F'_k$ and $v\in V$, we have $f(v)>0$ and
by monotonicity $f(A) > 0$ for all $A$. 
Therefore, $f\in F_k$ and $F'_k\subseteq F_k$.
\end{proof}

\begin{lemma}
\label{lemma:fk_sum_decomp}
Given $f\in F_k$, suppose that $f=\sum_{i=1}^m f_i$. If $f_i(V)> 0$,
then $f_i\in F_k$ for all $i$.
\end{lemma}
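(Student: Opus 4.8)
The plan is to observe that, by Definition~\ref{definition:F}, membership in $F_k$ amounts to a finite conjunction of $(A,B,C)$-function conditions---one for each internal node of the ternary tree---together with the single strictness requirement $f(V)>0$ (after invoking the equivalence $F'_k = F_k$ of Lemma~\ref{lemma:F_F_prime}). Since each individual $(A,B,C)$-function condition is preserved when passing from a sum to its summands by Lemma~\ref{lemma:add}, the result follows by applying that lemma once per node and then handling the strictness with Lemma~\ref{lemma:F_F_prime}.

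Concretely, I would run over every admissible tuple $(a_1,\dots,a_j)$ with $0 \leq j \leq k-1$ and $a_1,\dots,a_j \in \set{1,2,3}$, and for each one set $A = V_{a_1,\dots,a_j,1}$, $B = V_{a_1,\dots,a_j,2}$, $C = V_{a_1,\dots,a_j,3}$. Since $f \in F_k$, the function $f$ is a \emph{strongly} $(A,B,C)$-function, hence in particular an $(A,B,C)$-function. Applying Lemma~\ref{lemma:add} to the decomposition $f = \sum_{i=1}^m f_i$ at this triple shows that every summand $f_i$ is itself an $(A,B,C)$-function. Ranging over all such tuples, each $f_i$ satisfies every $(A,B,C)$-function condition appearing in Definition~\ref{definition:F}.

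To finish, fix any $i$ with $f_i(V)>0$. The function $f_i$ then meets the full definition of $F'_k$: it has the correct ground set, it is an $(A,B,C)$-function at every node, and $f_i(V)>0$. Lemma~\ref{lemma:F_F_prime} therefore gives $f_i \in F'_k = F_k$, as claimed. The subtlety worth flagging is that $F_k$ demands the \emph{strong} form, i.e.\ positivity $f_i(V_{a_1,\dots,a_j}) > 0$ at every node, whereas Lemma~\ref{lemma:add} returns only the non-strong conditions; this gap is exactly what $F'_k = F_k$ absorbs, since the proof of Lemma~\ref{lemma:F_F_prime} propagates positivity from the root $V$ down to every node by monotonicity.

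The one hypothesis doing real work beyond the two cited lemmas is that Lemma~\ref{lemma:add} relies on each summand $f_i$ being polymatroidal (normalized, monotone non-decreasing, submodular): that is what makes the conditioned gains $f_i(A \mid B \cup C)$ and the grouped surpluses $I_{f_i}(A;B)$ nonnegative, so that a vanishing sum forces each term to vanish. I would therefore state at the outset that the decomposition $f = \sum_{i=1}^m f_i$ is into polymatroid functions, as it is throughout this section; given that, no genuine obstacle remains, and the argument is a direct node-by-node application of Lemma~\ref{lemma:add} followed by Lemma~\ref{lemma:F_F_prime}.
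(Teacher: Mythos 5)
Your proof is correct and follows exactly the paper's argument: the paper's own proof is precisely "immediate from Lemmas~\ref{lemma:add} and~\ref{lemma:F_F_prime}," which is the node-by-node application of Lemma~\ref{lemma:add} followed by the $F'_k = F_k$ equivalence that you spell out. Your explicit remarks on where strictness is absorbed and on the polymatroidality hypothesis are faithful elaborations of the same route, not a different one.
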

\begin{proof}
This is immediate when considering lemmas~\ref{lemma:add}
and~\ref{lemma:F_F_prime}.
\end{proof}

\begin{lemma}
\label{lemma:Fbasic}
Given $f\in F_k$, we have $\gamma f\in F_k$ for all $\gamma>0$. If $k\geq 2$, we have $f_{V_i}\in F_{k-1}$, for $i \in \set{ 1,2,3}$, where $V_i$ is defined in Definition~\ref{definition:F}. 
\end{lemma}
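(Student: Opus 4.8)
The plan is to treat the two assertions separately, since the scaling claim is essentially immediate while the restriction claim needs a small amount of index bookkeeping.

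First I would dispose of the scaling claim. Every requirement defining membership in $F_k$ is either a linear equality among values of $f$ (the defining equalities of an $(A,B,C)$-function, as in the Definition of $(A,B,C)$-function) or a strict positivity statement of the form $f(A\cup B\cup C)>0$. Multiplying $f$ by a fixed $\gamma>0$ preserves each equality, since both sides simply scale by $\gamma$, and preserves each strict inequality. Moreover $\gamma f$ is still a polymatroid function, because normalization, monotonicity, and submodularity are all preserved under multiplication by a positive constant. Hence $\gamma f$ is a strongly $(\cdot,\cdot,\cdot)$-function for exactly the same collection of triples as $f$, giving $\gamma f\in F_k$.

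For the restriction claim, fix $i\in\set{1,2,3}$ and recall from Definition~\ref{definition:F} that $|V_i|=3^{k-1}$. I would first observe that $f_{V_i}$ is again a polymatroid function: restricting a set function to the subsets of a fixed set preserves $f(\emptyset)=0$, monotonicity, and submodularity. The substantive step is to match up the required triples. Viewing $F_{k-1}$ up to the canonical relabeling of its ground set, I relabel the elements $v_{i,a_2,\ldots,a_k}$ of $V_i$ by dropping the now-constant leading index $i$, so that the original subset $V_{i,a_2,\ldots,a_j}$ plays the role of the $F_{k-1}$ subset indexed by $(a_2,\ldots,a_j)$. Under this identification, a triple demanded of an $F_{k-1}$-member at ``level'' $j'$ (with $0\leq j'\leq k-2$) is precisely the triple $(V_{i,a_2,\ldots,a_{j'+1},1},V_{i,a_2,\ldots,a_{j'+1},2},V_{i,a_2,\ldots,a_{j'+1},3})$, which is one of the triples required of $f\in F_k$ at level $j=j'+1\in\set{1,\ldots,k-1}$ with its first index pinned to $i$.

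The key observation is that each such triple has its union contained in $V_i$, so all of its defining $(A,B,C)$-function equalities involve only sets on which $f_{V_i}$ agrees with $f$, and the strictness $f(A\cup B\cup C)>0$ transfers verbatim. Since $f\in F_k$ supplies the strong-$(A,B,C)$ conditions for exactly these triples, $f_{V_i}$ satisfies every strong-$(A,B,C)$ condition demanded of a member of $F_{k-1}$, so $f_{V_i}\in F_{k-1}$. The one $F_k$ triple that does \emph{not} descend into $V_i$ is the level-$0$ triple $(V_1,V_2,V_3)$, but its absence is correct, because the level-$0$ triple of $F_{k-1}$ corresponds instead to the $F_k$ level-$1$ triple $(V_{i1},V_{i2},V_{i3})$, which does lie inside $V_i$. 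I expect the only mild obstacle to be getting this level shift right, namely lining up the level-$j'$ triples of $F_{k-1}$ with the level-$(j'+1)$ triples of $F_k$ having first index $i$; once that correspondence is stated correctly, every condition transfers with no further computation.
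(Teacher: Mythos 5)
Your proposal is correct; the paper itself dispatches this lemma with the single line ``This is immediate from the definitions,'' and your argument is exactly the definitional unwinding that remark elides: positive scaling preserves the linear $(A,B,C)$-equalities, strict positivity, and polymatroidality, while restriction to $V_i$ preserves polymatroidality and inherits precisely the level-$(j'+1)$ triples of $F_k$ with first index $i$ as the level-$j'$ triples required of $F_{k-1}$. The level-shift bookkeeping you spell out is the right (and only) content of the proof.
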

\begin{proof}
This is immediate from the definitions.
\end{proof}

\begin{lemma}
\label{lemma:f_k_concave_composition}
For all $f\in F_k$
and $\phi$ be a normalized monotone non-decreasing
concave function. 
If $f=\phi(f')$, then $f'_{V_i}\in F_{k-1}$, for $i \in \set{ 1,2,3}$. 
\end{lemma}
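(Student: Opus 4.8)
The plan is to peel off the outer concave function $\phi$ by exploiting the top-level strong $(V_1,V_2,V_3)$-structure that $f$ inherits from membership in $F_k$ (here $k\geq 2$, and as throughout we take the inner function $f'$ to be polymatroidal), and then to observe that each restriction $f'_{V_i}$ is merely a positive rescaling of $f_{V_i}$, which we already know lies in $F_{k-1}$.

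First I would record that, since $f\in F_k$, taking $j=0$ in Definition~\ref{definition:F} shows that $f$ is a strongly $(V_1,V_2,V_3)$-function. By Lemma~\ref{lemma:abc} this gives $f(V_1)=f(V_2)=f(V_3)>0$, and by the defining equalities of an $(A,B,C)$-function we have $I_f(V_1;V_2)=I_f(V_2;V_3)=I_f(V_3;V_1)=0$. The key step is then to apply Lemma~\ref{thm:when_concave_is_lin_early} (the ``when concave composition is linear'' lemma) with the roles of its $g,h$ played by $f,f'$, to each of the disjoint top-level pairs. For the pair $(V_1,V_2)$ the hypotheses $f(V_1)>0$, $f(V_2)>0$, and $I_f(V_1;V_2)=0$ are met, so there is a constant $\gamma_{12}>0$ with $f(X)=\gamma_{12}\,f'(X)$ for all $X\subseteq V_1\cup V_2$; in particular $f'_{V_1}=\gamma_{12}^{-1}f_{V_1}$ and $f'_{V_2}=\gamma_{12}^{-1}f_{V_2}$. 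Applying the same lemma to the pair $(V_2,V_3)$ produces a $\gamma_{23}>0$ with $f'_{V_3}=\gamma_{23}^{-1}f_{V_3}$. Thus each restriction $f'_{V_i}$ is a strictly positive scalar multiple of $f_{V_i}$.

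Finally I would invoke Lemma~\ref{lemma:Fbasic}, which supplies both that $f_{V_i}\in F_{k-1}$ for $i\in\set{1,2,3}$ and that $F_{k-1}$ is closed under multiplication by a positive constant. Combining these with $f'_{V_i}=c_i\,f_{V_i}$ for some $c_i>0$ immediately yields $f'_{V_i}\in F_{k-1}$ for each $i\in\set{1,2,3}$, as required.

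The main obstacle is the first substantive step: certifying that $\phi$ behaves linearly on the range relevant to each block $V_i$, so that composition with $\phi$ does not distort the nested internal $(A,B,C)$-structure carried by $f$. This is exactly what Lemma~\ref{thm:when_concave_is_lin_early} delivers, using that the vanishing of pairwise surplus between the top-level blocks forces $\phi$ to be linear up through $f'(V_i)+f'(V_j)$. Once linearity on these ranges is established, the remainder is routine bookkeeping with positive scalings and the closure properties of $F_{k-1}$; note also that we may cover all three blocks using only the two pairs $(V_1,V_2)$ and $(V_2,V_3)$, and that consistency of the two scaling constants on the shared block $V_2$ is not needed, since positivity of each $c_i$ alone suffices.
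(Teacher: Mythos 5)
Your proof is correct and follows essentially the same route as the paper's: the paper simply cites Lemma~\ref{lemma:strongly_w_concave_comp} to obtain $f_{V_i}=\gamma f'_{V_i}$ and then applies the two parts of Lemma~\ref{lemma:Fbasic}, whereas you inline the content of Lemma~\ref{lemma:strongly_w_concave_comp} by invoking Lemma~\ref{thm:when_concave_is_lin_early} directly on the pairs $(V_1,V_2)$ and $(V_2,V_3)$. This unfolding of the intermediate lemma is the only difference; the key idea (vanishing pairwise surplus plus positivity forces $\phi$ to be linear on the relevant range, so each $f'_{V_i}$ is a positive rescaling of $f_{V_i}$) is identical.
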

\begin{proof}
Using Lemma~\ref{lemma:strongly_w_concave_comp}, we have $f_{V_i}=\gamma f'_{V_i}$, where $\gamma>0$ is a constant. Also we have $f_{V_i}\in F_{k-1}$ according to second part of lemma~\ref{lemma:Fbasic}. So $f'_{V_i}\in F_{k-1}$ according to first part of lemma~\ref{lemma:Fbasic}.
\end{proof}

For any $f \in F_k$, we have that $f(v | V \setminus \set{v}) = 0$
which follows since if $v = v_{a_1,a_2, \dots, a_{k-1},1}$,
$v' = v_{a_1,a_2, \dots, a_{k-1},2}$, and
$v'' = v_{a_1,a_2, \dots, a_{k-1},3}$,
$0 = f(v|v',v'') \geq f(v|V \setminus \set{ v }) \geq 0$.  Hence, all
members of $F_k$ are totally normalized in this
sense~\cite{cunningham1983decomposition,cu84}.

As mentioned in Section~\ref{sec:deep-subm-funct}, a DSF allows for
the use of an arbitrary final modular function $m_\pm$ at the top
layer. If it is the case that a given $f \in F_k$ is represented as a
DSF, since $f$ is totally normalized and since the polymatroidal part
must have non-negative gain, the final $m_\pm$ must be non-positive as
otherwise we would have $f(v | V \setminus \set{v}) > 0$. Hence, in
order to show that a given $f \in F_k$ can not be represented by a DSF
with fewer than $k$ layers, it is sufficient to show that a function
of the form $f + m_+$, where $f \in F_k$ and $m_+$ is a non-negative
modular function, can not be expressed as a $k-1$ layer DSF having
$m_\pm = 0$.  To this end, we introduce the following class:

\begin{definition}
We define the class of functions $G_k = \set{ f + m_+ | f \in F_k, m_+ \in M_+ }$
where $M_+$ is the set of all non-negative normalized modular functions.
\end{definition}
The addition of a modular function to an $f \in F_k$ does not change
any surplus. Hence, for a $g \in G_k$ with $g = f + m_+$ with
$f \in F_k$, we have that $I_g(A;B) = I_f(A;B)$ for any disjoint sets
$A,B$, and that $\surp_g(A) = \surp_f(A)$ for any set $A$.

The properties of total
normalization~\cite{cunningham1983decomposition,cu84} will be further
useful in the below, so we define functional operators that totally normalize a
given function. Define the functional operator
$\mathcal M : (2^V \to \mathbb R) \to (V \to \mathbb R)$ that
maps from submodular functions to a modular function as follows,
for all $A \subseteq V$:
\begin{align}
(\mathcal M f)(A) = \sum_{a \in A} f(a| V \setminus \set{ a }).
\end{align}
Hence, $\mathcal M f$ is a modular function consisting of elements
which are the smallest possible gain given by submodular $f$.  
We also
define a total normalization functional operator
$\mathcal T: (2^V \to \mathbb R) \to (2^V \to \mathbb R)$ as follows:
\begin{align}
(\mathcal T f)(A) = f(A) - (\mathcal M f)(A).
\end{align}
Then clearly $\mathcal T f$ is a polymatroid function that is totally
normalized (i.e., $(\mathcal T f)(v | V \setminus \set{v}) = 0$), and
we have the identity $f = \mathcal T f + \mathcal M f$, meaning that
any submodular function can be decomposed into a totally normalized
polymatroid function plus a modular
function~\cite{cunningham1983decomposition,cu84}. 
The decomposition is unique because if $f = f' + m$ where
$f'$ is any function having $f'(v | V \setminus \set{v} ) = 0$, 
then $f(v | V \setminus \set{v}) = m(v)$ so we must have
that $m = \mathcal M f$.

The operator $\mathcal M$ is linear,
$\mathcal M(f_1 + f_2) = \mathcal M f_1 + \mathcal M f_2$, as is
$\mathcal T$. Also, in the present case, since $f$ is presumed
polymatroidal, the modular function is non-negative, i.e.,
$(\mathcal M f)(v) \geq 0$ for all $v$.

The next lemma states that if $f$ is representable as a sum, then each
term must either be a member of $G_k$ or must be purely a non-negative
modular function.
\begin{lemma}
Given $f \in G_k$, suppose
that $f = \sum_{i=1}^l f_i$. Then $f_i \in G_k \cup M_+$ for all $i$.
Furthermore, for at least one $i$, we have $f_i \in G_k$.
\label{lemma:gk_sum_decomp}
\end{lemma}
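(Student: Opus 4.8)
The plan is to reduce the statement to its already-proved analogue for $F_k$, namely Lemma~\ref{lemma:fk_sum_decomp}, by peeling the modular part off each summand with the total normalization operators $\mathcal M$ and $\mathcal T$. First I would pin down the exact shape of $f$. Writing $f = h + m_+$ with $h \in F_k$ and $m_+ \in M_+$, recall (from the paragraph preceding Definition of $G_k$) that every member of $F_k$ is totally normalized, so $\mathcal M h = 0$ and $\mathcal T h = h$; since $m_+$ is modular we have $\mathcal M m_+ = m_+$ and $\mathcal T m_+ = 0$. By linearity of $\mathcal M$ and $\mathcal T$ this yields $\mathcal M f = m_+$ and $\mathcal T f = h$, and the decomposition $f = \mathcal T f + \mathcal M f$ is the unique split of $f$ into a totally normalized polymatroid part and a modular part.

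Next I would apply $\mathcal T$ to the hypothesised sum. As throughout the DSF decompositions, the summands $f_i$ are polymatroid functions, so each $\mathcal T f_i$ is again a totally normalized polymatroid function and each $\mathcal M f_i \in M_+$ (its singleton values are the non-negative full-context gains $f_i(v \mid V \setminus \set{v})$). Linearity of $\mathcal T$ then gives $h = \mathcal T f = \sum_{i=1}^l \mathcal T f_i$, i.e.\ the member $h \in F_k$ is written as a sum of polymatroid functions. Now I would invoke Lemma~\ref{lemma:fk_sum_decomp}: for every index $i$ with $(\mathcal T f_i)(V) > 0$ we obtain $\mathcal T f_i \in F_k$, while for every index $i$ with $(\mathcal T f_i)(V) = 0$ the monotonicity and non-negativity of the polymatroid $\mathcal T f_i$ force $\mathcal T f_i \equiv 0$.

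Finally I would reassemble. In the first case $f_i = \mathcal T f_i + \mathcal M f_i$ with $\mathcal T f_i \in F_k$ and $\mathcal M f_i \in M_+$, so $f_i \in G_k$; in the second case $f_i = \mathcal M f_i \in M_+$. Hence every $f_i \in G_k \cup M_+$. For the final assertion, note that $h \in F_k$ is in particular a strong $(V_1, V_2, V_3)$-function, so $h(V) > 0$; since $h(V) = \sum_{i=1}^l (\mathcal T f_i)(V)$ with all terms non-negative, at least one term is strictly positive, which by the above puts that $\mathcal T f_i \in F_k$ and therefore $f_i \in G_k$.

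The heavy lifting is carried by Lemma~\ref{lemma:fk_sum_decomp} (ultimately by Lemma~\ref{lemma:add}), so the main obstacle here is a bookkeeping one: verifying that $\mathcal T$ cleanly transports the sum $f = \sum_i f_i$ to a sum $h = \sum_i \mathcal T f_i$ within $F_k$, that each $\mathcal T f_i$ genuinely remains polymatroid (so that the $F_k$ machinery and the non-negativity of $\mathcal M f_i$ both apply), and that the degenerate summands with $(\mathcal T f_i)(V) = 0$ collapse to the identically-zero function rather than to some spurious non-modular term. Establishing the uniqueness and linearity of the $\mathcal T/\mathcal M$ split up front is what makes each of these checks routine.
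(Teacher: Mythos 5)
Your proposal is correct and follows essentially the same route as the paper's proof: peel off the modular parts with the linear operators $\mathcal M$ and $\mathcal T$, note that $\mathcal T f \in F_k$ because members of $F_k$ are totally normalized, apply Lemma~\ref{lemma:fk_sum_decomp} to $\mathcal T f = \sum_i \mathcal T f_i$, and reassemble each $f_i = \mathcal T f_i + \mathcal M f_i$ into $G_k$ or $M_+$. Your treatment of the last assertion (via $h(V) = \sum_i (\mathcal T f_i)(V) > 0$) is just a more explicit rendering of the paper's closing remark that not all $f_i$ can lie in $M_+$.
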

\begin{proof}
  Consider
  $\mathcal M f = \mathcal M \sum_{i=1}^l f_i = \sum_{i=1}^l \mathcal
  M f_i$ and
  $\mathcal T f = \mathcal T \sum_{i=1}^l f_i = \sum_{i=1}^l \mathcal
  T f_i$. For any $h \in F_k$ and $m \in M_+$,
  $\mathcal M(h + m) = m$, and hence
  $\mathcal T f = f - \mathcal M f \in F_k$.  Thus, by
  Lemma~\ref{lemma:fk_sum_decomp}, we have either that
  $\mathcal T f_i$ is identically zero or is otherwise an element of
  $F_k$.  Hence, when considering that
  $f_i = \mathcal M f_i + \mathcal T f_i$, if $\mathcal T f_i$ is
  zero, $\mathcal M f_i + \mathcal T f_i \in M_+$ and if not
  $\mathcal M f_i + \mathcal T f_i \in G_k$. Furthermore,
  since $f \in G_k$ we can not have that for all $i$, $f_i \in M_+$.
\end{proof}


\begin{lemma}
Given an $f \in G_k$, if $f = \phi(f')$, where $\phi$ is normalized
non-decreasing concave, and $f'$ is polymatroidal,
then $f'_{V_i} \in G_{k-1}$, $i \in \set{1,2,3}$.
\label{lemma:g_k_concave_composition}
\end{lemma}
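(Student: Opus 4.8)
The plan is to mirror the proof of the $F_k$ analogue, Lemma~\ref{lemma:f_k_concave_composition}, but to work throughout with surplus quantities (which are insensitive to the modular part of a $G_k$ function) rather than with the literal defining equalities of a strongly $(A,B,C)$-function (which the modular part disturbs). First I would unpack the hypothesis $f \in G_k$ by writing $f = h + m_+$ with $h \in F_k$ and $m_+ \in M_+$. The top-level ($j=0$) clause of Definition~\ref{definition:F} says $h$ is a strongly $(V_1,V_2,V_3)$-function, so by Lemma~\ref{lemma:abc} we have $h(V_1)=h(V_2)=h(V_3)>0$ and $I_h(V_i;V_j)=0$ for $i\neq j$. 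Since adding a non-negative modular function leaves every surplus unchanged and can only increase valuations, it follows that $I_f(V_i;V_j)=I_h(V_i;V_j)=0$ and $f(V_i)=h(V_i)+m_+(V_i)>0$ for each $i$. These two facts---zero pairwise surplus and strict positivity on each $V_i$---are exactly what is needed to peel off the outer concave function.

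Next, for a fixed $i$ I would pick any $j\neq i$ and apply Lemma~\ref{thm:when_concave_is_lin_early} with the polymatroid $f'$ playing the role of its ``$h$'', the concave $\phi$ playing the role of its ``$\phi$'', $g=\phi(f')=f$, and the disjoint sets $A=V_i$, $B=V_j$. Because $f(V_i)>0$, $f(V_j)>0$, and $I_f(V_i;V_j)=0$, that lemma produces a constant $\gamma_i>0$ with $f(X)=\gamma_i\, f'(X)$ for all $X\subseteq V_i\cup V_j$; restricting to $X\subseteq V_i$ gives $f'_{V_i}=\gamma_i^{-1} f_{V_i}$.

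Finally I would verify that the right-hand side lies in $G_{k-1}$. Restricting $f=h+m_+$ to $V_i$ gives $f_{V_i}=h_{V_i}+(m_+)_{V_i}$; by the second part of Lemma~\ref{lemma:Fbasic} (which needs $k\geq 2$) we have $h_{V_i}\in F_{k-1}$, while $(m_+)_{V_i}$ is a non-negative normalized modular function on $V_i$, so $f_{V_i}\in G_{k-1}$. Since $F_{k-1}$ is closed under multiplication by positive scalars (first part of Lemma~\ref{lemma:Fbasic}) and $M_+$ is a cone, $G_{k-1}$ is closed under positive scaling; hence $f'_{V_i}=\gamma_i^{-1} f_{V_i}\in G_{k-1}$. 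Repeating this for each $i\in\set{1,2,3}$ completes the argument.

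The step I expect to demand the most care is the first one. Unlike a member of $F_k$, a $G_k$ function need not satisfy the defining equalities of a strongly $(V_1,V_2,V_3)$-function (the modular term breaks, e.g., $f(V_1\cup V_2\cup V_3)=f(V_1\cup V_2)$), so the ``strongly $(A,B,C)$'' hypothesis of Lemma~\ref{lemma:strongly_w_concave_comp} cannot be invoked verbatim. What does survive intact is precisely the pairwise-surplus and positivity data, and this is exactly why routing the argument through Lemma~\ref{thm:when_concave_is_lin_early}---whose hypotheses are stated in terms of $I_g(A;B)=0$ and $g(A),g(B)>0$ rather than the $(A,B,C)$ identities---is the correct move.
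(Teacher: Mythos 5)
Your proposal is correct and follows essentially the same route as the paper's own proof: both derive $I_f(V_i;V_j)=0$ and $f(V_i)>0$ from the surplus-invariance of modular additions, invoke Lemma~\ref{thm:when_concave_is_lin_early} to conclude $f_{V_i}=\gamma f'_{V_i}$, and then restrict the $F_k$-plus-modular decomposition to $V_i$ via Lemma~\ref{lemma:Fbasic}. The only cosmetic difference is that the paper writes the decomposition using the operators $\mathcal T f \in F_k$ and $\mathcal M f \in M_+$, whereas you take $f = h + m_+$ directly from the definition of $G_k$; these coincide by the uniqueness of the totally normalized decomposition, so nothing substantive differs.
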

\begin{proof}
Since $f \in G_k$, we have that 
we have $I_f(V_i;V_j) = 0$, 
for $i,j \in \set{1,2,3}$, $i\neq j$, 
while $g(V_i) = g(V_j) > 0$. This, Lemma~\ref{thm:when_concave_is_lin_early}, means that $\alpha$, the last
linear point of $\phi$, must be no less than $f'(V_i,V_j)$. Hence,
$f_{V_i} = \gamma f'_{V_i}$ for $i \in \set{1,2,3}$ and for some
constant $\gamma > 0$. 

Since $f = \mathcal M f + \mathcal T f$ and $f \in G_k$,
$\mathcal T f \in F_k$ and $\mathcal M f \in M_+$. Thus,
$(\mathcal T f)_{V_i} \in \mathcal F_{k-1}$ by
Lemma~\ref{lemma:Fbasic}, and we
also have that $(\mathcal M f)_{V_i} \in M_+$. 
Hence, since $f_X = (\mathcal M f)_X + (\mathcal T f)_X$ for
any $X \subseteq V$, we have
$f'_{V_i} = \frac{1}{\gamma}( (\mathcal M f)_{V_i} + (\mathcal T f)_{V_i}) \in G_{k-1}$.
\end{proof}

\begin{theorem}
\label{theo:Fk_k-1}
Any $f\in G_{k}$ can not be expressed via a $(k-1)$-layer DSF having $m_\pm = 0$.
\end{theorem}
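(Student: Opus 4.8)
The plan is to prove the statement by induction on $k$, peeling off the single concave unit at the root of the hypothesized DSF and using the two structural lemmas \ref{lemma:g_k_concave_composition} and \ref{lemma:gk_sum_decomp} to push membership in $G_{k-1}$ down onto a sub-DSF supported on one of the three top-level blocks $V_i$. For the base case $k=1$, I would note that a $0$-layer DSF with $m_\pm = 0$ has no concave unit and no final modular term, so it is identically zero, whereas any $f \in G_1$ restricts on its $F_1$-part to a strong $(V_1,V_2,V_3)$-function and therefore satisfies $f(V) > 0$. Thus no $f \in G_1$ is the zero function, which settles $k=1$.

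For the inductive step, assume the claim at level $k-1$ and suppose for contradiction that some $f \in G_k$ (with $|V| = 3^k$) is realized by a $(k-1)$-layer DSF having $m_\pm = 0$. Reading off the computation at the root $\rtnd$, I would write $f(A) = \phi_\rtnd(h(A))$ with $h(A) = \sum_{u \in \text{pa}(\rtnd)\setminus V} w_{\rtnd u}\, g_u(A) + m_\rtnd(A)$, where each $g_u$ is the polymatroidal sub-DSF rooted at a non-ground parent $u$, $m_\rtnd$ is the non-negative internal modular term, and $\phi_\rtnd$ is normalized monotone non-decreasing concave. Since $h$ is a conic combination of polymatroid functions plus a non-negative modular function it is itself polymatroidal, so Lemma \ref{lemma:g_k_concave_composition} applies to $f = \phi_\rtnd(h)$ and yields $h_{V_i} \in G_{k-1}$ for each $i \in \set{1,2,3}$.

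Fixing such an $i$, restriction distributes over the sum, giving $h_{V_i} = \sum_{u} w_{\rtnd u}\,(g_u)_{V_i} + (m_\rtnd)_{V_i}$, and I would apply Lemma \ref{lemma:gk_sum_decomp} to the membership $h_{V_i} \in G_{k-1}$. The modular summand $(m_\rtnd)_{V_i}$ lies in $M_+$, so at least one term $w_{\rtnd u}(g_u)_{V_i}$ must lie in $G_{k-1}$; this forces $w_{\rtnd u} > 0$, and since $G_{k-1}$ is closed under positive scaling (by the scaling part of Lemma \ref{lemma:Fbasic}), I obtain $(g_u)_{V_i} \in G_{k-1}$ for some non-ground parent $u$ of $\rtnd$. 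Because $u$ is a parent of the root in a $(k-1)$-layer DSF, the sub-DSF $g_u$ uses at most $k-2$ layers and carries no final arbitrary modular term (all of its internal modular parts are non-negative), and restricting its ground set to $V_i$ only zeroes out the inputs outside $V_i$ while preserving the layered structure. Hence $(g_u)_{V_i}$ is a $(k-2)$-layer DSF with $m_\pm = 0$ on the ground set $V_i$ of size $3^{k-1}$, carrying exactly the recursive block structure of $G_{k-1}$ — contradicting the induction hypothesis and completing the step.

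The step I expect to be the main obstacle is the layer bookkeeping at the end: making precise that moving to a parent of the root strictly lowers the layer count (so that a $(k-1)$-layer DSF produces $g_u$ of at most $k-2$ layers), and that restriction to $V_i$ introduces neither a new arbitrary modular term nor additional layers. This requires fixing the definition of ``number of layers'' as the longest root-to-ground path of concave units and observing that a DSF of fewer than $k-2$ layers can be padded to exactly $k-2$ by a linear top unit, so that the induction hypothesis — phrased for $(k-2)$-layer DSFs — applies without gaps.
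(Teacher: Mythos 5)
Your inductive step is essentially the paper's own argument: peel off the root concave unit to write $f = \phi_\rtnd(h)$ with $h$ polymatroidal, apply Lemma~\ref{lemma:g_k_concave_composition} to get $h_{V_i} \in G_{k-1}$, apply Lemma~\ref{lemma:gk_sum_decomp} to the decomposition of $h_{V_i}$ to extract one summand lying in $G_{k-1}$, and observe that this summand is (a positive multiple of) the restriction of a sub-DSF with at most $k-2$ layers and no final modular term, contradicting the induction hypothesis. Your bookkeeping --- carrying the root-level modular term $m_\rtnd$, the weight $w_{\rtnd u}$, the positive-scaling closure of $G_{k-1}$, and the fact that restriction to $V_i$ preserves the layered structure --- is in fact more explicit than the paper's, which compresses the last point into the remark that $f_i$ is ``more complex'' than $f_{i,V_j}$.

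The one genuine gap is the base case. The paper's convention is that a zero-layer DSF is a modular function; removing the final arbitrary modular term $m_\pm$ still leaves the non-negative bottom-layer weights, so a zero-layer DSF with $m_\pm = 0$ is a non-negative modular function, not (as you assert) the identically-zero function. Hence showing that every $f \in G_1$ satisfies $f(V) > 0$ does not settle $k=1$: a nonzero non-negative modular function also satisfies this, so your argument would fail to exclude it. What is needed, and what the paper uses, is that no $f \in G_1$ is modular at all. This follows from surplus: writing $f = f_1 + m_+$ with $f_1 \in F_1$ a strong $(V_1,V_2,V_3)$-function, Lemma~\ref{lemma:abc} gives $f_1(V_1) = f_1(V_2) = f_1(V_3) > 0$, so
\begin{align}
\surp_f(V) = \surp_{f_1}(V) = f_1(V_1) + f_1(V_2) + f_1(V_3) - f_1(V_1 \cup V_2 \cup V_3) = f_1(V_1) > 0,
\end{align}
where the first equality holds because adding a modular function does not change surplus, and the last because $f_1(V_1 \cup V_2 \cup V_3) = f_1(V_1) + f_1(V_2)$ by the $(V_1,V_2,V_3)$-function property. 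Since every modular function has zero surplus on every set, no $f \in G_1$ can be expressed by any zero-layer DSF, with or without $m_\pm$. With this one-line repair your proof is complete and coincides with the paper's.
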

\begin{proof}
We prove this by induction.

To establish the base case, all $f\in G_{1}$ can not be expressed via
a $0$-layer DSF since a $0$-layer DSF is modular while any $f \in G_1$
is not modular since there are sets that have strictly positive
surplus. Hence, the induction step assumes that any $f\in G_{k-1}$ can
not be expressed via a $(k-2)$-layer DSF for $k\geq 2$.

Next, suppose we find a $f\in G_{k}$ where $f$ can be expressed by a
$(k-1)$-layer DSF. Hence, we can express $f=\phi(f')$ where
$\phi(\cdot)$ is concave and where $f'=\sum_{i=1}^m f_i$.  Since $f$
is a $(k-1)$-layer DSF, then for all $i$, $f_i$ is a $(k-2)$-layer
DSF.

We may w.l.o.g., assume that $f_i(V)>0$ for all $i$ (since if for any
$i$ we have $f_i(V) = 0$, then it contributes nothing to the function
for any $A \subseteq V$ by monotonicity and non-negativity).  By
Lemma~\ref{lemma:g_k_concave_composition}, we have that
$f'_{V_1},f'_{V_2},f'_{V_3}\in G_{k-1}$.  For $j \in \set{ 1,2,3}$, we
have that $f'_{V_j} = \sum_{i=1}^m f_{i,V_j}$, and by
Lemma~\ref{lemma:gk_sum_decomp}, for all $i=1,2,\ldots, m$ and
$j \in \set{ 1,2,3}$, we have that $f_{i,V_j}\in G_{k-1} \cup
M_+$. Also, for each $j \in \set{ 1,2,3}$, there is at least one $i$
where $f_{i,V_j}\in G_{k-1}$.  For these instances, by the induction
step, $f_{i,V_j}$ can not be expressed in $(k-2)$-layer DSF. Since
$f_i$ is more complex than $f_{i,V_j}$, $f_i$ also can not be expressed
using a $(k-2)$-layer DSF, which contradicts the
above statement that $f_i$ is a $(k-2)$-layer DSF.

Hence, we can not find an $f\in G_{k}$ that can be expressed as a
$(k-1)$-layer DSF.
\end{proof}

The above results immediate imply our main theorem.
\begin{theorem}
There are $k$-layer DSFs that cannot be expressed using $k'$-layer DSFs
for any $k' < k$.
\end{theorem}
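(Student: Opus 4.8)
The plan is to produce, for each fixed $k$, one explicit witness function and then reduce the entire claim to Theorem~\ref{theo:Fk_k-1}, which has already done all the real work. The witness is the function $\hat f_k$ built in the example just before Lemma~\ref{lemma:F_F_prime}: it is recorded there to lie in $F_k$ and to be expressible as a $k$-layer DSF, so the membership $\hat f_k \in \text{DSF}_k$ is free. Consequently the theorem reduces to showing $\hat f_k \notin \text{DSF}_{k'}$ for every $k' < k$, i.e.\ to a single non-representability statement about $\hat f_k$.

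First I would collapse the range ``for all $k' < k$'' to the single case $k' = k-1$. This is immediate from the layer-monotonicity $\text{DSF}_{k-1} \subseteq \text{DSF}_k$ noted at the start of Section~\ref{sec:textdsf_k-1-subset} (a shallower DSF is realized by a deeper one using identity/linear units at the extra layers), which iterates to $\text{DSF}_{k'} \subseteq \text{DSF}_{k-1}$ for every $k' \leq k-1$. So it suffices to assume for contradiction that $\hat f_k$ is a $(k-1)$-layer DSF. I would then write this hypothesized representation as $\hat f_k = \bar f + m_\pm$, where $\bar f$ is the purely polymatroidal $(k-1)$-layer recursion, i.e.\ a $(k-1)$-layer DSF with $m_\pm = 0$. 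Because every member of $F_k$ is totally normalized (the text verifies $\hat f_k(v \mid V \setminus \set{v}) = 0$ for all $v$), the final modular term must satisfy $m_\pm \leq 0$; otherwise some ground element would acquire strictly positive gain in context $V \setminus \set{v}$. Setting $m_+ = -m_\pm \geq 0$ gives $\bar f = \hat f_k + m_+$ with $\hat f_k \in F_k$ and $m_+ \in M_+$, so $\bar f \in G_k$ by definition. But $\bar f$ is by construction a $(k-1)$-layer DSF with $m_\pm = 0$, directly contradicting Theorem~\ref{theo:Fk_k-1}. This contradiction yields $\hat f_k \notin \text{DSF}_{k-1}$, hence $\hat f_k \notin \text{DSF}_{k'}$ for all $k' < k$, and the theorem follows.

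I expect essentially no mathematical obstacle, since the genuine content — the recursive ``backprop'' argument that $G_k$ functions resist a $(k-1)$-layer representation — is already discharged in Theorem~\ref{theo:Fk_k-1} and its supporting Lemmas~\ref{lemma:gk_sum_decomp} and~\ref{lemma:g_k_concave_composition}. The only step needing care is the total-normalization reduction: one must invoke the uniqueness of the decomposition $f = \mathcal T f + \mathcal M f$ to pin down the sign of $m_\pm$ and to confirm that $\bar f = \hat f_k + m_+$ genuinely lands in $G_k$ rather than merely being some other polymatroid-plus-modular. Once that bookkeeping is in place, this final theorem is a clean corollary of Theorem~\ref{theo:Fk_k-1}.
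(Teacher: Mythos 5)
Your proposal is correct and follows essentially the same route as the paper: the paper's own (largely implicit) proof is exactly the chain you spell out --- take $\hat f_k \in F_k \cap \text{DSF}_k$ as the witness, use total normalization of $F_k$ members to force the final modular term of any hypothesized shallower representation to be non-positive, absorb it as $\bar f = \hat f_k + m_+ \in G_k$, and contradict Theorem~\ref{theo:Fk_k-1}. Your explicit reduction of all $k' < k$ to $k' = k-1$ via $\text{DSF}_{k'} \subseteq \text{DSF}_{k-1}$ is also exactly the monotonicity the paper invokes at the start of Section~\ref{sec:textdsf_k-1-subset}.
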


Letting $\text{DSF}_k$ be the family of $k$-layer DSFs, it is
interesting to consider what happens with
$\lim_{k \to \infty} \text{DSF}_k$. To show the above result, we
needed for the ground set to grow exponentially with $k$ which means
that for the flexibility of DSFs to grow, we need an ever increasing
ground set.  It remains an open question to determine if, when the
ground set size is constant and fixed, if $\text{DSF}_k$ comprises
a larger family, or if expressing certain $\text{DSF}_k$s with
$k-1$ layers requires an exponential number of hidden units, analogous
to~\cite{DBLP:journals/corr/EldanS15}.

\subsection{The Family of Submodular Functions is Strictly Larger than DSFs}
\label{sec:dsfs-cannot-do}


Our next result shows that, while DSFs are richer than SCMMs, and the
DSF family grows with the number of layers, they still do not
encompass all polymatroid functions. We show this by proving that the
cycle matroid rank function on $K_4$ is not achievable with DSFs.  We
adopt the idea of the backpropagation style proof in
Lemma~\ref{lemma:laminar_matroid_cannot_do_cycle_graphic} and utilize
the form of DSF given in Eqn.~\eqref{eq:gen_deep_submodular_recursive}
where we strip off the DSF layer-by-layer until we reach a one-layer
DSF that, as is shown, is unable to represent a cycle matroid rank
over $K_4$. In particular, we backpropagate a necessary lack of
surplus, a required linearity, and also a required pairwise surplus, from
the root down to the very first layer. This shows that, for up to size
three sets, the DSF must be similar to a mixture of concave over
modular, and which then is unable to maintain a pairwise surplus
necessary for the cycle matroid rank function. 

The reader is encouraged to review the notation in
Equation~\eqref{eq:gen_deep_submodular_recursive}.  We start with a
number of lemmas that culminate in
Theorem~\ref{thm:dfs_cannot_do_all_polymatroids}.


By applying Lemma~\ref{thm:lindef} and Theorem~\ref{thm:comp_pres_surp}
recursively according to a DSF's DAG, there are some important and
powerful implications for DSF with positive weights. Firstly, if we
ever find an internal network node and corresponding set in surplus,
it means some surplus is preserved all the way to the
root. Correspondingly, any set $A$ not in surplus by the network as a
whole must not be in surplus at any internal node. This allows us to
place constraints at one part of the network to cause consequences at
distant points (i.e., many layers away) elsewhere in the network.  For
a DSF (or SCMM), once a node is in surplus, there is no way to recover
anywhere else in the network (since there are no zero weights).  We
formalize this in the following:
\begin{corollary}[Preservation of Surplus]
If $\surp_{\psi_u}(A) > 0$ for some internal node $u$ in the DSF,
then $\surp_{\psi_v}(A) > 0$
where $v$ is a higher node (closer to the root $\rtnd$). 
In other words, if there
is no surplus at the higher node $v$ for some $A$, there can be no surplus at
any lower internal node in a DSF. This is also true for grouped surplus
(Definition~\ref{def:pairdef}).
\label{thm:presdef}
\end{corollary}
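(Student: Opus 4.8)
The plan is to reduce the claim to a single edge of the DAG and then propagate it by induction along the directed path from $u$ to $v$. The crucial structural observation from Equation~\eqref{eq:gen_deep_submodular_recursive} is that, writing $g_v(A) = \psi_v(\mathbf 1_A)$ and $\bar\varphi_v(A) = \varphi_v(\mathbf 1_A)$, each node function is assembled from its parents by exactly two operations: a conic combination with a non-negative modular term,
$$\bar\varphi_v(A) = \sum_{u \in \text{pa}(v) \setminus V} w_{vu}\, g_u(A) + \langle m_v, \mathbf 1_A\rangle,$$
followed by a concave composition $g_v(A) = \phi_v(\bar\varphi_v(A))$. Each $g_u$ is a polymatroid function (by recursive application of Theorem~\ref{thm:concave_over_polymatroidal}), so $\bar\varphi_v$ is a conic combination of polymatroid functions plus a non-negative modular function, hence itself polymatroidal; this is exactly what lets me invoke Theorem~\ref{thm:comp_pres_surp} at each step.

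First I would handle the base case where $u \in \text{pa}(v)$ is a direct parent. By Lemma~\ref{thm:lindef} (extended to finitely many terms) together with the fact that modular functions contribute no surplus,
$$\surp_{\bar\varphi_v}(A) = \sum_{u' \in \text{pa}(v) \setminus V} w_{vu'}\,\surp_{g_{u'}}(A).$$
Since all weights are strictly positive and every term $\surp_{g_{u'}}(A) \geq 0$ (each $g_{u'}$ being normalized submodular), the single strictly positive term $w_{vu}\,\surp_{g_u}(A) > 0$ forces $\surp_{\bar\varphi_v}(A) > 0$. Then Theorem~\ref{thm:comp_pres_surp}, applied with $h = \bar\varphi_v$ polymatroidal and $\phi = \phi_v$ a normalized monotone non-decreasing concave function that is not identically zero, yields $\surp_{g_v}(A) > 0$.

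Next I would extend to an arbitrary higher node $v$. Since $v$ is closer to the root $\rtnd$ than $u$, there is a directed path $u = u_0 \to u_1 \to \cdots \to u_\ell = v$ in the DAG with each $u_j \in \text{pa}(u_{j+1})$. Applying the single-edge argument repeatedly propagates the strict surplus from $u_0$ up to $u_\ell = v$; cleanly, this is induction on the topological rank of the node. The contrapositive statement (no surplus at $v$ implies none at any lower internal node) is then immediate. For the grouped-surplus version I would rerun the identical argument after recalling that $I_f^{(m)}(A_1;\dots;A_m)$ is just the ordinary surplus with each $A_i$ treated as a single merged element, so linearity still applies; the concave-composition step is then supplied by Proposition~\ref{thm:comppres_gsurp} in place of Theorem~\ref{thm:comp_pres_surp}.

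The main obstacle, though a mild one, is verifying the hypotheses of Theorem~\ref{thm:comp_pres_surp} at every intermediate node: namely that $\bar\varphi_v$ is genuinely polymatroidal (normalized, monotone non-decreasing, submodular) and that each $\phi_v$ is not identically zero. The former follows from closure of polymatroid functions under conic combination and under addition of a non-negative modular function; the latter is part of the standing definition of a DSF. The one structural assumption that must be made explicit is the strict positivity of the weights $w_{vu} > 0$ along the path, which is precisely what prevents surplus from being zeroed out at an intermediate node and is the point flagged in the discussion preceding the corollary.
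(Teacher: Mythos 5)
Your proof is correct and takes essentially the same route as the paper, whose entire argument is that the corollary follows by applying Lemma~\ref{thm:lindef} and Theorem~\ref{thm:comp_pres_surp} recursively along the DSF's DAG (with Proposition~\ref{thm:comppres_gsurp} handling the grouped case); your write-up simply makes the edge-by-edge induction and the hypothesis checks explicit. Your flagged assumption of strictly positive weights along the path is likewise the paper's own standing assumption, stated in its remark that surplus cannot be recovered ``since there are no zero weights.''
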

This result immediately follows Theorem~\ref{thm:comp_pres_surp}.
This means that zero surplus at the root $\surp_{\psi_\rtnd}(A) = 0$
on a set $A$ means all internal nodes must also have zero surplus on
$A$. For an SCMM, it means that if one term is in surplus then the sum
must also be in surplus.  This is a crucial result used in
Theorem~\ref{thm:dfs_cannot_do_all_polymatroids}.

\begin{corollary}[Modular on 3-Cycle]
  Let $f: 2^V \to \mathbb R$ be a DSF in the above form using the
  above notation, and assume $f(A) = r(A)$ where $r$ is a cycle
  matroid rank function over the edges of $K_4$.  Then for any
  $v \in \mathbf V$ and any 3-cycle $C = \set{\aEL,\bEL,\cEL}$ having
  $g_v(\aEL) = \psi_v(\mathbf 1_\aEL) = 0$, then
  $\surp_{g_v}(\set{\aEL,\bEL,\cEL}) = 0$ (i.e., $g_v$ is modular at the cycle
  $C$).
\label{thm:forced_modular_on_cycle}
\end{corollary}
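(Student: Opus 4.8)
The plan is to verify the hypotheses of the Forced Separation lemma (Lemma~\ref{thm:forced_modular_on_cycle_early}) for the choice $h = g_v$ together with the three singleton groups $\set{\aEL}$, $\set{\bEL}$, $\set{\cEL}$. That lemma requires $g_v$ to be polymatroidal, all three pairwise grouped surpluses $I_{g_v}(\aEL;\bEL)$, $I_{g_v}(\bEL;\cEL)$, $I_{g_v}(\cEL;\aEL)$ to vanish, and $g_v(\aEL)=0$; its conclusion $I_{g_v}(\aEL;\bEL;\cEL)=0$ is exactly $\surp_{g_v}(\set{\aEL,\bEL,\cEL})=0$. The last hypothesis is given, and polymatroidality of $g_v$ is immediate from the recursive DSF construction together with Theorem~\ref{thm:concave_over_polymatroidal}, so the crux is establishing the three pairwise surpluses at the arbitrary internal node $v$.

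First I would establish the pairwise conditions at the root and then backpropagate them. Since $K_4$ is simple, any two distinct edges $\aEL,\bEL$ are independent, so $r(\set{\aEL,\bEL})=2=r(\aEL)+r(\bEL)$, i.e.\ $I_r(\aEL;\bEL)=0$, and likewise for the other two pairs of the three-cycle. Because grouped surplus is immune to the additive modular term (Surplus is Immune to Modularity) and $f=g_\rtnd+m_\pm=r$, we get $I_{g_\rtnd}(\aEL;\bEL)=I_r(\aEL;\bEL)=0$, and similarly $I_{g_\rtnd}(\bEL;\cEL)=I_{g_\rtnd}(\cEL;\aEL)=0$.

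Next I would invoke Preservation of Surplus (Corollary~\ref{thm:presdef}) in its contrapositive, grouped form: since the root $\rtnd$ is the highest node and carries zero grouped surplus on each of these disjoint singleton pairs, every lower internal node, in particular $v$, must also carry zero grouped surplus on those pairs, so $I_{g_v}(\aEL;\bEL)=I_{g_v}(\bEL;\cEL)=I_{g_v}(\cEL;\aEL)=0$. With the three pairwise surpluses nulled and $g_v(\aEL)=0$ supplied by hypothesis, Lemma~\ref{thm:forced_modular_on_cycle_early} yields $\surp_{g_v}(\set{\aEL,\bEL,\cEL})=I_{g_v}(\aEL;\bEL;\cEL)=0$, as desired.

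I expect the main obstacle to be bookkeeping rather than conceptual: one must be careful that Preservation of Surplus transfers the pairwise (two-way) grouped surplus, not merely single-set surplus, and that the zero-surplus information truly originates at the root where $f$ equals $r$ (using modular immunity to discard $m_\pm$). The simplicity of $K_4$ (no parallel edges or loops, so every pair of distinct edges has rank two) is what guarantees the pairwise conditions hold for \emph{every} pair in the three-cycle; without it the backpropagation step would fail. The hypothesis $g_v(\aEL)=0$ is precisely the ingredient that lets Forced Separation upgrade pairwise vanishing to three-way modularity, and it is also what correctly excludes the root itself, where $g_\rtnd(\aEL)=r(\aEL)=1\neq 0$ and indeed $\surp_{g_\rtnd}(\set{\aEL,\bEL,\cEL})=1$.
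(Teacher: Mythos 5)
Your proposal is correct and follows essentially the same route as the paper: the paper's proof likewise reduces the claim to the Forced Separation lemma (Lemma~\ref{thm:forced_modular_on_cycle_early}) with $A=\set{\aEL}$, $B=\set{\bEL}$, $C=\set{\cEL}$ and $h=g_v$ polymatroidal. The only difference is that the paper treats the pairwise hypotheses $I_{g_v}(\aEL;\bEL)=I_{g_v}(\bEL;\cEL)=I_{g_v}(\cEL;\aEL)=0$ as immediate from context, whereas you verify them explicitly (via $I_r(\cdot;\cdot)=0$ for distinct edges of $K_4$, immunity of surplus to $m_\pm$, and the grouped form of Corollary~\ref{thm:presdef}), which is a sound and welcome completion of that step.
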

\begin{proof}
This follows immediately 
from Lemma~\ref{thm:forced_modular_on_cycle_early}
where the three cycle consists of
edges $\set{\aEL,\bEL,\cEL}$
with $A = \set{ \aEL }$,
$B = \set{ \bEL }$,
$C = \set{ \cEL }$,
and $h = g_v$ which must be polymatroidal in a DSF for any $v \in \mathbf V$.
\end{proof}
%

\begin{lemma}[Linear Part of Hidden Units]
Let $f: 2^V \to \mathbb R$ be a DSF in the above form using
the above notation, and assume $f(A) = r(A)$ where $r$ is a
cycle matroid rank function over the edges of $K_4$. 
We are given any $v \in \mathbf V$, any 3-cycle $C = \set{a,b,c}$,
and any $e' \notin C$ having $g_v(e') > 0$, $g_v(C) > 0$,
and $I_{g_v}(e';C) = 0$. Then any surplus $\surp_{g_v}(C) > 0$ 
given to $C$ is not due to any non-linearity 
in $\phi_v(\cdot)$ and instead is caused by $\varphi_v(\cdot)$.
\label{thm:when_concave_is_lin}
\end{lemma}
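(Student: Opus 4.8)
The plan is to obtain this as an immediate specialization of Lemma~\ref{thm:when_concave_is_lin_early} (``When Concave Composition Is Linear'') applied at the single node $v$. First I would peel apart the definitions at $v$: by Equation~\eqref{eq:gen_deep_submodular_recursive_a} we have $g_v(A) = \psi_v(\mathbf 1_A) = \phi_v(\varphi_v(\mathbf 1_A))$, so introducing the ``pre-activation'' set function $h_v(A) \triangleq \varphi_v(\mathbf 1_A)$ exhibits $g_v = \phi_v \circ h_v$, i.e.\ a scalar concave function $\phi_v$ composed with a set function $h_v$. This is exactly the $g = \phi(h)$ structure demanded by the earlier lemma.

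Second, I would check that the hypotheses of Lemma~\ref{thm:when_concave_is_lin_early} are met. By Equation~\eqref{eq:gen_deep_submodular_recursive_b}, $h_v(A) = \sum_{u \in \text{pa}(v) \setminus V} w_{vu}\, g_u(A) + \langle m_v, \mathbf 1_A \rangle$ is a conic combination (recall $w_{vu} \geq 0$) of the lower-node functions $g_u = \psi_u(\mathbf 1_\cdot)$, each of which is polymatroidal, plus the non-negative modular term $\langle m_v, \mathbf 1_\cdot \rangle$. Since conic combinations of polymatroid functions are polymatroidal, $h_v$ is a normalized monotone non-decreasing submodular function. Moreover $\phi_v$ is a normalized non-decreasing concave function that is not identically zero, because $g_v(C) = \phi_v(h_v(C)) > 0$ by hypothesis.

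Third, I would apply Lemma~\ref{thm:when_concave_is_lin_early} with the substitution $A := C$, $B := \set{e'}$, $h := h_v$, $\phi := \phi_v$, and $g := g_v$. The sets $C$ and $\set{e'}$ are disjoint since $e' \notin C$, and the remaining three hypotheses $g_v(C) > 0$, $g_v(e') > 0$, and $I_{g_v}(e'; C) = 0$ are precisely what we are given. The lemma then yields that the surplus $\surp_{g_v}(C) > 0$ is attributable entirely to $h_v = \varphi_v(\mathbf 1_\cdot)$ and not to any non-linearity of $\phi_v$ --- indeed it forces $\phi_v$ to be linear on $[0,\, h_v(C \cup \set{e'})]$ --- which is exactly the assertion that the surplus is caused by $\varphi_v(\cdot)$ rather than by $\phi_v(\cdot)$. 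The only real work is the bookkeeping in the second step confirming that $\varphi_v(\mathbf 1_\cdot)$ is polymatroidal; once that identification is made, no genuinely new difficulty arises and the statement follows by direct instantiation of the earlier lemma.
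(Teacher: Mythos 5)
Your proposal is correct and follows essentially the same route as the paper's own proof: both instantiate Lemma~\ref{thm:when_concave_is_lin_early} with $g = g_v$, $h(X) = \varphi_v(\mathbf 1_X)$, and $\phi = \phi_v$, after noting that the non-negative weights and non-negative modular term make $\varphi_v(\mathbf 1_\cdot)$ polymatroidal, concluding that $\phi_v$ is linear on $[0, \varphi_v(\mathbf 1_C) + \varphi_v(\mathbf 1_{e'})]$. The only (immaterial) difference is that you take $(A,B) = (C, \set{e'})$ while the paper takes $(A,B) = (\set{e'}, C)$, which changes nothing since the earlier lemma's hypotheses and linearity conclusion are symmetric in $A$ and $B$.
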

\begin{proof}
Thus, since $w_{uv} \geq 0$ for all $u \in \text{pa}(v) \setminus V$,
and the modular part of $\varphi_v$ does not change pairwise surplus,
we may apply Lemma~\ref{thm:when_concave_is_lin_early}
with $g(X) = g_v(X)$, $h(X) = \varphi_v(\mathbf 1_X)$,
$A = \set{ e' }$, and $B = C$,
which means
the linear range of $\phi_v$
must include $[0,\varphi_v(\mathbf 1_C) + \varphi_v(\mathbf 1_{e'})]$.
\end{proof}

\begin{lemma}[Decomposition of sets of three-cycles]
  Let $f: 2^V \to \mathbb R$ be a DSF in the above form using the
  above notation, and assume $f(A) = r(A)$ where $r$ is a cycle
  matroid rank function over the edges of $K_4$.  We are given any $v
  \in \mathbf V$, and a subset $\text{cid}(v) \subseteq \set{1,2,3,4}$ of
  indices of the four three-cycles ($C_1$, $C_2$, $C_3$, and $C_4$) of
  the matroid where $|\text{cid}(v)| \geq 2$ and where
the following is true:
\label{thm:decomposition_of_sets_of_three_cycles}
\begin{enumerate}
\item For $i \in \text{cid}(v)$, $g_v(C_i) > 0$,
\item for $e \in \cup_{i \in \text{cid}(v)} C_i$, $g_v(e) > 0$,
\item for $e \notin \cup_{i \in \text{cid}(v)} C_i$, $g_v(e) = 0$,
\item and for $i \in \text{cid}(v)$, 3-cycle $C_i$ and $e \in C_i$,
  we have $I_{g_v}(e ; C_i \setminus \set{e}) =
  g_v(e) - g_v(e | C_i \setminus \set{e}) = g_v(e)$.

\end{enumerate}
Then we may for all $X$ of size up to three write $g_v(X)$
as 
\begin{align}
g_v(X) = \sum_{u \in U} w_u g_u(X)
\end{align}
with $w_u \geq 0$ and where for all $u \in U = \text{pa}(v) \setminus V$,
there is a set of cycle indices $\text{cid}(u) \subseteq \text{cid}(v)$
having:
\begin{enumerate}
\item For $i \in \text{cid}(u)$, $g_u(C_i) > 0$,
\item for $e \in \cup_{i \in \text{cid}(u)} C_i$, $g_u(e) > 0$,
\item for $e \notin \cup_{i \in \text{cid}(u)} C_i$, $g_u(e) = 0$,
\item and for $i \in \text{cid}(u)$, 3-cycle $C_i$ and $e \in C_i$,
,  we have $I_{g_u}(e ; C_i \setminus \set{e}) =
  g_u(e) - g_u(e | C_i \setminus \set{e}) = g_u(e)$.
\end{enumerate}
If $u$ is a first-layer hidden unit in the DSF then $|\text{cid}(u)| = 1$.
\end{lemma}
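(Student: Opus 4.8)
The plan is to peel one layer off the recursion~\eqref{eq:gen_deep_submodular_recursive}: show that $\phi_v$ acts \emph{linearly} on every argument $\varphi_v(\mathbf 1_X)$ that a size-$\le 3$ set $X$ can produce, so that $g_v=\phi_v\circ\varphi_v$ collapses to a conic combination of the child functions $g_u$, and then verify that each child inherits the four cycle-properties. Throughout I may assume every weight $w_{vu}>0$ (zero-weight parents leave $g_v$ unchanged and may be dropped from $U$). I will also use repeatedly that, since $f=r$ is the $K_4$ cycle rank and $I_r(e';C)=0$ for every $3$-cycle $C$ and edge $e'\notin C$, Corollary~\ref{thm:presdef} forces $I_{g_v}(e';C)=0$ at every internal node, and likewise at every child $u$.

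First I would establish linearity. Fix any $3$-cycle $C$ with $g_v(C)>0$; since $|\text{cid}(v)|\ge 2$, property~2 supplies an edge $e'\notin C$ with $g_v(e')>0$ (take $e'$ in a second cid-cycle). With $I_{g_v}(e';C)=0$ in hand, Lemma~\ref{thm:when_concave_is_lin} (via Lemma~\ref{thm:when_concave_is_lin_early}) shows the surplus $g_v$ gives $C$ owes nothing to $\phi_v$: $\phi_v$ is linear, with a single common slope $\gamma_v>0$, on $[0,\varphi_v(\mathbf 1_C)+\varphi_v(\mathbf 1_{e'})]$. Taking the union of these intervals over all admissible $(C,e')$, a short counting argument on $K_4$ — every $3$-set $X$ has $\sum_{k=1}^4|X\cap C_k|=6$, so by pigeonhole meets some $3$-cycle in $\ge 2$ edges, while edges off $\bigcup_{i\in\text{cid}(v)}C_i$ carry $\varphi_v(\mathbf 1_{\cdot})=0$ — shows $\varphi_v(\mathbf 1_X)$ lies in the linear range for every $|X|\le 3$. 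Hence $g_v(X)=\gamma_v\,\varphi_v(\mathbf 1_X)=\gamma_v\big(\sum_u w_{vu}g_u(X)+m_v(X)\big)$ on all such $X$.

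Next I would kill the modular term and simultaneously push properties~3 and~4 to the children. Condition~3 gives $g_v(e)=0$ for $e\notin\bigcup_{i\in\text{cid}(v)}C_i$, and condition~4 gives $g_v(e\mid C_i\setminus e)=0$ for $e\in C_i$, $i\in\text{cid}(v)$; between them these cover all of $V$. Expanding each through $g_v=\gamma_v\varphi_v$ writes $0$ as a sum of the non-negative terms $\gamma_v w_{vu}g_u(\cdot)$ and $\gamma_v m_v(\cdot)$, so every summand vanishes: $m_v\equiv 0$, and for each child $g_u(e)=0$ off the cid-cycles and $g_u(e\mid C_i\setminus e)=0$ on them. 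With $m_v=0$ the formula reads $g_v(X)=\sum_u w_u g_u(X)$ with $w_u=\gamma_v w_{vu}\ge 0$, as required. Now set $\text{cid}(u)=\{\,i\in\text{cid}(v):g_u(C_i)>0\,\}$; property~1 is definitional, $\text{cid}(u)\subseteq\text{cid}(v)$, and $g_v(C_i)=\sum_u w_u g_u(C_i)>0$ ensures each $i\in\text{cid}(v)$ is covered by some child. Property~3 holds because any $e$ with $g_u(e)>0$ lies in some cid-cycle (else $g_u(e)=0$), and property~4 was just derived. For property~2, if $g_u(e)=0$ for some $e\in C_i$ with $i\in\text{cid}(u)$, Corollary~\ref{thm:forced_modular_on_cycle} makes $g_u$ modular on $C_i$, which together with the saturation identities of property~4 forces $g_u(C_i)=0$, contradicting $i\in\text{cid}(u)$.

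Finally, for a first-layer unit $g_u=\phi_u(m_u(\cdot))$: property~4 gives $\phi_u(m_u(C_i))=\phi_u(m_u(C_i)-m_u(e))$ with $m_u(e)>0$, so $\phi_u$ is flat from just below $m_u(C_i)$ onward (a concave non-decreasing function that stops rising stays flat), saturating at the single value $g_u(C_i)$. If two cid-cycles $C_i,C_j$ both had positive value, pick $e'\in C_j\setminus C_i$: the preserved identity $I_{g_u}(e';C_i)=0$ gives $g_u(C_i\cup e')=g_u(e')+g_u(C_i)$, whereas saturation gives $g_u(C_i\cup e')=g_u(C_i)$, forcing $g_u(e')=0$ and contradicting property~2 (as $e'\in C_j$). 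Thus $|\text{cid}(u)|\le 1$, and after discarding units with $g_u\equiv 0$, exactly $1$. I expect the main obstacle to be the uniform linearity in the second paragraph: one application of Lemma~\ref{thm:when_concave_is_lin} only controls $\phi_v$ on a single cycle-plus-edge interval, so the argument must stitch these intervals together using the common initial slope $\gamma_v$ and confirm, through the $K_4$ cycle-covering count, that no size-$\le 3$ set escapes the resulting linear range; the first-layer saturation contradiction is the second delicate point.
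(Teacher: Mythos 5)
Your proof is correct and its skeleton matches the paper's: force $\phi_v$ to be linear on the valuations $\varphi_v(\mathbf 1_X)$ of all size-$\le 3$ sets, write $g_v=\gamma_v\varphi_v$ there, use the property-4 saturation identities to annihilate $m_v$ and transfer properties 1--4 to the children, then rule out $|\text{cid}(u)|>1$ at the first layer, all with the same three tools (Corollary~\ref{thm:presdef}, Lemma~\ref{thm:when_concave_is_lin}, Corollary~\ref{thm:forced_modular_on_cycle}). You deviate in two sub-steps, both legitimately. (i) To show no size-$\le 3$ set escapes the linear range, the paper argues by trichotomy: backpropagated zero surplus makes $g_v$ modular on pairs and on independent triples (hence $\phi_v$ linear on those valuations), Lemma~\ref{thm:when_concave_is_lin} covers the cid-cycles, and Corollary~\ref{thm:forced_modular_on_cycle} covers the remaining cycles. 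You keep only the cycle-plus-edge intervals and close the gap with the pigeonhole count $\sum_{k}|X\cap C_k|=6$ together with monotone subadditivity of $X\mapsto\varphi_v(\mathbf 1_X)$; this shortcut past the paper's steps 1, 2 and 4 does go through, but the case analysis you flag is real --- when the pigeonholed cycle $C_k$ or the leftover edge has $g_v$-value zero, you need the (implicitly used) fact that $g_v(Y)=0$ forces $\varphi_v(\mathbf 1_Y)=0$, which holds because a normalized nonnegative concave $\phi_v$ that is positive somewhere can vanish only at $0$. (ii) At the first layer the paper reaches its contradiction by forcing $\phi_u$ to be linear through $w_u(C_i)$, making $g_u$ modular on the cycle against Equation~\eqref{eq:allgainszero}, whereas you use flatness of $\phi_u$ beyond $m_u(C_i)-m_u(e)$ to get $g_u(C_i\cup e')=g_u(C_i)$ and contradict property 2; yours is a touch more direct. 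One small hole to patch: your justification of property 3 for a child $u$ needs the monotonicity step --- if $g_u(e)>0$ then $e\in C_i$ for some $i\in\text{cid}(v)$ and $g_u(C_i)\ge g_u(e)>0$, so $i\in\text{cid}(u)$ by your definition --- without it, $e$ could a priori lie in a cid$(v)$-cycle outside $\bigcup_{i\in\text{cid}(u)}C_i$.
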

\begin{proof}[Proof of Lemma~\ref{thm:decomposition_of_sets_of_three_cycles}.]

For clarity, we offer the proof as a series of numbered statements.

\begin{enumerate}

\item 
  $g_v(\cdot)$ has to be modular on any set up to
  size two, as otherwise an unrecoverable surplus will occur by
  Corollary~\ref{thm:presdef}.  This means that $\phi_v$ has to be
  linear up to any valuation of any size two set (i.e.,
  $\varphi_\rtnd(\mathbf 1_X)$ is still in the linear part of
  $\phi_\rtnd(\cdot)$ for any $X$ with $|X|=2$).

\item 

  For the same reason, the nonlinear part of $\phi_v(\cdot)$ must not
  start before the valuation $\varphi_v(1_X)$ for any $X$ with $|X|=3$
  not in surplus (i.e., with $\surp_{g_v}(X) = 0$, any matroid independent set of
  size three).

\item 

Since $|\text{cid}(v)|\geq 2$, 
for any $i,j \in \text{cid}(v)$, 
corresponding three-cycles $C_i$,$C_j$ and 
any element $e' \in C_j$ where $e' \notin C_i$,
we have 
by Corollary~\ref{thm:presdef}
that 
\begin{align}
I_{g_v}(e'; C_i) = g_v(e') - g_v(e' | C_i ) = 0. 
\end{align}
Therefore, since $g_v(e') > 0$
and $g_v(C_i) > 0$, 
by Lemma~\ref{thm:when_concave_is_lin} 
the non-linear part of $\phi_v$ must not start before the valuation
$\varphi_v(\mathbf 1_{C_i})$ for any $i \in \text{cid}(v)$. 




%

\item For any $i \notin \text{cid}(v)$, $\exists e \in C_i$ with
  $g_v(e) = 0$. By Corollary~\ref{thm:forced_modular_on_cycle},
  this means $g_v$ is modular at $C_i$. 

\item 

Considering the two previous statements, the non-linear part of
$\phi_v(\cdot)$ must not start before the valuation
$\varphi_v(\mathbf 1_X)$ for any set with $|X|=3$. 
Since such an $X$
is still in the linear part of $\phi_v$ we may write $g_v(\cdot)$
as:
\begin{align}
  g_v(X) = 
\alpha_v \varphi_v(\mathbf 1_X) = 
\sum_{u \in \text{pa}(v) \setminus V}
\alpha_v w_{uv} \psi_u( \mathbf 1_X) + \alpha_v \langle  m_v, \mathbf 1_X \rangle
\label{eq:linear_gv_form}
\end{align}
for any $X$ up to size three, for some appropriate positive
constant $\alpha_v \in \mathbb R_+$.

\item 
We are given that for any $i \in \text{cid}(v)$, 
3-cycle $C_i$, and any $e \in C_i$, 
\begin{align}
  I_{g_v}(e; C_i \setminus e) 
  = g_v(e) - g_v(e | C_i \setminus \set{ e }) 
  = g_v(e) > 0.
\end{align}


From the previous statements, however, the surplus of any such 3-cycle
is not addressed by any non-linearity in $\phi_v$ and must instead
be handled by $\varphi_v$ which, since $g_v(e) > 0$,
means that
\begin{align}
0 
= g_v(e | C_i \setminus \set{e})
= \sum_{u \in \text{pa}(v) \setminus V} \alpha_v w_{uv}
  \psi_u(\mathbf 1_e | \mathbf 1_{C_i \setminus \set{e}}) + m_v(e) 
\label{eq:allgainszero}
\end{align}
Since
$\psi_u(\mathbf 1_e | \mathbf 1_{C_i \setminus \set{e}}) \geq 0$, for
all $u \in \text{pa}(v) \setminus V$, this requires
$0 = \psi_u(\mathbf 1_e | \mathbf 1_{C_i \setminus \set{e}}) = g_u(e |
C_i \setminus \set{e})$. Since $m_v(e) \geq 0$. this also implies that
$m_v(e) = 0, \forall e \in C_i$. Since $g_v(e) = 0$ for
$e \notin \cup_{i \in \text{cid}(v) } C_i$, we have that
$m_v(e) = 0, \forall e \in V$.  Hence, the above establishes that for
all $u \in \text{pa}(v) \setminus V$:
\begin{align}
I_{g_u}(e; C_i \setminus \set{e})
= g_u(e) - g_u(e | C_i \setminus \set{e}) = g_u(e)
\label{eq:u_equal_cond}
\end{align}





Next we need to consider whether $g_u(e) = 0$ or not.

\item If there is a $u \in \text{pa}(v) \setminus V$ and
  corresponding $i \in \text{cid}(v)$, 3-cycle $C_i$ having
  $\psi_u(\mathbf 1_e) = 0$ for some $e \in C_i$, then
  Lemma~\ref{thm:forced_modular_on_cycle} means that $\psi_u()$ must
  be modular at $C_i$.  But then we must have $\psi_u(\mathbf 1_{e'})
  = 0$ for $e' \in C_i \setminus {e}$ as otherwise, by modularity,
  we'd get $\psi_u(\mathbf 1_{e'} | \mathbf 1_{C_i \setminus \set{ e'
    }}) = \psi_u(\mathbf 1_{e'}) > 0$ 
  violating the requirement of
  Equation~\eqref{eq:allgainszero}.


\item Thus, this means that for every such $u$ and every 
  $i \in \text{cid}(v)$ and 3-cycle $C_i$,
  we have either $\forall e \in C_i, \psi_u(\mathbf 1_{e}) = 0$ or
  alternatively $\forall e \in C_i, \psi_u(\mathbf 1_{e}) > 0$, and in this
  latter case $u$ must give $C_i$ a positive surplus 
  (to satisfy Equation~\eqref{eq:allgainszero}).
  Any
  $u$ giving no surplus to any of the 3-cycles in $\text{cid}(v)$ thus must have
  $\forall e \in V, \psi_u(\mathbf 1_{e}) = 0$ and so
  can be removed from the network without effect (which we assume
  in the below).

\item Hence, for all $u$ there exists a set $\text{cid}(u) \subseteq
  \text{cid}(v)$ where for all $i \in \text{cid}(u)$, three-cycle
  $C_i$, and $e \in C_i$, we have $g_u(e) > 0$, $g_u(C_i) > 0$.  For
  $e \notin \cup_{i \in \text{cid}(u)} C_i$, $g_u(e) = 0$,

\item If $u$ is one of the first layer hidden unit nodes, then $g(A) =
  \phi_u(w_u(A))$ is a simple concave over modular function $w_u : V
  \to \mathbb R_+$.  Suppose that for this $u$, we have
  $|\text{cid}(u)| > 1$, then taking $i,j \in \text{cid}(u)$, $i\neq
  j$, $i,j \in \text{cid}(v)$, corresponding three-cycles $C_i$,$C_j$
  and any element $e' \in C_j$ where $e' \notin C_i$, we require by
  Corollary~\ref{thm:presdef} that $I_{g_u}(e'; C_i) = g_u(e') -
  g_u(e' | C_i ) = 0$.  By Lemma~\ref{thm:when_concave_is_lin}, the
  non-linear part of $\phi_u$ must not start before the valuation of
  $w_u(C_i)$, meaning $\phi_u(w_u(C_i))$ is modular on the cycle,
  contradicting Equation~\eqref{eq:allgainszero}. Hence, we must have
  $|\text{cid}(u)| = 1$ for first layer hidden nodes.

\end{enumerate}

\end{proof}

\begin{theorem}[DSFs are unable to represent the cycle matroid rank function on edges of $K_4$]
\label{thm:sccms_cannot_do_cycle_matroid_ranko}
\end{theorem}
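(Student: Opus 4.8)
The plan is to assume, for contradiction, that the cycle matroid rank $r$ on the six edges of $K_4$ equals some DSF $f$, and then to run the backprop machinery of Lemma~\ref{thm:decomposition_of_sets_of_three_cycles} from the root down to the single-cycle nodes, finishing with a short counting argument that exploits how the four triangles of $K_4$ pairwise share exactly one edge. First I would record the combinatorics: $K_4$ has exactly four $3$-cycles $C_1,C_2,C_3,C_4$, each edge lies in exactly two of them, and each pair $C_i,C_j$ meets in a single edge $e_{ij}$. Since $r$ is totally normalized ($r(e\mid V\setminus\set{e})=0$), the final modular term must be non-positive, and as in the discussion of $G_k$ I would reduce to the case $m_\pm=0$, so that the root function satisfies $g_\rtnd=r$; note also that all the quantities used below are surpluses, which are immune to modular shifts. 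I would then verify the hypotheses of Lemma~\ref{thm:decomposition_of_sets_of_three_cycles} at the root with $\text{cid}(\rtnd)=\set{1,2,3,4}$: every edge and every $C_i$ receives positive value, $\surp_r(C_i)=1>0$, and the within-cycle condition $r(e\mid C_i\setminus\set{e})=0$ holds for each $e\in C_i$.

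Next I would apply Lemma~\ref{thm:decomposition_of_sets_of_three_cycles} recursively. Each node with $|\text{cid}|\ge 2$ is rewritten, on all sets of size at most three, as a non-negative combination of its parents, each of which again satisfies conditions (1)--(4) for a smaller index set; the recursion terminates at nodes with a single cycle index (first-layer units are guaranteed single-cycle by the last clause of the lemma). Consequently, for every $X$ with $|X|\le 3$ one obtains $g_\rtnd(X)=\sum_u \lambda_u\, g_u(X)$ with $\lambda_u\ge 0$, where each $g_u$ is a single-cycle gadget: it is supported on one triangle $C_{i(u)}$, is modular on sets of size at most two by Corollary~\ref{thm:presdef}, and gives surplus only to $C_{i(u)}$.

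The contradiction then comes from condition (4) together with modularity at size two. For a single-cycle gadget $g_u$ on $C_i=\set{e_1,e_2,e_3}$, the relations $g_u(e\mid C_i\setminus\set{e})=0$ for every $e$, combined with $g_u(\set{e,e'})=g_u(e)+g_u(e')$, force $g_u(e_1)=g_u(e_2)=g_u(e_3)=c_u$ and $g_u(C_i)=2c_u$, so that $\surp_{g_u}(C_i)=c_u=g_u(e)$. Aggregating the gadgets cycle by cycle, $h_i:=\sum_{u:\,i(u)=i}\lambda_u g_u$ assigns a common value $P_i$ to each edge of $C_i$ and has $\surp_{h_i}(C_i)=P_i$. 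Because any gadget on a different triangle is zero on two edges of $C_i$ and is hence modular there by Corollary~\ref{thm:forced_modular_on_cycle}, only the $C_i$-gadgets contribute surplus to $C_i$, whence $P_i=\surp_{g_\rtnd}(C_i)=\surp_r(C_i)=1$; thus $h_i(e)=1$ for every $e\in C_i$. But each edge $e=e_{ij}$ belongs to exactly the two triangles $C_i,C_j$, so $r(e)=g_\rtnd(e)=h_i(e)+h_j(e)=1+1=2$, contradicting $r(e)=1$. Hence no DSF can equal $r$.

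I expect the genuine difficulty to be already absorbed into Lemma~\ref{thm:decomposition_of_sets_of_three_cycles}: propagating a required lack of surplus, the induced linearity of each $\phi_v$ up to every size-three valuation, and the vanishing pairwise surpluses $I_{g_v}(e';C_i)=0$ down through an arbitrary number of layers is exactly what makes the single-cycle structure appear at the bottom. The remaining subtlety is the modular reduction in the first step: one must ensure the additive term $m_\pm$ cannot silently inflate the single-edge values, which would replace the fatal $2=1$ by the harmless $2=1+\mu_e$. This is precisely why the reduction to the totally normalized representation, and the repeated use of the modular-immunity of surplus, are essential rather than cosmetic.
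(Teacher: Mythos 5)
Your proposal is correct and follows essentially the same route as the paper's own proof: you verify the hypotheses of Lemma~\ref{thm:decomposition_of_sets_of_three_cycles} at the root with $\text{cid}(\rtnd)=\set{1,2,3,4}$, recurse layer by layer down to single-cycle first-layer units, and obtain the contradiction from the fact that each edge of $K_4$ lies in exactly two triangles --- your quantitative endgame ($g_\rtnd(e)=h_i(e)+h_j(e)=2\neq 1=r(e)$, using equal per-gadget edge values and $\surp$-preservation to pin $h_i(e)=1$) is an explicit restatement of the paper's observation that the gadgets for the second triangle containing $e$ force $g_\rtnd(e \mid C_i\setminus\set{e})>0$, violating the required within-cycle condition. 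The loose end you flag --- that a nonzero $m_\pm$ is not truly disposed of, since the $G_k$-style reduction only yields $g_\rtnd=r+m_+$ rather than $g_\rtnd=r$, and condition (4) of the decomposition lemma is a condition on values rather than on (modular-immune) surpluses --- is equally loose in the paper's own proof, which applies the lemma to $g_\rtnd=r-m_\pm$ without checking condition (4) there and concludes with $g_\rtnd(e)=1$, i.e., implicitly takes $m_\pm=0$.
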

\begin{proof}

Let $f: 2^V \to \mathbb R$ be a DSF in the above form.
We may, w.l.o.g., assume all weights are strictly positive, as the
  summations below will be based on $u \in \text{pa}(v)$,
  so we assume that for all $u \in \text{pa}(v)$,
  $w_{uv} > 0$.

  Consider, in Eqn.~\eqref{eq:gen_deep_submodular_recursive} , the top
  layer concave function along with the arbitrary modular function, and
  suppose that $f(A) = \psi_\rtnd(\mathbf 1_A) + m_\pm(A) = r(A)$ for
  all $A$ where $r: 2^V \to \mathbb Z_+$ is a cycle matroid rank
  function on $K_4$. Hence, $g_\rtnd(A) = \psi_\rtnd(\mathbf 1_A) =
  r(A) - m_\pm(A)$ which is an assuredly polymatroidal part of
  $f(A)$.



Let $C_1$, $C_2$, $C_3$, and $C_4$ be the four three-cycles of
  the matroid.  Note that for all $i$, we have $\surp_f(C_i) > 0$ for
  all $i$, and $f(C_i) > 0$. Also, for all $e \in V$, $f(e) > 0$.
  Hence, define $\text{cid}(\rtnd) = \set{1,2,3,4}$. By Theorem~\ref{thm:decomposition_of_sets_of_three_cycles},
  for any set $X$ with $|X| \leq 3$, we may write $g_\rtnd(X)$ as follows:
\begin{align}
  g_\rtnd(X) = \sum_{u \in U} w_u g_u(X)
\end{align}
where $\text{cid}(u) \subseteq \text{cid}(\rtnd)$, and where 
for all $u \in U$, 
$i \in \text{cid}(u)$, we have $g_u(C_i) > 0$, 
$g_u(e) > 0$ for $e \in \cup_{i \in \text{cid}(u)} C_i$,
and 
$g_u(e) = 0$ for $e \notin \cup_{i \in \text{cid}(u)} C_i$.
Hence we may write $g_\rtnd(X)$ as:
\begin{align}
  g_\rtnd(X) = \sum_{u \in U: |\text{cid}(u)| = 1} w_u g_u(X)
+ \sum_{u \in U: |\text{cid}(u)| > 1} w_u g_u(X)
\end{align}
For any $u \in U$ with $\text{cid}(u) > 1$, by 
Theorem~\ref{thm:decomposition_of_sets_of_three_cycles},
we may,
for any set $X$ of size $|X| \leq 3$, write it as:
\begin{align}
  g_u(X) = \sum_{u' \in U'} w_{u'} g_{u'}(X)
\end{align}
where $\text{cid}(u') \subseteq \text{cid}(u)$. Thus, we have
\begin{align}
  g_\rtnd(X) &= \sum_{u \in U: |\text{cid}(u)| = 1} w_u g_u(X) \\
&+ \sum_{u' \in U' : |\text{cid}(u')| = 1 } w_{u'} g_{u'}(X)
+ \sum_{u' \in U' : |\text{cid}(u')| > 1 } w_{u'} g_{u'}(X)
\end{align}
This process may continue recursively,
applying Theorem~\ref{thm:decomposition_of_sets_of_three_cycles} each time,
until we reach all
units in the bottom layer of the DSF. We are
guaranteed termination since the DSF is itself
finite size. Also, since the bottom layer
consists of single concave composed with modular functions,
all have $\text{cid}(\cdot) = 1$. Hence, for $X$ with $|X| \leq 3$, the entire DSF
can be expressed as:
\begin{align}
  g_\rtnd(X) = \sum_{u \in U^{(\ell)} } w_u g_u(X)
\end{align}
where $\text{cid}(u) = 1$ and where we
may partition 
$U^{(\ell)}$ in to four disjoint sets corresponding
to the four cycles, where in each index set
we have surplus only of one of the cycles. This
means that it is not possible to achieve, for
a cycle $C$ and $e \in C$, 
\begin{align}
I_{g_\rtnd}(e ; C \setminus \set{e}) =
g_\rtnd(e) - g_\rtnd(e | C \setminus \set{e}) = g_\rtnd(e) = 1
\end{align}
since some of the terms in the sum are non-zero
meaning $g_\rtnd(e | C \setminus \set{e}) > 0$,
thus contradicting that $f(X) = r(X)$ for all $X \subseteq V$.
\end{proof}

The above results therefore imply the following.
\begin{corollary}[$\text{SCCMs} \subset \text{DSFs} \subset \text{Submodular Functions}$]
\label{thm:dfs_cannot_do_all_polymatroids}
The family of SCMMs is smaller than that of DSFs,
and the family of DSFs is smaller than the family of all submodular
functions. That is, let $C_n$ be the set of all submodular functions
over ground set $V$ of size $n$ and let $\text{DSF}_k$ be the family
of DSFs with $k$ layers on $V$, and $\text{SCCM}$ be the family
of SCCMs on $V$ with an arbitrary number of component functions. Then, for any $k$,
$\text{SCCM} \subset \text{DSF}_k \subset \mathcal C_n$.
\end{corollary}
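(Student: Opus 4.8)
The plan is to read the corollary as the assembly of the two principal results just established: the strict left inclusion is supplied by Theorem~\ref{sec:dsfs-extend-class} and the strict right inclusion by Theorem~\ref{thm:sccms_cannot_do_cycle_matroid_ranko}. No new combinatorial work is required; the whole task is to chain these together and verify that each inclusion is a genuine containment that is moreover strict, while keeping track of how the claim depends on the depth $k$.

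For the left inclusion I would first record that every SCMM is by definition a one-layer DSF (and conversely), so $\text{SCCM} = \text{DSF}_1 \subseteq \text{DSF}_k$ for every $k \geq 1$, using the fact from Section~\ref{sec:textdsf_k-1-subset} that a $k$-layer DSF expresses any $(k-1)$-layer DSF by placing a linear unit at the top. This gives containment. Strictness is then exactly Theorem~\ref{sec:dsfs-extend-class}: the six-element laminar matroid rank function of Equation~\eqref{eq:laminar_rank_no_scmm}, exhibited as a two-layer DSF in Section~\ref{sec:matroids-case}, is shown in Lemma~\ref{lemma:laminar_rank_no_scmm} to admit no SCMM representation. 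Since $\text{DSF}_2 \subseteq \text{DSF}_k$ for all $k \geq 2$, this witness lies in $\text{DSF}_k \setminus \text{SCCM}$, so $\text{SCCM} \subsetneq \text{DSF}_k$ for every $k \geq 2$.

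For the right inclusion I would note that a DSF is submodular by construction: Equation~\eqref{eq:deep_submodular} builds $\bar f$ by recursively composing monotone non-decreasing concave functions with conic combinations of polymatroid functions, each step preserving submodularity through Theorem~\ref{thm:concave_over_polymatroidal}, and the trailing $m_\pm$ is modular; hence $\text{DSF}_k \subseteq \mathcal C_n$. Strictness is the content of Theorem~\ref{thm:sccms_cannot_do_cycle_matroid_ranko}: the cycle matroid rank function on the edges of $K_4$ (a submodular function on a ground set of size six) is not representable as a DSF. The point I would emphasise is that the backprop-style argument there peels off layers down to the base case and thereby excludes a DSF of \emph{any} depth, so this single witness certifies $\text{DSF}_k \subsetneq \mathcal C_n$ uniformly in $k$. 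Composing the two chains yields $\text{SCCM} \subset \text{DSF}_k \subset \mathcal C_n$.

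There is no genuine obstacle here, since all the hard work lives in the two cited theorems; the only care needed concerns the quantifiers. Containment on the left needs $k \geq 1$ and strictness needs $k \geq 2$, because at $k=1$ the two families literally coincide ($\text{SCCM} = \text{DSF}_1$), whereas the right strict inclusion is depth-agnostic and holds for all $k \geq 1$. I would therefore read the ``for any $k$'' as ``for any $k \geq 2$'' (equivalently, state the two-sided strict chain for the union $\text{DSF} = \bigcup_k \text{DSF}_k$). Finally, both witnesses sit on a ground set of size six, so the statement holds verbatim for $n = 6$ and, by adjoining elements carrying identically zero weight into every modular function, extends to all $n \geq 6$.
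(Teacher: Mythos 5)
Your proposal is correct and follows essentially the same route as the paper, which derives this corollary directly by combining Theorem~\ref{sec:dsfs-extend-class} (the laminar-matroid witness showing $\text{SCCM} \subsetneq \text{DSF}$) with Theorem~\ref{thm:sccms_cannot_do_cycle_matroid_ranko} (the depth-independent $K_4$ cycle-matroid witness showing $\text{DSF}_k \subsetneq \mathcal C_n$), together with the containment facts that SCMMs are one-layer DSFs and that DSFs are submodular by construction. Your added bookkeeping on the quantifiers --- that strictness of the left inclusion requires $k \geq 2$ since $\text{SCCM} = \text{DSF}_1$, and that the witnesses live on ground sets of size six --- is a sound refinement of the paper's more casual ``for any $k$'' phrasing, not a different argument.
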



While DSFs do not comprise all submodular functions, a consequence of
Theorem~\ref{thm:submod_antitone_super_on_vector_poly} is that the
input to a DSF can be any set of polymatroid functions. Let $f$ be a
DSF with $k$ inputs and a ground set $V = \set{1,2,\dots, k}$. Then we
can consider the standard way to utilize a DSF, in the context of
Theorem~\ref{thm:submod_antitone_super_on_vector_poly}, as one where
the $i$th input is a function $g_k(A) = \mathbf 1_{k \in A}$ which is
modular. Theorem~\ref{thm:submod_antitone_super_on_vector_poly} allows
for any polymatroid function to be used as input to a DSF, not just an
indicator function, and hence the DSF can be used to add interactions
between and perhaps improve these functions in some way. Hence, if
several of the $g_k$ are cycle matroid rank functions, and if the DSF
is learnt, the resulting family is expanded to include at least those
matroid ranks used as input.  It remains an open question to see if
there is a small finite fixed set of input polymatroid functions that
can be cascaded into a DSF in order to achieve all submodular
functions.

It is also worth noting that in~\cite{zhang1997non} it is shown that
the entropy function $f(A) = H(X_A)$ when seen as a set function must
satisfy inequalities that are not required for an arbitrary
polymatroid function, thus implying that entropy also does not
comprise all submodular function. An additional open problem,
therefore, is to compare the family of DSFs to that of entropy
functions.

\section{Applications in Machine Learning and Data Science}
\label{sec:applications}

In this section, we describe a number of possible DSF applications in
machine learning and data science.

\subsection{Learning DSFs}
\label{sec:learn-deep-subm}

As mentioned in Section~\ref{sec:introduction}, recent
studies~\cite{goemans2009approximating,balcan2011learning,feldman2013representation,feldman2013optimal}
show that learning submodular functions can be easier or harder
depending on the learning setting.

A general outline of various learning settings is given
in~\cite{kearns1994toward,feldman2013representation} --- here, we give
only a very brief overview. To start, learning may involve several
families of functions $\mathcal F$, $\mathcal H$, and $\mathcal T$
members of which are mappings from $2^V$ to $\mathbb R$.  There is
some true function $f \in \mathcal F$ to be learnt based on
information obtained via samples of the form $(A,f(A))$ for
$A \subseteq V$. One wishes to produce an approximation
$\tilde f \in \mathcal H$ to $f$ that is good in some way. Learning
submodular functions has been studied under a number of possible
variants. For example, there is typically a probability distribution
$\textbf{Pr}$ over subsets of $V$ (i.e.,
$\textbf{Pr}(\mathbf S = A) \geq 1$ and
$\sum_{A \subseteq V} \textbf{Pr}(\mathbf S = A) = 1$ where
$\mathbf S$ is a random variable). A set of samples
$\mathcal D = \set{ (A_i,f(A_i) }_i$ is obtained via this
distribution. The distribution $\textbf{Pr}$ might be unknown
\cite{balcan2011learning}, or might be known (and in such case, might
be assumed to be
uniform~\cite{feldman2013representation,feldman2013optimal}).  The
quality of learning could be judged over all $2^n$ points or over some
fraction, say $1-\beta$, of the points, for $\beta \in [0,1]$. In
general, there is no specificity on the particular set of points, or
the particular kind of points, that should be learnt as long as at
least a (probability distribution measured) fraction $1-\beta$ of them
are learnt.  Learning itself happens with some probability
$1-\delta$. I.e., there is some probability $\delta$ that the learning
will not succeed.  While learning asks for a function in
$\tilde f \in \mathcal H$ that is good, we might judge $\tilde f$
relative only to the best function $\hat f \in \mathcal T$ (the
touchstone class).  For example, in agnostic learning
\cite{kearns1994toward}, we acknowledge that it might be difficult to
show that learning is good relative to all of $\mathcal F$ (say due to
noise) but still feasible to show that learning is good relative to
the best within $\mathcal T$.  Also, there are a variety of ways to
judge goodness.  In~\cite{balcan2011learning}, goodness is judged
multiplicatively, meaning for a set $A \subseteq V$ we wish that
$\tilde f (A) \leq f(A) \leq g(n)f(A)$ for some function $g(n)$, and
this is typically a probabilistic condition (i.e., measured by
distribution $\textbf{Pr}$, goodness, or
$\tilde f (A) \leq f(A) \leq g(n)f(A)$, should happen on a fraction at
least $1-\beta$ of the points). Alternatively, goodness may also be
measured by an additive approximation error, say by a norm. I.e.,
defining
$\text{err}_p(f,\tilde f) =\| f - \tilde f \|_p = ( E_{A \sim
  \textbf{Pr}} [ {| f(A) - \tilde f(A) |}^p ] )^{1/p}$, we may wish
$\text{err}_p(f,\tilde f) < \epsilon$ for $p =1$ or $p=2$.  In the PAC
(probably approximately correct) model, we probably ($\delta > 0$)
approximately ($\epsilon > 0$ or $g(n) > 1$) learn ($\beta = 0$) with
a sample or algorithmic complexity that depends on $\delta$ and
$g(n)$.  In the PMAC (probably mostly approximately correct)
model~\cite{balcan2011learning}, we also ``mostly'' $\beta > 0$ learn. In
agnostic learning, $\mathcal F \supseteq \mathcal H = \mathcal T$.
Let $\mathcal C_n$ be the space of all submodular functions. In some
cases $\mathcal F \supseteq \mathcal C_n = \mathcal H$ so we wish to
learn the best submodular approximation to a non-submodular
function. In other cases,
$\mathcal F = \mathcal C_n \subseteq \mathcal T \subseteq \mathcal H$
meaning we are allowed to deviate from submodularity as long as the
error is small.

In the machine learning community, $\mathcal H$ may be
a parametric family of submodular functions. 
For example,
given a fixed set of component submodular functions, say
$\set{ f_i }_{i=1}^\ell$ one may with to learn only the weights of a mixture
$\set{ w_i }_i$ to produce $f = \sum_i w_i f_i$ where $w_i \geq 0$ for
all $i$ to ensure submodularity is preserved. What is learnt is only
the coefficients of the mixture, not the components, so the
flexibility of the family is determined by the diverseness and
quantity of components used. Empirically, experiments that learn 
submodularity for various data science
applications~\cite{sipos2012large,hui2012-submodular-shells-summarization},
has been more successful than simply hand-designing a
fixed submodular function. This is true both for image
\cite{sebastian2014-submod-image-sum} and document
\cite{hui2012-submodular-shells-summarization} summarization tasks.
There also has been some initial work on learnability bounds in
\cite{hui2012-submodular-shells-summarization}.
Learning just the mixture coefficients of a mixture of submodular
functions, while keeping the component functions themselves fixed, is
only as flexible as the set of component functions allows,
however. Given a small (or indiscriminately selected and hence
potentially redundant) number of components, the family over which one
can learn might be limited. As a result, one might need add a very
large number of components before one obtains a sufficiently powerful
family.


An alternative approach to learning a mixture that alleviates to some
extent the above problem is to learn over a richer parametric family,
and this is where DSFs hold promise.  An approach to learning DSFs,
therefore, is to learn within its parametric family, so
$\mathcal H = \text{DSF}_k$ for some finite $k$ and where
$f_w \in \text{DSF}$ is parameterized by the vector $w$ that
determines the topology (e.g., number and width of layers) of the
network, the numeric parameters (set of matrices) within that
topology, and the set of concave functions $\set{ \phi_u }_u$.  As
shown in the present paper, DSFs represent a strictly larger family than
SCMMs. Therefore, even in the mixture case above where the components
may also be learnt, there are DSFs that are unachievable by SCMMs.  In
addition, by Theorem~\ref{thm:submod_antitone_super_on_vector_poly}, a
DSF rather than a mixture can be applied to a fixed set of input
submodular components (e.g., some of which might be simple
indicators of the form $g_u(A) = \mathbf 1_{u \in A}$ and others could
be cycle matroid rank functions in order to reduce any chance of the
unachievability mentioned in
Theorem~\ref{thm:sccms_cannot_do_cycle_matroid_ranko}).  Even in cases
where a DSF can be represented by an SCMM, DSFs may be a far more
parsimonious representation of classes of submodular functions and
hence a more efficient family over which to learn, analogous to
results in DNNs showing the need for exponentially many hidden units
for shallow networks to implement a network with more layers
\cite{DBLP:journals/corr/EldanS15}.

Suppose $f \in \mathcal C_n$ is a target submodular function,
$f_w \in \text{DSF}_k$ is a parameterized $k$-layer DSF,
$\mathcal D = \set{ (S_i, y_i) }_i$ is a training set consisting of
subsets $S_i \subseteq V$ and valuations $y_i = f(S_i)$ for the target
function and that is drawn from distribution $\textbf{Pr}$. 
An empirical risk minimization (ERM), or regression,
style of learning is obtained a standard way:
\begin{align}
  \min_{w \in \mathcal W} J(w) = \sum_i L( y_i, f_w(S_i)) + \| w\|
\end{align}
where $L(\cdot,\cdot)$ is a loss function and $\|w \|$ is a norm on
the parameters. Obvious candidates for the loss would be squared loss,
or L1 loss, and the norm can also be chosen to prefer smaller values
for $w$. Given the objective $J(w)$ one may proceed using, for
example, projected stochastic gradient descent, where at each step we
project the weights $w$ into $\mathcal W$ which corresponds to the
non-negative orthant for parameters other than $m_\pm$ to ensure
submodularity is retained.  Under this approach, and with an
appropriate regularizer, it may be feasible to obtain generalization
bounds in some form \cite{shalev2010learnability} as is often found in
statistical machine learning settings.  Note that, depending on the
loss $L$ used, this approach may be tolerant of noisy estimates of the
function, where, say, $y_i = f_w(S_i) + \epsilon$ and where $\epsilon$
is noise, somewhat analogous to how it is possible to optimize a noisy
submodular function~\cite{hassidim2016submodular}. Alternatively, one
could analyze it under an agnostic learning setting.


Under many distribution assumptions, such as when $\textbf{Pr}$ is the
uniform distribution over $2^V$, then as the training set gets larger,
we approach the case where there are $O(2^{|V|})$ distinct samples,
and the goal is to learn the function at all points. For large ground
sets, certain learning settings might become infeasible in practice
due to the curse of dimensionality.  As mentioned above, there are
learning settings that ask only for a fraction $1-\beta$ of the points
to be learnt, but without a mechanism to specify which fraction.


In many practical learning situations, however, access to an oracle
function $h(A)$, or training data that utilizes $h$'s evaluations,
might not be available. Even if $h$ available, such a learning setting
might be overkill for certain applications, as we might not need a
submodular function $f_w$ to be accurate at all points
$A \subseteq V$.  One example is in summarization
applications~\cite{hui2012-submodular-shells-summarization,sebastian2014-submod-image-sum}
where we wish to learn a submodular function $f_w$ that, when
maximized subject to a cardinality constraint, produces a set that is
valuated highly by the true submodular function relative to other sets of
that size.  Such a set should be diverse and high quality. In this
case, one does not need $f_w$ to be an accurate surrogate for $f$
except on sets $A$ for which $f$ is large.  More precisely, instead of
trying to learn $f$ everywhere, we seek only to learn the parameters
$w$ of a function
so that if $B \in \argmax_{A \subseteq V: |A| \leq k} f_{w}(A)$, then
$h(B) \geq \alpha h(A^*)$ for some $\alpha \in [0,1]$ where
$A^* \in \argmax_{A \subseteq V: |A| \leq k} h(A)$. This setting puts
fewer constraints on what is needing to be learnt than the regression
approach and hence should correspondingly be easier.  This is somewhat
analogous to discriminative learning where the entire distribution
over input and output variables is not needed and instead only a
conditional distribution (or a deterministic mapping from input to
output) is required.



The max-margin
approach~\cite{sipos2012large,hui2012-submodular-shells-summarization,sebastian2014-submod-image-sum}
is appropriate to this problem and is applicable to learning DSFs.
Given an unknown but desired non-negative submodular function
$f \in \mathcal C_n$, we are given a set of representative
sets $\mathcal{S} = \set{S_1, S_2, \dots}$, with $S_i \subseteq V$ and
where each $S \in \mathcal{S}$ is scored highly by $f(\cdot)$.  Unlike
the regression approach, we do not need the actual evaluations $f(S_i)$.
It might be,
for example, that the sets are selected summaries chosen by a human
annotator from a larger set. A matroid analogy is to learn a matroid
using a set of independent sets of a particular
size, say $\ell$. If $M' = (V, \mathcal I')$ is a matroid of rank
$\ell' > \ell$, then $M = (V, \mathcal I)$ is also a matroid where
$\mathcal I = \set{ I \in \mathcal I' : |I| \leq \ell }$.


In max-margin approach, we learn the parameters $w$ of $f_{w}$ in an attempt
to make, for all $S\in \mathcal{S}$, $f_{w} (S)$ high, while for $A \in 2^V$,
$f_{w}(A)$ is lower by a given loss. More precisely, we ask that
for $S\in \mathcal{S}$ and $A\in 2^V$,
$f_{w}(S) \geq f_{w}(A) + \ell_S(A)$.  The loss is
chosen so that $\ell_S(S) = 0$, so that $\ell_S(A)$ is very small
whenever $A$ is close to $S$ (e.g., if $A$ is also a good summary),
and so that $\ell_S(A)$ is large when $A$ is considered much worse
(e.g. if $A$ is a poor summary).  Achieving the above is done by
maximizing the loss-dependent margin, and reduces to finding
parameters so that
$f_{w}(S) \geq \max_{A\in 2^V}[ f_{w}(A) + \ell_S(A)]$
is satisfied for $S\in\mathcal{S}$.  The task of finding the
maximizing set is known as loss-augmented inference
(LAI)~\cite{taskar2005learning,yu2009learning}, which for general $\ell(A)$ is
NP-hard. With regularization, the optimization becomes:
\begin{align}
  \underset{w \in \mathcal W}{\text{min}}\ \sum_{S\in \mathcal{S}}
  \mathcal L \left(
  \underset{A \in 2^V }{\text{max}}\left[ f_w(A) + \ell_S(A)
  \right] - f_w(S) \right) + \frac{\lambda}{2}||w||^2_2 .
\end{align}
where $\mathcal L$ is a classification loss function such as the logistic 
($\mathcal L(x) = \log(1+\exp(-x))$) or
hinge ($\mathcal L(x) = \max(0, x)$) loss.  If it is the case that $f_w(S)$ is
linear in $w$ (such as when $w$ are mixture parameters in an SCMM as
was done
in~\cite{sipos2012large,hui2012-submodular-shells-summarization,sebastian2014-submod-image-sum}),
and if the maximization can is done exactly, then this constitutes a
convex minimization procedure. In general, however, there are several
complications.

Firstly, the LAI problem $\max_{A\in 2^V}[ f_{w}(A) + \ell_S(A)]$ may
be hard. Given a submodular function for the loss, as was done
in~\cite{hui2012-submodular-shells-summarization}, then the greedy
algorithm offers the standard $1-1/e$ approximation guarantee for LAI.
On the other hand, a submodular function is not always natural for the
loss. Recall above that $\ell_S(A)$ should be large when $A$ is
considered a poor set relative to $S$ (e.g. if $A$ is a poor
summary). If it is the case that one may get an assessment of $A$, say
via a surrogate $\tilde f$ of the ground truth function $f$, then one
may use $\ell_S(A) = \kappa - \tilde f(A)$ but this, to the extent
that $\tilde f$ needs to represent $f$, approaches the labeling needs
of the ERM/regression approach above. If $\tilde f$ is submodular,
then $\kappa - \tilde f$ is supermodular, and in this case solving
$\max_{A\in 2^V\backslash\mathcal{S}}[ f(A) + \ell(A)]$ involves
maximizing the difference between two submodular functions, and the
submodular-supermodular
procedure~\cite{narasimhan2005-subsup,rkiyeruai2012} can be used
although this procedure does not have guarantees in general.

Secondly, when $f_w$ is not linear in $w$, the above problem is not
convex. Given the enormous success of deep neural networks in
addressing non-convex optimization problems, however, this should not
be daunting. Indeed, given an estimation to
$\tilde A \in \argmax_{A\in 2^V}[ f_{w}(A) + \ell_S(A)]$, we can
easily obtain an approximate subgradient of weights
$dw \in \partial_w ( f_w(\tilde A) - f_w(S) + \lambda/2 \|w\|_2^2)$ to
be used in a projected stochastic subgradient descent procedure.  For
a DSF, this subgradient can be easily computed using backpropagation,
similar to the approach of~\cite{Pei2014}.  Like in the mixtures case,
we must use projected descent to ensure $w \in \mathcal W$ and
submodularity is preserved.  Recall, however that the weights
corresponding to $m_{\pm}(A)$ may be left negative if they so choose.
Preliminary experiments in learning DSFs in this fashion were reported
in \cite{brian2016nips-dsf-def-learning} and show encouraging results.

As an additional benefit, many of the concave functions mentioned in
Section~\ref{sec:feat-based-funct} are parameterized themselves, and
these parameters may also be the target of stochastic gradient based
learning. In such case, not only the weights but also the concave
functions of a DSF may be learnt.

Given the ongoing research on the non-convex learning of DNNs, which
have achieved remarkable results on a plethora of machine learning
tasks \cite{LeCun2015,goodfellow2016deep}, and given the similarity
between DSFs and DNNs, we may leverage the same DNN learning
techniques to learn DSFs. This includes stochastic gradient descent,
convolutional linear maps, momentum, dropout, batch normalization,
unsupervised pre-training, learning rate scheduling such as
AdaGrad/Adam, convolutional matrix patterns, mini-batching, and so
on. In some cases these methods might need to be modified (e.g.,
stochastic projected gradient descent to ensure the function remains
submodular). Moreover, the suitability of fast GPU computing to the
matrix-matrix multiplications necessary to evaluate DSFs should also
be a benefit. Lastly, the many toolkits that support DNN training (such
as Tensorflow, Theano, Torch, Caffe, CNTK, and so on), and that
include automatic symbolic differentiation and semi-differentiation
(for non-differentiable functions) for backpropagation-style parameter
learning can easily be used to train DSF. All of these techniques and
software may be leveraged to DSF's benefit, and is true both for the
regression and max-margin setting.







\subsubsection{Training and Testing on Different Ground Sets, and Multimodal Submodularity}
\label{sec:training-test-set}

\begin{figure}[tbh]
\centerline{\includegraphics[page=4,width=0.8\textwidth]{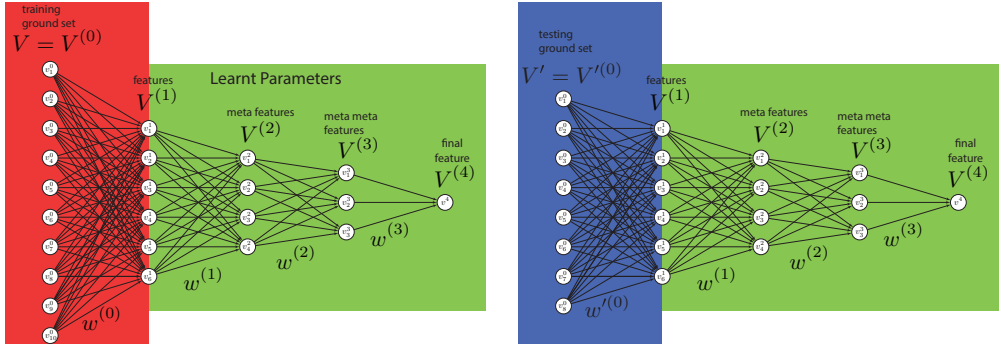}}
\caption{
Left: Training DSF where
the first layer weights $w^{(0)}$ act to embed 
input space and ground set $V = V^{(0)}$
into feature space $V^{(1)}$. The learn
weights are highlighted (green) while the
mapping parameters (red) are a embedding
transformation.
Right: Using the learnt 
parameters (green) $\set{ w^{(i)} }_{i> 0}$, 
we instantiate a DSF on training
objects using an mapping (blue)
from a distinct ground set $V' = V'^{(0)}$
into the same feature space $V^{(1)}$.
}
\label{fig:dsf_train_test}
\end{figure}


In the training process in machine learning, one trains with a
training set and then evaluates or tests on a distinct set having no
overlap with the training set. When training submodular functions,
this means that the training set might consist of multiple ground
sets, and the test set might consist of ground sets that were not seen
during training.  A data set might consist of
$\mathcal D = \set{ (V_i, S_i, y_i) }_i$ where $V_i$ is a ground set,
$S_i \subseteq V_i$ and, when available, $y_i = f_i(S_i)$ is an
evaluation of $S_i$ by a ground-set-specific submodular function
$f_i$. Hence, there may be no instance where two ground sets are the
same, so $V_i \neq V_j$ for $i\neq j$, nor might there be ground set
commonality between training and test data sets.  The reason this
occurs can be explained using a document summarization
example~\cite{hui2012-submodular-shells-summarization}. A training set
consists of pairs, each of which is pile of documents (comprised of a
set of sentences) and a subset of those sentences corresponding to a
summary. Multiple training samples consists of different piles of
documents and their corresponding summaries, and then a test set
consists of a different pile of documents and summaries thereof.  In
this section, we discuss how to addresses this problem for DSFs via a
strategy that
generalizes~\cite{hui2012-submodular-shells-summarization,sebastian2014-submod-image-sum}.

Let $V$ be a training set where each $v \in V$ is a data object.  Any
particular element $v \in V$ may be represented by a vector of
non-negative weights
$(w^{(0)}_1(v),w^{(0)}_2(v), \dots, w^{(0)}_{|U|} )$.  Each object
$v \in V$ is hence embedded in non-negative $|U|$-dimensional space
corresponding to low-level features $U$ for the object. For example,
if $v$ is a sentence, $w^{(0)}_u(v)$ might counts the number of times
an n-gram $u$ appears in sentence $v$.  Alternatively, $w^{(0)}_u(v)$
might be automatically obtained via representation learning in a
DNN-based auto-encoder, or there can be a mix of features obtained via
representation learning and hand-crafting, using any of the
feature-engineering methods discussed in
Section~\ref{sec:feat-based-funct}. For each feature, we can define a
modular function $m_u(A) = \sum_{a \in A} w^{(0)}_u(a)$ that
measures feature $u$'s weight for any set $A \subseteq V$.  The entire
training set, therefore can be seen a matrix $w^{(0)}$ to be used as
the first layer in DSF (e.g., $w^{(0)}$ in
Figure~\ref{fig:dsf_train_test} left (red)) that is fixed during the
training of subsequent layers (Figure~\ref{fig:dsf_train_test} left
(green)). As long as $w^{(0)}$ is non-negative, submodularity is
preserved and if $w^{(0)}$ is constant, it allows all later layers
(i.e., $w^{(2)}, w^{(3)}, \dots$) to be learnt generically over any
heterogeneous set of objects that can be represented in the same
feature space, including multimodal data objects (e.g., consisting of
images, videos, and text sentences). Any training process remains
ignorant that this is happening since it sees the data only post
feature representation. In fact, one can view this, in light of
Theorem~\ref{thm:submod_antitone_super_on_vector_poly}, as a fixed
layer consisting of an SCMM that embeds data objects into
feature space corresponding to the components of the SCMM.

Once training has occurred, and if there is an analogous process to
transform distinct (and possibly different types) of test data into
the same feature space, it is possible to use the learnt DSF even for
a different ground set. In Figure~\ref{fig:dsf_train_test} right
(red), we have a different transformation $w'^{(0)}$ into the same
feature space $V^{(1)}$ which can use the DSF (green) learnt during
training. This process analogous to the ``shells''
of~\cite{hui2012-submodular-shells-summarization}. In that case,
mixtures were learnt over fixed components, some of which were graph
based (and hence required $O(n^2)$ calculation for element-pair
similarity scores). Via featurization in the first layer of a DSF,
however, we may learn a DSF over a training set, preserving
submodularity, avoid any $O(n^2)$ cost, and test on any new data
represented in the same feature space. Alternatively, one could
combine the shells approach and $w'^{(0)}$ into a vector of
polymatroid functions and then apply
Theorem~\ref{thm:submod_antitone_super_on_vector_poly}.

\subsection{Deep Supermodular Functions and Deep Differences}
\label{sec:learn-diff-dsfs}

All of the results in this paper assume that the hidden units in a DSF
are concave. If we replace these concave functions in
Equation~\eqref{eq:gen_deep_submodular_recursive} then we get a class
we could call {\em Deep Supermodular Functions} (DSUFs). The results
in this paper, hence, generalize to show that DSUFs correspond to a
larger class than just sums of convex functions composed with
non-negative modular functions.

In~\cite{narasimhan2005-subsup,rkiyeruai2012} it was shown that any
set function $h : 2^V \to \mathbb R$ can be represented as a
difference between two submodular functions. If we take
$f_1,f_2 \in \text{DSF}$ then the class of functions
$\text{DDSF} = \set{ h : h = f_1 - f_2, f_1,f_2 \in \text{DSF} }$ can
be seen as a class of deep differences of submodular
functions. Considering the class
$\text{DSSUF} = \set{ h : h = f + g, f \in \text{DSF}, g \in
  \text{DSUF} }$ can be seen as a class of deep submodular plus
supermodular functions.  Given that DSFs do not comprise all
submodular functions, it is unlikely that $\text{DSSUF}$s comprise all
set functions. However, these can be useful classes of functions to
learn over using, say, the deep learning methods mentioned in
Sections~\ref{sec:learn-deep-subm}. A key advantage of learning over
this family is that the framework never looses the decomposition into
two submodular functions or a submodular and supermodular
function. For example, after learning, we can utilize submodular
level-set constrained submodular optimization of the kind developed
in~\cite{rishabh2013-submodular-constraints} for optimization. 
Learning under such a decomposition, moreover, might
reveal substitutive (via $f$) and complementary (via $g$)
properties of the data.

It may also be useful to define a class of deep
``cooperative-competitive'' energy functions for use in a
probabilistic model. For example, one can define probability
distributions $p$ over binary vectors with
$p(x) = \frac{1}{Z}\exp( f_{w_1}(x) - f_{w_2}(x) )$ where $f_{w_1}$
and $f_{w_2}$ are both deep submodular, or
$p(x) = \frac{1}{Z}\exp( f_{w_1}(x) + g_{w_2}(x))$ where $f_{w_1}$ is
deep submodular and $g_{w_2}$ is deep supermodular. If $f_{w_1}$ and
$g_{w_2}$ have decomposition properties with respect to a graph, then
these could be called deep cooperative-competitive graphical models.




\subsection{Deep Multivariate Submodular Functions}
\label{sec:deep-k-submodular}

Submodular functions have been generalized in a variety of ways to
domains other than just subsets of a finite set $V$ (i.e., binary
vectors). In Section~\ref{sec:antit-maps-superd}, we discussed the
negativity of the off-diagonal Hessian as a way of defining submodular
functions on lattices. Other ways to generalize submodularity
considers discrete generalizations of properties such as midpoint
convexity over integer lattices~\cite{murota2003discrete}.

In this section, we consider certain submodular generalizations to
multi-argument functions. For example, a set function $f(A, B)$ with
two arguments $A \subseteq V$ and $B \subseteq V$ is a biset
function. If the domain is of the form
$2^{2V} \triangleq \lbrace (A, B) : A \subseteq V,\ B \subseteq V
\rbrace$,
we may define the class of functions known as simple bisubmodular:
\begin{definition}[Simple Bisubmodularity~\cite{singh2012-bisubmod}]
  $f : 2^{2V} \rightarrow \mathbb{R}$ is simple bisubmodular
  iff for each $(A,B) \in 2^{2V}$,
  $(A',B') \in 2^{2V}$ with $A \subseteq A'$, $B \subseteq B'$
  we have for $s \notin A'$ and $s \notin B'$:
  \begin{eqnarray*}
  &f(A + s, B) -  f(A, B) \geq f(A' + s, B') -  f(A', B'),&\\
  &f(A, B+s) -  f(A, B) \geq f(A', B' + s) -  f(A', B').&
  \end{eqnarray*}
\end{definition}
\noindent An equivalent way to define simple bisubmodularity is as follows.
\begin{proposition}
 \label{ref:prop_equiv_simp_bisubmod}
  The function $f : 2^{2V} \rightarrow \mathbb{R}$ is simple bisubmodular
  whenever 
  $\forall (A,B),(A',B') \in 2^{2V}$,
  \begin{align}
  f(A,B) + f(A', B') \geq
  f(A \cup A', B \cup B') + f(A \cap A', B \cap B')
  \end{align}%
\end{proposition}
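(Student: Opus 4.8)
The plan is to recognize that the inequality in Proposition~\ref{ref:prop_equiv_simp_bisubmod} is precisely ordinary submodularity of $f$ viewed as a function on the product lattice $2^V \times 2^V$, whose join and meet are the coordinatewise union and intersection, while the two inequalities in the definition are the corresponding diminishing-returns (first-difference) conditions along each of the two coordinate directions. Thus the statement is the standard equivalence between the lattice form of submodularity and the coordinatewise diminishing-returns form, and I would establish both implications (the ``whenever'' direction being the one literally asserted, the converse justifying that this is an \emph{equivalent} definition). Notably, no monotonicity of $f$ is needed anywhere.

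For the direction that the lattice inequality implies the definition, I would simply instantiate the inequality at well-chosen pairs. Given $A \subseteq A'$, $B \subseteq B'$ and $s \notin A'$, apply the inequality to $(A+s, B)$ and $(A', B')$. Because $A \subseteq A'$ and $s \notin A'$, the coordinatewise join is $((A+s)\cup A',\, B\cup B') = (A'+s,\, B')$ and the meet is $((A+s)\cap A',\, B \cap B') = (A, B)$, so the inequality reads $f(A+s,B) + f(A',B') \ge f(A'+s,B') + f(A,B)$, which rearranges to the first condition of the definition. Applying the inequality instead to $(A, B+s)$ and $(A',B')$ yields the second condition by the symmetric computation (here the join is $(A', B'+s)$ and the meet is $(A,B)$).

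The substantive direction is the converse, and I expect the telescoping bookkeeping there to be the main obstacle. Writing $P = A \cap A'$ and $Q = B \cap B'$, I would note that $A \setminus A' = A \setminus P$ and $B \setminus B' = B \setminus Q$, so the passage from the meet $(P,Q)$ up to $(A,B)$ and the passage from $(A',B')$ up to the join $(A\cup A',\, B\cup B')$ add \emph{exactly the same} increments: the elements of $A\setminus A'$ to the first coordinate and those of $B\setminus B'$ to the second. Fixing one common ordering $a_1,\dots,a_p$ of $A\setminus A'$ and $b_1,\dots,b_q$ of $B\setminus B'$, I would build two parallel monotone chains that first fill the first coordinate and then the second, and compare them term by term. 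At the step adding $a_i$, the lower chain sits at $(P \cup \{a_1,\dots,a_{i-1}\},\, Q)$ and the upper at $(A' \cup \{a_1,\dots,a_{i-1}\},\, B')$; since $P \subseteq A'$, $Q \subseteq B'$, and $a_i \notin A'$, the first diminishing-returns condition gives that the lower marginal dominates the upper. At the step adding $b_j$ the first coordinates are $A \subseteq A\cup A'$, the second coordinates are $Q\cup\{b_1,\dots,b_{j-1}\} \subseteq B' \cup \{b_1,\dots,b_{j-1}\}$, and $b_j \notin B'$, so the second condition applies. Summing all $p+q$ term-by-term inequalities telescopes to $f(A,B) - f(P,Q) \ge f(A\cup A',\, B\cup B') - f(A',B')$, which rearranges to the lattice inequality. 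The only care required is verifying, at each increment, the two coordinatewise containments and the non-membership of the added element in the upper chain, so that the correct one of the two diminishing-returns inequalities is licensed at each step.
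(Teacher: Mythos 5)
Your proposal is correct, but there is no proof in the paper to compare it against: Proposition~\ref{ref:prop_equiv_simp_bisubmod} is stated without proof, invoked as the standard equivalence between submodularity on the product lattice $2^V \times 2^V$ (with coordinatewise union and intersection as join and meet) and the coordinatewise diminishing-returns conditions. Your first direction --- instantiating the lattice inequality at the pairs $(A+s,B),(A',B')$ and $(A,B+s),(A',B')$ and computing the join and meet --- is exactly what the literal statement requires, since ``whenever'' asserts only that the lattice inequality implies the definition. Your converse, built from the two parallel monotone chains from $(A\cap A', B\cap B')$ up to $(A,B)$ and from $(A',B')$ up to $(A\cup A', B\cup B')$ with termwise comparison and telescoping, is also correct, and it is what justifies the paper's surrounding prose that this is an \emph{equivalent} way to define simple bisubmodularity. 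The observation that no monotonicity is needed is accurate.

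One caveat concerns only the converse (the part beyond the literal claim): it is valid under the reading of the definition in which $s \notin A'$ licenses the first inequality and $s \notin B'$ licenses the second, which is the standard product-lattice reading. If one instead reads the definition as requiring $s \notin A'$ \emph{and} $s \notin B'$ jointly before either inequality applies, then at your $a_i$-steps the added element may lie in $B'$ and the first diminishing-returns condition would not be licensed; indeed under that joint reading the equivalence is false (for $|V|=1$ the definition becomes vacuous --- the only admissible $s$ forces $A'=B'=\emptyset$ and both inequalities degenerate to equalities --- while the lattice inequality still nontrivially constrains $f(\{x\},\emptyset)+f(\emptyset,\{x\}) \geq f(\{x\},\{x\})+f(\emptyset,\emptyset)$). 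The stated ``whenever'' direction is unaffected by this ambiguity, since your instantiation argument derives each inequality under hypotheses at least as weak as either reading; you might simply note which reading your converse assumes.
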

\noindent If the domain is of the form 
$3^V \triangleq \lbrace (A ,B) : A \subseteq V,\ B \subseteq V,\ A \cap B =
\emptyset \rbrace$, then we can define directed bisubmodularity as follows:
\begin{definition}[Directed Bisubmodularity~\cite{bisubmod}]
  Biset function $f: 3^V \to \mathbb{R}$ is directed bisubmodular whenever
\begin{align}
    f(A, B) + f(A', B') \geq  f(A \cap A', B \cap B') +
    f((A \cup A') \setminus (B \cup B'), (B \cup B') \setminus (A \cup A')).
\end{align}
\end{definition}
Directed bisubodularity functions have been generalized to what is
known as $k$-submodular functions in
\cite{kolmogorov2011submodularity,DBLP:journals/corr/HuberK13a}.  More recently, simple
bisubmodularity~\cite{singh2012-bisubmod} has been generalized to
multivariate submodular functions~\cite{santiagomultivariatesub2016}.
A multivariate submodular (or what we will call a $k$-multi-submodular) function
$f: (2^V)^k \to \mathbb R$ is defined as a function
such that 
for all $(X_1,X_2, \dots, X_k),(Y_1,Y_2, \dots, Y_k) \in (2^V)^k$, we have that:
\begin{align}
f(X_1,X_2, \dots, X_k)
+ f(Y_1,Y_2, \dots, Y_k)
\geq
f(X_1 \cup Y_1, X_2 \cup X_2, \dots, X_k \cup Y_k)
+ 
f(X_1 \cap Y_1, X_2 \cap X_2, \dots, X_k \cap Y_k)
\end{align}
These are not the same as k-submodular functions~\cite{DBLP:journals/corr/HuberK13a}
but for $k=1$ we obtain standard submodular functions
and for $k=2$ we obtain simple bisubmodular functions.

A DSF with $k' > k$ layers can be used to instantiate a
$k$-multi-submodular function. Consider a layered-DSF with $k'$ layers
corresponding to sets $V^{(0)}$, $V^{(1)}$, \dots, $V^{(k')}$.  Choose
a size $k$ subset of these layers, say
$\sigma_1, \sigma_2, \dots, \sigma_k$ where $\sigma_j \in [0,k'-1]$ for all $j$,
$\sigma_1 = 0$, and all of the $\sigma_j$'s are distinct, w.l.o.g.,
$0 = \sigma_1 < \sigma_2 < \dots < \sigma_k \leq k'-1$. Given an
$f \in \text{DSF}_{k'}$, we ordinarily obtain a valuation $f(A)$ using
a subset $A \subseteq V^{(0)}$ of the ground set.  Now, consider
$f: (2^V)^k \to \mathbb R$ where $A_1 \subseteq V^{(\sigma_1)}$,
$A_2 \subseteq V^{(\sigma_2)}$, \dots, $A_k \subseteq V^{(\sigma_k)}$
and the value of $f(A_1,A_2, \dots, A_k)$ is obtained as:
\begingroup\makeatletter\def\f@size{10}\check@mathfonts
\def\maketag@@@#1{\hbox{\m@th\small\normalfont#1}}%
\begin{align}
\label{eq:deep_submodular_polymatroid_b}
\phi_{v^{k'}}&
\Biggl(
\sum_{v^{k'-1} \in \bar V^{(k'-1)}}
w^{(k')}_{v^{k'}}(v^{k'-1})
\phi_{v^{k'-1}}
\biggl(
\dots
\!\!\!\!
\sum_{v^1 \in \bar V^{(1)}}
w_{v^2}^{(2)}(v^1)
\phi_{v^1}
\Bigl(
\sum_{v^0 \in \bar V^{(0)}}
w_{v^1}^{(1)}(v^0)
\phi_{v^0}
\bigl(
\sum_{a \in \bar V^{(0)}} w_{v^1}^{(0)}(a)
\bigr)
\Bigr)
\biggr)
\Biggr) \\
&+ m_\pm^{(1)}(A_1) 
+m_\pm^{(2)}(A_2) 
+ \dots
+m_\pm^{(k)}(A_k) 
\end{align}\endgroup
where $\bar V^{(i)} = V^{(i)} \cap A_{\sigma^{-1}_i}$ whenever
$\exists j \in [0,k'-1] : i = \sigma_j$ and otherwise $\bar V^{(i)} = V^{(i)}$, and
where $m_\pm^{(j)} : V^{(\sigma_j)} \to \mathbb R$, for each $j$, is
an arbitrary modular function.  In other words, $A_j$ acts as a set of
binary triggers to activate a set of units at layer $j$ in the DSF.
If we hold all but layer $j$ fixed, then $A_j$ can be seen as the set
of units to provide the values for the vector $\mathbf b_{A_j}$ in
Corollary~\ref{thm:submod_antitone_super} and as a result, we get as a
result that the function is submodular in $A_j$.
$k$-multi-submodularity then follows from a generalization of
Proposition~\ref{ref:prop_equiv_simp_bisubmod} to
$k$-multi-submodularity.

Deep $k$-multi-submodular functions should be useful in a number of
applications, for example representing information jointly in a set of
features and data items (and could be useful for simultaneous
feature/data subset selection).


%


\subsection{Simultaneously Learning Hash and Submodular Functions}
\label{sec:hash-functions}

One of the difficulties in training DSFs is obtaining a sufficient
amount of training data. It would be useful therefore to have an
strategy to easily and cheaply obtain as much training data as
desired.  In the spirit of the empirical success of DNNs, this section
suggests one strategy for doing this.

The goal is to learn a map from a vector $x \in \mathbb R^d$ to a
$b$-bit vector via a function $h_\theta: \mathbb R^d \to \set{0,1}^b$,
anywhere $h_\theta$ is parameterized by $\theta$. The reason for doing
this is to take data objects (e.g., images, documents, music files,
etc.) that are represented in the input space $R^d$ and map them to
binary space $\set{-1,1}^b$ where $b < d$ and, moreover, since the
space is binary, operations such nearest neighbor search are
faster.  
There are existing approaches that can learn this mapping
automatically, sometimes using neural networks (e.g.,
\cite{grauman2013learning}). Often,
$h_\theta: \mathbb R^d \to \set{-1,1}^b$ rather than
$h_\theta: \mathbb R^d \to \set{0,1}^b$, but this should not be of any
consequence.

This section describes a strategy for learning hash functions that
utilizes DSFs, the \lovasz{} extension, and the submodular Hamming
metric~\cite{gillenwater2015nips}. Let $f: 2^V \to \mathbb R$ be a
submodular function and let $\lex f$ be its \lovasz{} extension.
Also, let $d_f(A,B) = f(A \triangle B)$ be the submodular Hamming
metric between $A$ and $B$ parameterized by submodular function $f$.
We are given a large (and possibly unlabeled) data set
$\mathcal D = \set{ x_i }_{i \in D}$ and a corresponding distance
function between data pairs ($d(x_i,x_j)$ is the distance between item
$x_i \in \mathbb R^d$ and $x_j \in \mathbb R^d$). The goal is to
produce a mapping $h_\theta: \mathbb R^d \to \set{0,1}^b$ so that
distances in the ambient space $d(x_i,x_j)$ are preserved in the
binary space. One approach adjusts $h_\theta$ to ensure
that
$d(x_i,x_j) = \sum_{\ell=1}^b \mathbf 1_{ h_\theta(x_i)(\ell) \neq
  h_\theta(x_j)(\ell)}$.
That is, we adjust $h_\theta(x_i)$ so that the Hamming distance
preserves the distances in the ambient space.

In general, this problem is made more difficult by the rigidity of the
Hamming distance. In order to relax this constraint, we can use a
submodular Hamming metric parameterized by a DSF $f_w$ (which itself
is parameterized by $w$).  Hence, the hashing problem can be seen as
finding $\theta$ and $w$ so that the following is true as much as
possible.
\begin{align}
d(x_i,x_j) = 
d_{f_w}(h_\theta(x_i),h_\theta(x_j))
\end{align}
The function $h_\theta$ maps to binary vectors, and $d_{f_w}$ is a
function on two sets. This makes it difficult to pass derivatives
through these functions in a back-propagation style learning
algorithm. To address this issue, we can further relax this problem in
the following way:
\begin{itemize}

\item Given $A,B \subseteq V$, the Hamming distance is
  $|A \triangle B|$ and we can represent this as
  $(\mathbf 1_A \otimes (\mathbf 1_V - \mathbf 1_B) + \mathbf 1_B \otimes
  (\mathbf 1_V - \mathbf 1_A))(V)$
where $\otimes : \mathbb R^n \times \mathbb R^n \to \mathbb R^n$ is the vector element multiplication operator
(i.e., $[x \otimes y](j) = x(j)y(j)$).
In other words, we define a vector $z_{A \triangle B} \in \set{0,1}^V$
with
\begin{align}
z_{A \triangle B} &= \mathbf 1_A \otimes (\mathbf 1_V - \mathbf 1_B) + \mathbf 1_B \otimes
  (\mathbf 1_V - \mathbf 1_A) \\
&= \mathbf 1_A + \mathbf 1_B - 2 \mathbf 1_A \otimes \mathbf 1_B
\end{align}
and $|A \triangle B| = z_{A \triangle B}(V) = \sum_{i \in V} z_{A \triangle B}(i)$.
Hence, the submodular hamming metric is
$f(A \triangle B) = \lex f(z_{A \triangle B})$, which 
holds since the \lovasz{} extension is tight at the vertices of the
hypercube.

\item For two arbitrary vectors $z_1,z_2 \in [0,1]^V$, we can define
a relaxed form of metric
as follows: $d(z_1,z_2) = \lex f(z_1 + z_2 - 2z_1 \otimes z_2)$,
and for a DSF, this can be expressed as
$d_{\lex f_w}(z_1,z_2) = \lex f_w(z_1 + z_2 - 2z_1 \otimes z_2)$.

\item Let us suppose that $\tilde h_\theta : \mathbf R^d \to [0,1]^b$
  is a mapping from real vectors to vectors in the hypercube (e.g.,
  $\tilde h_\theta$ might be expressed with a deep model with a final
  layer of $b$ sigmoid units at the output to ensure that each output is between
  zero and one). Then we can construct a distortion between $x_i$ and
  $x_j$ via
\begin{align}
  d_{w,\theta}(x_i,x_j) \triangleq
  d_{\lex f_w}(\tilde h_\theta(x_i), \tilde h_\theta(x_j))
  = \lex f_w( \tilde h_\theta(x_i) +\tilde h_\theta(x_j)
   - 2 \tilde h_\theta(x_i) \otimes \tilde h_\theta(x_j))
\end{align}
Hence, $d_{w,\theta}$ is a parametric family of distortion functions
that uses two maps, one via the DNN $\tilde h_\theta$ and another via
the DSF $f_w$ using the Lovasz extension $\lex f_w$.

\item Assuming the original unlabeled data set $\mathcal D$ is large,
  and the distance function in the ambient space is accurate, it may
  be possible to learn both $w$ and $\theta$ by forming an objective
  function to minimize:
\begin{align}
J(w,\theta)
= \sum_{i,j \in D} 
\|
d(x_i,x_j)
- d_{w,\theta}(x_i,x_j) \|.
\end{align}
Learning ($\min_{w,\theta}J(w,\theta)$)
can utilize stochastic gradient steps and the
entire arsenal of DNN training methods.


\end{itemize}

The approach learns both the mapping function $\tilde h_\theta$ and
the submodular function $f_w$ simultaneously in a way that preserves
the original distances. It may therefore be that $\tilde h_\theta$ can
be used as a feature transformation (i.e. a way to map data objects
$x$ into feature space via $\tilde h_\theta$), and at the same time we
obtain a submodular function $f_w$ over those features that, perhaps,
can useful for summarization, all without needing labeled training
data as in Section~\ref{sec:learn-deep-subm}.

\section{Conclusions and Future Work}
\label{sec:concl-future-work}

In this paper, we have provided a full characterization of our
newly-proposed class of submodular functions, DSFs. We have introduced
the antitone gradient as a way of establishing subclasses of
submodular functions. We have shown that DSFs constitute a strictly
larger family than the family of submodular functions obtained by
additively combining concave composed with modular functions (SCMMs).
We have also shown that DSFs do not comprise all submodular functions.
This was all done in the special context of matroid rank functions,
and also in a more general context.

As mentioned at various points within the paper, there are several
interesting open problems associated with DSFs.  An immediate task is
to further develop practical strategies for successfully empirically
learning DSFs, as was initiated in~\cite{bilmes2016dsf_nips}. A second
task is to establish generalization bounds for learning DSFs in an ERM
framework. A third task asks if there is a finite set of ``boot''
submodular functions that, when cascaded into a DSF as in
Theorem~\ref{thm:submod_antitone_super_on_vector_poly}, lead to a
family that comprises all polymatroid functions. And lastly, it remains
to compare the DSF family with the family of all entropy
functions~\cite{zhang1997non}.

\section{Acknowledgments}

Thanks to Brian Dolhansky for helping with building an initial
implementation of learning DSFs that was used in
\cite{bilmes2016dsf_nips}.  Thanks also to Reza Eghbali and Kai Wei
for useful discussions, and to Jan Vondrak for suggesting the use of
surplus and deficit as an analysis strategy.  This material is based
upon work supported by the National Science Foundation under Grant
No. IIS-1162606, the National Institutes of Health under award
R01GM103544, and by a Google, a Microsoft, a Facebook, and an Intel
research award.  Thanks also to the Simons Institute for the Theory of
Computing, Foundations of Machine Learning Program.  This work was
supported in part by TerraSwarm, one of six centers of STARnet, a
Semiconductor Research Corporation program sponsored by MARCO and
DARPA.

\addcontentsline{toc}{section}{References}
\bibliography{dsf_nips2016}
\bibliographystyle{plain}

\appendix

\section{More General Conditions on Two-Layer Functions: Proofs}
\label{sec:more-gener-cond-1}



\begin{proof}[Proof of Theorem~\ref{theo:concaveOverSumOfMin}]
We begin with the ``only if'' part. In the proof, we always assume the ground set $V = \set{a,b,c,d,e,f}$.
\begin{definition}

  Consider the bijection $p:V\rightarrow V$. Let
  $A_p = \set{p(v)|v\in A}$.  Notationally, we may write a given $p$
  as $(v_1,v_2,\ldots,v_k)\rightarrow (u_1,u_2,\ldots,u_k)$ where
  $u_i,v_i \in V$ with $u_i = p(v_i)$. Let $P_{A}$ be the set of all
  one-to-one maps that are an identity for $v \in V \setminus A$, that
  is $p(v)=v$ for all $v\in V\setminus A$.  Corresponding to
  Theorem~\ref{theo:concaveOverSumOfMin}, in the below, assume
  $V=\set{a,b,c,d,e,f}$.  We next define a number of operators that
  allow us to study the partial permutation symmetry of a set
  function.
\end{definition}
\begin{definition}
\label{def:symmetryOperation}
For any submodular function $h$, let:
\begin{itemize}
\item $E_{B}$ be an operator such that $E_B h(A) = \frac{1}{|P_B|}(\sum_{p\in P_B} h(A_p))$; 
\item $E'$ be an operator such that $E'h(A) = \frac{1}{2} [h(A)+h(A_{(a,b,c,d,e,f)\rightarrow(d,e,f,a,b,c)}]$;
\item and  $E$ be an operator such that  $Eh(A) = E'E_{\set{d,e,f}}E_{\set{a,b,c}}h(A)$.
\end{itemize}
\end{definition}
Immediately, we have the following lemma.
\begin{lemma}
\label{lemma:persevedProjection}
$E g(A) = g(A)$ for all $A\subseteq V$. Also, $E$ is a linear
operation, that is $E(h_1+h_2) = Eh_1 + Eh_2$. Lastly, if $h$ is an SCMM, $Eh$
is also an SCMM.
\end{lemma}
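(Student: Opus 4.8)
The plan is to observe that all three claims reduce to the single structural fact that $E$ is built entirely out of \emph{permutation pullbacks}. For a permutation $p$ of the ground set $V=\{a,b,c,d,e,f\}$, write $T_p$ for the map sending a set function $h$ to the function $A \mapsto h(A_p)$. Every building block of $E$ is a uniform average of such pullbacks: $E_B h = \frac{1}{|P_B|}\sum_{p\in P_B} T_p h$ averages over the permutations fixing $V\setminus B$, and $E' h = \tfrac12(h + T_{p_0} h)$ where $p_0=(a\,d)(b\,e)(c\,f)$ is the block-swap permutation $(a,b,c,d,e,f)\to(d,e,f,a,b,c)$. Since $E = E' E_{\{d,e,f\}} E_{\{a,b,c\}}$ is a composition of these averages, it suffices to understand how the $T_p$ interact with (i) the specific function $g$, (ii) linear structure, and (iii) the SCMM family.

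For linearity (the second claim), each $T_p$ is manifestly linear in $h$, any average $\sum_p c_p T_p$ is therefore linear, and a composition of linear maps is linear, so $E(h_1+h_2)=Eh_1+Eh_2$ is immediate. For the invariance $Eg=g$ (the first claim), I would note that the function of Theorem~\ref{theo:concaveOverSumOfMin}, $g(A)=\phi(\min(|A\cap\{a,b,c\}|,2)+\min(|A\cap\{d,e,f\}|,2))$, depends on $A$ only through the unordered pair of counts $(|A\cap\{a,b,c\}|,|A\cap\{d,e,f\}|)$: each generator of the relevant symmetry group fixes $g$ because permutations inside $\{a,b,c\}$ or inside $\{d,e,f\}$ preserve both counts, while the block swap $p_0$ merely exchanges the two counts, to which $g$ is symmetric (the two $\min$ terms are added). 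Hence $T_p g = g$ for every $p$ occurring in $E_{\{a,b,c\}}$, $E_{\{d,e,f\}}$, or $E'$, and each operator leaves $g$ unchanged, giving $Eg=g$.

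For the SCMM closure (the third claim) I would isolate two elementary closure properties of the SCMM family of Equation~\eqref{eq:scmm_general}. First, SCMMs are closed under pullback: if $h(A)=\sum_i\phi_i(m_i(A))+m_\pm(A)$ then $h(A_p)=\sum_i\phi_i(m_i\!\circ p(A))+m_\pm\!\circ p(A)$, where $m_i\circ p$ and $m_\pm\circ p$ are again (non-negative, resp.\ arbitrary) normalized modular functions and the $\phi_i$ are unchanged; so $T_p h$ is an SCMM. Second, SCMMs are closed under conic combination (concatenate the concave-over-modular terms and add the modular parts, with normalization preserved). Since $E_B$ and $E'$ are averages of pullbacks with positive coefficients, each maps an SCMM to a conic combination of SCMMs, hence to an SCMM; composing the three operators gives that $Eh$ is an SCMM.

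I do not expect a genuine obstacle here — the lemma is ``immediate'' precisely because it is bookkeeping — but the one place to be careful is the pullback step in the third claim: one must check that $p_0$ really is a bijection of all of $V$ (so that $T_{p_0}$ is a legitimate pullback and not merely a partial relabeling), and that the transformed modular functions retain non-negativity of the $m_i$ and normalization of $m_\pm$ and of $g$ itself (which holds because $\emptyset_p=\emptyset$). Verifying these routine points is all that stands between the reduction above and the conclusion.
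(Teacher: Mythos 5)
Your proof is correct and matches the paper's (implicit) approach: the paper states this lemma with no proof at all, introducing it only with ``Immediately, we have the following lemma,'' and your permutation-pullback bookkeeping --- invariance of $g$ under within-block permutations and the block swap, linearity of averages of pullbacks $T_p$, and closure of SCMMs under pullback (relabelled modular functions stay non-negative/normalized) and conic combination --- is exactly the routine verification the paper elides.
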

\begin{lemma}
\label{lemma:Ehisdeterminedbyn1n2}
For any $A,B\subseteq V$, if $[|A\cap\{a,b,c\}|= |B\cap\{a,b,c\}| \text{ AND } |A\cap\{d,e,f\}|= |B\cap\{d,e,f\}|]$ OR  $[|A\cap\{a,b,c\}|= |B\cap\{d,e,f\}|\text{ AND }|A\cap\{d,e,f\}|= |B\cap\{a,b,c\}|]$, then $Eh(A) = Eh(B)$.
\end{lemma}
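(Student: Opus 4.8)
The plan is to exploit the fact that each of the three operators composing $E$ is an average over a group of permutations of $V$, and that such an average always produces a function invariant under the corresponding group. Concretely, I would first record the elementary observation that for any finite group $\Gamma$ of bijections of $V$ and any $\gamma_0 \in \Gamma$, the symmetrized function $(E_\Gamma h)(A) \triangleq \frac{1}{|\Gamma|}\sum_{\gamma \in \Gamma} h(A_\gamma)$ satisfies $(E_\Gamma h)(A_{\gamma_0}) = (E_\Gamma h)(A)$, since the reindexing $\gamma \mapsto \gamma\gamma_0$ permutes $\Gamma$ and leaves the sum unchanged. Here $E_{\{a,b,c\}}$ is exactly $E_\Gamma$ for $\Gamma = P_{\{a,b,c\}}$ (the six bijections fixing $\{d,e,f\}$ pointwise), $E_{\{d,e,f\}}$ is $E_\Gamma$ for $\Gamma = P_{\{d,e,f\}}$, and $E'$ is $E_\Gamma$ for the order-two group $\{\mathrm{id},\tau\}$ with $\tau$ the block swap $(a,b,c,d,e,f)\mapsto(d,e,f,a,b,c)$.

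Writing $n_1(A) = |A \cap \{a,b,c\}|$ and $n_2(A) = |A \cap \{d,e,f\}|$, I would then chase the dependence through the composition $E = E'\,E_{\{d,e,f\}}\,E_{\{a,b,c\}}$. After applying $E_{\{a,b,c\}}$, the group-invariance above shows the result is unchanged under any relabeling within $\{a,b,c\}$; since any two subsets of $\{a,b,c\}$ of equal size are related by such a relabeling, and since $E_{\{a,b,c\}}$ leaves $A \cap \{d,e,f\}$ untouched, $E_{\{a,b,c\}}h(A)$ depends only on $n_1(A)$ and on the exact set $A \cap \{d,e,f\}$. Applying $E_{\{d,e,f\}}$ next --- which fixes $A\cap\{a,b,c\}$ pointwise and hence preserves $n_1$ --- the same argument collapses the dependence on $A\cap\{d,e,f\}$ to its cardinality $n_2$. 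Thus $\tilde h \triangleq E_{\{d,e,f\}}E_{\{a,b,c\}}h$ is a function of the ordered pair $(n_1,n_2)$ alone, say $\tilde h(A) = G(n_1(A), n_2(A))$.

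Finally I would apply $E'$. Because $\tau$ swaps the two blocks, $n_1(A_\tau) = n_2(A)$ and $n_2(A_\tau) = n_1(A)$, so $Eh(A) = E'\tilde h(A) = \frac{1}{2}\bigl(G(n_1,n_2) + G(n_2,n_1)\bigr)$, which is manifestly a symmetric function of the unordered pair $\{n_1(A), n_2(A)\}$. The two disjunctive hypotheses in the lemma are precisely the statements that $A$ and $B$ yield the same ordered pair $(n_1,n_2)$, or the swapped pair $(n_2,n_1)$; in either case the symmetric expression above gives $Eh(A) = Eh(B)$.

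The only real bookkeeping obstacle is keeping straight how a permutation of one block acts on the intersection with the other block --- that is, verifying that $E_{\{d,e,f\}}$ does not disturb the already-established $n_1$-only dependence, and that under $\tau$ the two cardinalities genuinely exchange. These points are routine once one notes that the groups behind $E_{\{a,b,c\}}$ and $E_{\{d,e,f\}}$ act on disjoint coordinate blocks, and that $\tau$ is an involution interchanging those blocks; no commutativity of the operators beyond the fixed composition order in Definition~\ref{def:symmetryOperation} is needed.
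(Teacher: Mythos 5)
Your proof is correct. Note that the paper itself offers no proof of this lemma at all: it is stated as an immediate consequence of Definition~\ref{def:symmetryOperation}, so there is no ``paper approach'' to compare against --- your group-averaging argument is simply the natural rigorous version of what the authors treat as obvious. The three ingredients you use are exactly what is needed: (i) averaging over a finite group $\Gamma$ produces a $\Gamma$-invariant function (via the reindexing $\gamma \mapsto \gamma\gamma_0$); (ii) because $P_{\{a,b,c\}}$ and $P_{\{d,e,f\}}$ act on disjoint blocks, their elements commute, so the outer average $E_{\{d,e,f\}}$ does not destroy the invariance established by the inner average $E_{\{a,b,c\}}$, and hence $E_{\{d,e,f\}}E_{\{a,b,c\}}h$ depends only on the ordered pair $(n_1,n_2)$; and (iii) the block swap $\tau$ exchanges $n_1$ and $n_2$, so the final $E'$ symmetrizes in the pair, yielding dependence only on the unordered pair $\{n_1,n_2\}$, which is precisely the disjunctive hypothesis of the lemma. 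You correctly flag the disjoint-block commutativity as the only genuine bookkeeping step; it holds for exactly the reason you give.
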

This means that $Eh(A)$ is fully determined by 
the unordered pair $\set{ |A\cap \set{a,b,c}|, |A\cap \set{d,e,f}| }$.

\begin{definition}
\label{def:unorderedPair}
For any $h : 2^V \to \mathbb R$, define $Eh(n_1,n_2) = Eh (A)$, where
$n_1 = |A\cap \set{a,b,c}|$, $n_2 = |A\cap \set{d,e,f}|$, and
$0\leq n_2\leq n_1\leq 3$.
\end{definition}
Since this section shows
the ``only if'' part of Theorem~\ref{theo:concaveOverSumOfMin},
we have $g(A)$ is an SCMM, thus 
by Lemma~\ref{lemma:sums_mod_truncs},
$g(A) = \sum_i  \min(m_i(A),\beta_i) + m_{\pm}(A)$, 
where $m_i\geq 0$ is non-negative modular and $\beta_i>0$. Immediately, we have 
\begin{align}
Eg(A) &= \sum_i E\min(m_i(A),\beta_i) + Em_{\pm}(A)\\
g(A) &= \sum_i E g_i(A) + Em_{\pm}(A)
\end{align}
according to lemma~\ref{lemma:persevedProjection}, where
$g_i(A) = \min(m_i(A),\beta_i)$. Moreover, we assume
$m_i(V)>\beta_i>0$ for each i; otherwise $g_i$ is modular and can be
merged into the final modular term.  Furthermore, we assume that
$m_i(v)\leq \beta_i$ for all $v\in V$ and all $i$. If
$m_i(v)>\beta_i$, it means that $\min(m_i(A),\beta_i) = \beta_i$
whenever $v$ is selected in $A$. In such case, we can let
$m_i(v) = \beta_i$ which have the same function value for all
$A$. Therefore we have
\begin{lemma}
\label{lemma:finalGainIsLessThanMv}
$g_i(v|V\setminus\set{v})<g_i(v) = m_i(v)$ for all $i$ and $v$ s.t. $m_i(v)>0$.
In other words, $I_{g_i}(v; V \setminus v) > 0$ for all $i$ with $g_i(v) > 0$.
\end{lemma}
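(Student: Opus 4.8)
The plan is to prove the lemma by direct computation, since $g_i(A) = \min(m_i(A), \beta_i)$ is just a truncated modular function and all three set evaluations needed can be written explicitly. I would lean entirely on the two normalizing assumptions stated immediately above: that $m_i(v) \leq \beta_i$ for every $v \in V$ (so a singleton never triggers the truncation) and that $m_i(V) > \beta_i$ (so the truncation is strictly active at the full ground set). I would also first record the equivalence of the two formulations: since $I_{g_i}(v; V \setminus \set{v}) = g_i(v) + g_i(V \setminus \set{v}) - g_i(V) = g_i(v) - g_i(v \mid V \setminus \set{v})$, the strict inequality $g_i(v \mid V \setminus \set{v}) < g_i(v)$ is exactly the claim $I_{g_i}(v; V \setminus \set{v}) > 0$.

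First I would compute the singleton value: because $m_i(v) \leq \beta_i$, we have $g_i(v) = \min(m_i(v), \beta_i) = m_i(v)$, which already gives the claimed equality $g_i(v) = m_i(v)$. Next, because $m_i(V) > \beta_i$, the truncation is active at the top, so $g_i(V) = \beta_i$. The marginal gain then becomes
\begin{align}
g_i(v \mid V \setminus \set{v}) = g_i(V) - g_i(V \setminus \set{v}) = \beta_i - \min(m_i(V) - m_i(v), \beta_i).
\end{align}

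It then remains to check $\beta_i - \min(m_i(V) - m_i(v), \beta_i) < m_i(v)$, which I would do by splitting on whether the inner truncation is active. If $m_i(V) - m_i(v) \geq \beta_i$, the gain equals $\beta_i - \beta_i = 0$, which is strictly below $m_i(v)$ since $m_i(v) > 0$. If instead $m_i(V) - m_i(v) < \beta_i$, the gain equals $\beta_i - m_i(V) + m_i(v)$, and the desired inequality reduces to $\beta_i < m_i(V)$, which holds by assumption. In both cases $g_i(v \mid V \setminus \set{v}) < g_i(v) = m_i(v)$.

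The computation itself presents no obstacle; the only care needed is to justify that the two preparatory reductions carried out just before the lemma --- discarding any term $g_i$ with $m_i(V) \leq \beta_i$ into the final modular $m_\pm$, and replacing $m_i(v)$ by $\beta_i$ whenever $m_i(v) > \beta_i$ without changing $g_i(A)$ on any $A$ --- are legitimate, so that both normalizing hypotheses may be assumed free of charge. Once those are in place the lemma is immediate, and the key structural fact it records is that each truncated term $g_i$ strictly loses value on its singletons relative to their unconditional worth, which is what drives the surplus-counting arguments in the subsequent proof.
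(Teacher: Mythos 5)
Your proof is correct and is essentially the paper's argument made explicit: the paper disposes of the lemma in one line by noting that $m_i(V)$ passes the truncation point $\beta_i$ while $m_i(v)$ does not, which is exactly the content of your two-case computation of $\beta_i - \min(m_i(V)-m_i(v),\beta_i)$. Your added care about the two normalizing reductions and the equivalence $I_{g_i}(v;V\setminus\set{v}) = g_i(v) - g_i(v\mid V\setminus\set{v})$ is sound and consistent with how the paper sets up the lemma.
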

\begin{proof}
This follows since $m_i(V)$ passes the linear part of $g_i$ but $m_i(v)$ does not.
\end{proof}
\begin{lemma}
\label{lemma:finalGainIsGainOfbc}
 $g_i(a|\set{b,c}) = g_i(a|\set{b,c,d,e,f})$ for all $i$.
\end{lemma}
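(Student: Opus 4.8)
The plan is to recognize the desired equality as the statement that a particular submodular second difference of each $g_i$ vanishes, and then to force that vanishing from the corresponding (easily computed) vanishing for $g$ itself. For any set function $h$ define the linear ``gap'' functional
\[
\Gamma(h) \triangleq h(\set{a,b,c}) + h(\set{b,c,d,e,f}) - h(\set{b,c}) - h(V).
\]
Since $g_i(a \mid \set{b,c}) - g_i(a \mid \set{b,c,d,e,f}) = \Gamma(g_i)$, the claim is exactly that $\Gamma(g_i) = 0$. First I would note that the pair $\set{a,b,c}$, $\set{b,c,d,e,f}$ has union $V$ and intersection $\set{b,c}$, so $\Gamma(g_i)$ is the submodular inequality expression for $g_i$ on this pair; as each $g_i = \min(m_i(\cdot),\beta_i)$ is submodular, $\Gamma(g_i) \geq 0$ automatically, and only the reverse inequality needs work. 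At the level of the target function the gap is zero: directly from the defining formula, $g(\set{a,b,c}) = g(\set{b,c}) = \phi(2)$ and $g(\set{b,c,d,e,f}) = g(V) = \phi(4)$, whence $\Gamma(g) = 0$.

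The main obstacle is that the available decomposition is $g = \sum_i E g_i + E m_\pm$ and so is phrased in terms of the symmetrized terms $E g_i$, not the raw truncations $g_i$; and indeed $\Gamma(g_i) = 0$ fails for a generic truncation obeying only $m_i(v) \leq \beta_i < m_i(V)$, so the conclusion must genuinely be extracted from the global identity. I would resolve this using two observations. First, $\Gamma$ is linear in its argument and $E = E' E_{\set{d,e,f}} E_{\set{a,b,c}}$ is a convex combination of relabeling operators $h \mapsto h \circ \sigma$ (each of $E_{\set{a,b,c}}$, $E_{\set{d,e,f}}$, $E'$ averages over permutations of $V$, and permutations compose to permutations), so we may write $\Gamma(E g_i) = \sum_\sigma \lambda_\sigma \Gamma(g_i^{(\sigma)})$ with $\lambda_\sigma \geq 0$ and $g_i^{(\sigma)}(A) = g_i(\sigma(A))$. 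Second, for every permutation $\sigma$ the four sets $\sigma(\set{a,b,c})$, $\sigma(\set{b,c,d,e,f})$, $\sigma(\set{b,c})$, $\sigma(V)$ are $P$, $Q$, $P \cap Q$, $P \cup Q$ for $P = \sigma(\set{a,b,c})$ and $Q = \sigma(\set{b,c,d,e,f})$, so each $\Gamma(g_i^{(\sigma)})$ is again a bona fide submodular second difference of the submodular $g_i$ and hence $\Gamma(g_i^{(\sigma)}) \geq 0$.

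Finally I would combine these. Summing $\Gamma(E g_i) \geq 0$ over $i$ and adding the modular piece (for which $\Gamma(E m_\pm) = 0$, since modular functions have vanishing second differences) gives, by linearity of $\Gamma$, that $\sum_i \Gamma(E g_i) = \Gamma(g) = 0$. As every summand is non-negative, each $\Gamma(E g_i) = 0$; then $\sum_\sigma \lambda_\sigma \Gamma(g_i^{(\sigma)}) = 0$ with non-negative terms forces $\Gamma(g_i^{(\sigma)}) = 0$ for all $\sigma$, and taking $\sigma$ to be the identity yields $\Gamma(g_i) = 0$, i.e. $g_i(a \mid \set{b,c}) = g_i(a \mid \set{b,c,d,e,f})$, as claimed. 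I expect the only finicky part to be confirming that $E$ is literally a convex combination of relabelings and that the relabeled quadruple collapses to a lattice square, both of which are routine once written out.
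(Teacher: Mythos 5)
Your proof is correct, and at its core it runs on the same squeeze that the paper intends: each term's gap $g_i(a|\set{b,c}) - g_i(a|\set{b,c,d,e,f})$ is non-negative by submodularity applied to the lattice square formed by $\set{a,b,c}$ and $\set{b,c,d,e,f}$, the corresponding gap for $g$ itself is zero by direct evaluation ($g(a|\set{b,c}) = \phi(2)-\phi(2) = 0$ and $g(a|\set{b,c,d,e,f}) = \phi(4)-\phi(4) = 0$), and the additive decomposition of $g$ then forces every term's gap to vanish. Two differences are worth noting. First, your detour through the symmetrized terms $Eg_i$ --- writing $E$ as a convex combination of relabelings and checking that each relabeled quadruple is again a lattice square --- is sound (including the implicit fact that the identity permutation carries positive weight), but it is unnecessary: the raw decomposition $g = \sum_i g_i + m_\pm$ is available, being exactly the SCMM hypothesis in the form of Lemma~\ref{lemma:sums_mod_truncs} and the form in which the paper first states it, so you could sum $\Gamma(g_i) \geq 0$ directly and dispose of the modular part with your observation that $\Gamma$ kills modular functions. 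Second, the paper's own write-up phrases the squeeze through monotonicity of the pairwise surplus $I_g(a;\cdot)$ and asserts along the way that $I_g(a;\set{b,c,d,e,f}) = 0$ and that $I_{g_i}(a;\set{b,c}) = I_{g_i}(a;\set{b,c,d,e,f}) = 0$ for every $i$; those vanishing claims are not correct as stated, since both surpluses of $g$ equal $\phi(1) > 0$, and a legitimate term such as $g_i(A) = \min(|A\cap\set{a,b,c}|,1)$ has $I_{g_i}(a;\set{b,c}) = 1$. What is true, and all the lemma needs, is the \emph{equality} of the two surpluses term by term, which is precisely what your $\Gamma$-argument establishes. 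So your proposal is in effect the rigorous version of the paper's intended argument, at the cost only of the superfluous symmetrization step.
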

\begin{proof}
  We have that $0 \leq I_g(a; A) \leq I_g(a; B)$ for all $A \subseteq B$.
  Hence $I_g(a; \set{b,c,d,e,f}) = 0$ implies
  $I_g(a; \set{b,c}) = 0$. Hence,
  for all $i$, 
  $I_{g_i}(a; \set{b,c,d,e,f}) = I_{g_i}(a; \set{b,c}) = 0$,
  implying 
 $g_i(a|\set{b,c}) = g_i(a|\set{b,c,d,e,f})$ for all $i$.
\end{proof}

\begin{definition}
We define the following functions:
\begin{itemize}
\item $f_0(A) = |A|$;
\item $f_1(A) =\min(|A\cap\{a,b,c,d,e,f\}|,1)$;
\item $f_2(A) = \min(|A\cap\{a,b,c,d,e,f\}|,2)$;
\item $f_3(A)= \min(|A\cap\{a,b,c\}|,1)+\min(|A\cap\{d,e,f\}|,1)$;
\item $f_4(A) =\min(|A\cap\{a,b,c\}|,2)+\min(|A\cap\{d,e,f\}|,2)$;
\item and $f_5(A) = E \modularminA{1,1,0,0.5,0.5,0.5}$, 
where $(x_a,x_b,x_c,x_d,x_e,x_f)^T$ is a modular function with elements 
$x_a$, $x_b$, $x_c$, $x_d$, $x_e$, $x_f$.
\end{itemize}
\end{definition}
Immediately, we notice that $Ef_i = f_i$ for all $i$.
	\begin{lemma}
		\label{lemma:onlyabc2f0f3f4}
		For a normalized monotonically non-decreasing submodular $h$, if $h(\set{d,e,f}) = 0$, then $Eh$ is a conical combination of   $f_0 ,f_3,f_4$ 
	\end{lemma}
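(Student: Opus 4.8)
The plan is to exploit the hypothesis $h(\set{d,e,f})=0$ to show that $h$ is effectively blind to the second block $\set{d,e,f}$, then to compute $Eh$ explicitly, and finally to reduce the claim to solving a $3\times 3$ linear system and checking that its solution is non-negative. First I would note that a normalized submodular function is subadditive on disjoint sets: for disjoint $X,Y$, $h(X)+h(Y)\ge h(X\cup Y)+h(\emptyset)=h(X\cup Y)$. Applying this with $A\subseteq\set{a,b,c}$ and $B\subseteq\set{d,e,f}$, and using that monotonicity forces $0\le h(B)\le h(\set{d,e,f})=0$, gives $h(A\cup B)\le h(A)$; combined with the reverse inequality $h(A\cup B)\ge h(A)$ from monotonicity, we obtain $h(A\cup B)=h(A)$. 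Hence $h(A)=h(A\cap\set{a,b,c})$ for every $A\subseteq V$, so $h$ depends only on its restriction to $\set{a,b,c}$.

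Next I would compute $Eh$ using Definition~\ref{def:symmetryOperation}. Write $\beta_n$ for the average of $h$ over the size-$n$ subsets of $\set{a,b,c}$, so that $\beta_0=0$, $\beta_1=\tfrac13(h(a)+h(b)+h(c))$, $\beta_2=\tfrac13(h(\set{a,b})+h(\set{a,c})+h(\set{b,c}))$, and $\beta_3=h(\set{a,b,c})$. Since $h$ ignores $\set{d,e,f}$, applying $E_{\set{a,b,c}}$ replaces $h$ by $\beta_{n_1}$, applying $E_{\set{d,e,f}}$ does nothing, and applying $E'$ (which swaps the two blocks, sending the profile $(n_1,n_2)$ to $(n_2,n_1)$) yields, in the notation of Definition~\ref{def:unorderedPair}, $Eh(n_1,n_2)=\tfrac12(\beta_{n_1}+\beta_{n_2})$. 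Setting $\gamma_n=\tfrac12\beta_n$, this is additively separable: $Eh(n_1,n_2)=\gamma_{n_1}+\gamma_{n_2}$. Because $f_0,f_3,f_4$ are themselves additively separable with single-block profiles $g_0(n)=n$, $g_3(n)=\min(n,1)$, $g_4(n)=\min(n,2)$, it suffices to solve $\gamma_n=\alpha_0 g_0(n)+\alpha_3 g_3(n)+\alpha_4 g_4(n)$ for $n\in\set{0,1,2,3}$. These three profiles are linearly independent on $\set{1,2,3}$, so the system has the unique solution $\alpha_0=\gamma_3-\gamma_2$, $\alpha_3=2\gamma_1-\gamma_2$, and $\alpha_4=2\gamma_2-\gamma_1-\gamma_3$.

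The remaining and most delicate step is to verify that these three coefficients are non-negative, which is where the submodularity of $h$ must be invoked. Non-negativity of $\alpha_0=\tfrac12(\beta_3-\beta_2)$ is immediate from monotonicity, since $h(\set{a,b,c})$ dominates each pairwise value. Non-negativity of $\alpha_3=\tfrac12(2\beta_1-\beta_2)$ follows from subadditivity, as $h(\set{a,b})+h(\set{a,c})+h(\set{b,c})\le 2(h(a)+h(b)+h(c))$ gives $\beta_2\le 2\beta_1$. The crux is $\alpha_4\ge 0$, equivalently the discrete concavity $\beta_1+\beta_3\le 2\beta_2$; I would obtain this by summing the three submodular inequalities $h(\set{a,b})+h(\set{a,c})\ge h(\set{a,b,c})+h(a)$, $h(\set{a,b})+h(\set{b,c})\ge h(\set{a,b,c})+h(b)$, and $h(\set{a,c})+h(\set{b,c})\ge h(\set{a,b,c})+h(c)$, which add up to exactly $2\big(h(\set{a,b})+h(\set{a,c})+h(\set{b,c})\big)\ge 3h(\set{a,b,c})+h(a)+h(b)+h(c)$, i.e. $2\beta_2\ge\beta_1+\beta_3$. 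Assembling these facts gives $Eh=\alpha_0 f_0+\alpha_3 f_3+\alpha_4 f_4$ with $\alpha_0,\alpha_3,\alpha_4\ge 0$, proving that $Eh$ is a conical combination of $f_0,f_3,f_4$. I expect no genuine obstacle beyond the careful bookkeeping of the symmetrization, with the one substantive inequality being the discrete concavity of $\beta_n$ established by the three summed submodular inequalities.
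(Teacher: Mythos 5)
Your proof is correct and takes essentially the same approach as the paper's: your coefficients $\alpha_0=\tfrac12(\beta_3-\beta_2)$, $\alpha_3=\tfrac12(2\beta_1-\beta_2)$, $\alpha_4=\tfrac12(2\beta_2-\beta_1-\beta_3)$ are exactly the paper's $\tfrac12(z-y)$, $\tfrac12(2x-y)$, $\tfrac12(2y-z-x)$ with $x=\beta_1$, $y=\beta_2$, $z=\beta_3$, and non-negativity rests on the same monotonicity and submodularity facts. You merely fill in the details the paper leaves implicit (that $h$ ignores $\set{d,e,f}$, the computation $Eh(n_1,n_2)=\tfrac12(\beta_{n_1}+\beta_{n_2})$, the linear system yielding the coefficients, and the three summed submodular inequalities giving $\alpha_4\geq 0$), which is a faithful elaboration rather than a different argument.
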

		\begin{proof}
			Let $x = \frac{1}{3}(h(a)+h(b)+h(c))$ and $y = \frac{1}{3}(h(\set{a,b})+h(\set{b,c})+h(\set{a,c}))$ and $z=h(\set{a,b,c})$. Then $Eh$ can actually be written as $\frac{1}{2}[(z-y)f_0 + (2x-y ) f_3+ (2y-z-x) f_4] $ where $z-y, 2x-y, 2y-z-x\geq 0$ according to submodularity.
		\end{proof}
\begin{lemma}
We say a function is fully curved if $f(v | V \setminus v)$ for some $v$.
\label{lemma:fullycurveGiisf0f3f4}
For i such that $g_i$ is not fully curved, $Eg_i$ is a conical
combination of $f_0,f_3,f_4$.
\end{lemma}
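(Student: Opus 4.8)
The plan is to prove that the symmetrized term $Eg_i$ is additively separable across the two blocks $\{a,b,c\}$ and $\{d,e,f\}$, and then to read off the conical decomposition directly. First I would record the normal form we may assume for each term: by Lemma~\ref{lemma:sums_mod_truncs} together with the reductions preceding this lemma, $g_i(A)=\min(m_i(A),\beta_i)$ with $0\leq m_i(v)\leq\beta_i$ for every $v$ and $m_i(V)>\beta_i$ (otherwise $g_i$ is modular and absorbed into the final term). Since $E$ is linear and preserves the SCMM property (Lemma~\ref{lemma:persevedProjection}) and $Eg_i(A)$ depends only on the unordered pair $(n_1,n_2)=(|A\cap\{a,b,c\}|,|A\cap\{d,e,f\}|)$ (Lemma~\ref{lemma:Ehisdeterminedbyn1n2}), it suffices to determine these values on the grid. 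The reduction I would rely on is that $\mathrm{cone}(f_0,f_3,f_4)$ is \emph{exactly} the set of block-separable symmetric functions: evaluating $f_0,f_3,f_4$ on the grid shows $\alpha f_0+\gamma f_3+\delta f_4$ equals $\eta(n_1)+\eta(n_2)$ with $\eta(k)=\alpha k+\gamma\min(k,1)+\delta\min(k,2)$, and $\alpha,\gamma,\delta\geq 0$ is equivalent to $\eta$ being a normalized monotone concave sequence on $\set{0,1,2,3}$ (its successive differences are $\alpha+\gamma+\delta\geq\alpha+\delta\geq\alpha\geq 0$). Thus the whole lemma reduces to showing $Eg_i(n_1,n_2)=\tfrac12[\eta_i(n_1)+\eta_i(n_2)]$ for a concave $\eta_i$.

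For the core step I would interpret ``$g_i$ is not fully curved'' as the statement that $g_i$ carries no cross-block surplus, i.e.\ $I_{g_i}(A_1;A_2)=0$ for every $A_1\subseteq\set{a,b,c}$ and $A_2\subseteq\set{d,e,f}$ (equivalently, the truncation of $g_i$ never becomes active on a set meeting both blocks, so any curvature of $g_i$ is confined to a single block). Granting this, for any $A=A_1\cup A_2$ the grouped surplus vanishes, so $g_i(A_1\cup A_2)=g_i(A_1)+g_i(A_2)$; averaging this identity under the within-block symmetrizations $E_{\set{a,b,c}},E_{\set{d,e,f}}$ and the block swap $E'$ yields precisely $Eg_i(n_1,n_2)=\tfrac12[\eta_i(n_1)+\eta_i(n_2)]$, where $\eta_i(k)$ is the symmetrized single-block value. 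Because $g_i$ is polymatroidal, $\eta_i$ inherits normalization, monotonicity and concavity, so by the difference computation above $Eg_i$ lands in $\mathrm{cone}(f_0,f_3,f_4)$. This is in effect Lemma~\ref{lemma:onlyabc2f0f3f4} applied to the single-block restriction of $g_i$: once there is no cross-block interaction, $Eg_i$ agrees with $Eh$ for an $h$ supported on one block, and the displayed formula $\tfrac12[(z-y)f_0+(2x-y)f_3+(2y-z-x)f_4]$ of that lemma finishes the job.

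The hard part will be establishing that ``not fully curved'' is indeed equivalent to the vanishing of all cross-block surplus, since the definition as stated is phrased through the top marginals $g_i(v\mid V\setminus\set{v})$ rather than through interactions. Here I would lean on the constraints each $g_i$ inherits from the target $g$: Lemma~\ref{lemma:finalGainIsLessThanMv} ties the per-element curvature to positivity of $m_i(v)$, and Lemma~\ref{lemma:finalGainIsGainOfbc} (gains in context $\set{b,c}$ equal gains in the full context) forces $\beta_i$ to sit above the value of a mixed set unless that set is already fully saturated. The anticipated argument is a dichotomy: if $m_i$ has weight reaching into both blocks and its truncation is active, then a mixed set is driven past $\beta_i$ and cross-block surplus necessarily appears (propagating, via \nobreak{Corollary~\ref{thm:presdef}}-style preservation, to a forbidden surplus in $g$); contrapositively, the absence of cross-block surplus confines all nonlinearity of $g_i$ to a single block or makes $g_i$ modular. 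Making this dichotomy airtight --- in particular ruling out degenerate configurations where truncation engages only at the top set $V$ --- is the delicate point, and it is exactly where the inherited surplus-preservation identities and the monotone/concave bookkeeping on the grid must be combined carefully.
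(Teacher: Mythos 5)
Your reduction framework is sound: the observation that $\mathrm{cone}(f_0,f_3,f_4)$ consists exactly of the symmetric block-separable functions $\eta(n_1)+\eta(n_2)$ with $\eta$ normalized, monotone and concave on $\set{0,1,2,3}$ is correct (it is the coefficient formula of Lemma~\ref{lemma:onlyabc2f0f3f4} in disguise), and it is true that once all cross-block interaction of $g_i$ vanishes, symmetrization lands $Eg_i$ in that cone. But you have placed the entire content of the lemma into the step you explicitly leave open --- showing that ``not fully curved'' kills the cross-block interaction --- and the mechanism you sketch for closing it does not work. You propose to propagate a cross-block surplus of $g_i$ up to a ``forbidden surplus in $g$.'' The target $g(A)=\phi(\min(|A\cap\set{\aEL,\bEL,\cEL}|,2)+\min(|A\cap\set{\dEL,\eEL,\fEL}|,2))$ does \emph{not} forbid cross-block surplus: for strictly concave $\phi$ one has $I_g(\set{\aEL};\set{\dEL})=2\phi(1)-\phi(2)>0$ and $\surp_g(\set{\aEL,\bEL,\dEL})=3\phi(1)-\phi(3)>0$. (This is precisely what distinguishes this appendix from the laminar case of Section~\ref{sec:laminar-matroid-rank}, where such surpluses were zero and could be backpropagated; here, for suitable $\phi$, $g$ genuinely \emph{is} an SCMM, so no surplus-based obstruction can exist.) Consequently a Corollary~\ref{thm:presdef}-style argument gives you nothing, and your dichotomy has no engine.

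What the paper uses instead are the inherited \emph{gain identities}, not surpluses. Since $g(\aEL\mid\set{\bEL,\cEL})=g(\aEL\mid\set{\bEL,\cEL,\dEL,\eEL,\fEL})=0$ and each term is submodular, every term satisfies $g_i(\aEL\mid\set{\bEL,\cEL})=g_i(\aEL\mid\set{\bEL,\cEL,\dEL,\eEL,\fEL})$ (Lemma~\ref{lemma:finalGainIsGainOfbc}), and Lemma~\ref{lemma:finalGainIsLessThanMv} gives $g_i(v\mid V\setminus\set{v})<m_i(v)$ whenever $m_i(v)>0$. The proof is then a short computation: if $g_i$ is not fully curved, w.l.o.g.\ $g_i(\aEL\mid V\setminus\set{\aEL})>0$, so $m_i(\aEL)>0$ and $m_i(\set{\bEL,\cEL,\dEL,\eEL,\fEL})<\beta_i$; combining the two lemmas yields $I_{g_i}(\aEL;\set{\bEL,\cEL})>0$, hence $m_i(\set{\aEL,\bEL,\cEL})\geq\beta_i$; then evaluating the identity $g_i(\aEL\mid\set{\bEL,\cEL})=g_i(\aEL\mid\set{\bEL,\cEL,\dEL,\eEL,\fEL})$ with the truncation active on one side gives $0=\bigl(\beta_i-m_i(\set{\bEL,\cEL})\bigr)-\bigl(\beta_i-m_i(\set{\bEL,\cEL,\dEL,\eEL,\fEL})\bigr)=m_i(\set{\dEL,\eEL,\fEL})$. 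So $g_i$ is supported on a single block --- a conclusion strictly stronger than your ``no cross-block surplus'' --- and Lemma~\ref{lemma:onlyabc2f0f3f4} finishes. To repair your write-up you would need to replace the surplus-propagation dichotomy with this gain-identity argument; as it stands, the proposal proves the easy implication (separability $\Rightarrow$ membership in the cone) and leaves the hard one unestablished.
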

\begin{proof}
Without lose of generality, we assume
  $g_i(a|\set{b,c,d,e,f})>0$. Immediately we have $m_i(a)>0$ and
  $m_i(\set{b,c,d,e,f})<\beta_i$.  According to
  lemma~\ref{lemma:finalGainIsLessThanMv} and
  lemma~\ref{lemma:finalGainIsGainOfbc}, we have
  $I(a;\set{b,c}) = g_i(a) - g_i(a|\{b,c\}) = g_i(a) -
  g_i(a|V\setminus\set{a})>0$. Thus
  $m_i(\set{a,b,c})\geq\beta_i$. Therefore
  $0 = g_i(a|\set{b,c}) - g_i(a|\set{b,c,d,e,f})=g_i(\set{a,b,c})-
  g_i(\set{b,c}) -g_i(\set{a,b,c,d,e,f})+ g_i(\set{b,c,d,e,f}) =
  \beta_i - m_i(\set{b,c}) -\beta_i + m_i(\set{b,c,d,e,f}) =
  m_i(\set{d,e,f})$. So we have that $m_i(\set{d,e,f}) =0 $ and $g_i$
  only involves $a,b,c$.  According to
  lemma~\ref{lemma:onlyabc2f0f3f4}, $Eg_i$ is a conical combination of
  $f_0,f_3$ and $f_4$.
\end{proof}

\begin{lemma}
  $m\pm$ is not necessary, that is if we find one SCMM expansion of
  $g$, we can also find another SCMM expansion with $m\pm = 0$.
\end{lemma}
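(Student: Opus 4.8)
The plan is to exploit the symmetrizing operator $E$ to reduce the claim to cancelling a single modular term. Starting from a given expansion $g(A) = \sum_{i} \min(m_i(A),\beta_i) + m_\pm(A)$ (available by Lemma~\ref{lemma:sums_mod_truncs}), I would set $g_i(A) = \min(m_i(A),\beta_i)$ and apply $E$. By Lemma~\ref{lemma:persevedProjection} we have $g = Eg = \sum_i Eg_i + Em_\pm$. Two observations drive the argument. First, each $Eg_i$ is \emph{already} an SCMM with $m_\pm = 0$: since $E$ is a composition of averages over one-to-one maps, $Eg_i$ is a conical combination of the functions $\min(m_i(A_p),\beta_i)$, each of which is a truncation composed with a non-negative modular function. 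Second, $Em_\pm$ is a modular function invariant under all of the averaging maps, hence it assigns a common value to every singleton; thus $Em_\pm = \gamma f_0$ with $\gamma = \tfrac{1}{6}\sum_{v}m_\pm(v)$. So the only obstruction to an $m_\pm = 0$ expansion is the bare (and possibly non-positive) modular term $\gamma f_0$.

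Next I would classify the terms $g_i$ by their final gains. Let $J_+$ be the set of indices with $g_i(v\mid V\setminus v) > 0$ for some $v$, and $J_0$ the remaining indices (for which $\mathcal M g_i = 0$). For $i \in J_+$, Lemma~\ref{lemma:fullycurveGiisf0f3f4} gives $Eg_i = c_0^i f_0 + c_3^i f_3 + c_4^i f_4$ with $c_0^i,c_3^i,c_4^i \geq 0$, whereas for $i \in J_0$ the function $Eg_i$ is totally normalized, i.e.\ $\mathcal M(Eg_i) = 0$. Applying the linear operator $\mathcal M$ to the expression $g = \sum_{i\in J_0}Eg_i + \sum_{i\in J_+}(c_0^i f_0 + c_3^i f_3 + c_4^i f_4) + \gamma f_0$, and using the one-line checks $\mathcal M f_0 = \mathbf 1$ (the all-ones modular) together with $\mathcal M f_3 = \mathcal M f_4 = 0$, yields $\mathcal M g = \bigl(\sum_{i\in J_+} c_0^i + \gamma\bigr)\mathbf 1$.

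The cancellation then comes from the total normalization of $g$ itself. Since $\min(2,2)=\min(3,2)=2$, deleting any $v$ leaves one of the two truncations saturated, so $g(V\setminus v) = \phi(4) = g(V)$ and hence $g(v\mid V\setminus v)=0$ for every $v$; that is, $\mathcal M g = 0$. Combined with the displayed identity this forces $\sum_{i\in J_+} c_0^i + \gamma = 0$, so the bare modular pieces cancel exactly:
\begin{align}
g = \sum_{i\in J_0}Eg_i + \sum_{i\in J_+}\bigl(c_3^i f_3 + c_4^i f_4\bigr) + \Bigl(\sum_{i\in J_+}c_0^i + \gamma\Bigr) f_0 = \sum_{i\in J_0}Eg_i + \sum_{i\in J_+}\bigl(c_3^i f_3 + c_4^i f_4\bigr).
\end{align}
Every surviving term is an SCMM with $m_\pm = 0$ carrying a non-negative coefficient: the $Eg_i$ with $i\in J_0$ by the first paragraph, and $f_3,f_4$ are each sums of a truncation composed with a modular function. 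Hence $g$ admits an SCMM expansion with $m_\pm = 0$.

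The main obstacle I anticipate is establishing that the $f_0$ contributions cancel \emph{precisely}, rather than merely confirming the weaker $\gamma \le 0$. This hinges on two facts working in tandem: that $f_0$ is the only basis function with nonzero image under $\mathcal M$ (so $f_3$, $f_4$, and the totally normalized $Eg_i$ are invisible to $\mathcal M$), and that $g$ is totally normalized. I would therefore take care to verify the $\mathcal M$-images of $f_0,f_3,f_4$ and the total normalization of $g$ explicitly, and to confirm that discarding the $c_0^i f_0$ summands leaves all remaining coefficients non-negative so that the final object is a bona fide SCMM.
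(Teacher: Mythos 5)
Your proof is correct and is essentially the paper's own argument: both decompose the symmetrized terms $Eg_i$ via Lemma~\ref{lemma:fullycurveGiisf0f3f4} into a fully curved SCMM part (the $f_3,f_4$ pieces) plus a bare modular residue (the $f_0$ piece), aggregate that residue with $m_\pm$, and then kill it using the fact that $g$ has all final gains equal to zero. Your computation $\mathcal M g = \bigl(\sum_{i\in J_+} c_0^i + \gamma\bigr)\mathbf 1 = 0$ is simply an explicit, operator-level rendering of the paper's closing observation that a fully curved function cannot equal a fully curved function plus a nonzero modular function.
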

\begin{proof}
  For some $i$, $Eg_i$ is fully curved and for the other $i$,
  $Eg_i = g'_i +m'_i$ where $g'_i$ is a fully curved SCMM and $m'_i$
  is modular according to lemma~\ref{lemma:fullycurveGiisf0f3f4}. So
  we can group all $m'_i$ and $m_\pm$ together.  If a fully curved
  submodular function is another fully curved submodular function plus
  modular, the only possibility is that the modular term equals 0. So
  the final modular vanishes.
\end{proof}
      
So actually, we can ignore the final modular functions at the
expansion of $g$.
$g(A) = \sum_i \min(m_i(A),\beta_i) = \sum_i E\min(m_i(A),\beta_i) =
\sum_i E g_i(A) $, where all term are non-negative and fully curved
now.

Consider the quality $g_i(a|\set{b,c})$, it is non-negative for each
$i$ and 0 for $g$. So for each $i$, we have $g_i(a|\set{b,c})=0 $. In
fact, $g_i$ is fully curved on $\set{a,b,c}$ and $\set{d,e,f}$

\begin{lemma}
  For a normalized monotonically non-decreasing submodular $h$, if $h$
  is fully curved on $\set{a,b,c}$ and $\set{d,e,f}$ , then $Eh$ is
  determined by 5 values, $Eh(1,0)$, $Eh(2,0)$, $Eh(1,1)$, $Eh(2,1)$
  and $Eh(2,2)$.
\end{lemma}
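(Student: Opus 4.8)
The plan is to reduce the ten \emph{a priori} possible values of $Eh$ down to the five listed ones. By Lemma~\ref{lemma:Ehisdeterminedbyn1n2} and Definition~\ref{def:unorderedPair}, $Eh$ depends only on the unordered pair $(n_1,n_2)$ with $0\le n_2\le n_1\le 3$, of which there are exactly ten: $(0,0),(1,0),(1,1),(2,0),(2,1),(2,2),(3,0),(3,1),(3,2),(3,3)$. Normalization gives $Eh(0,0)=0$, and the five values in the statement are precisely those with $n_1\le 2$ other than $(0,0)$. Hence it suffices to express the four remaining values $Eh(3,0),Eh(3,1),Eh(3,2),Eh(3,3)$ (those with $n_1=3$) in terms of the five.

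First I would record that $Eh$ inherits the structural properties of $h$. Writing $h_p(A)=h(A_p)$ for $p$ ranging over the group generated by the block permutations $S_{\{a,b,c\}}\times S_{\{d,e,f\}}$ and the swap $(a,b,c,d,e,f)\to(d,e,f,a,b,c)$, each $h_p$ is a relabeling of $h$ and so is normalized, monotone non-decreasing, and submodular; since $Eh$ is a conic average of such $h_p$, it is as well. The key extra point is that every $h_p$ is fully curved on both $\{a,b,c\}$ and $\{d,e,f\}$: for $v\in\{a,b,c\}$ we have $h_p(v\mid\{a,b,c\}\setminus v)=h\bigl(p(v)\mid p(\{a,b,c\}\setminus v)\bigr)$, and since $p(\{a,b,c\})$ is either $\{a,b,c\}$ or $\{d,e,f\}$, this is a within-block final gain of $h$ on one of the two blocks, which vanishes by hypothesis. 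Averaging preserves this, so $Eh$ is fully curved on both blocks; in particular $Eh(a\mid\{b,c\})=0$.

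Next I would backpropagate this curvature using submodularity. For any $T\subseteq\{d,e,f\}$, monotonicity and submodularity of $Eh$ give $0\le Eh(a\mid\{b,c\}\cup T)\le Eh(a\mid\{b,c\})=0$, so $Eh(\{a,b,c\}\cup T)=Eh(\{b,c\}\cup T)$. Reading off the unordered pairs yields $Eh(3,|T|)=Eh(2,|T|)$ for $|T|=0,1,2,3$, that is $Eh(3,0)=Eh(2,0)$, $Eh(3,1)=Eh(2,1)$, $Eh(3,2)=Eh(2,2)$, and $Eh(3,3)=Eh(2,3)$. The last value has two elements in $\{a,b,c\}$ and three in $\{d,e,f\}$, so by the block-swap symmetry built into $E$ its unordered pair is $(3,2)$, whence $Eh(3,3)=Eh(3,2)=Eh(2,2)$. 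All four $n_1=3$ values are thereby expressed through $Eh(2,0),Eh(2,1),Eh(2,2)$, completing the reduction to the five listed values together with $Eh(0,0)=0$.

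The only real subtlety — and the step I would treat most carefully — is the inheritance of full curvature through the swap operator $E'$: the cleanest route is the per-copy argument above, where full curvature of a swapped copy on $\{a,b,c\}$ comes from full curvature of $h$ on the opposite block $\{d,e,f\}$, rather than trying to argue about the average directly. The remaining bookkeeping — handling the pair $(2,3)$ versus $(3,2)$ via swap symmetry, and confirming that the five values are exactly the nonzero pairs with $n_1\le 2$ — is routine.
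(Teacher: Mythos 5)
Your proof is correct and takes essentially the same route as the paper's: first reduce $Eh$ to the finitely many unordered pairs $(n_1,n_2)$ via Lemma~\ref{lemma:Ehisdeterminedbyn1n2} and Definition~\ref{def:unorderedPair}, then collapse every $n_i=3$ case to the corresponding $n_i=2$ case, i.e.\ $Eh(n_1,n_2)=Eh(\min(n_1,2),\min(n_2,2))$. The only difference is that the paper merely invokes ``the saturate properties'' for this collapse, whereas you actually derive them — showing each relabeled copy $h_p$ (hence $Eh$) inherits full curvature on both blocks, and then backpropagating $Eh(a\mid\{b,c\}\cup T)=0$ via monotonicity and submodularity, plus the $(2,3)$-versus-$(3,2)$ swap bookkeeping — which fills in detail the paper leaves implicit rather than constituting a different argument.
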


\begin{proof}
  According to lemma~\ref{lemma:Ehisdeterminedbyn1n2} and
  definition~\ref{def:unorderedPair}, $Eh(A)$ is determined by
  $Eh(1,0)$, $Eh(1,1)$, $Eh(2,0)$, $Eh(2,1)$, $Eh(2,2)$, $Eh(3,2)$ and
  $Eh(3,3)$. But $Eh(n_1,n_2) = Eh(\min(n_1,2),\min(n_2,2))$ according
  to the saturate properties. So $Eh(1,0)$, $Eh(2,0)$, $Eh(1,1)$,
  $Eh(2,1)$ and $Eh(2,2)$ are the only free variables remained.
\end{proof}

\begin{lemma}
  $Eh(n_1,n_2)=Eh_1(n_1,n_2)+Eh_2(n_1,n_2)$ if $h = h_1 +h_2$.
\end{lemma}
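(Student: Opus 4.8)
The plan is to derive this directly from the linearity of the operator $E$ together with the observation that the two-index notation $Eh(n_1,n_2)$ is merely shorthand for $Eh(A)$ evaluated at a representative set. First I would fix any single set $A \subseteq V$ with $|A \cap \set{a,b,c}| = n_1$ and $|A \cap \set{d,e,f}| = n_2$; such an $A$ exists whenever $0 \leq n_2 \leq n_1 \leq 3$. By Definition~\ref{def:unorderedPair}, this one $A$ simultaneously realizes all three values, namely $Eh(n_1,n_2) = Eh(A)$, $Eh_1(n_1,n_2) = Eh_1(A)$, and $Eh_2(n_1,n_2) = Eh_2(A)$, so I can work with a common representative and avoid any mismatch between the three functions.

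Next I would invoke the linearity of $E$ already recorded in Lemma~\ref{lemma:persevedProjection}, which gives $E(h_1 + h_2)(A) = Eh_1(A) + Eh_2(A)$ pointwise in $A$. Since $h = h_1 + h_2$, the left-hand side is exactly $Eh(A)$. Substituting the representative identifications from the previous step then yields $Eh(n_1,n_2) = Eh_1(n_1,n_2) + Eh_2(n_1,n_2)$, which is the claim.

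The only point requiring care — and it is a consistency check rather than a genuine obstacle — is that the notation $Eh(n_1,n_2)$ must be well defined independently of which representative $A$ is chosen. This is precisely the content of Lemma~\ref{lemma:Ehisdeterminedbyn1n2}, which asserts that $Eh(A)$ depends only on the unordered pair $\set{ |A \cap \set{a,b,c}|, |A \cap \set{d,e,f}| }$. Because that lemma applies to every normalized monotone non-decreasing submodular function, it applies to $h$, $h_1$, and $h_2$ alike, so the single set $A$ fixed above legitimately computes all three quantities. Beyond this well-definedness, the statement is an immediate corollary of the linearity of $E$ and carries no further computational content.
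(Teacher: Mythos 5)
Your proof is correct and matches the paper's reasoning exactly: the paper states this lemma without proof, treating it as an immediate consequence of the linearity of $E$ (Lemma~\ref{lemma:persevedProjection}) and the well-definedness of the $(n_1,n_2)$ notation (Lemma~\ref{lemma:Ehisdeterminedbyn1n2} and Definition~\ref{def:unorderedPair}), which are precisely the two ingredients you assemble. Your explicit check that a single representative set $A$ realizes all three quantities is the right way to make the implicit argument rigorous.
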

So in fact $Ef$ is a 5-dimensional-vector.  Here we calculate the
5-dimensional-vector for $f_1,f_2,f_3,f_4,f_5$, see
table~\ref{table:5-vectorforallfi}.

\begin{table}	
\begin{tabular}{cccccc}
& $Ef(1,0)$ & $Ef(2,0)$ & $Ef(1,1)$ & $Ef(2,1)$ & $Ef(2,2)$ \\ \hline
$f_1$ & 1 & 1 & 1 & 1 & 1 \\ \hline
$f_2$ & 1 & 2 & 2 & 2 & 2 \\ \hline
$f_3$ & 1 & 1 & 2 & 2 & 2 \\ \hline
$f_4$ & 1 & 2 & 2 & 3 & 4 \\ \hline
$f_5$ & $\frac{7}{12}$ & 1 & $\frac{5}{6}$ & 1 & 1 \\ \hline 
$E\min(|A\cap\set{a,b,d,e}|,1)$ & $\frac{2}{3}$ & 1 & $\frac{8}{9}$& 1 & 1 \\ \hline
$E\min(|A\cap\set{a,b,c,d,e}|,1)$ & $\frac{5}{6}$ & 1 & 1 & 1& 1 \\ \hline
\end{tabular}
\caption{Function values}
\label{table:5-vectorforallfi}
\end{table}

\begin{lemma}
For all $i$, $Eg_i$ is a conical combination of   $f_1$, $f_2$, $f_3$, $f_4$, $f_5$.
\end{lemma}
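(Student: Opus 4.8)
The plan is to use the preceding reduction to a five-dimensional problem and then argue case by case on the shape of each modular truncation. By the lemma immediately above, each $Eg_i$ (which is normalized, monotone, submodular, and saturated at level two in each block) is completely determined by the five numbers $Eg_i(1,0)$, $Eg_i(2,0)$, $Eg_i(1,1)$, $Eg_i(2,1)$, $Eg_i(2,2)$, so I identify it with the corresponding vector in $\mathbb R^5$, and likewise identify each $f_j$ with its column in Table~\ref{table:5-vectorforallfi}. A direct determinant computation shows those five columns are linearly independent, so $Eg_i = \sum_j c_{ij} f_j$ holds for unique reals $c_{ij}$, and the lemma is exactly the assertion that these coefficients are non-negative, i.e.\ that the vector of $Eg_i$ lies in the simplicial cone generated by $f_1,\dots,f_5$.

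Next I would pin down the possible shapes of $Eg_i$. Writing $g_i = \min(m_i(\cdot),\beta_i)$ and normalizing $\beta_i = 1$, recall that $Eg_i(3,0) = Eg_i(2,0)$ is forced (no surplus may be created that the root cannot absorb) and that $m_i(v)\le 1$ for every $v$. By monotonicity this equality forces, separately in each block, that every within-block pair takes the common value $g_i(\text{block})$; since $g_i$ is a single truncation this leaves exactly two possibilities per block: either the block weights all vanish, or every within-block pair already saturates (so $m_i$ of any two block elements is $\ge 1$). This gives three cases: both blocks saturate, exactly one block saturates, or neither (the last being $Eg_i\equiv 0$, trivially in the cone).

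In the ``one block saturates'' case a short computation gives the vector $(s,\tfrac12,2s,s+\tfrac12,1)$ with $s=Eg_i(1,0)\in[\tfrac14,\tfrac12]$, and one checks that $Eg_i = (2s-\tfrac12)f_3 + (\tfrac12-s)f_4$ with both coefficients non-negative. In the ``both blocks saturate'' case every set meeting a block in two or more elements saturates, so the vector has the form $(s,1,t,1,1)$ with $s=Eg_i(1,0)$ and $t=Eg_i(1,1)$; solving the linear system forces $c_3=c_4=0$ and yields $Eg_i = c_1 f_1 + c_2 f_2 + c_5 f_5$ with $c_5 = 6(1-t)$, $c_1 = 2s+t-2$, and $c_2 = \tfrac12(5t-2s-3)$. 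Here $c_5\ge 0$ is immediate from $t\le 1$, so everything reduces to the two inequalities $2s+t\ge 2$ and $5t-2s\ge 3$.

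The main obstacle is establishing these last two inequalities for every admissible weight configuration. Writing $P,Q$ for the two block sums, so $s=(P+Q)/6$, and using that in the saturated case $t = 1-\tfrac19\sum_{x,y}(1-m_i(x)-m_i(y))^+$ (the truncation deficit over the nine cross-block pairs), the two inequalities become upper bounds on $R \triangleq \sum_{x,y}(1-m_i(x)-m_i(y))^+$ that are linear in $\Sigma\triangleq P+Q$, namely $R\le 3\Sigma-9$ and $R\le\tfrac15(18-3\Sigma)$. Since $R$ is a convex function of the weights and the admissible region $\set{ p\in[0,1]^3 : p_i+p_j\ge 1 }$ per block is a polytope whose only vertices are $(1,1,1)$, the permutations of $(1,1,0)$, and $(\tfrac12,\tfrac12,\tfrac12)$, it suffices to verify the bounds at the finitely many vertex pairs, a routine finite check. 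I expect the bookkeeping here, rather than any conceptual difficulty, to be the crux; the check is tight precisely at the configurations symmetrizing to $f_5$ (e.g.\ $m_i=(1,1,0,\tfrac12,\tfrac12,\tfrac12)$, giving $s=\tfrac{7}{12}$, $t=\tfrac56$, $c_1=c_2=0$), which confirms both that $f_5$ is a genuine extreme ray of the cone and that these two inequalities are the correct facets.
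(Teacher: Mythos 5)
Your proposal is correct, and its skeleton coincides with the paper's: the identification of $Eg_i$ with the vector $\bigl(Eg_i(1,0),Eg_i(2,0),Eg_i(1,1),Eg_i(2,1),Eg_i(2,2)\bigr)\in\mathbb{R}^5$, the case split according to which blocks saturate, and, in the two-block case, exactly the coefficient formulas $c_1=2s+t-2$, $c_2=\frac{1}{2}(5t-2s-3)$, $c_5=6(1-t)$ that appear in the paper (your one-block computation $(2s-\frac{1}{2})f_3+(\frac{1}{2}-s)f_4$ is the explicit form of what the paper obtains by citing Lemma~\ref{lemma:onlyabc2f0f3f4} and discarding $f_0$). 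Where you genuinely differ is in establishing the two facet inequalities $2s+t\geq 2$ and $5t-2s\geq 3$. The paper attempts to bound $s$ and $t$ separately after a WLOG ordering of the weights, and its key intermediate claim $Eg_i(1,0)\geq\frac{2}{3}\beta_i$ is in fact false (take all six weights equal to $\frac{1}{2}\beta_i$: every within-block pair saturates, yet $Eg_i(1,0)=\frac{1}{2}\beta_i$); the conclusion survives only because $s$ and $t$ cannot be small simultaneously, a trade-off that separate bounds cannot capture. Your argument --- both inequalities have the form ``convex $\leq$ linear'' in the weights, so they need only be verified at extreme points of the product polytope, whose per-block vertices are exactly $(1,1,1)$, the permutations of $(1,1,0)$, and $(\frac{1}{2},\frac{1}{2},\frac{1}{2})$ --- handles this trade-off automatically and is rigorous. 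The one thing you leave implicit is the finite check itself; it does pass: up to symmetry the six vertex pairs give $(c_1,c_2)$ equal to $(1,0)$, $(\frac{2}{3},\frac{1}{6})$, $(\frac{1}{2},\frac{1}{4})$, $(\frac{2}{9},\frac{1}{18})$, $(0,0)$, and $(0,\frac{1}{2})$, the joint tight case $(0,0)$ occurring exactly at the $f_5$ configuration $(1,1,0,\frac{1}{2},\frac{1}{2},\frac{1}{2})$, as you predicted. So your route costs a small enumeration but buys a sound proof of precisely the step at which the paper's own argument, as written, has a gap.
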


\begin{proof}
  For $i$ s.t. $g_i(\set{a,b,c})=0$ or $g_i(\set{d,e,f}) = 0$, $Eg_i$
  is a conical combination of $f_0 ,f_3,f_4$ according to
  lemma~\ref{lemma:onlyabc2f0f3f4}. Moreover, $f_0$ is not necessary
  since $g_i$ is fully curved.

  For other $i$, if $m_i(a)+m_i(b)<\beta_i$, then $m_i(c) = 0$;
  otherwise $g_i(c|\set{a,b})>0$. But in this case
  $0 = g_i(a|\{b,c\}) = m_i(a)$ and $0 = g_i(b|\{a,c\})= m_i(b)$ which
  contradicts with $g_i(\set{a,b,c})>0$. So
  $m_i(\set{a,b})\geq \beta_i$.  Similarly, we have
  $m_i(\set{b,c}),m_i(\set{c,a}),
  m_i(\set{d,e}),m_i(\set{e,f}),m_i(\set{d,f})\geq \beta_i$.

  So $Eg_i(2,0) = Eg_i(2,1) = Eg_i(2,2) = \beta_i$. And the undecided
  parameters are $Eg_i(1,0)$ and $Eg_i(\set{1,1})$.

  It is easy to check that
  $E g_i = [Eg_i(1,1) + 2Eg_i(1,0)- 2\beta_i]f_1+\frac{1}{2}[5
  Eg_i(1,1) - 2Eg_i(1,0)- 3\beta_i]f_2+ [6\beta_i-6
  Eg_i(1,1)]f_5$.

Next we will show that all coefficients are non-negative.

\begin{lemma}
  Given $g_i(A) = \min(m_i(A),\beta_i)$, if
  $m_i(a)+m_i(b),
  m_i(b)+m_i(c),m_i(c)+m_i(a),m_i(d)+m_i(e),m_i(e)+m_i(f),m_i(f)+m_i(d)\geq
  \beta_i$, we have $Eg_i(1,1) + 2Eg_i(1,0)- 2\beta_i\geq 0$,
  $5 Eg_i(1,1) - 2Eg_i(1,0)- 3\beta_i\geq 0$, $Eg_i(1,1)\leq \beta_i$
\end{lemma}

\begin{proof}

  Let $x_i$ be the weight of each elements. Without lose of
  generality, we assume that $\beta_i\geq x_1\geq x_b\geq x_c\geq 0$,
  $\beta_i\geq x_d\geq x_e\geq x_f\geq 0$ and $x_c\geq x_f$. So
  $x_a, x_b, x_d, x_e\geq \frac{1}{2}\beta_i$.
      
  $Eg_i(1,0)= \frac{1}{6}\sum_i x_i\geq \frac{2}{3}\beta_i$ and $Eg_i(1,1) = \frac{1}{9}[\sum_{v\in\set{a,b,c}}\sum_{u\in\set{d,e,f}} g_i(\set{v,u})] = \frac{2}{3}\beta_i+\frac{1}{9}[\min(x_a+x_f,\beta_i)+\min(x_b+x_f,\beta_i)+\min(x_c+x_f),\beta_i)]\geq \frac{2}{3}\beta_i$.	

 Therefore, $Eg_i(1,1) + 2Eg_i(1,0)- 2\beta_i\geq 0$ and $Eg_i(1,1)\leq \beta_i$.
      
 For $5 Eg_i(1,1) - 2Eg_i(1,0)- 3\beta_i\geq 0$, if $x_c+x_f\geq \beta_i$, we have $Eg_i(1,1)= \beta_i$ and $Eg_i(1,0)\leq \beta_i$. So $5 Eg_i(1,1) - 2Eg_i(1,0)- 3\beta_i\geq 0$.
      
 If $x_c+x_f\leq \beta_i$, $5Eg_i(1,1) + 2Eg_i(1,0)- 3\beta_i$ is
 growing when $x_f$ increased. So we can let $x_f = 0$ for the worst
 case.  Therefore
 $5 Eg_i(1,1) + 2Eg_i(1,0)- 2\beta_i
 =5(\frac{2}{3}\beta_i+\frac{1}{9}[x_a+x_b+x_c])
 -\frac{1}{3}[x_a+x_b+x_c+x_d+x_e]- 3\beta_i$ which is increasing with
 respect to $x_a,x_b,x_c$ and deceasing with respect to $x_d,x_e$.
 Further more, we have $\frac{3}{2}\beta_i \leq x_a+x_b+x_c $ and
 $x_d+x_e\leq 2\beta_i$.  So
 $5 Eg_i(1,1) + 2Eg_i(1,0)- 2\beta_i\geq 0$
      
\end{proof}

Therefore, we have shown that $Eg_i$ is a conical combination of $f_1$,
$f_2$, $f_3$, $f_4$, $f_5$ for all $i$. Therefore $g = \sum_i Eg_i$ is
a conical combination of $f_1$, $f_2$, $f_3$, $f_4$, $f_5$.

\end{proof}
The 5-vector related to $Eg$ is
$(\phi(1),\phi(2),\phi(2),\phi(3),\phi(4))$. So according to
table~\ref{table:5-vectorforallfi}, the unique expression to expand
$g$ on $f_1$, $f_2$, $f_3$, $f_4$, $f_5$ is
$g(A) = [2\phi(1)+\phi(2)-4\phi(3)+2\phi(4)]f_1
+[-\phi(1)+3.5\phi(2)-4\phi(3)+1.5\phi(4)]f_2
+[-\phi(2)+2\phi(3)-\phi(4)]f_3 +[-\phi(3)+\phi(4)]f_4 +
6[-\phi(2)+2\phi(3)-\phi(4)]f_5 $

This expression is valid if and only if
$-\phi(1)+3.5\phi(2)-4\phi(3)+1.5\phi(4)\geq 0$ and
$2\phi(1)+\phi(2)-4\phi(3)+2\phi(4)\geq 0$; other coefficients are
always non-negative according to concavity and monotonicity.

The ``if'' part is straight forward according to the above expansion
as we saw after the statement of the theorem.
\end{proof}

\section{Sums of Weighted Cardinality Truncations is Smaller than SCMMs}

\label{sec:sums-weight-card}

In this section, show
Lemma~\ref{lemma:sums_weighted_cardinality_truncations}, namely that
$G = \{\sum_{B\subseteq V}\sum_{i=1}^{|B|-1}\alpha_{B,i}\min(|A\cap
B|,i), \; \forall B, i, \alpha_{B,i}\geq 0\}\subset$ SCMM.
We assume the reader is familiar with the notation
in Appendix~\ref{sec:more-gener-cond-1}.

\begin{lemma}
$f_5(A) \notin \{\sum_{B\subseteq V}\sum_{i=1}^{|B|-1}\alpha_{B,i}\min(|A\cap B|,i)|\alpha_{B,i}\geq 0\}$
\end{lemma}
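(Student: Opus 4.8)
The plan is to exploit the permutation symmetry captured by the operator $E$ of Definition~\ref{def:symmetryOperation} to reduce membership in $G$ to a finite conic feasibility problem, and then exhibit a single linear certificate that $f_5$ violates. First I would suppose, for contradiction, that $f_5=\sum_{B,i}\alpha_{B,i}\min(|A\cap B|,i)$ with all $\alpha_{B,i}\ge 0$. Applying $E$, which is linear and preserves $G$ (for any relabeling $p$ of $V$ one has $\min(|A_p\cap B|,i)=\min(|A\cap p^{-1}(B)|,i)$, again a cardinality truncation), and using that $f_5$ is already $E$-invariant, gives $f_5=\sum_{B,i}\alpha_{B,i}\,t_{B,i}$ where $t_{B,i}:=E\min(|A\cap B|,i)$. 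By Lemma~\ref{lemma:Ehisdeterminedbyn1n2} each $t_{B,i}$ is a symmetric function fixed by its five values $t(1,0),t(2,0),t(1,1),t(2,1),t(2,2)$ of Definition~\ref{def:unorderedPair}, and these depend only on the unordered pair $\{p,q\}=\{|B\cap\{a,b,c\}|,|B\cap\{d,e,f\}|\}$ and on $i$; hence there are only finitely many distinct $5$-vectors, computable via elementary hypergeometric probabilities (for instance $t(1,0)=(p+q)/6$).

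The heart of the argument is that $f_5$ is flat at the top: from Table~\ref{table:5-vectorforallfi}, $Ef_5(2,0)=Ef_5(2,1)=Ef_5(2,2)=1$ while $Ef_5(1,0)=7/12$ and $Ef_5(1,1)=5/6$. Since every $t_{B,i}$ is monotone non-decreasing, $t_{B,i}(2,0)\le t_{B,i}(2,1)\le t_{B,i}(2,2)$, so matching the three equal top coordinates of $f_5$ forces $\alpha_{B,i}\bigl(t_{B,i}(2,2)-t_{B,i}(2,0)\bigr)=0$ termwise, and by sandwiching $t_{B,i}(2,1)$ as well. Thus only the ``flat-top'' truncations with $t(2,0)=t(2,1)=t(2,2)$ can contribute. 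A short check over the finite type list shows these are exactly $E\min(|A\cap B|,1)$ for $B$ of type $\{2,2\},\{3,2\},\{3,3\}$ together with $E\min(|A\cap B|,2)$ of type $\{3,3\}$, whose $5$-vectors are $(2/3,1,8/9,1,1)$, $(5/6,1,1,1,1)$, $(1,1,1,1,1)$ and $(1,2,2,2,2)$.

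The contradiction then comes from a single coordinate comparison. Each of these four flat-top vectors satisfies $t(1,1)\ge \tfrac89\,t(2,0)$ (with equality only for the type-$\{2,2\}$ vector $(2/3,1,8/9,1,1)$). Summing against the nonnegative $\alpha$'s and using the matched coordinates of $f_5$ yields $\tfrac56=Ef_5(1,1)=\sum\alpha\,t(1,1)\ge \tfrac89\sum\alpha\,t(2,0)=\tfrac89\,Ef_5(2,0)=\tfrac89$, which is false since $\tfrac56<\tfrac89$. Equivalently, I would package the whole argument into the single linear functional $9\,t(1,1)+2\,t(2,2)-10\,t(2,0)$, verify it is $\ge 0$ on every symmetrized cardinality truncation (the only nearly-tight case being the two-element cross truncation $E\min(|A\cap\{a,d\}|,1)$, where it equals $\tfrac19$), and observe it evaluates to $9\cdot\tfrac56+2\cdot1-10\cdot1=-\tfrac12<0$ on $f_5$.

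The main obstacle is the bookkeeping in the first step and the flat-top classification: one must actually carry out the hypergeometric computation of $t_{B,i}$ for every type $\{p,q\}$ with $0\le q\le p\le 3$ and every $1\le i\le p+q-1$, and confirm the separating inequality for each. This is finite and routine but must be done carefully, since the cross truncation of type $\{1,1\}$ (for which the naive functional $t(1,1)-2t(2,1)+t(2,2)$ is already negative) shows that the certificate is genuinely more delicate than a bare within-versus-cross surplus comparison.
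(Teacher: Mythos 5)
Your proposal is correct, and it shares the paper's overall skeleton: symmetrize with $E$ so that membership reduces to a conic combination of the finitely many $5$-vectors $\bigl(t(1,0),t(2,0),t(1,1),t(2,1),t(2,2)\bigr)$, eliminate most truncation types termwise, and then hit the survivors with the linear certificate $t(2,0)\leq\tfrac{9}{8}\,t(1,1)$ — which is literally the same inequality the paper uses ($Ef(2,0)\leq\tfrac98 Ef(1,1)$ for each surviving term, versus $Ef_5(2,0)=\tfrac65 Ef_5(1,1)$). Where you genuinely differ is the elimination step. The paper forces $\alpha_{B,i}=0$ via the \emph{fully curved} property: $f_5$ has zero marginal gains within each block $\set{a,b,c}$ and $\set{d,e,f}$, and since gains of monotone submodular terms are non-negative and sum to the total gain, each contributing truncation must saturate inside each block it meets; this leaves six types, including $f_3$ and $f_4$. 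You instead force elimination via monotonicity alone: since $Ef_5(2,0)=Ef_5(2,1)=Ef_5(2,2)$ and each $t_{B,i}$ is non-decreasing along $(2,0)\preceq(2,1)\preceq(2,2)$, every contributing term must be flat on top, which leaves only four types (your argument excludes $f_3,f_4$, which the paper's does not). Your version is arguably more elementary — it bypasses the curvature machinery entirely — at the cost of a larger finite verification (all types $\set{p,q}$, all $i$), which you correctly flag as the main bookkeeping burden; I checked the classification and it is exhaustive and correct. Your one-shot certificate $9\,t(1,1)+2\,t(2,2)-10\,t(2,0)\geq 0$ also checks out on every symmetrized truncation and equals $-\tfrac12$ on $f_5$, giving an even cleaner packaging that needs no elimination at all; the only slip is in your parenthetical: the truly tight case of that functional is the type-$\set{2,2}$ truncation $E\min(|A\cap\set{a,b,d,e}|,1)$, where it equals exactly $0$, while the type-$\set{1,1}$ truncation you name gives $\tfrac19$ — immaterial to correctness, since $\geq 0$ is all that is needed.
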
 
\begin{proof}
  Assume that
\begin{align}
  f_5(A) = \sum_{B\subseteq V}
  \sum_{i=1}^{|B|-1}\alpha_{B,i}\min(|A\cap B|,i)
  = \sum_{B\subseteq V}\sum_{i=1}^{|B|-1}\alpha_{B,i}E\min(|A\cap
  B|,i).
\end{align}
Note that $f_5$ is fully curved on $\set{a,b,c}$ and
  $\set{d,e,f}$, and these hold for all terms.  So for $B$ and $i$, if
  $i\geq |B\cap \set{a,b,c}|$ or $i\geq |B\cap \set{d,e,f}|$,
  $\alpha_{B,i} = 0$.  Therefore the remaining terms are $f_1$, $f_2$,
  $f_3$, $f_4$, $E\min(|A\cap\set{a,b,d,e}|,1)$ and
  $E\min(|A\cap\set{a,b,c,d,e}|,1)$ Therefore, for all these
  functions, $Ef(2,0) \leq \frac{9}{8}Ef(1,1)$, but
  $Ef_5(2,0) = \frac{6}{5}Ef_5(1,1)$
  (table~\ref{table:5-vectorforallfi}).  So it is impossible to find a
  conical combination of $\min(|A\cap B|,i)$ that equals $f_5$.

\end{proof}

\end{document}